\documentclass[a4paper, 10pt, reqno]{amsart}
\usepackage[T1]{fontenc}
\usepackage{lmodern}
\usepackage[utf8]{inputenc}

\usepackage{amsmath,amssymb,graphicx,mathtools} 
\usepackage{amsthm}
\usepackage[foot]{amsaddr}
\usepackage[a4paper,left=2.5cm,right=2.5cm,top=3cm,bottom=3cm]{geometry}
\usepackage{enumerate}
\usepackage{enumitem}
\usepackage{tikz} 
\usetikzlibrary{arrows,decorations.markings}
\usepackage[colorlinks, linkcolor=red, urlcolor=blue, citecolor=blue]{hyperref}
\usepackage{url}
\usepackage{comment}

\usepackage[
    giveninits=true,
    url=false,
    doi=false,
    isbn=false,
    eprint=true,
    datamodel=mrnumber,
    sorting=nyt, 
    sortcites=false, 
    maxbibnames=99,
    maxcitenames=4,
    backref=false,
    block=space,
    backend=biber,
    style=phys,
    biblabel=brackets
]{biblatex}
\AtEveryBibitem{\clearfield{month}}
\AtEveryCitekey{\clearfield{month}} 
\renewbibmacro{in:}{}
\ExecuteBibliographyOptions{eprint=true}
\DeclareFieldFormat[article]{title}{\emph{#1}} 
\DeclareFieldFormat{mrnumber}{\ifhyperref{\href{http://www.ams.org/mathscinet-getitem?mr=#1}{\nolinkurl{MR#1}}}{\nolinkurl{#1}}}
\DeclareFieldFormat{pmid}{\ifhyperref{\href{https://www.ncbi.nlm.nih.gov/pubmed/#1}{\nolinkurl{PMID#1}}}{\nolinkurl{#1}}}
\DeclareFieldFormat{eprint}{\ifhyperref{\href{https://arxiv.org/abs/#1}{\nolinkurl{arXiv:#1}}}{\nolinkurl{#1}}}
\renewbibmacro*{doi+eprint+url}{%
  \iftoggle{bbx:doi}{\printfield{doi}}{}
  \newunit\newblock%
  \printfield{mrnumber}%
  \newunit\newblock%
  \printfield{pmid}%
  \newunit\newblock%
  \printfield{eprint}%
  \iftoggle{bbx:url}{\usebibmacro{url+urldate}}{}}
\bibliography{refs}
\usepackage{caption}
\usepackage{subcaption}

\DeclareMathOperator{\Tr}{Tr}

\DeclareMathOperator*{\argmax}{arg\,max}

\newcommand{\norm}[1]{\lVert#1\rVert}
\newcommand{\R}{\mathbb{R}} 
\newcommand{\E}{\mathbb{E}}
\newcommand{\N}{\mathbb{N}} 

\theoremstyle{plain}
\newtheorem{thm}{Theorem}[section]

\newtheorem{lem}[thm]{Lemma}
\newtheorem{prop}[thm]{Proposition}

\theoremstyle{definition}
\newtheorem{defn}[thm]{Definition}

\theoremstyle{remark}
\newtheorem{rmk}[thm]{Remark}

\numberwithin{equation}{section}

\makeatletter
\renewcommand\subsection{\@startsection{subsection}{2}%
  \z@{-.5\linespacing\@plus-.7\linespacing}{.5\linespacing}%
  {\normalfont\scshape}}
\renewcommand\subsubsection{\@startsection{subsubsection}{3}%
  \z@{.5\linespacing\@plus.7\linespacing}{-.5em}%
  {\normalfont\scshape}}
\makeatother

\makeatletter
\@namedef{subjclassname@2020}{%
  \textup{2020} Mathematics Subject Classification}
\makeatother

\usepackage[format=plain,
            labelfont=sc]{caption}
\usepackage[toc,page]{appendix} 
\usepackage{scalerel,stackengine}
\stackMath%
\newcommand\reallywidehat[1]{%
\savestack{\tmpbox}{\stretchto{%
  \scaleto{%
    \scalerel*[\widthof{\ensuremath{#1}}]{\kern.1pt\mathchar"0362\kern.1pt}%
    {\rule{0ex}{\textheight}}
  }{\textheight}%
}{2.4ex}}%
\stackon[-6.9pt]{#1}{\tmpbox}%
}

\makeatletter
\newsavebox\myboxA
\newsavebox\myboxB
\newlength\mylenA
\newcommand*\xoverline[2][0.75]{%
    \sbox{\myboxA}{$\m@th#2$}%
    \setbox\myboxB\null
    \ht\myboxB=\ht\myboxA%
    \dp\myboxB=\dp\myboxA%
    \wd\myboxB=#1\wd\myboxA
    \sbox\myboxB{$\m@th\overline{\copy\myboxB}$}
    \setlength\mylenA{\the\wd\myboxA}
    \addtolength\mylenA{-\the\wd\myboxB}%
    \ifdim\wd\myboxB<\wd\myboxA%
       \rlap{\hskip 0.5\mylenA\usebox\myboxB}{\usebox\myboxA}%
    \else
        \hskip -0.5\mylenA\rlap{\usebox\myboxA}{\hskip 0.5\mylenA\usebox\myboxB}%
    \fi}
\makeatother    

\usepackage{fancyhdr}

\newcommand\shortitle{SGD in high dimensions for multi-spiked tensor PCA}
\newcommand\name{gérard ben arous, cédric gerbelot, and vanessa piccolo}

\fancyhf{}

\fancyhead[CO]{\scshape\shortitle}
\fancyhead[CE]{\scshape\name}
\setlength{\headheight}{12.0pt}
\fancyhead[R]{\small\thepage\ifodd\value{page}\else\hfill\fi}
\pagestyle{fancy}

\begin{document}

\title{Stochastic gradient descent in high dimensions for multi-spiked tensor PCA}
\author{Gérard Ben Arous\(^1\)}
\author{Cédric Gerbelot\(^1\)}
\address{\(^1\)Courant Institute of Mathematical Sciences, New York University}
\email{benarous@cims.nyu.edu}
\email{cedric.gerbelot@cims.nyu.edu}

\author{Vanessa Piccolo\(^2\)}
\address{\(^2\)Unité de Mathématiques Pures et Appliquées (UMPA), ENS Lyon}
\email{vanessa.piccolo@ens-lyon.fr}
\subjclass[2020]{68Q87, 62F30, 60G42} 
\keywords{Multi-index model, Multi-spiked tensor PCA, Tensor estimation, Online stochastic gradient descent, Sequential recovery, Subspace recovery}
\date{\today}

\begin{abstract}
We study the high-dimensional dynamics of online stochastic gradient descent (SGD) for the multi-spiked tensor model. This multi-index model arises from the tensor principal component analysis (PCA) problem with multiple spikes, where the goal is to estimate \(r\) unknown signal vectors within the \(N\)-dimensional unit sphere through maximum likelihood estimation from noisy observations of a \(p\)-tensor. We determine the number of samples and the conditions on the signal-to-noise ratios (SNRs) required to efficiently recover the unknown spikes from natural random initializations. We show that full recovery of all spikes is possible provided a number of sample scaling as \(N^{p-2}\), matching the algorithmic threshold identified in the rank-one case~\cite{arous2020algorithmic,arous2021online}. Our results are obtained through a detailed analysis of a low-dimensional system that describes the evolution of the correlations between the estimators and the spikes, while sharply controlling the noise in the dynamics. We find the spikes are recovered sequentially in a process we term ``sequential elimination'': once a correlation exceeds a critical threshold, all correlations sharing a row or column index become sufficiently small, allowing the next correlation to grow and become macroscopic. The order in which correlations become macroscopic depends on their initial values and the corresponding SNRs, leading to either exact recovery or recovery of a permutation of the spikes. In the matrix case, when \(p=2\), if the SNRs are sufficiently separated, we achieve exact recovery of the spikes, whereas equal SNRs lead to recovery of the subspace spanned by them. 
\end{abstract}

\maketitle	
{
\hypersetup{linkcolor=black}
\tableofcontents
}

\section{Introduction} \label{section:introduction}
Understanding the dynamics of gradient-based algorithms for optimizing high-dimensional, non-convex random functions is crucial for progress in data science. Despite the difficulties of non-convex landscapes, first-order optimization methods---particularly stochastic gradient descent (SGD)---have been remarkably successful in practice~\cite{robbins1951stochastic}, especially in deep learning applications~\cite{lecun1998gradient,bottou2010large}. Theoretical guarantees on the time and sample complexity of SGD are well-established in convex and quasi-convex settings~\cite{bottou2003large, needell2014stochastic, nesterov2018lectures, dieuleveut2020bridging}, but extending these results to non-convex problems remains challenging. A promising strategy is to study specific instances of the optimization problem that are simple enough for closed-form analysis, yet retain the complexities of larger models. In such cases, a key observation is that only a few dominant degrees of freedom---key directions in the parameter space---appear to govern the behavior of gradient-based algorithms, as exemplified by the \emph{neural collapse} phenomenon~\cite{papyan2020traces,papyan2020}. Building on this insight, recent work by the first author, in collaboration with Gheissari and Jagannath~\cite{arous2020algorithmic,arous2021online,benarousneurips, ben2022effective} and with Gheissari, Huang, and Jagannath~\cite{arous2023eigenspace}, shows that the high-dimensional dynamics of (stochastic) gradient descent can often be reduced to low-dimensional, autonomous \emph{effective dynamics} expressed in terms of a small set of \emph{summary statistics}. This reduction provides a clear framework for understanding how factors such as initialization, step size, sample size, and runtime shape the algorithm's trajectory toward fixed points in the optimization landscape.

Motivated by this line of work, the present paper studies the benchmark problem of \emph{multi-spiked tensor estimation}. The goal is to estimate \(r\) unknown vectors on the \(N\)-dimensional unit sphere from noisy observations of a corresponding rank-\(r\) tensor (see the precise formulation in~\eqref{eq: spiked tensor model}). This problem, also known as Tensor Principal Component Analysis (Tensor PCA), was introduced by Johnstone~\cite{Johnstone} for matrices and by Richard and Montanari~\cite{MontanariRichard} for higher-order tensors. Our goal is to determine the number of samples required for online SGD to efficiently recover the \(r\) unknown vectors. In the rank-one (single-spike) case, recovery thresholds have been extensively studied for first-order optimization methods~\cite{MontanariRichard, arous2020algorithmic, arous2021online}, power iteration~\cite{Wu24}, and spectral and Sum-of-Squares methods~\cite{perry2018optimality, bandeira2020}. In the multi-spiked case, recovery thresholds have been established using power iteration~\cite{HuangPCA}. See Subsection~\ref{subsection: related work} for a more detailed discussion of these results. Compared to the single-spike setting, the multi-spiked scenario exhibits fundamentally richer phenomenology: the effective dynamics governing gradient-based algorithms are no longer one-dimensional but genuinely multidimensional.

This multi-spiked problem serves as a benchmark for general multi-index models, which have received significant attention in recent years~\cite{soltanolkotabi2017learning, dudeja2018learning, yehudai2020learning, frei2020agnostic, bietti2022learning, mousavi2022neural, damian2022neural, ba2022high, berthier2023learning, abbe2023sgd, bietti2023learning, dandi2023learning}. A multi-index model consists of a function parameterized by a finite set of unknown directions \(\boldsymbol{v}_1, \ldots, \boldsymbol{v}_r \in \R^N\) and a multivariate link function \(f_\ast \colon \R^r \to \R\), which takes as input the inner products between input vectors in \(\R^N\) and the relevant directions. Here \(N\) denotes the ambient dimension, while \(r \ll  N\) is the number of latent coordinates, with \(r=1\) corresponding to the classical single-index model. From an optimization perspective, multi-index models give rise to low-dimensional yet multidimensional effective dynamics. In particular, when considering a set of \(r\) orthonormal estimators \(\boldsymbol{x}_1, \ldots, \boldsymbol{x}_r \in \R^N\), as we focus on this paper, the effective dynamics are governed by the \(r^2\) inner products \(\langle \boldsymbol{v}_i, \boldsymbol{x}_j \rangle\). In this paper, we provide a precise analysis of this \(r^2\)-dimensional dynamical system in the context of multi-spiked tensor PCA and determine the sample complexity required for the high-dimensional empirical dynamics of SGD to closely track the low-dimensional population dynamics.

For the single-spiked tensor model and single-index models with known link function, previous work~\cite{arous2020algorithmic, arous2021online} showed that the difficulty of recovery using gradient-based methods depends on the curvature of the loss landscape near initialization. This curvature can be quantified by the first non-zero coefficient in the Taylor expansion of the population loss near the equator, also called the \emph{information exponent}~\cite{arous2021online}. Various extensions have addressed the problem of learning both the hidden direction and the unknown link function~\cite{bietti2022learning, berthier2023learning, Mahankali2023}. In contrast, multi-index models introduce new challenges: the effective dynamics are multidimensional, potentially non-monotone, and may have multiple fixed points, making global convergence guarantees significantly harder to establish. Recent works~\cite{abbe2022merged, abbe2023sgd, dandi2023learning, bietti2023learning} have made progress for specific multi-index models by designing tailored two-timescale modifications of SGD. These approaches effectively decouple the dynamics into monotone components (or their superposition), which allows for theoretical guarantees on subspace recovery while bypassing the intricate interactions between correlations. We refer the reader to Subsection~\ref{subsection: related work} for further details on this literature.

These developments raise natural questions that motivate our study: (1) Can we characterize the global convergence of multidimensional effective population dynamics for gradient descent and its variants without relying on a two-timescale algorithm? (2) Can we identify which fixed points are reached and determine their probabilities? (3) Can we establish sample complexity guarantees for the corresponding empirical dynamics of SGD---the most widely used algorithm in practice? Our work addresses these three questions for single-pass online SGD in the multi-spiked tensor PCA setting, under the simplifying assumption that the unknown vectors are orthogonal. In our companion papers~\cite{langevin, gradientflow}, we also analyze continuous high-dimensional dynamics, such as Langevin dynamics and gradient flow, for the finite-rank spiked tensor model. Extensions to more practically relevant algorithms, such as multi-pass or reuse-based SGD, are left for future work.

Our work uncovers a richer phenomenology than in the single-spiked case. We focus on three natural notions of recovery: exact recovery of spikes, recovery up to permutation, and recovery of the hidden subspace up to rotation, the latter being relevant only in the case \(p=2\) with equal signal-to-noise ratios (SNRs). For the first two notions, we characterize a \emph{sequential elimination process}, in which each (possibly permuted) spike is recovered one by one, progressively eliminating correlations with the recovered directions. Intuitively, once one direction (say \(\boldsymbol{v}_1\)) is recovered---meaning \(\langle \boldsymbol{v}_1, \boldsymbol{x}_1 \rangle\) is close to one---all cross-correlations \(\langle \boldsymbol{v}_k, \boldsymbol{x}_1 \rangle\) and \(\langle \boldsymbol{v}_1, \boldsymbol{x}_k \rangle\) for \(k \ge 2\) become negligible, allowing the remaining correlations to grow and subsequent directions to be recovered in turn. The order in which the correlations become macroscopic (or, equivalently, the order in which directions are recovered) depends on their initial values and on the corresponding SNRs, leading to either exact recovery or recovery up to a permutation. The main technical challenge lies in controlling the interactions among the \(r^2\) inner products \(\langle \boldsymbol{v}_i, \boldsymbol{x}_j \rangle\), which arise from the gradient structure and the orthogonality constraints of the model. These interactions can locally dominate the drift and must be handled carefully to ensure that the correlations evolve monotonically and that sequential recovery occurs. In addition, the stochastic gradient noise imposes constraints on the admissible step sizes and time scales, whose contribution must be controlled sharply enough to yield statistically meaningful sample complexity bounds. Our proof relies on systems of discrete-time difference inequalities, derived from the signal-plus-martingale decomposition method of~\cite{arous2021online,tan2023online}. The overall strategy parallels that of our companion works on Langevin and gradient-flow dynamics~\cite{langevin,gradientflow}, where the same sequential recovery phenomenon is established. However, the discrete-time setting introduces additional challenges arising from stochastic gradient updates and other nonlinear effects, which require substantially different analytical techniques. Despite these differences, our analysis achieves the optimal sample complexity---up to logarithmic factors---within the relevant regime studied in the literature.

\subsection{Setting} \label{subsection: SGD}

The multi-spiked tensor model is formalized as follows. Fix integers \(p \ge 2\) and \(r \ge 1\), and suppose that we are given \(M\) i.i.d.\ observations \(\boldsymbol{Y}^\ell\) of a rank \(r\) \(p\)-tensor on \(\R^N\) in the presence of additive noise, i.e.,
\begin{equation}\label{eq: spiked tensor model}
\boldsymbol{Y}^\ell = \boldsymbol{W}^\ell + \sqrt{N} \sum_{i=1}^r \lambda_i \boldsymbol{v}_i^{\otimes p},
\end{equation}
where \((\boldsymbol{W}^\ell)_{\ell=1}^M\) are i.i.d.\ samples of a noise \(p\)-tensor with i.i.d.\ centered sub-Gaussian entries \(W^\ell_{i_1, \ldots, i_p}\), \(\lambda_1 \geq \ldots \geq \lambda_r \geq 0\) are the signal-to-noise ratios (SNRs), assumed to be of order \(1\), and \(\boldsymbol{v}_1, \ldots,\boldsymbol{v}_r\) are \emph{unknown, orthogonal vectors} lying on the \(N\)-dimensional unit sphere \(\mathbb{S}^{N-1}\). Our goal is to estimate the unknown signal vectors based on the sample of i.i.d.\ training data \((\boldsymbol{Y}^\ell)_{\ell=1}^M\). For sub-Gaussian noise, the Gaussian log-likelihood is a natural proxy, leading to solving the optimization problem:
\begin{equation} \label{eq: tensor PCA}
\min_{\boldsymbol{X} \colon \boldsymbol{X}^\top \boldsymbol{X} = \boldsymbol{I}_r} \left \{ - \sum_{i=1}^r \lambda_i \langle \boldsymbol{Y}, \boldsymbol{x}_i^{\otimes p} \rangle \right \},
\end{equation}
where \(\boldsymbol{X} = [\boldsymbol{x}_1, \ldots, \boldsymbol{x}_r] \in \R^{N \times r}\) and the feasible set \( \textnormal{St}(N,r) = \{\boldsymbol{X} \in \R^{N \times r} \colon \boldsymbol{X}^\top \boldsymbol{X} = \boldsymbol{I}_r \} \) is known as the \emph{Stiefel manifold}, once equipped with its natural sub-manifold structure from \(\R^{N \times r}\). The optimization problem~\eqref{eq: tensor PCA} is therefore equivalent to minimizing the loss function \(\mathcal{L}_{N,r} \colon \text{St}(N,r) \times (\R^N)^{\otimes p} \to \R\) given by
\begin{equation} \label{eq: loss function}
\mathcal{L}_{N,r}(\boldsymbol{X}; \boldsymbol{Y}) = H_{N,r}(\boldsymbol{X}) + \Phi_{N,r}(\boldsymbol{X}), .
\end{equation}
where \(H_{N,r}\) denotes the noise part given by
\begin{equation} \label{eq: noise part}
H_{N,r}(\boldsymbol{X}) = - \sum_{i=1}^r \lambda_i \langle \boldsymbol{W}, \boldsymbol{x}_i^{\otimes p} \rangle ,
\end{equation}
and \(\Phi_{N,r}\) denotes the population loss given by
\begin{equation} \label{eq: population loss}
\Phi_{N,r}(\boldsymbol{X}) = \E \left [ \mathcal{L}_{N,r}(\boldsymbol{X}; \boldsymbol{Y})\right ] = - \sum_{1 \leq i,j \leq r} \sqrt{N} \lambda_i  \lambda_j m_{ij}^p(\boldsymbol{X}),
\end{equation}
where \(m_{ij}(\boldsymbol{X}) = \langle \boldsymbol{v}_i, \boldsymbol{x}_j \rangle\).
\begin{defn}[Correlation] \label{def: correlation}
For every \(1\le i,j \le r\), we call \(m_{ij}(\boldsymbol{X})\) the \emph{correlation} of \(\boldsymbol{x}_j\) with \(\boldsymbol{v}_i\). 
\end{defn}
Our work aims to solve the optimization problem~\eqref{eq: tensor PCA} in the large-\(N\) limit while the rank \(r\) is fixed. As previously mentioned, to tackle this tensor estimation problem, we use a first-order optimization method, namely \emph{online stochastic gradient descent (SGD)}. This local algorithm has been studied by the first author, joint with Gheissari and Jagannath in~\cite{arous2021online}, to determine the algorithmic thresholds for the rank-one spiked tensor model. We define the online SGD algorithm as follows.

Let \(\boldsymbol{X}_\ell \in \textnormal{St}(N,r)\) denote the output of the algorithm at time \(\ell\). The online SGD algorithm with initialization \(\boldsymbol{X}_0 \in \textnormal{St}(N,r)\), which is possibly random, and with step size  parameter \(\delta > 0\), will be run using the observations \((\boldsymbol{Y}^\ell)_{\ell=1}^M\) according to the following update rule:
\begin{equation} \label{eq: online SGD}
\boldsymbol{X}_\ell = R_{\boldsymbol{X}_{\ell - 1}}\left (-\frac{\delta}{N} \nabla_{\textnormal{St}} \mathcal{L}_{N,r} (\boldsymbol{X}_{\ell -1}; \boldsymbol{Y}^\ell)\right ),
\end{equation}
where \(R_{\boldsymbol{X}} \colon T_{\boldsymbol{X}} \textnormal{St}(N,r) \to \textnormal{St}(N,r)\) is a retraction map that moves a point from the tangent space \(T_{\boldsymbol{X}} \textnormal{St}(N,r)\) back onto the Stiefel manifold \(\textnormal{St}(N,r)\), while preserving its geometric properties such as orthonormality (see e.g.~\cite[Section 7.3]{boumal2023}). The tangent space \(T_{\boldsymbol{X}} \textnormal{St}(N,r)\) at a point \(\boldsymbol{X} \in \textnormal{St}(N,r)\) is given by 
\[
T_{\boldsymbol{X}} \textnormal{St}(N,r) = \{\boldsymbol{V} \in \R^{N \times r} \colon \boldsymbol{X}^\top \boldsymbol{V} + \boldsymbol{V}^\top \boldsymbol{X} = 0\}.
\]
There are multiple ways to define the retraction map. Here, we use the polar retraction, defined as
\begin{equation} \label{eq: polar retraction} R_{\boldsymbol{X}}(\boldsymbol{U}) = (\boldsymbol{X} + \boldsymbol{U}) \left(\boldsymbol{I}_r + \boldsymbol{U}^\top \boldsymbol{U}\right)^{-1/2}. 
\end{equation}
In the update rule~\eqref{eq: online SGD}, \(\nabla_{\textnormal{St}}\) denotes the Riemannian gradient on \(\textnormal{St}(N,r)\). In particular, for a function \(F \colon \textnormal{St}(N,r) \to \R\), the Riemannian gradient is given by
\begin{equation} \label{eq: Stiefel gradient}
\nabla_{\textnormal{St}} F(\boldsymbol{X}) = \nabla F(\boldsymbol{X}) - \frac{1}{2} \boldsymbol{X}(\boldsymbol{X}^\top \nabla F(\boldsymbol{X}) + \nabla F(\boldsymbol{X})^\top \boldsymbol{X}),
\end{equation}
where \(\nabla\) denotes the Euclidean gradient. The algorithm runs for \(M\) steps, and the final output \(\boldsymbol{X}_M\) is taken as the estimator.

\subsection{Main results} \label{subsection: main asymptotic results}

Our goal is to determine the \emph{sample complexity} (i.e., the number \(M\) of observations) required to recover the unknown orthogonal vectors \(\boldsymbol{v}_1, \ldots, \boldsymbol{v}_r \in \mathbb{S}^{N-1}\) via online SGD~\eqref{eq: online SGD}. We formalize the notion of recovering all spikes as follows.

\begin{defn}[Recovery of all spikes]
We say that online SGD achieves \emph{recovery up to permutation} of the \(r\) unknown vectors \(\boldsymbol{v}_1, \ldots, \boldsymbol{v}_r \in \textnormal{St}(N,r)\) if there exists a permutation \(\pi^\ast \in S_r\) such that, for every \(1 \le i \le r\),
\[
|m_{\pi^\ast(i) i}(\boldsymbol{X})| = 1 - o(1),
\]
with high probability as \(N \to \infty\). If the permutation \(\pi^\ast\) can be taken to be the identity, we say that the algorithm achieves \emph{exact recovery}.
\end{defn}

From this point onward, we consider the sequence of outputs \((\boldsymbol{X}_\ell)_{\ell \in \N}\) defined by~\eqref{eq: online SGD} with step size parameter \(\delta >0\), initialized randomly with \(\boldsymbol{X}_0\) drawn from the uniform distribution \(\mu_{N \times r}\) on \(\textnormal{St}(N,r)\). The measure \(\mu_{N \times r}\) is the unique probability measure on \(\textnormal{St}(N,r)\) that is invariant under both the left and right orthogonal transformations. More precisely, if \(\boldsymbol{X} \sim \mu_{N \times r}\), then for any orthogonal matrices \(\boldsymbol{U} \in \mathcal{O}(N)\) and \(\boldsymbol{V} \in \mathcal{O}(r)\), the matrix \(\boldsymbol{U} \boldsymbol{X} \boldsymbol{V}\) has the same distribution as \(\boldsymbol{X}\). In practice, sampling from \(\mu_{N \times r}\) can be done by drawing a Gaussian matrix \(\boldsymbol{Z} \in \R^{N \times r}\) with i.i.d. standard normal entries and setting \(\boldsymbol{X}_0 = \boldsymbol{Z} (\boldsymbol{Z}^\top \boldsymbol{Z})^{-1/2}\)~\cite[Theorem 2.2.1]{chikuse2012statistics}. 

We consider the probability space \((\Omega,\mathcal{F},\mathbb{P})\) on which the \(p\)-tensors \((\boldsymbol{W}^\ell)_\ell\) are defined. Let \(\mathbb{P}_{\boldsymbol{X}_0}\) denote the law of the process \((\boldsymbol{X}_\ell)_{\ell \ge 0}\) initiated at \(\boldsymbol{X}_0 \sim \mu_{N \times r}\):
\[
\mathbb{P}_{\boldsymbol{X}_0} (A) = \int_{\textnormal{St}(N,r)} \mathbb{P}_{\boldsymbol{X}} (A) \textnormal{d}\mu_{N \times r} (\boldsymbol{X}) ,
\]
for any measurable set \(A\) in the \(\sigma\)-algebra generated by the coordinate mappings from \(\N\) to \(\textnormal{St}(N,r)\). We also define \(\mathbb{P}_{\boldsymbol{X}_0^+}\) as the law of the process with \(\boldsymbol{X}_0 \sim \mu_{N \times r}\) conditioned on \(m_{ij}(\boldsymbol{X}_0) >0\) for all \(1 \leq i,j \leq r\), corresponding to initialization with strictly positive correlations. \\
\textit{Notations.} For \(n \in \N\), we denote by \([n] \coloneqq \{1, \ldots, n\}\). For sequences \(x_N\) and \(y_N\), we write \(x_N \ll y_N\) if \(x_N / y_N \to 0\) as \(N \to \infty\). \\

We are now ready to present our main results. Throughout this section, we assume that the SNRs \(\lambda_1 \ge \cdots \ge \lambda_r \ge 0\) are of order \(1\). While the statements below are presented in asymptotic form, we provide stronger nonasymptotic formulations---including explicit constants and convergence rates---in Section~\ref{section: main results SGD}.

\subsubsection{Main results for \(p \ge 3\)}
Our main result for \(p \ge 3\) is the following. 

\begin{thm}\label{thm: general recovery SGD p>2 asymptotic}
Suppose that \(M = M(N)\) grows at most polynomially in \(N\) and satisfies \(M \gg \log(N) N^{p-2}\). Assume further that \(\delta\) satisfies \(M^{-1}N^{\frac{p-1}{2}} \ll \delta \ll N^{1/2}(\log(N)M)^{-1/2}\). Then, there exists a permutation \(\pi^\ast \in S_r\) such that, for every \(\varepsilon > 0\) and every \(i \in [r]\),
\[
\lim_{N \to \infty} \mathbb{P}_{\boldsymbol{X}_0^+} (m_{\pi^\ast(i) i} (\boldsymbol{X}_M) \ge 1 - \varepsilon) = 1.
\]
Furthermore, if there exists \(\eta > 1\) such that 
\[
\lambda_i > C \left( \eta \sqrt{\log(\eta)} \right)^{p-2} \lambda_{i+1}, \quad \textnormal{for all} \enspace 1 \le i \le r-1,
\]
for some universal constant \(C>0\), then for every \(\varepsilon > 0\) and every \(i \in [r]\),
\[
\lim_{N \to \infty} \mathbb{P}_{\boldsymbol{X}_0^+} \left(  m_{ii}(\boldsymbol{X}_M) \geq 1 - \varepsilon \right) \geq 1 - \frac{1}{\eta}.
\]
\end{thm}

Theorem~\ref{thm: general recovery SGD p>2 asymptotic} shows that, under a positive initialization of the correlations, online SGD successfully recovers all signal directions (up to a permutation) provided the number of samples \(M\) scales as \(\log(N) N^{p-2}\). This scaling matches, up to logarithmic factors, the sharp information-algorithmic threshold known for first-order methods in the rank-one spiked tensor model~\cite{arous2020algorithmic, arous2021online}. The factor \(\log(N)\) in both the step size and sample complexity is due to the control of the retraction term in~\eqref{eq: online SGD}, and can be removed in the case of a single spike. To ensure exact recovery of all spikes with high probability---where the probability depends on the ratio between the SNRs---the SNRs must be sufficiently separated. 

The permutation \(\pi^\ast\) of the recovered spikes can be explicitly determined via the following procedure.

\begin{defn}[Greedy maximum selection] \label{def: greedy operation}
Let \(\boldsymbol{A} \in \R^{r \times r}\) be a matrix whose nonzero entries are all distinct. We define a sequence of index pairs \((i_k^\ast, j_k^\ast) \in [r]^2\) recursively as follows:
\begin{itemize}
\item[1.] Set \(\boldsymbol{A}^{(0)} \coloneqq \boldsymbol{A}\).
\item [2.] For \(k = 1, 2, \ldots\), define
\[
(i_k^\ast, j_k^\ast) \coloneqq \argmax_{1 \le i,j \le r-(k-1)} | \boldsymbol{A}^{(k-1)} |_{ij},
\]
where \(\boldsymbol{A}^{(k-1)} \in \R^{(r-(k-1)) \times (r-(k-1))}\) is obtained from \(\boldsymbol{A}\) by removing the rows \(i_1^\ast, \ldots, i_{k-1}^\ast\) and the columns \(j_1^\ast, \ldots, j_{k-1}^\ast\), and \(|\boldsymbol{A}^{(k-1)}| \) denotes the absolute value of the entries in \(\boldsymbol{A}^{(k-1)}\). 
\item[3.] If at some step \(r_\textnormal{c} \in [r]\) we have
\[
\max_{ij} |\boldsymbol{A}^{(r_\textnormal{c})}|_{ij} = 0 ,
\]
the procedure terminates. 
\end{itemize}
The resulting sequence \((i_1^\ast, j_1^\ast), \ldots, (i_{r_\textnormal{c}}^\ast, j_{r_\textnormal{c}}^\ast)\) is called the \emph{greedy maximum selection} of \(\boldsymbol{A}\).
\end{defn}

The permutation \(\pi^\ast\) in Theorem~\ref{thm: general recovery SGD p>2 asymptotic} can be obtained via the greedy maximum selection procedure. Specifically, applying Definition~\ref{def: greedy operation} to the initialization matrix
\begin{equation} \label{eq: initialization matrix}
\boldsymbol{I}_0 = \left (\lambda_i \lambda_j m_{ij}^{p-2}(\boldsymbol{X}_0) \mathbf{1}_{\{m_{ij}^{p-2}(\boldsymbol{X}_0) \geq 0\}} \right)_{1 \le i, j \le r},
\end{equation}
produces a sequence of index pairs \(\{(i_k^\ast, j_k^\ast)\}_{k=1}^{r_\text{c}}\). These pairs specify the correspondence between recovered and true spikes. Define the permutation \(\pi^\ast\) by \(\pi^\ast (j_k^\ast) = i_k^\ast\) for \(k \in [r_\mathrm{c}]\). In the special case where \(r_\mathrm{c} =r\)---for instance, when all correlations are initialized positively, as in the setting of Theorem~\ref{thm: general recovery SGD p>2 asymptotic}---the pairs \(\{(\pi^\ast(i), i)\}_{i=1}^r\) coincide (as sets) with \(\{(i_k^\ast, j_k^\ast)\}_{k=1}^r\). The matrix \(\boldsymbol{I}_0 \in \R^{r \times r}\) is random. While its entries could potentially be identical due to randomness, Lemma A.3 of our companion paper~\cite{gradientflow} ensures that they are distinct with probability \(1-o(1)\), making the greedy maximum selection of \(\boldsymbol{I}_0\) well-defined with high probability. We now present a more precise formulation of Theorem~\ref{thm: general recovery SGD p>2 asymptotic}.

\begin{thm}[Recovery up to a permutation for \(p \geq 3\)] \label{thm: strong recovery online p>2 asymptotic}
Suppose that \(M = M(N)\) grows at most polynomially in \(N\) and satisfies \(M \gg \log(N) N^{p-2}\). Assume further that \(\delta\) satisfies \(M^{-1}N^{\frac{p-1}{2}} \ll \delta \ll N^{1/2}(\log(N)M)^{-1/2}\). Then, for every \(\varepsilon >0\) and every \(k \in [r_\textnormal{c}]\), 
\[
\lim_{N \to \infty} \mathbb{P}_{\boldsymbol{X}_0} \left( \lvert  m_{i^\ast_k j^\ast_k}(\boldsymbol{X}_M) \rvert \geq 1 - \varepsilon \right) = 1,
\]
where \((i_1^\ast, j_1^\ast), \ldots, (i_{r_\textnormal{c}}^\ast, j_{r_\textnormal{c}}^\ast)\) denote the greedy maximum selection of \(\boldsymbol{I}_0\). 
\end{thm}

Theorem~\ref{thm: strong recovery online p>2 asymptotic} shows that the permutation of the recovered spikes is determined by the greedy maximum selection of the matrix \(\boldsymbol{I}_0\), which in turn depends on the initial correlations and the corresponding signal strengths. We emphasize that Theorem~\ref{thm: strong recovery online p>2 asymptotic} no longer assumes that all initial correlations are strictly positive. Consequently, complete recovery of all spikes may fail in general, and only a subset of size \(r_\textnormal{c}\) is guaranteed to be recovered. As follows from~\eqref{eq: initialization matrix}, full recovery (up to a permutation) is ensured whenever all initial correlations are positive or when \(p\) is even. For odd \(p\), by contrast, negative initial correlations may become trapped near the equator, thus impeding alignment with any signal direction.

\begin{rmk} \label{rmk: sign initialization}
It is important to note that the behavior of online SGD depends on the parity of \(p\) in the following sense. When \(p\) is odd, then each estimator \(\boldsymbol{x}_{j_k^\ast}\) recovers the spike \(\boldsymbol{v}_{i_k^\ast}\) with \(\mathbb{P}\)-probability \(1-o(1)\), since the correlations that are negative at initialization get trapped at the equator. Conversely, when \(p\) is even, we have each estimator \(\boldsymbol{x}_{j_k^\ast}\) recovers \(\textnormal{sgn}(m_{i_k^\ast j_k^\ast}(\boldsymbol{X}_0)) \boldsymbol{v}_{i_k^\ast j_k^\ast}\) with probability \(1-o(1)\). This means that if the correlation at initialization is positive, then \(\boldsymbol{x}_{j_k^\ast}\) recovers \(\boldsymbol{v}_{i_k^\ast}\); otherwise, \(\boldsymbol{x}_{j_k^\ast}\) recovers \(-\boldsymbol{v}_{i_k^\ast}\).
\end{rmk}

The dynamics underlying both Theorems~\ref{thm: general recovery SGD p>2 asymptotic} and~\ref{thm: strong recovery online p>2 asymptotic} reveals a richer structure beyond the stated recovery guarantees. In particular, the correlation \(m_{i_1^\ast j_1^\ast}\) becomes macroscopic first, followed by \(m_{i_2^\ast j_2^\ast}\), and so on. Theorem~\ref{thm: general recovery SGD p>2 asymptotic} includes an additional assumption on the ratio between the signal magnitudes, ensuring that variations due to random initialization are mitigated by differences in the SNRs. Under this condition, the greedy maximum selection of \(\boldsymbol{I}_0\) satisfies \(i_k^\ast = j_k^\ast = k\) for every \(k \in [r_\text{c}]\), so that the correlations \(m_{11}, \ldots, m_{r_\text{c} r_\text{c}}\) become macroscopic sequentially. As illustrated by the numerical simulations in Figures~\ref{fig: p=3,r=2}, ~\ref{fig: p=3, r=2, equal}, and~\ref{fig: p=3 online}, once \(m_{i_1^\ast j_1^\ast}\) exceeds a critical threshold, the correlations \(m_{i_1^\ast j}\) and \(m_{i j_1^\ast}\) for \(i \neq i_1^\ast\) and \(j \neq j_1^\ast\) decrease to sufficiently small levels. This in turn allows \(m_{i_2^\ast j_2^\ast}\) to grow and become macroscopic, and the process continues in the same manner for the remaining correlations \(\{m_{i_k^\ast j_k^\ast}\}\). We refer to this phenomenon as the \emph{sequential elimination process}: the correlations \(\{m_{i_k^\ast j_k^\ast}\}_{k=1}^{r_\text{c}}\) increase one by one, progressively suppressing correlations that share either a row or column index. This notion is formalized below.

\begin{defn}[Sequential elimination] \label{def: sequential elimination}
Let \(S = \{ (i_1, j_1), \ldots, (i_m,j_m)\}\) be a set with distinct indices \(i_1, \ldots, i_m \in [r]\) and \(j_1, \ldots, j_m \in [r]\), where \(m \le r\). We say that the correlations \(\{m_{ij}(t)\}_{1 \leq i,j \leq m}\) exhibit a \emph{sequential elimination with ordering \(S\)} if for every \(\varepsilon, \varepsilon' > 0\), there exist stopping times \(T_1 \leq \cdots \leq T_m\) such that for every \(k \in [m]\) and every \(T \ge T_k\),
\[
| m_{i_k j_k}(\boldsymbol{X}_T)| \geq 1 - \varepsilon  \quad \textnormal{and} \quad |m_{i_k j}(\boldsymbol{X}_T)| \leq \varepsilon', |m_{i j_k}(\boldsymbol{X}_T)| \leq \varepsilon' \enspace \textnormal{for} \: i\neq i_k, j \neq j_k.
\]
\end{defn}

Building on Definition~\ref{def: sequential elimination}, the next result forms the basis of Theorem~\ref{thm: strong recovery online p>2 asymptotic}, and consequently of Theorem~\ref{thm: general recovery SGD p>2 asymptotic}.

\begin{thm}[Theorem~\ref{thm: strong recovery online p>2 asymptotic} revisited] \label{thm: strong recovery online p>2 asymptotic stronger}
Suppose that \(M = M(N)\) grows at most polynomially in \(N\) and satisfies \(M \gg \log(N) N^{p-2}\). Assume further that \(\delta\) satisfies \(M^{-1}N^{\frac{p-1}{2}} \ll \delta \ll N^{1/2}(\log(N)M)^{-1/2}\). Then, the correlations \(\{m_{ij}\}_{1 \le i, j \le r_\textnormal{c}}\) follow a sequential elimination with ordering \(\{(i_k^\ast, j_k^\ast)\}_{k=1}^{r_\textnormal{c}}\) and stopping times of order \(\log(N)N^{p-2}\), with \(\mathbb{P}_{\boldsymbol{X}_0}\)-probability tending to \(1\) as \(N \to \infty\).
\end{thm}

\begin{figure}[h]
\centering
\includegraphics[scale=0.25]{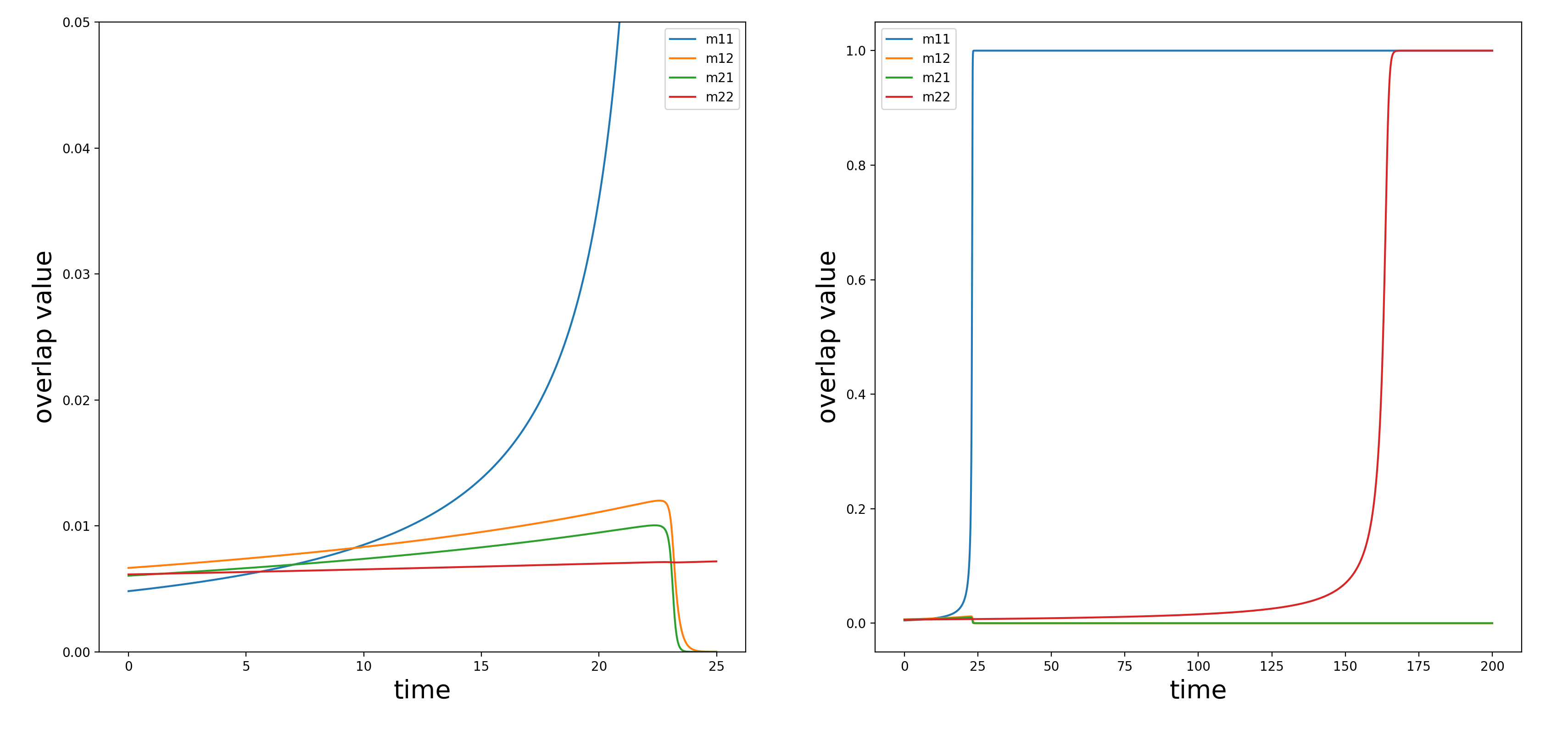}
\caption{Evolution of the correlations \(\{m_{ij}\}_{1 \le i,j \le 2}\) under the population dynamics for \(p=3\), \(r=2\), \(\lambda_1=3\), and \(\lambda_2=1\). The SNRs are sufficiently separated to ensure exact recovery of both spikes \(\boldsymbol{v}_1\) and \(\boldsymbol{v}_2\). Once \(m_{11}\) reaches a sufficiently large microscopic threshold, \(m_{12}\) and \(m_{21}\) begin to decrease, allowing the recovery of \(\boldsymbol{v}_2\) after they become negligible in the evolution of \(m_{22}\).}
\label{fig: p=3,r=2}
\end{figure}

\begin{figure}[h]
\centering
\includegraphics[scale=0.25]{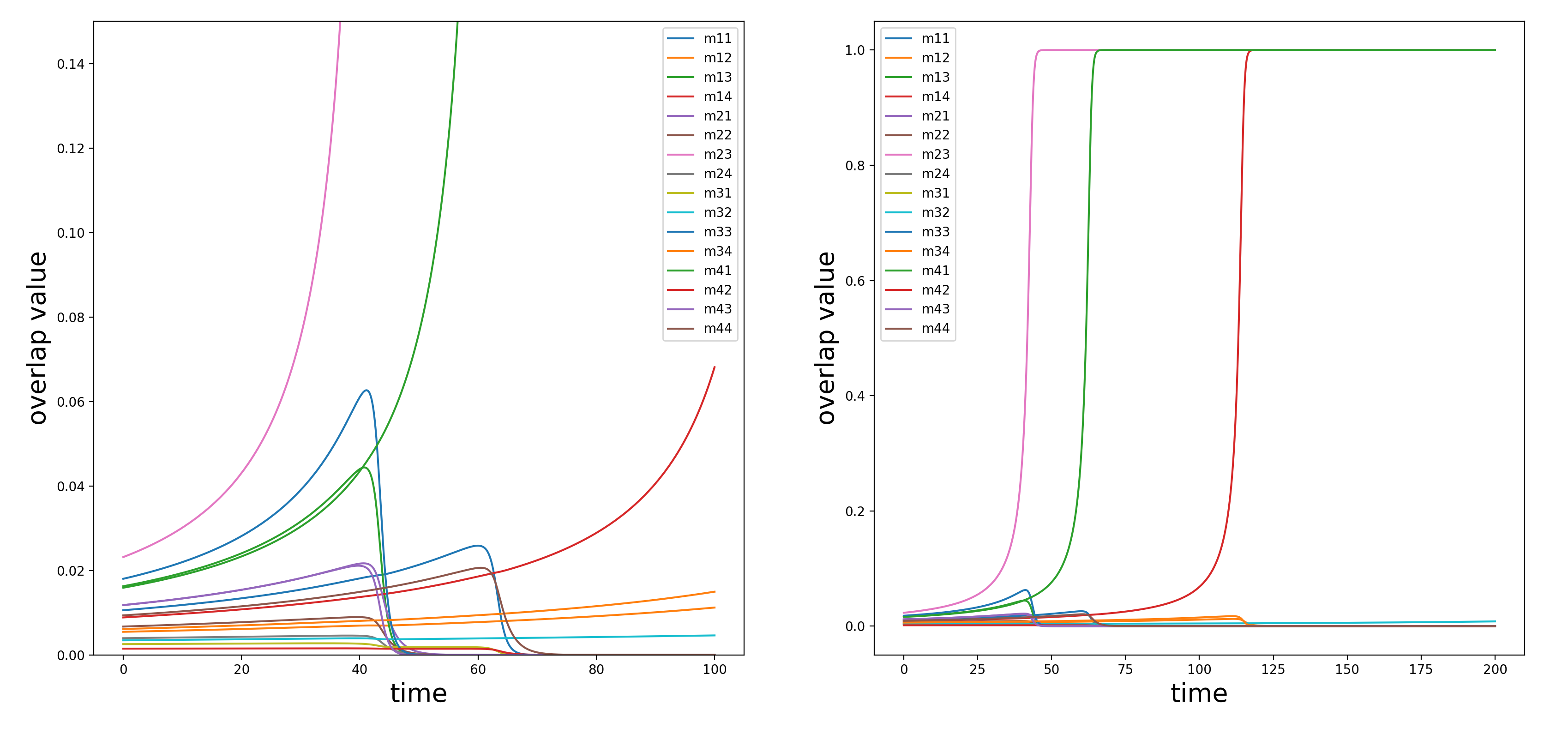}
\caption{Evolution of the correlations \(\{m_{ij}\}_{1 \le i,j \le 4}\) under the population dynamics for \(p=3\), \(r=4\), and equal SNRs \(\lambda_1= \cdots = \lambda_4 =1\). Since the SNRs are identical, the order in which the correlations become macroscopic is determined by their initial values. The simulation illustrates recovery of a permutation of the four spikes \(\boldsymbol{v}_1, \ldots, \boldsymbol{v}_4\) via the sequential elimination phenomenon. The fourth spike is not visible in the plotted time window, as its recovery occurs slightly later.}
\label{fig: p=3, r=2, equal}
\end{figure}

\begin{figure}[h]
\centering
\includegraphics[scale=0.25]{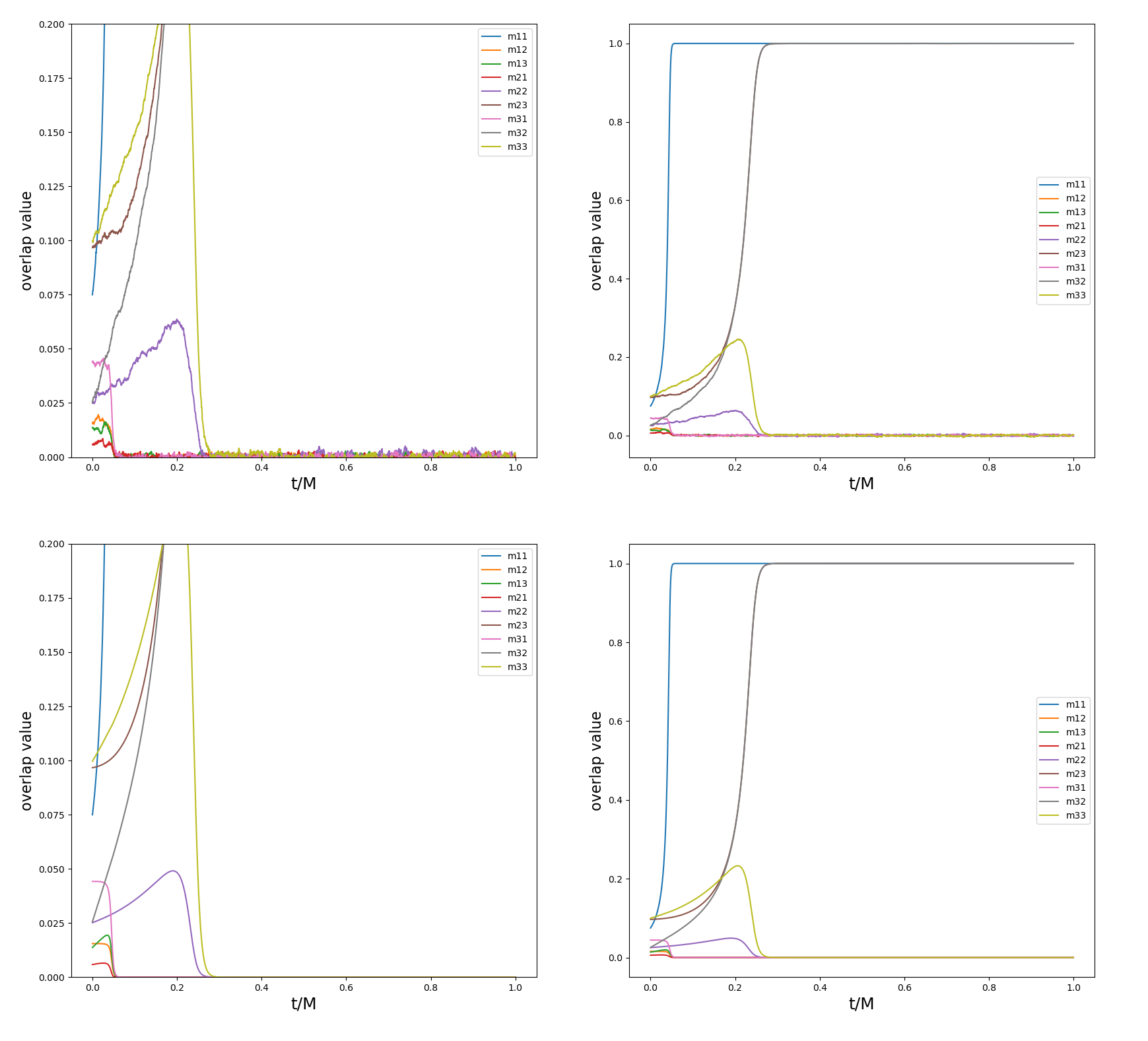}
\caption{Evolution of the correlations \(\{m_{ij}\}_{1 \le i,j \le 3}\) under the online SGD dynamics for \(p=3, r=3\), and SNRs \(\lambda_1 = 3, \lambda_2 =2, \lambda_3 = 1\). The top panels correspond to \(M = 3200\) samples, step size \(\delta/N = 0.0003\), and dimension \(N=500\). The bottom panels show the corresponding noiseless dynamics, obtained with the same step size and number of steps. This simulation illustrates the recovery of a permutation of the three spikes \(\boldsymbol{v}_1, \boldsymbol{v}_2, \boldsymbol{v}_3\) through the sequential elimination phenomenon.}
\label{fig: p=3 online}
\end{figure}

\subsubsection{Main results for \(p =2\)}

We now present our main results for \(p=2\). Our first result shows that exact recovery of all spikes is achievable under sufficiently separated SNRs, with a sample complexity of order \(\log(N)^2 N^{\xi_0 /2}\), where \(\xi_0 = 1 - \frac{\lambda_r^2}{\lambda_1^2} \in (0,1)\) quantifies the relative separation between the smallest and largest signal sizes. We assume the following separation condition: for every \(1 \le i \le r-1\),
\[
\lambda_i = \lambda_{i+1} (1+\kappa_i),
\]
where \(\kappa_i >0\) are constants of order one.

\begin{thm}[Exact recovery for \(p=2\)] \label{thm: strong recovery online p=2 asymptotic}
Assume \(\kappa > \sqrt{2}-1\). Suppose that \(M = M(N)\) grows at most polynomially in \(N\) and satisfies \(M \gg \log(N)^2 N^{\frac{1}{2}\xi_0}\). Assume further that \(\delta\) satisfies \(\sqrt{N}\log(N)M^{-1}\ll \delta \ll N^{\frac{1}{4}(2 - \xi_0)} M^{-1/2}\). Then, for every \(\varepsilon>0\) and every \(i \in [r]\),
\[
\lim_{N \to \infty} \mathbb{P}_{\boldsymbol{X}_0} \left( \lvert m_{ii}(\boldsymbol{X}_M) \rvert \geq 1-\varepsilon \right) = 1.
\]
\end{thm}

The assumption on \(\kappa > \sqrt{2}-1\) serves as a convenient sufficient condition ensuring that the off-diagonal correlations remain controlled throughout the dynamics. In principle, this assumption could be relaxed by refining the comparison estimates in the proof, but we retain it to simplify the analysis and keep the proof concise. As in Theorem~\ref{thm: general recovery SGD p>2 asymptotic}, variations in the initial correlations are compensated by differences in SNRs, leading to exact recovery of all spikes. However, unlike Theorem~\ref{thm: general recovery SGD p>2 asymptotic}, for \(p=2\), exact recovery is already achievable when the SNRs differ by factors of order one. Regarding sample complexity, our scaling differs from the rank-one matrix PCA result of~\cite{arous2021online}, where \(\mathcal{O} ( \log(N)^2)\) samples suffice for recovery of a single spike. Here, this sample size is not enough to ensure stability of the subsequent correlation dynamics. The additional logarithmic and polynomial factors arise from controlling the retraction map, which is intrinsic to the online SGD dynamics on the Stiefel manifold.\\

As with the case of \(p \ge 3\), the correlations exhibit a sequential elimination with ordering \(\{(k,k) \colon k \in [r]\}\). Thus, an analogous version of Theorem~\ref{thm: strong recovery online p>2 asymptotic stronger} can be stated for Theorem~\ref{thm: strong recovery online p=2 asymptotic}. The evolution of the correlations is illustrated by the numerical simulations shown in Figures~\ref{fig: p=2, r=2} and~\ref{fig: p=2 online}. As noted in Remark~\ref{rmk: sign initialization}, the behavior of online SGD depends on the parity of the order \(p\) of the tensor. Specifically, for \(p=2\), the recovery of the unknown vector \(\boldsymbol{v}_i\) or its negative counterpart \(-\boldsymbol{v}_i\) is determined by the sign of the correlation \(m_{ii}\) at initialization.

\begin{figure}[h]
\centering
\includegraphics[scale=0.25]{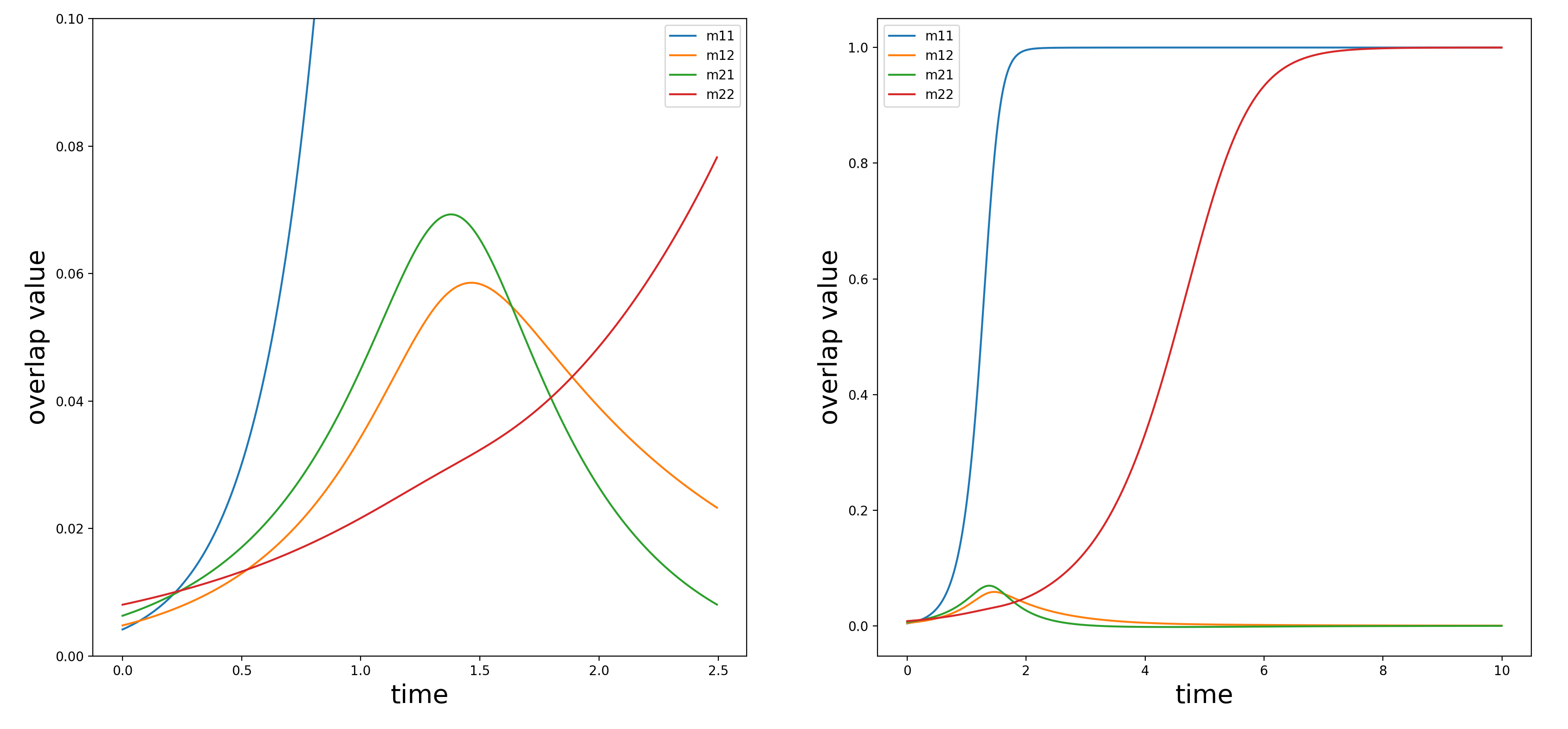}
\caption{Evolution of the correlations \(\{m_{ij}\}_{1 \le i,j \le 2}\) under the population dynamics for \(p=2\), \(r=2\), \(\lambda_1=3\), and \(\lambda_2=1\). The SNRs are sufficiently separated to ensure exact recovery of both spikes.}
\label{fig: p=2, r=2}
\end{figure}


\begin{figure}[h]
\centering
\includegraphics[scale=0.25]{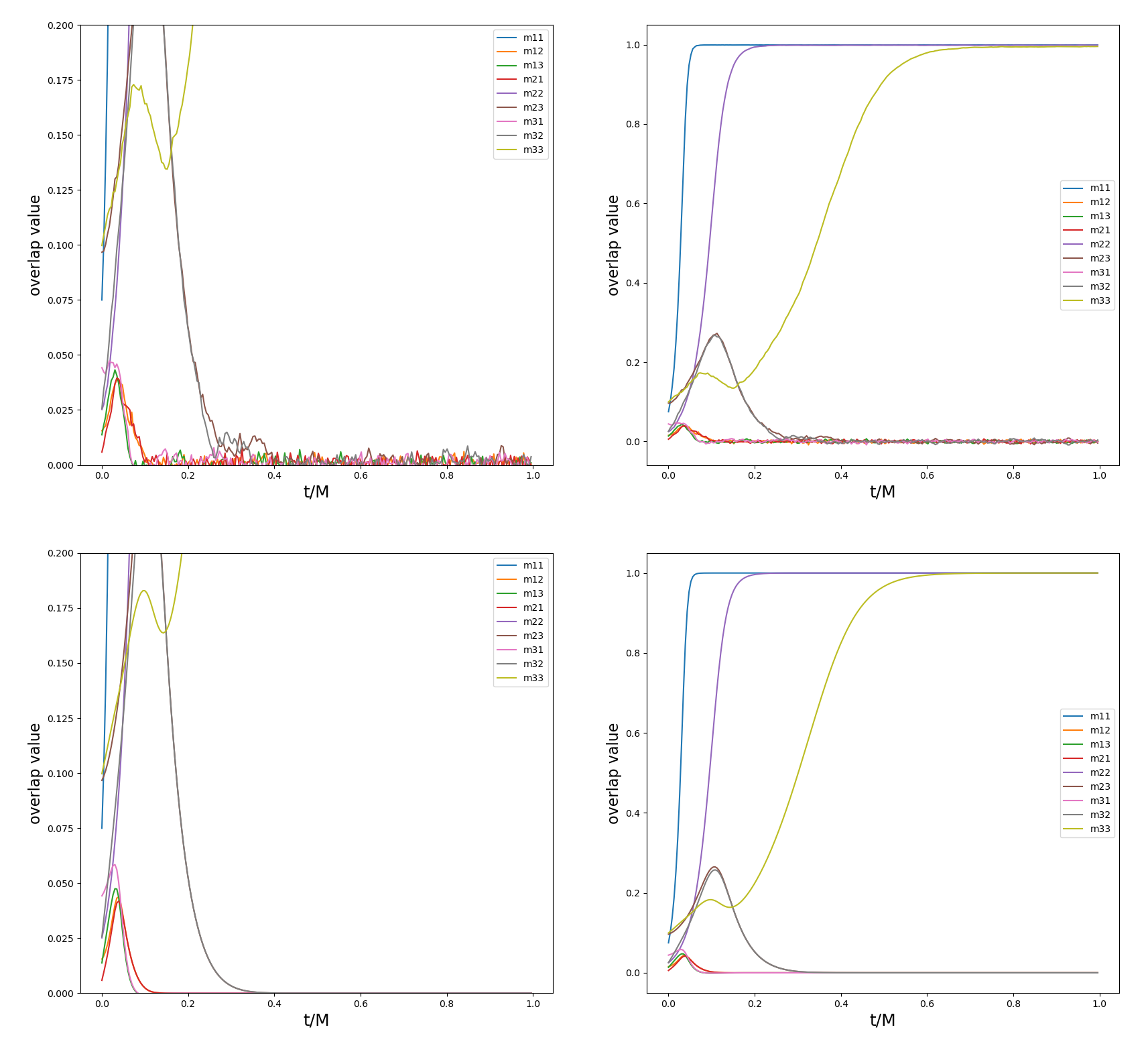}
\caption{Evolution of the correlations \(\{m_{ij}\}_{1 \le i, j \le 3}\) under the online SGD dynamics for \(p=2, r=3\), and SNRs \(\lambda_1 = 3, \lambda_2 =2, \lambda_3 = 1\). The top panels correspond to \(M = 230\) samples, step size \(\delta/N = 0.002\), and dimension \(N=500\). The bottom panels show the corresponding noiseless dynamics, obtained with the same step size and number of steps. The simulation shows exact recovery of the three spikes. The decrease in the yellow curve representing the correlation \(m_{33}\) is quantitatively controlled in the proof.}
\label{fig: p=2 online}
\end{figure}

Our second main result for \(p=2\) addresses the case where the SNRs are all equal. Due to the invariance of the loss function \(\mathcal{L}_{N,r}\) under both right and left rotations, our focus shifts from recovering each individual signal to recovering the subspace spanned by the signal vectors \(\boldsymbol{v}_1,\ldots, \boldsymbol{v}_r\). This notion of recovery is referred to as \emph{subspace recovery} and is defined as follows.

\begin{defn}[Subspace recovery] \label{def: subspace recovery}
We say that we \emph{recover the subspace} spanned by the orthogonal signal vectors \(\boldsymbol{v}_1,\ldots, \boldsymbol{v}_r\) if 
\[
\norm{\boldsymbol{X} \boldsymbol{X}^\top -\boldsymbol{V} \boldsymbol{V}^\top}_{\textnormal{F}} = o(1),
\]
with high probability, where \(\norm{\cdot}_{\textnormal{F}}\) stands for the Frobenius norm and \(\boldsymbol{X} = [\boldsymbol{x}_1, \ldots, \boldsymbol{x}_r], \boldsymbol{V} = [\boldsymbol{v}_1, \ldots, \boldsymbol{v}_r] \in \R^{N \times r}\).
\end{defn}

From Definition~\ref{def: subspace recovery}, we note that subspace recovery is equivalent to recovering the eigenvalues of the matrix \(\boldsymbol{G} \in \R^{r \times r}\) defined by
\[
\boldsymbol{G} = \boldsymbol{M} \boldsymbol{M}^\top, \quad \textnormal{where} \enspace \boldsymbol{M} = \boldsymbol{V}^\top \boldsymbol{X} = (m_{ij}(\boldsymbol{X}))_{1 \le i, j \le r} \in \R^{r \times r}.
\]
Indeed, one can verify that \(\norm{\boldsymbol{X} \boldsymbol{X}^\top - \boldsymbol{V} \boldsymbol{V}^\top}_{\textnormal{F}} = 2 (r - \Tr(\boldsymbol{G}))\). Hereafter, let \(\theta_1(\boldsymbol{X}),\ldots, \theta_r(\boldsymbol{X})\) denote the eigenvalues of the matrix-valued function \(\boldsymbol{G}(\boldsymbol{X})\). Since \(\boldsymbol{G}\) is symmetric and positive definite, its eigenvalues are non-negative. Our main result is as follows.

\begin{thm}[Subspace recovery for \(p=2\)] \label{thm: strong recovery isotropic SGD asymptotic}
Assume that \(\lambda_1 = \cdots = \lambda_r\). Suppose that \(M=M(N)\) grows at most polynomially in \(N\) and satisfies \(M \gg \log(N)^3\). Assume further that \(\delta\) satisfies \( \sqrt{N}\log(N)M^{-1}\ll \delta \ll \sqrt{N}\left(\log(N)M\right)^{-1/2}\). Then, for every \(\varepsilon>0\) and \(i \in [r]\),
\[
\lim_{N \to \infty} \mathbb{P}_{\boldsymbol{X}_0} \left( \theta_i(\boldsymbol{X}_M) \geq 1-\varepsilon \right) = 1.
\]
\end{thm}

The recovery dynamics in this case differs significantly from those in the previous cases: the correlations evolve at similar rates, thereby preventing the sequential elimination phenomenon, as shown in the left-hand side of Figure~\ref{fig: p=2, r=2, equal SNRs}. However, as illustrated on the right-hand side of Figure~\ref{fig: p=2, r=2, equal SNRs}, the eigenvalues of \(\boldsymbol{G}\) establish a natural ordering at initialization, which persists throughout their evolution until recovery, resulting in a monotone evolution. 

Unlike the tensor case, where we analyze the dynamics for any value of the signal sizes, for \(p=2\) we have focused on two cases: when the parameters are separated by constants of order \(1\) (i.e., Theorem~\ref{thm: strong recovery online p=2 asymptotic}) and when they are all equal (i.e., Theorem~\ref{thm: strong recovery isotropic SGD asymptotic}). The intermediate case, where the SNRs are separated by a sufficiently small \(\epsilon\)-factor, possibly depending on \(N\), requires further effort and will be addressed in future work.

\begin{figure}[h]
\centering
\includegraphics[scale=0.25]{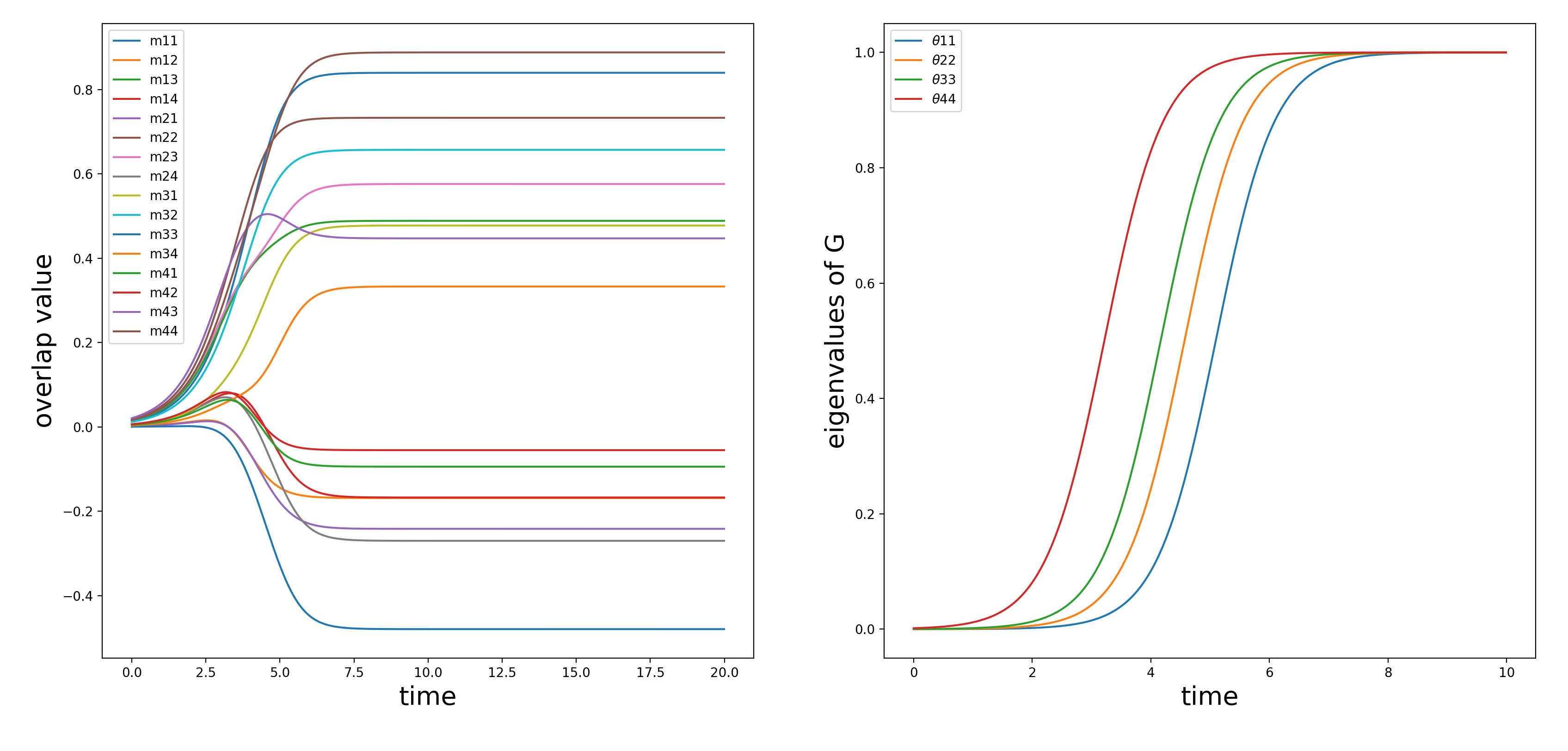}
\caption{Evolution of the correlations \(m_{ij}\) (left) and eigenvalues (right) of \(\boldsymbol{G} = \boldsymbol{M} \boldsymbol{M}^\top\) under the population dynamics. The relative speeds of the correlations are too close to each other for a sequential elimination phenomenon to take place. However, the eigenvalues of \(\boldsymbol{G}\) are monotone and it is possible to characterize subspace recovery.}
\label{fig: p=2, r=2, equal SNRs}
\end{figure}

\subsection{Related works} \label{subsection: related work}

Richard and Montanari~\cite{MontanariRichard} introduced the Tensor PCA problem as a higher-order generalization of matrix PCA, previously studied by Johnstone~\cite{Johnstone}. In their setting, they considered the single-spike case (\(r=1\)) with only one tensor observation:
\begin{equation} \label{eq: single-sample tensor pca}
\boldsymbol{Y} = \sqrt{N} \bar{\lambda} \boldsymbol{v}^{\otimes p} + \boldsymbol{W} \in (\R^N)^{\otimes p}.
\end{equation}
In contrast, in our multi-sample setting~\eqref{eq: spiked tensor model}, we observe \(M\) i.i.d. samples of the same form. Averaging reduces the noise variance by a factor \(1/M\), while the signal term remains unchanged. Thus, for \(r=1\) the multi-sample model with parameter \(\lambda\) is equivalent to the single-sample model~\eqref{eq: single-sample tensor pca} with effective signal strength \(\bar{\lambda} = \sqrt{M} \lambda\). Since \(\lambda = \mathcal{O}(1)\), the detectability and recovery thresholds are determined by the scaling of \(\sqrt{M}\). We next summarize the existing literature on these thresholds for both the single- and multi-spiked tensor models, stated in terms of the number of samples \(M\). In the single-spike model, the algorithmic landscape is by now well understood. Local algorithms with random initialization require a sample complexity scaling as \(N^{p-2}\) to efficiently recover the spike, whereas sharper thresholds scaling as \(N^{(p-2)/2}\) can be achieved using Sum-of-Squares (SoS) and spectral methods. The computational threshold \(N^{p-2}\) has been achieved by gradient flow, Langevin dynamics~\cite{arous2020algorithmic, arous2021online}, and tensor power iteration~\cite{HuangPCA, Wu24}. In particular, Wu and Zhou~\cite{Wu24} showed that the required number of samples scales as \(N^{p-2} \log(N)^{-C}\), with \(C\) depending on \(p\). The sharper threshold \(N^{(p-2)/2}\) is achieved by SoS-based algorithms~\cite{Hopkins15, Hopkins16, Bandeira2017}, spectral methods based on the Kikuchi Hessian~\cite{Weinkikuchi}, and tensor unfolding~\cite{BenArousUnfolding}. Beyond computational thresholds, research has also focused on the information-theoretic threshold for detection~\cite{lesieur2017, perry2018optimality, bandeira2020, chen2019, jagannath2020, dominik2024} and the statistical threshold validating the MLE as a reliable estimator~\cite{benarouscomplexity2019, ros2019, jagannath2020}. For finite-rank tensor PCA, both detection and recovery thresholds have been explored. On the information-theoretic side, there exists an order-\(1\) critical SNR above which the low-rank signal tensor \(\sqrt{N}\sum_{i=1}^r \lambda_i \boldsymbol{v}_i^{\otimes p}\) can be detected~\cite{lelarge2019fundamental, chen2021}. On the algorithmic side, Huang et al.~\cite{HuangPCA} showed that tensor power iteration converges to a single spike, specifically the one with the largest effective correlation at initialization, and thus does not recover the entire rank-\(r\) structure. Furthermore, in our companion papers~\cite{langevin, gradientflow}, we analyze Langevin and gradient flow dynamics. We show that the sample complexity required to efficiently recover the leading spike matches the known rate for the single-spike case (i.e., $N^{p-2}$ for $p\geq 3$), and to recover all spikes (up to a permutation), scales as \(N^{p-1}\) for \(p \ge 3\) and as \(N^{1 + \delta - \lambda_r^2/\lambda_1^2}\) for any \(\delta >0\) when \(p=2\), reflecting dependence on the gap between the SNRs. These thresholds are suboptimal compared to those obtained here via online SGD, due to the difficulty in the control of the additional noise generated under Langevin and gradient flow dynamics.

The single- and multi-spiked tensor models belong to the broader class of single- and multi-index models. In the single-index setting, when the hidden link function \(f_\ast\) is known, the behavior of online SGD exhibits phenomena similar to those observed in the single-spiked tensor PCA model~\cite{arous2021online}. When \(f_\ast\) is unknown, recent works~\cite{bietti2022learning, berthier2023learning,Mahankali2023} have studied how two-layer neural networks can jointly learn the hidden link function and the hidden vector. These approaches leverage gradient flow, either combined with non-parametric regression steps or through a separation of timescales between the inner and outer layers. In multi-index models, several recent works have provided upper bounds on the sample complexity thresholds required for shallow neural networks trained with first-order optimization methods to learn multi-index models. For instance,~\cite{mousavi2022neural, damian2022neural, ba2022high} analyzed the benefits given by a single gradient step coupled with various choices for the second layer weights. hile these works characterize the sample complexity needed for the gradient to achieve macroscopic correlation with the target subspace and demonstrate improvements over kernel methods, they provide limited insight into global convergence when training the first-layer weights. Using an online layer-wise training approach combined with regression steps on the second-layer weights,~\cite{abbe2023sgd} generalized the concept of the information exponent from~\cite{arous2021online} to multi-index functions, introducing the \emph{leap-exponent}. They showed that the dynamics successively visit several saddle points before recovering the unknown subspace. A similar analysis was carried out in~\cite{dandi2023learning}, providing further sample complexity guarantees when using large batches of data over a few gradient steps. Furthermore, in~\cite{bietti2023learning} the authors studied the continuous-time population-limit of a two-step procedure in which each gradient step on the subspace estimator is coupled with a multivariate non-parametric regression step to estimate the link function. They showed that this two-step procedure makes the landscape benign, leading to the recovery of both the hidden link function and the correct subspace via \emph{saddle-to-saddle dynamics} similar to the one exhibited in~\cite{abbe2023sgd}. These approaches rely heavily on the separation of timescales and regression steps, resulting in a superposition of monotone dynamics that allows generality with respect to the link function while focusing on subspace recovery. In contrast, our work provides a complete characterization of the dynamics for online SGD in a specific multi-index model, achieving recovery of permutations and rotations (when \(p=2\) and the SNRs are equal).

In the field of statistical physics of learning and related results in high-dimensional probability, single- and multi-index models can be viewed as instances of the \emph{teacher-student model}, see e.g.~\cite{engel2001statistical,zdeborova2016statistical}, in which a specific target function is learned using a chosen architecture, typically with knowledge of the form of the hidden model. The dynamics of gradient-based methods are usually studied using closed-form, low-dimensional, and asymptotically exact dynamical systems towards which the gradient trajectories are shown to concentrate as the input dimension and number of sample diverge while remaining proportional. In the case of online learning, these dynamics lead to sets of ordinary differential equations (ODEs). For instance, such ODEs have been analyzed for committee machines~\cite{saad1995line} and two-layer neural networks~\cite{goldt2019dynamics,veiga2022phase}. When full-batch learning is considered, correlations along the entire trajectory lead to integro-differential equations, known in statistical physics as \emph{dynamical mean-field theory}. Recently, there has been renewed interest in this framework for classification tasks and learning single- and multi-index functions in the proportional limit of dimension and sample size, using both heuristic methods~\cite{sarao2019,sarao2019bis,mignacco2020dynamical} and rigorous approaches~\cite{celentano2021high,gerbelot2024rigorous}. Despite their theoretical importance, these equations are often difficult to analyze beyond a few steps and are typically limited to linear sample complexity regimes. Consequently, applying these tools to study global convergence, characterize fixed points, or achieve polynomial sample complexity remains challenging.

Recently, the limiting dynamics of online SGD have been investigated in~\cite{benarousneurips, ben2022effective, arous2023eigenspace} for a broad range of estimation and classification problems, including learning an XOR function with a two-layer neural network. In particular,~\cite{benarousneurips, ben2022effective} identifies critical regimes of the step size that introduce an additional term in the population dynamics, called the \emph{corrector}, which also appears in the statistical physics literature\cite{saad1995line,goldt2019dynamics}. By rescaling the step size appropriately, these works also obtain diffusive limits for online SGD. Moreover,~\cite{arous2023eigenspace} employs a spectral analysis of the Hessian evolution to show alignment of SGD trajectories with emerging eigenspaces. We leave the exploration of critical step sizes, diffusive limits, and spectral phenomena to future work, focusing here on deriving nonasymptotic results and deepening the understanding of the population dynamics.

\subsection{Overview}

An overview of the paper is given as follows. Section~\ref{section: qualitative picture} provides an outline of the proofs of our main results. In Section~\ref{section: main results SGD}, we present the nonasymptotic versions of our main results, from which we derive the main asymptotic results of Subsection~\ref{subsection: main asymptotic results}. Sections~\ref{section: background}-\ref{section: proof isotropic SGD} are devoted to the proofs of the main results in their nonasymptotic form. Specifically, Section~\ref{section: background} presents comparison inequalities for both the correlations \(m_{ij}(\boldsymbol{X})\) and the eigenvalues of \(\boldsymbol{M} \boldsymbol{M}^\top\) with \(\boldsymbol{M} = (m_{ij}(\boldsymbol{X}))_{i,j \in [r]}\). Section~\ref{section: proof non-isotropic SGD} provides the proofs of the main results related to the full recovery of the spikes, i.e., it addresses the case where \(p \ge 3\) with arbitrary values for the SNRs, as well as the case where \(p=2\) with sufficiently separated SNRs. Finally, Section~\ref{section: proof isotropic SGD} focuses on the case where \(p=2\) and the spikes have equal SNRs, thereby proving subspace recovery. \\ 

\textbf{Acknowledgements.}\ G.B.\ and C.G.\ acknowledge the support of the NSF grant DMS-2134216. V.P.\ acknowledges the support of the ERC Advanced Grant LDRaM No.\ 884584.

\section{Outline of proofs} \label{section: qualitative picture}

In this section, we outline the proofs of our main results. To simplify the discussion, we focus on the spiked tensor model with \(r=2\). First, we address the cases where \(p\ge 3\) with arbitrary values of \(\lambda_1\) and \(\lambda_2\), as well as the case where \(p=2\) with sufficiently separated SNRs. In these cases, we follow the evolution of the correlations under the online SGD algorithm and show full recovery of the spikes. Next, we consider the case where \(p=2\) and the spikes have equal strength, i.e., \(\lambda_1 = \lambda_2\), and show that the algorithm recovers the subspace spanned by the two spikes.

\subsection{Full recovery of spikes}

We begin by analyzing the evolution of the correlations \(m_{ij}(\boldsymbol{X})\) under the online SGD algorithm, following the update rule given by~\eqref{eq: online SGD}. We assume an initial random start with a completely uninformative prior, specifically the invariant distribution on the Stiefel manifold. As a consequence, all correlations \(m_{ij}(\boldsymbol{X}_0)\) have the typical scale of order \(N^{-\frac{1}{2}}\) at initialization. In the following discussion, we assume for simplicity that all correlations are positive at initialization.

According to~\eqref{eq: online SGD}, the evolution of \(m_{ij}(\boldsymbol{X}_t)\) at time \(t\) satisfies
\[
m_{ij}(\boldsymbol{X}_t) = \left \langle \boldsymbol{v}_i, (\boldsymbol{X}_t)_j \right \rangle = \left \langle \boldsymbol{v}_i ,R_{\boldsymbol{X}_{t - 1}}\left(-\frac{\delta}{N} ( \nabla_{\textnormal{St}} \mathcal{L}_{N,r} (\boldsymbol{X}_{t-1}; \boldsymbol{Y}^t) )_j\right) \right \rangle.
\] 
To simplify notation, we write \(m_{ij}(t) = m_{ij}(\boldsymbol{X}_t)\). When the step size \(\delta >0\) is sufficiently small, the matrix inverse appearing in the polar retraction \(R_{\boldsymbol{X}_{t-1}}\) (see~\eqref{eq: polar retraction}) can be expanded using a Neumann series. According to~\cite{horn2012matrix}, for any square matrix \(\boldsymbol{A} \in \R^{r \times r}\) satisfying \(\lVert \boldsymbol{A}\rVert <1\) in some operator norm, 
\[
(\boldsymbol{I}_r - \boldsymbol{A})^{-1} = \sum_{k=0}^\infty \boldsymbol{A}^k,
\]
which is known as the \emph{Neumann series}. Hence, up to controlled error terms, we obtain the following approximate update:
\[ 
m_{ij}(t) \approx m_{ij}(t-1) -\langle \boldsymbol{v}_i, \frac{\delta}{N} (\nabla_{\textnormal{St}} \mathcal{L}_{N,r} (\boldsymbol{X}_{t-1}; \boldsymbol{Y}^t))_j \rangle.
\]
Iterating this expression yields
\begin{equation} \label{eq: evolution correlation SGD informal}
m_{ij}(t) \approx m_{ij}(0) - \frac{\delta}{N} \sum_{\ell=1}^t \langle \boldsymbol{v}_i, (\nabla_{\textnormal{St}} \mathcal{L}_{N,r}(\boldsymbol{X}_{\ell-1}; \boldsymbol{Y}^\ell) )_j \rangle.
\end{equation}
A rigorous version of the approximation~\eqref{eq: evolution correlation SGD informal}, together with explicit error bounds, is provided in Proposition~\ref{prop: inequalities SGD}. The condition on the step size \(\delta\) ensuring convergence of the Neumann series is mild and does not constitute a bottleneck in our analysis. However, the retraction map introduces additional interaction terms among the correlations, which require careful control.

Having derived the evolution equations~\eqref{eq: evolution correlation SGD informal} for the correlations, we now decompose the gradient of the loss function \(\mathcal{L}_{N,r}\) into its \emph{noise} and \emph{population} components:
\[
\mathcal{L}_{N,r}(\boldsymbol{X}_{t-1},\boldsymbol{Y}^t) = H_{N,r}(\boldsymbol{X}_{t-1}) +  \Phi_{N,r}(\boldsymbol{X}_{t-1}),
\]
where \(H_{N,r}\) and \(\Phi_{N,r}\) are defined in~\eqref{eq: noise part} and~\eqref{eq: population loss}, respectively. The main idea of the proof strategy is to control the magnitude of the noise term by appropriately choosing the step size \(\delta\), ensuring that the population term dominates over a sufficiently long time horizon. Balancing these two contributions also determines the sample complexity required for efficient recovery. The main challenge, compared to the single-spiked setting analyzed in~\cite[Proposition 4.1]{arous2021online}, is that in the present multi-spiked case the population component gives rise to a substantially more intricate dynamics, due to nonmonotonic interactions among multiple correlations. This richer dynamics demands a more delicate analysis to maintain statistically and algorithmically meaningful sample-complexity guarantees. In particular, it requires a significantly finer partitioning of the dynamics, obtained by introducing appropriate sequences of stopping times, possibly indexed by the dimension. These sequences are needed to control both the sequential elimination phenomenon and the additional fluctuations inherent to the discrete-time setting.

\subsubsection{Control of the noise term} \label{subsection: control noise term informal}

The evolution equation~\eqref{eq: evolution correlation SGD informal} can be rewritten as
\[
m_{ij}(t) \approx m_{ij}(0) - \frac{\delta}{N} \sum_{\ell=1}^t \langle \boldsymbol{v}_i, (\nabla_{\textnormal{St}} \Phi_{N,r}(\boldsymbol{X}_{\ell-1}) )_j \rangle - \frac{\delta}{N} \sum_{\ell=1}^t \langle \boldsymbol{v}_i, (\nabla_{\textnormal{St}} H_{N,r}(\boldsymbol{X}_{\ell-1}) )_j \rangle,
\]
where  
\[
\begin{split}
- \langle \boldsymbol{v}_i, (\nabla_{\textnormal{St}} \Phi_{N,r}(t) )_j \rangle & = p \sqrt{N} \lambda_i \lambda_j m_{ij}^{p-1}(t) \\
&\quad - \frac{p}{2} \sqrt{N} \sum_{1 \le k, \ell \le 2} \lambda_k m_{i \ell}(t) m_{k j}(t) m_{k \ell}(t) \left (\lambda_j m_{kj}^{p-2}(t) + \lambda_\ell m_{k \ell}^{p-2}(t) \right ) .
\end{split}
\]
The noise term corresponds to the sum of the martingale increments \(- \langle \boldsymbol{v}_i, (\nabla_{\textnormal{St}} H_{N,r}(\boldsymbol{X}_{\ell-1}) )_j \rangle\). Since the noise tensors \((\boldsymbol{W}^\ell)_\ell\) are sub-Gaussian, classical concentration inequalities for sub-Gaussian random variables can be used to control this contribution. In particular, applying exponential versions of Doob’s inequality~\cite{mcdiarmid1998concentration, wainwright2019high} yields faster convergence rates for the stochastic fluctuations compared with the single-spied setting studied in~\cite{arous2021online}. The population part consists of two components: a drift term \( p \sqrt{N} \lambda_i \lambda_j m_{ij}^{p-1}\), which dominates the dynamics, particularly near initialization, and a correction term \(\frac{p}{2} \sqrt{N} \sum_{1 \le k, \ell \le r} \lambda_k m_{i \ell} m_{k j} m_{k \ell} \left (\lambda_j m_{kj}^{p-2} + \lambda_\ell m_{k \ell}^{p-2} \right ) \), which arises from the constraint that the estimator \(\boldsymbol{X}\) lies on the Stiefel manifold. For the correlation \(m_{ij}(t)\) to increase, the drift term must dominate both the noise and correction contributions. Near initialization, the correlations typically scale as \(N^{-\frac{1}{2}}\), in which case the drift term indeed dominates the correction term. Moreover, the noise contribution can be absorbed into the initial condition \(m_{ij}(0)\), leading to the simplified evolution
\begin{equation} \label{eq: simple equation}
m_{ij}(t) \approx m_{ij}(0) + \frac{\delta}{\sqrt{N}} p \lambda_i \lambda_j \sum_{\ell=1}^t m_{ij}^{p-1}(\ell-1).
\end{equation}
At this stage, the key task is to show that the drift term continues to dominate the noise term over a sufficiently long time horizon, allowing the correlations \(m_{ij}\) to \emph{escape mediocrity}. Once the leading correlation reaches a critical threshold, the previously negligible correction term in the population loss becomes significant, and the dynamics transitions to a more delicate regime where the correlations start to interact with one another, as discussed below.

\subsubsection{Analysis of the population dynamics}

We now analyze the evolution of the correlations by studying the population dynamics, first for the case \(p \ge 3\), and then for \(p=2\).\\

\textbf{Recovery for \(p \ge 3\).} The solution of~\eqref{eq: simple equation} is given by
\begin{equation} \label{eq: simple sol p>3}
m_{ij}(t) \approx m_{ij}(0) \left(1 - \frac{\delta}{\sqrt{N}} p \lambda_i \lambda_j m_{ij}(0)^{p-2} t \right)^{-\frac{1}{p-2}},
\end{equation}
where \(m_{ij}(0) = \gamma_{ij} N^{-1/2}\). The typical time for \(m_{ij}\) to reach a macroscopic threshold \(\varepsilon > 0\), denoted by \(T_\varepsilon^{(ij)}\), is therefore given by
\[
T_\varepsilon^{(ij)} \approx \frac{1- \left (\frac{\gamma_{ij}}{\varepsilon \sqrt{N}} \right)^{p-2}}{\delta \lambda_i \lambda_j p  \gamma_{ij}^{p-2}} N^{\frac{p-1}{2}}.
\]
In particular, the first correlation to become macroscopic is the one associated with the largest value of \(\lambda_i \lambda_j \gamma_{ij}^{p-2}\), where \((\gamma_{ij})_{1 \le i,j \le 2}\) are approximately independent standard normal random variables. According to Definition~\ref{def: greedy operation}, this corresponds to the pair \((i_1^\ast,j_1^\ast)\) obtained via the greedy maximum selection of the matrix \(\boldsymbol{I}_0 = (\lambda_i \lambda_j m_{ij}^{p-2} (0) \mathbf{1}_{\{m_{ij}^{p-2}(0) \ge 0\}})_{ij}\). Once the first correlation \(m_{i_1^\ast j_1^\ast}\) reaches a macroscopic value \(\varepsilon >0\), one can verify that all remaining correlations \(\{m_{ij}\}_{(i,j) \neq (i_1^\ast, j_1^\ast)}\) stay at their initial microscopic scale. This follows directly by evaluating~\eqref{eq: simple sol p>3} at time \(T_\varepsilon^{(i_1^\ast j_1^\ast)}\). Note that if the SNRs are sufficiently separated to compensate for initialization differences, then with high probability \(m_{11}\) is the first correlation to escape mediocrity (see also Theorem~\ref{thm: general recovery SGD p>2 asymptotic}). In the following, without loss of generality we may assume that \(i_1^\ast = j_1^\ast = 1\), i.e., \(m_{11}\) is the first correlation to reach the macroscopic threshold \(\varepsilon\). 

We now describe the behavior of \(m_{12}, m_{21}\), and \(m_{22}\) as \(t \ge T_\varepsilon^{(11)}\). The correction part of the population term,
\[
\sum_{1 \le k, \ell \le 2} \lambda_k m_{kj}(t) m_{k \ell}(t) m_{i \ell}(t) (\lambda_j m_{kj}^{p-2}(t) + \lambda_\ell m_{k \ell}^{p-2}(t) ),
\]
becomes dominant in the evolution of the correlations \(m_{12}\) and \(m_{21}\) once \(m_{11}\) reaches the microscopic threshold \(N^{-{\frac{p-2}{2p}}}\). In other words, \(m_{12}\) and \(m_{21}\) initially grow but begin to decrease once \(m_{11}\) exceeds this scale. Similarly, the correction part of the population term may influence the evolution of \(m_{22}\) once \(m_{11}\) reaches another microscopic threshold of order \(N^{-{\frac{p-3}{2(p-1)}}}\), potentially inducing a slight decrease in \(m_{22}\). A detailed analysis shows that this decrease is at most of order \(\frac{\log(N)}{N}\), so that \(m_{22}\) remains stable at the scale \(\frac{1}{\sqrt{N}}\) during the growth phase of \(m_{11}\). This behavior is illustrated in Figure~\ref{fig: zoom p=3}. As \(m_{12}\) and \(m_{21}\) become sufficiently small, the evolution of \(m_{22}\) again follows~\eqref{eq: simple sol p>3}, thus ensuring the recovery of the second direction. This mechanism is referred to as the sequential elimination phenomenon (see Definition~\ref{def: sequential elimination}). The correlations increase sequentially: when one correlation (e.g., \(m_{11}\)) crosses a critical threshold, the correlations sharing its row or column indices (e.g., \(m_{12}, m_{21}\)) decrease until they become negligible, allowing the subsequent correlations (e.g., \(m_{22}\)) to grow. This phenomenon is illustrated in Figure~\ref{fig: p=3,r=2}. Finally, we remark that if the SNRs are sufficiently separated, exact recovery of the unknown signal vectors is achieved with high probability, which depends on the ratio between the signal sizes, as stated in Theorem~\ref{thm: general recovery SGD p>2 asymptotic}. Otherwise, recovery occurs up to a permutation of the spikes, determined by the greedy maximum selection of \(\boldsymbol{I}_0\), as illustrated in Figure~\ref{fig: p=3, r=2, equal}.

\begin{figure}[h]
\centering
\includegraphics[scale=0.25]{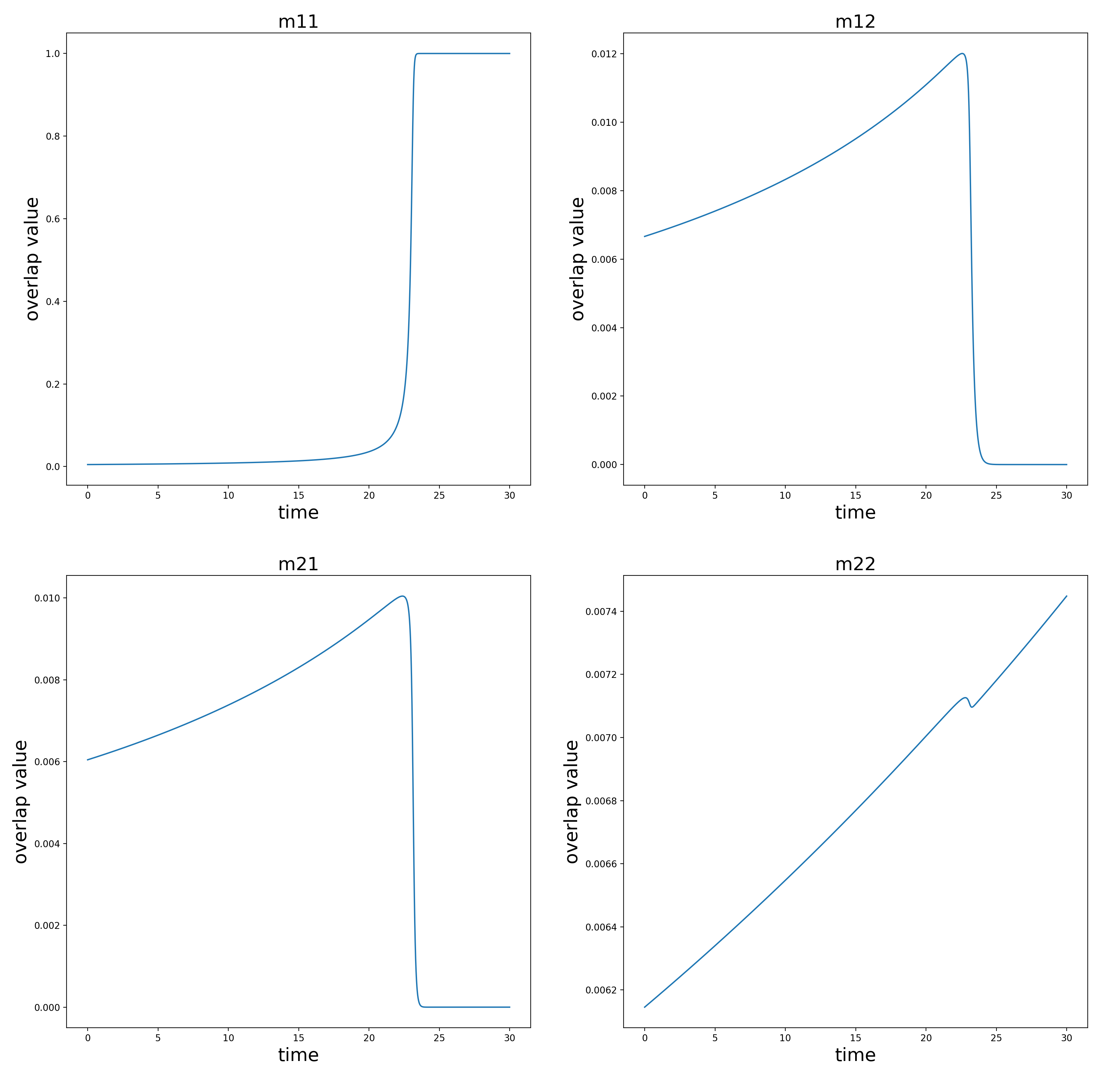}
\caption{Evolution of the correlations \(m_{ij}\) under the population dynamics in the case where \(p=3\), \(r=2\), \(\lambda_1=3\) and \(\lambda_2=1\). The SNRs are sufficiently separated to ensure exact recovery of both spikes. The figure consists of four subfigures, each zooming in on the evolution of one of the four correlations from initialization to the moment when the first correlation \(m_{11}\) becomes macroscopic. In particular, \(m_{12}\) and \(m_{21}\) decrease rapidly once \(m_{11}\) becomes macroscopic, whereas \(m_{22}\) remains stable overall. The short decrease in \(m_{22}\) is bounded by \(\frac{\log(N)}{N}\) with high probability.}
\label{fig: zoom p=3}
\end{figure}

To determine the sample complexity threshold for the online SGD algorithm, we control both the retraction and martingale terms to ensure that the discrete-time dynamics remains dominated by the drift component. We show that this requires a step size parameter of order \(\log(N)^{-1} N^{-\frac{p-3}{2}}\), which in turn leads to a sample complexity of order \(\log(N) N^{p-2}\). This threshold guarantees efficient recovery of both spikes.\\

\textbf{Recovery for \(p =2\).} In this case, the solution of~\eqref{eq: simple equation} is given by
\begin{equation} \label{eq: simple sol p=2}
m_{ij} \approx m_{ij}(0) \exp \left (2 \delta \lambda_i  \lambda_j N^{-\frac{1}{2}} t \right) ,
\end{equation}
with \(m_{ij}(0) = \gamma_{ij}N^{-1/2}\). Hence, the typical time for \(m_{ij}\) to reach a macroscopic threshold \(\varepsilon > 0\) is given by 
\[
T_\varepsilon^{(ij)} \approx \frac{\sqrt{N}  }{2 \delta \lambda_i \lambda_j} \log \left (\frac{\varepsilon \sqrt{N}}{\gamma_{ij}} \right).
\]
In contrast to the case \(p \ge 3\), where the product \(\lambda_i \lambda_j \gamma_{ij}^{p-2}\) determines which correlation first escapes mediocrity, the initialization \(\gamma_{ij}\) has a weaker influence when \(p=2\). Here, the relative separation between the SNRs \(\{\lambda_i\}_{i=1}^r\) governs the order in which the correlations reach macroscopic values. When \(\lambda_1 > \lambda_2\), a sequential elimination phenomenon similar to that observed for \(p \geq 3\) arises (see Figures~\ref{fig: p=2, r=2} and~\ref{fig: p=2 online}). However, there is a key difference compared to the tensor case. Once \(m_{11}\) reaches the macroscopic threshold \(\varepsilon >0\), the other correlations scale as
\[
m_{12}, m_{21} = \mathcal{O}(N^{-\delta_1/2}), \quad  m_{22} = \mathcal{O} (N^{-\delta_2/2}),
\]
where \(\delta_1 = 1 - \lambda_2 /\lambda_1\) and \(\delta_2 = 1 - \lambda_2^2 / \lambda_1^2\). We then show that \(m_{11}\) must reach approximately the critical value \((\lambda_2/\lambda_2)^{1/2}\) before for \(m_{12}\) and \(m_{21}\) begin to decrease. This value is attained in \(\mathcal{O}(1)\) time once \(m_{11}\) becomes macroscopic, while \(m_{12}\) and \(m_{21}\) require a time of order \(\log(N)\) to exit their scale of \(N^{-\frac{\delta_1}{2}}\). Moreover, we show that \(m_{22}\) remains stable during this first phase: it continues to increase throughout the ascent of \(m_{11}\) and up to the point where \(m_{12}\) and \(m_{21}\) start to decrease. This stability holds under an order-one lower bound on \(\log(N)\), involving the inverse of the gap between the SNRs. Consequently, when the SNRs are close, finite-size effects become significant. Once \(m_{12}\)and \(m_{21}\) decay below the scale \(N^{-1/2}\), the correlation \(m_{22}\) is again increasing and can reach a macroscopic threshold, as is illustrated in Figure~\ref{fig:p2_r2_subplots}.

\begin{figure}
\centering
\includegraphics[scale = 0.25]{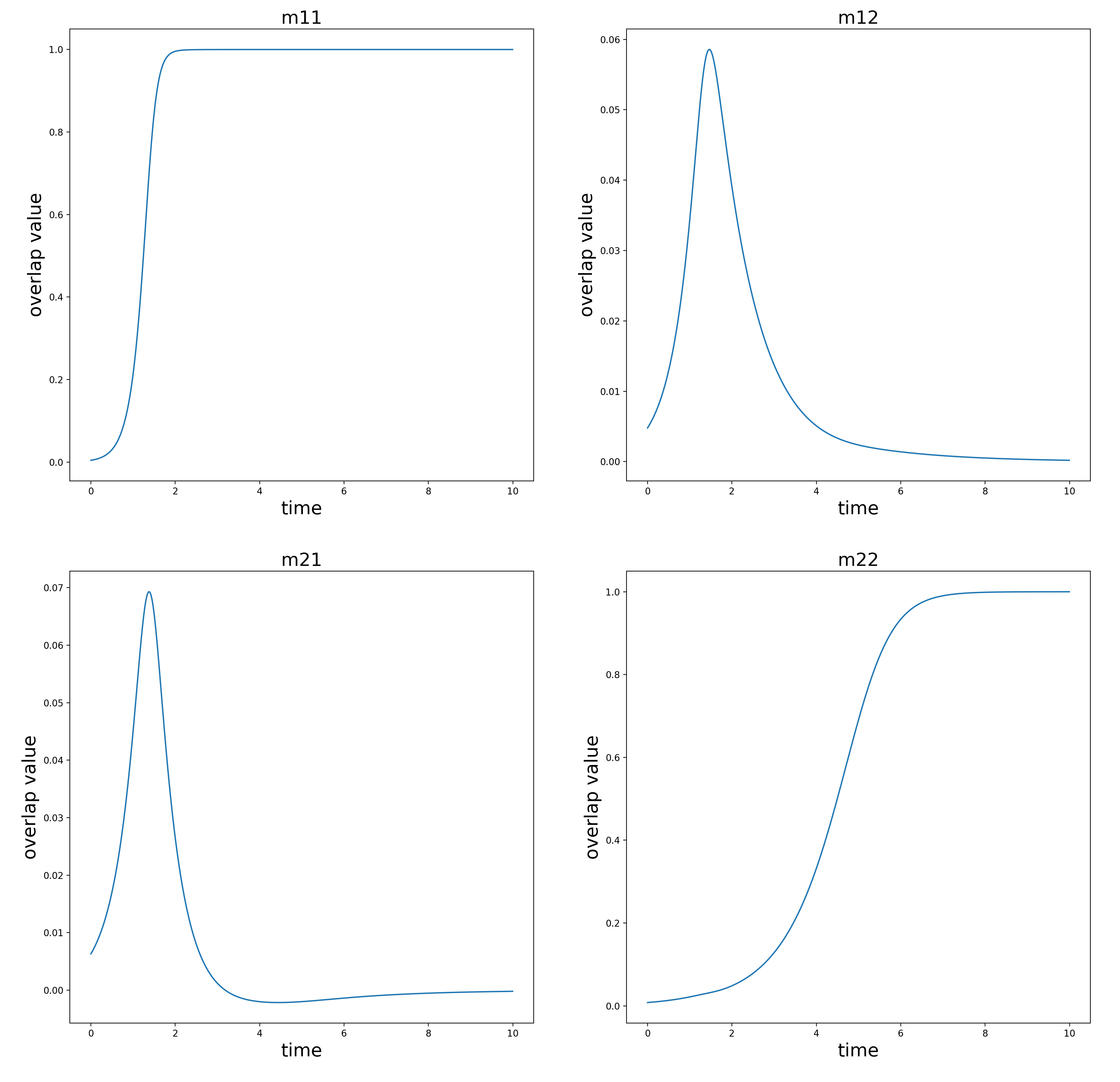}
\caption{Evolution of the correlations \(m_{ij}\) under the population dynamics in the case where \( p=2, r=2, \lambda_1 = 2\) and \(\lambda_2 = 1\). We observe that as \(m_{11}\) reaches a certain threshold, the correlations \(m_{12}\) and \(m_{21}\) start to decrease, despite rising higher than in the case where \(p\geq 3\).}
\label{fig:p2_r2_subplots}
\end{figure}

Regarding the sample complexity, as in the case \(p \ge 3\), we control the martingale and retraction terms so that the discrete-time dynamics remains dominated by the drift component. We show that this requires a step size \(\delta\) of order \(\log(N)^{-1} N^{ \lambda_r^2/(2 \lambda_1^2)}\), which in turn leads to a sample complexity of order \(\log(N)^2 N^{\frac{1}{2}(1 - \lambda_r^2/\lambda_1^2)}\). The factor \(N^{\lambda_r^2/(2\lambda_1^2)}\) arises from the need to control the stability of the retraction map, which must hold uniformly across all directions, including the first.

\subsection{Subspace recovery}

As discussed in Subsection~\ref{subsection: main asymptotic results}, when \(\lambda_1 = \lambda_2 \equiv \lambda\), the spiked covariance model becomes isotropic. In this case, we focus on the evolution of the eigenvalues of the matrix \(\boldsymbol{M} \boldsymbol{M}^\top\), denote by \(\boldsymbol{G}\). To analyze their evolution, we employ eigenvalue perturbation bounds together with matrix versions of martingale concentration inequalities. Once the noise is appropriately controlled, the population dynamics satisfy
\[
\boldsymbol{G} (t) \approx \boldsymbol{G}(0)+\frac{4\delta \lambda^2}{\sqrt{N}} \sum_{\ell=1}^{t}\boldsymbol{G}(\ell-1)(\boldsymbol{I} - \boldsymbol{G}(\ell-1)),
\]
which leads to the following approximate evolution for the eigenvalues \(\theta_1,\ldots, \theta_r\):
\[
\theta_i(t) \approx \theta_i(0)+\frac{4\delta \lambda^2}{\sqrt{N}}\sum_{\ell=1}^{t} \theta_i (\ell-1) \left ( 1-\theta_i (\ell-1)\right ).
\]
This equation then prescribes a time horizon of order \(\log(N)\) for maintaining the necessary bounds on the noise process, leading to the sample complexity stated in Theorem~\ref{thm: strong recovery isotropic SGD asymptotic}. A key difference from the full recovery case is that, at initialization, the eigenvalues of \(\boldsymbol{G}\) are of order \(N^{-1}\). Simply adapting the bounds derived for the correlations \(m_{ij}\) is not sufficient here. To handle this regime, we establish time-dependent bounds for the martingale terms using a matrix generalization of Freedman’s inequality~\cite{tropp2011freedman}.

\section{Main results in nonasymptotic form} \label{section: main results SGD}

This section presents the nonasymptotic versions of our main results stated in Subsection~\ref{subsection: main asymptotic results}.

As previously discussed, we consider the online SGD algorithm starting from a random initialization, precisely \(\boldsymbol{X}_0 \sim \mu_{N \times r}\), where \(\mu_{N \times r}\) denotes the uniform measure on the Stiefel manifold \(\textnormal{St}(N,r)\). Our recovery guarantees, however, applies to a broader class of initial data that meets two natural conditions, that we introduce hereafter. The first condition ensures that the initial correlation is on the typical scale \(\Theta(N^{-\frac{1}{2}})\).

\begin{defn}[Condition 1] \label{def: condition 1 SGD}
For every \(\gamma_1 > \gamma_2 >0\), let \(\mathcal{C}_1(\gamma_1,\gamma_2)\) denote the sequence of events given by
\[
\mathcal{C}_1 (\gamma_1,\gamma_2) = \left \{ \boldsymbol{X} \in \textnormal{St}(N,r) \colon \frac{\gamma_2}{\sqrt{N}} \leq m_{ij}(\boldsymbol{X}) < \frac{\gamma_1}{\sqrt{N}}\enspace \text{for every} \enspace 1 \leq i,j \leq r\right \}.
\]
We say that a sequence of probability measures \(\mu_N \in \mathcal{M}_1 (\textnormal{St}(N,r))\) satisfies \emph{Condition 1} if for every \(N \in \N\) and \(\gamma_1 > \gamma_2 > 0\),
\[
\mu_N \left (\mathcal{C}_1(\gamma_1,\gamma_2)^\textnormal{c} \right ) \leq C_1 e^{- c_1 \gamma_1^2} + C_2 e^{- c_2 \gamma_2 \sqrt{N}} + C_3 \gamma_2 ,
\]
for absolute constants \(C_1, c_1, C_2, c_2, C_3 >0\).
\end{defn} 

In the special case \(p=2\) and \(\lambda_1 = \cdots = \lambda_r\), we analyze the evolution of the eigenvalues of the matrix \(\boldsymbol{G} = \boldsymbol{M} \boldsymbol{M}^\top\), where \(\boldsymbol{M} = (m_{ij})_{1 \le i,j \le r}\). To ensure that the eigenvalues at initialization are on the typical scale of \(\Theta(N^{-1})\), we require a modified version of Condition \(1\) from Definition~\ref{def: condition 1 SGD}.

\begin{defn}[Condition 1'] \label{def: condition 1' SGD}
For every \(\gamma_1 > \gamma_2 >0\), let \(\mathcal{C}_1'(\gamma_1,\gamma_2)\) denote the sequence of events given by
\[
\mathcal{C}_1'(\gamma_1,\gamma_2) = \left \{ \boldsymbol{X} \in \textnormal{St}(N,r) \colon \frac{\gamma_2}{N} \le \theta_i (\boldsymbol{X}) < \frac{\gamma_1}{N} \enspace \textnormal{for every} \, i \in [r] \right \}.
\]
We say that a sequence of probability measures \(\mu_N \in \mathcal{M}_1 (\textnormal{St}(N,r)\) satisfies \emph{Condition \(1'\)} if for every \(N \in \N\) and \(\gamma_1 > \gamma_2 > 0\),
\[
\mu_N \left ( \mathcal{C}_1'(\gamma_1,\gamma_2)^\textnormal{c} \right) \le C_1 e^{-c_1 \gamma_1^2} +  C_2  e^{- c_2 \gamma_2 \sqrt{N}} + C_3 \gamma_2,
\]
for absolute constants \(C_1, c_1, C_2, c_2, C_3 >0\).
\end{defn} 

The second condition concerns the separation of the initial correlations, which is needed for the recovery when \(p \ge 3\).

\begin{defn}[Condition 2] \label{def: condition 2 SGD}
For every \(\gamma_1 > \gamma > 0\), let \(\mathcal{C}_2(\gamma_1, \gamma)\) denote the sequence of events given by
\[
\mathcal{C}_2(\gamma_1, \gamma) = \left \{\boldsymbol{X} \in \textnormal{St}(N,r) \colon \left | \frac{\lambda_i \lambda_j m_{ij}^{p-2}(\boldsymbol{X})}{\lambda_k \lambda_\ell m_{k \ell}^{p-2}(\boldsymbol{X})} - 1 \right | > \frac{\gamma}{\gamma_1} \enspace \text{for every} \enspace 1 \leq i,j,k,\ell \leq r , (i,j) \neq (k,\ell)\right \}.
\]
We say that a sequence of probability measures \(\mu_N \in \mathcal{M}_1 (\textnormal{St}(N,r)\) satisfies \emph{Condition 2} if for every \(N \in \N\) and \(\gamma_1 > \gamma >0\), 
\[
\mu_N \left ( \mathcal{C}_2(\gamma_1,\gamma)^\textnormal{c} \right ) \leq C_1 e^{- c_1 \gamma_1^2} + C_2 e^{- c_2 \gamma \sqrt{N}} + C_3 \sup_{i,j,k,\ell} \left ( 1 + \left( \frac{\lambda_i \lambda_j}{\lambda_k \lambda_\ell} \right)^{\frac{2}{p-2}} \right)^{-\frac{1}{2}} \gamma,
\]
for absolute constants \(C_1, c_1, C_2, c_2, C_3 >0\).
\end{defn}

The uniform measure \(\mu_{N \times r}\) satisfies the above conditions, as stated in the following lemma.

\begin{lem} \label{lem: invariant measure SGD}
The uniform measure \(\mu_{N \times r}\) on \(\textnormal{St}(N,r)\) satisfies Condition \(1\), Condition \(1'\), and Condition \(2\).  
\end{lem}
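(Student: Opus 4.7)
The plan is to exploit the orthogonal invariance of $\mu_{N \times r}$ under both left and right multiplication. Since $\boldsymbol{v}_1, \dots, \boldsymbol{v}_r$ are orthogonal unit vectors, we complete them to an orthonormal basis of $\R^N$ and apply the corresponding rotation, reducing to the case $\boldsymbol{v}_i = \boldsymbol{e}_i$. Then $m_{ij}(\boldsymbol{X}) = X_{ij}$ for $1 \le i, j \le r$, so all three conditions become statements about the joint law of the top $r \times r$ block of a uniform Stiefel matrix. Writing $\boldsymbol{X} = \boldsymbol{Z}(\boldsymbol{Z}^\top \boldsymbol{Z})^{-1/2}$ with $\boldsymbol{Z} \in \R^{N \times r}$ having i.i.d.\ $\mathcal{N}(0,1)$ entries, and using sub-exponential concentration of $\boldsymbol{Z}^\top \boldsymbol{Z}$ around $N \boldsymbol{I}_r$, we obtain the effective representation $m_{ij}(\boldsymbol{X}) = Z_{ij}/\sqrt{N} + \xi_{ij}$, where the error $\xi_{ij}$ is of order $N^{-1}$ on an event of probability $1 - e^{-cN}$. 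This reduces every estimate to an analogous statement for i.i.d.\ standard Gaussians, up to negligible correction terms.

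For Condition 1, the upper tail $\{m_{ij} \ge \gamma_1/\sqrt{N}\}$ reduces to $\{|Z_{ij}| \gtrsim \gamma_1\}$ and is controlled by the Gaussian tail $e^{-c_1 \gamma_1^2}$. The lower tail $\{m_{ij} < \gamma_2/\sqrt{N}\}$ reduces to $\{|Z_{ij}| \lesssim \gamma_2\}$, which by boundedness of the standard Gaussian density near zero is $O(\gamma_2)$, yielding the $C_3 \gamma_2$ term. The exponential factor in $\gamma_2 \sqrt{N}$ absorbs the loss coming from restricting to the high-probability event on which the approximation $m_{ij} \approx Z_{ij}/\sqrt{N}$ holds with quantitative error. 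A union bound over the finitely many pairs $(i,j)$ completes the argument.

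For Condition 1', the matrix $\boldsymbol{G} = \boldsymbol{M} \boldsymbol{M}^\top$ with $\boldsymbol{M} = (m_{ij})_{1 \le i,j \le r}$ is, up to the same negligible errors, equal to the Wishart-type matrix $\boldsymbol{Z}_{\mathrm{top}} \boldsymbol{Z}_{\mathrm{top}}^\top / N$, where $\boldsymbol{Z}_{\mathrm{top}}$ is the top $r \times r$ block of $\boldsymbol{Z}$. Its eigenvalues are therefore of order $1/N$, and the upper (resp.\ lower) bound $\theta_i \le \gamma_1/N$ (resp.\ $\theta_i \ge \gamma_2/N$) reduces to a bound on the largest (resp.\ smallest) squared singular value of a square Gaussian matrix. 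Gaussian concentration of $\sigma_{\max}(\boldsymbol{Z}_{\mathrm{top}})^2$ gives the $e^{-c_1 \gamma_1^2}$ term, while Edelman-type smallest-singular-value estimates for square Gaussian matrices produce the $C_3 \gamma_2$ anti-concentration term, and the exponential factor in $\gamma_2 \sqrt{N}$ absorbs the approximation error as above.

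Condition 2 is the most delicate, and is the step I expect to be the main obstacle. After the Gaussian approximation, the target is to bound $\mathbb{P}(|R - 1| \le \gamma/\gamma_1)$ for the ratio $R = (\lambda_i \lambda_j / \lambda_k \lambda_\ell)(Z_{ij}/Z_{k\ell})^{p-2}$ of independent signed powers of standard Gaussians (for $(i,j) \neq (k,\ell)$; the degenerate case is handled separately). The plan is to condition on $Z_{k\ell}$ and bound the conditional density of $Z_{ij}^{p-2}$ near the deterministic value $(\lambda_k \lambda_\ell / \lambda_i \lambda_j) Z_{k\ell}^{p-2}$, using the Jacobian of $z \mapsto z^{p-2}$ together with the boundedness of the Gaussian density. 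Integrating this conditional density against the law of $Z_{k\ell}$, restricted to the event $\{|Z_{k\ell}| \le \gamma_1\}$ (which costs $e^{-c_1 \gamma_1^2}$), and optimizing the balance between the two factors $Z_{ij}^{p-2}$ and $Z_{k\ell}^{p-2}$ yields the scaling factor $(1 + (\lambda_i \lambda_j / \lambda_k \lambda_\ell)^{2/(p-2)})^{-1/2}$. The $e^{-c_2 \sqrt{N} \gamma}$ term again absorbs the Gaussian approximation error, and a union bound over the finitely many quadruples $(i,j,k,\ell)$ concludes.
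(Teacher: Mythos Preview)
The paper does not actually prove this lemma in-text: it defers to Appendix~A of the extended preprint \cite{multispikedPCA}, noting only that the argument is carried out there on the normalized Stiefel manifold \(\mathcal{M}_{N,r}=\{\boldsymbol{X}:\boldsymbol{X}^\top\boldsymbol{X}=N\boldsymbol{I}_r\}\) and transfers directly. Your plan---reduce to \(\boldsymbol{v}_i=\boldsymbol{e}_i\) by left-orthogonal invariance, use the Gaussian parametrization \(\boldsymbol{X}=\boldsymbol{Z}(\boldsymbol{Z}^\top\boldsymbol{Z})^{-1/2}\) (which the paper itself quotes from \cite{chikuse2012statistics}), control \((\boldsymbol{Z}^\top\boldsymbol{Z})^{-1/2}-N^{-1/2}\boldsymbol{I}_r\) via Wishart concentration on an event of probability \(1-e^{-cN}\), and thereby reduce all three conditions to tail and anti-concentration estimates for i.i.d.\ standard Gaussians (Conditions~1 and~2) and for the singular values of a fixed \(r\times r\) Gaussian block (Condition~1')---is the natural route and almost certainly what the extended version does. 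The individual ingredients you name (Gaussian tails for the \(e^{-c_1\gamma_1^2}\) term, bounded density near zero for the \(C_3\gamma_2\) term, Edelman-type smallest-singular-value bounds for Condition~1', conditional-density anti-concentration for the power ratio in Condition~2) are all standard and work as you describe; the entries \(Z_{ij}\) and \(Z_{k\ell}\) are genuinely independent whenever \((i,j)\neq(k,\ell)\), so the conditioning step in your Condition~2 argument is clean. Your reading of the intended sign in the \(e^{\pm c_2\gamma_2\sqrt{N}}\) term and your exclusion of the degenerate quadruple \((i,j)=(k,\ell)\) in Condition~2 both match the evident intent of the definitions.
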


The proof of Lemma~\ref{lem: invariant measure SGD} is deferred to our companion papers~\cite{langevin, gradientflow}. In particular, Conditions \(1\) and \(2\) are established in~\cite[Appendix A]{gradientflow}, while Condition \(1'\) is proved in Appendix A of~\cite[Appendix A]{langevin}. In both works, we consider the normalized Stiefel manifold \(\mathcal{M}_{N,r} = \{\boldsymbol{X} \in \R^{N \times r} \colon \boldsymbol{X}^\top \boldsymbol{X} = N\boldsymbol{I}_r\}\) and define correlations by \(m_{ij}^{(N)}(\boldsymbol{X}) = \frac{\langle \boldsymbol{v}_i, \boldsymbol{x}_j \rangle}{N}\) with \(\boldsymbol{V}= [\boldsymbol{v}_1, \ldots, \boldsymbol{v}_r] \in \mathcal{M}_{N,r}\). We show that the invariant measure on \(\mathcal{M}_{N,r}\) satisfies the three conditions above. Since this framework is equivalent to the one considered here, the proofs in~\cite{langevin, gradientflow} generalize directly for \(\mu_{N \times r}\).\\

We are now ready to present our main results in a nonasymptotic form. Following Remark~\ref{rmk: sign initialization}, we assume that the correlations are initialized positively, i.e., \(m_{ij}(\boldsymbol{X}_0)>0\) for every \(i,j \in [r]\). This allows us to drop the absolute values in the correlation terms in the statements below and ensures that all \(r\) spikes can be recovered up to a permutation (i.e., \(r_\textnormal{c}=r\) in Definition~\ref{def: greedy operation}). By definition, \(W_{i_1, \ldots, i_p}\) are i.i.d.\ centered sub-Gaussian random variables with parameter \(\sigma > 0\), meaning that for all \(t \in \R\), 
\[
\E \left [ e^{t W_{i_1, \ldots, i_p}} \right ] \le e^{K t^2 \sigma^2},
\]
where \(K>0\) is an absolute constant. A more precise definition is given in~\eqref{eq: MGF sub-Gaussian}. Let \(T\) denote the number of iterations after which we terminate the online SGD algorithm. For any set \(\mathcal{E}\), we define the hitting time \(\mathcal{T}_{\mathcal{E}}\) of the set \(\mathcal{E}\) as
\[
\mathcal{T}_{\mathcal{E}} = \inf \{t \in \N_0 \colon \boldsymbol{X}_t \in \mathcal{E}\}. 
\]
Moreover, in the following theorems, the step size \(\delta>0\) will depend on a sequence \(d_0 = d_0(N)\) which tends to zero as \(N \to \infty\), with \(d_0^{-1}\) growing at most polynomially in \(N\). That is, there exists constants \(k \ge 0\) and \(C>0\) such that
\begin{equation} \label{eq: sequence d_0}
d_0  \ge \frac{1}{C N^k}.
\end{equation}
Throughout, we use the notation \(\lesssim\) to denote inequality up to an absolute constant. Finally, the constants \(K_1, K_2, \ldots\) and \(c_1, c_2, \ldots\) appearing in the theorems below may depend on the  model parameters \(\sigma^2, p, r\), and \(\{\lambda_i\}_{i=1}^r\).\\

Let us first consider the case \(p \ge 3\). Our first main result establishes that the sample complexity threshold for efficiently recovering all spikes (up to a permutation) scales as \(\log(N) N^{p-2}\). Recall that the pairs \(\{(i_k^\ast, j_k^\ast)\}_{k=1}^r\) denote the greedy maximum selection of \(\boldsymbol{I}_0\) (see~\eqref{eq: initialization matrix} and Definition~\ref{def: greedy operation}). Furthermore, for every \(\varepsilon > 0\), let \(R(\varepsilon)\) denote 
\[
R(\varepsilon) = \left \{ \boldsymbol{X} \colon m_{i_k^\ast j_k^\ast}(\boldsymbol{X}) \geq 1 - \varepsilon  \enspace \textnormal{for all} \: k \in [r] \enspace \textnormal{and} \enspace m_{ij}(\boldsymbol{X}) \lesssim \log(N)^{-\frac{1}{2}}N^{-\frac{p-1}{4}} \enspace \textnormal{otherwise} \right \}.
\]

\begin{thm}[Recovery of all spikes up to a permutation for \(p \ge 3\)] \label{thm: strong recovery online p>2 nonasymptotic}
For every \(\gamma_1 > \gamma_2 \vee \gamma >0\) and every \(\varepsilon >0\), there exist a constant \(c_0 = c_0 (\gamma_1, \gamma, p) \in (0,1)\) and a sequence \(d_0 = d_0 (N)\) satisfying~\eqref{eq: sequence d_0} such that the following holds. If the initialization satisfies \(\boldsymbol{X}_0 \in \mathcal{C}_1(\gamma_1,\gamma_2) \cap \mathcal{C}_2(\gamma_1,\gamma)\), and the step size \(\delta >0\) is chosen as
\[
\delta = C_\delta d_0 \log(N)^{-1} N^{-\frac{p-3}{2}},
\]
for a constant \(C_\delta>0\), then for \(N\) sufficiently large,
\[
\mathbb{P}_{\boldsymbol{X}_0^+} \left(  \mathcal{T}_{R(\varepsilon)} \lesssim T \right) \geq 1- \eta,
\]
where the runtime \(T\) and error probability \(\eta\) are given by
\[
T = \frac{\log(N) N^{p-2} }{ C_\delta d_0 p \lambda_r^2 \gamma_2^{p-2}}+ \frac{\log(N)^2 N^{\frac{p-2}{2}}}{C_\delta d_0\varepsilon^p p \lambda_r^2} ,
\]
and 
\[
\eta  = K_1 \frac{ \log(N)^2N^{p-2}}{d_0} e^{-c_1 N} + K_2 \log(N)e^{-c_2 c_0^2\log(N)/d_0} + K_3 \frac{d_0^2}{\log(N) N^{\frac{p-2}{2}}}  e^{-c_3/d_0^2} .
\]
Finally, the correlations $\{m_{ij}\}_{1 \leq i,j \leq r}$ follow a sequential elimination with ordering $S = \{(i^\ast_{k},j^\ast_{k})\}_{k=1}^r$.
\end{thm}

Next, we consider the case \(p=2\) with sufficiently separated SNRs, assuming
\[
\lambda_i = \lambda_{i+1} (1 + \kappa_i) \quad \textnormal{for} \enspace 1 \le i \le r-1,
\]
where \(\kappa_i >0\) are constants of order \(1\). Let \(\kappa\) denote \(\kappa = \min_{1 \le i \le r-1} \kappa_i\). The following theorem shows that, under this separation condition on the SNRs, exact recovery of all spikes is possible with a sample complexity of order \(\log(N)^2 N^{\frac{1}{2} \left ( 1 - \lambda_r^2 / \lambda_1^2\right)}\). To rigorously state our result, define for every \(\varepsilon > 0\) and \(c_0 \in (0, \frac{1}{2} \wedge \frac{\kappa}{2+\kappa})\), 
\[
R(\varepsilon, c_0) = \left \{ \boldsymbol{X} \colon m_{ii}(\boldsymbol{X}) \geq 1 - \varepsilon  \enspace \textnormal{for all} \: i \in [r] \enspace \textnormal{and} \enspace m_{k \ell}(\boldsymbol{X}) \lesssim N^{- \xi / 2} \enspace \textnormal{for all} \: k \neq \ell \right \},
\]
where \(\xi = \xi(c_0)\coloneqq  1 - \frac{1-c_0}{1+c_0} \frac{\lambda_r^2}{\lambda_1^2}\).

\begin{thm}[Exact recovery of all spikes for \(p = 2\)] \label{thm: strong recovery online p=2 nonasymptotic}
Assume \(\kappa > \sqrt{2} - 1\). Fix \(\gamma_1 > \gamma_2 >0\). Then, there exists \(\bar{\varepsilon} (\kappa) \in (0, \varepsilon_0)\) with \(\varepsilon_0^2 = \frac{\kappa}{1+\kappa}\) such that, for every \(\varepsilon \in (0, \bar{\varepsilon} (\kappa))\), there exist \(c_0 = c_0 (\kappa,\varepsilon) \in \left (0, \frac{1}{2} \wedge \frac{1+\kappa}{2+\kappa} (\varepsilon_0^2 - \varepsilon^2) \right)\) and a sequence \(d_0 = d_0(N)\) satisfying~\eqref{eq: sequence d_0} such that the following holds. If the initialization satisfies \(\boldsymbol{X}_0 \in \mathcal{C}_1(\gamma_1,\gamma_2)\), and the step size \(\delta >0\) is chosen as 
\[
\delta = \frac{C_\delta d_0 \gamma_2^2 N^{(1-\xi)/2}}{\log \left( \frac{2 \varepsilon \sqrt{N}}{\gamma_2}\right)},
\]
for a constant \(C_\delta>0\), then for \(N\) sufficiently large, 
\[
\mathbb{P}_{\boldsymbol{X}_0^+} \left( \mathcal{T}_{R(\varepsilon, c_0)} \lesssim T \right) \geq 1 - \eta,
\]
where \(T\) and \(\eta\) are given by
\[
T = \frac{ N^{\xi / 2} \log \left( \frac{2\varepsilon \sqrt{N}}{\gamma_2} \right)^2}{C_\delta d_0 (1 - c_0 - \varepsilon^2)\gamma_2^2 \lambda_r^2},
\]
and  
\[
\eta = K_1 \frac{\log(N)^2 N^{\xi/2}}{d_0} e^{- c_1 N} + K_2 e^{ - c_2 c_0^2 N^{\xi/2} /d_0} + K_3 \frac{d_0^2}{c_0 \log(N) N^{\xi}} e^{- c_3 / d_0^2}.
\]
Finally, the correlations $\{m_{ij}\}_{1 \leq i,j \leq r}$ follow a sequential elimination with ordering $S = \{(k,k)\}_{k=1}^r$.
\end{thm}

\begin{rmk}
The assumption \(\kappa > \sqrt{2}-1\) and the restriction \(\varepsilon < \varepsilon_0\) in Theorem~\ref{thm: strong recovery online p=2 nonasymptotic} are both required to ensure that the growth of all cross-correlations remains controlled. For \(p=2\), the off-diagonal correlations \(m_{ij}\) (\(i \neq j\)) can temporarily exceed their initial microscopic scale \(N^{-1/2}\), and these conditions guarantee that such effects do not destabilize the macroscopic dynamics. In particular, \(\kappa > \sqrt{2}-1\) ensures sufficient separation between consecutive SNRs, preventing interference between recovered and unrecovered components, while \(\varepsilon < \varepsilon_0\) guarantees that all quantities in the finite-sample estimates remain positive and that the contraction arguments used in the proof hold uniformly for large but finite \(N\). In the asymptotic regime of Theorem~\ref{thm: strong recovery online p=2 asymptotic}, however, this small-\(\varepsilon\) constraint can be removed: for every fixed \(\varepsilon > 0\), one may choose \(c_0 = c_0(\kappa, \varepsilon) \in (0,1)\) such that the same estimates hold for all sufficiently large \(N\). Consequently, the asymptotic statement remains valid for arbitrary \(\varepsilon > 0\).
\end{rmk}

We conclude by considering the case of identical SNRs, i.e., \(\lambda_1 =\cdots = \lambda_r \equiv \lambda\). In this regime, the dynamics is naturally described in terms of the eigenvalues \(\{\theta_i\}_{i=1}^r\) of the symmetric matrix \(\boldsymbol{M} \boldsymbol{M}^\top \in \R^{r \times r}\), where \(\boldsymbol{M} = (m_{ij})_{1 \le i,j \le r}\) denotes the correlation matrix. 

\begin{thm}[Subspace recovery for \(p=2\)] \label{thm: strong recovery isotropic SGD nonasymptotic}
For every \(\gamma_1 > \gamma_2 >0\) and every \(\varepsilon,\varepsilon' >0\), there exists a sequence \(d_0 = d_0(N)\) satisfying~\eqref{eq: sequence d_0} such that the following holds. If the initialization satisfies \(\boldsymbol{X}_0 \in \mathcal{C}_1'(\gamma_1,\gamma_2)\), and the step size \(\delta > 0\) is chosen as
\[
\delta = C_\delta \frac{d_0 \sqrt{N}}{\log(N)^2},
\]
for a constant \(C_\delta >0\), then for \(N\) sufficiently large,
\[
\mathbb{P}_{\boldsymbol{X}_0} \left( \theta_{\min} \left(\boldsymbol{X}_T \right)\geq 1 - \varepsilon' \right) \geq 1 - \eta,
\]
where \(T\) and \(\eta\) are given by
\[
T = \frac{\log(N)^2\log(\frac{4\varepsilon N}{\gamma_2 })}{C_\delta d_0 \lambda^2}+\frac{\log(N)^2 \log(\frac{1-\varepsilon'}{\varepsilon})}{C_\delta d_0 \lambda^2 \varepsilon'},
\]
and 
\[
\eta = K_1 \frac{\log(N)^3}{d_0} e^{-c_1 N} + K_2 e^{-c_2/d_0} + K_3  \frac{\log(N)^3}{N d_0}.
\]
\end{thm}

Before addressing the proof of the three main nonasymptotic results, we first provide the proofs of their asymptotic counterparts of Subsection~\ref{subsection: main asymptotic results}.

\begin{proof}[\textbf{Proof of Theorems~\ref{thm: strong recovery online p>2 asymptotic stronger},~\ref{thm: strong recovery online p=2 asymptotic}, and~\ref{thm: strong recovery isotropic SGD asymptotic}}]
We begin with the proof of Theorem~\ref{thm: strong recovery online p>2 asymptotic stronger} under the assumption that all the correlations are positively initialized. Theorems~\ref{thm: general recovery SGD p>2 asymptotic} and~\ref{thm: strong recovery online p>2 asymptotic} are straightforward consequences of Theorem~\ref{thm: strong recovery online p>2 asymptotic stronger}. Let \(\{(i^\ast_k, j^\ast_k)\}_{k=1}^r\) denote the greedy maximum selection, as per Definition~\ref{def: greedy operation}, of the initial matrix \(\boldsymbol{I}_0\), as defined~\eqref{eq: initialization matrix}. Moreover, let \(\pi^\ast \in S_r\) denote the permutation of the recovered spikes such that the pairs \(\{(\pi^\ast(i),i)\}_{i=1}^r\) coincide (as sets) with \(\{(i^\ast_k, j^\ast_k)\}_{k=1}^r\). Then, for every \(\varepsilon > 0\),
\[
\mathbb{P}_{\boldsymbol{X}^{+}_0}\left( \forall k \in [r] \colon m_{i^\ast_{k}j^\ast_{k}}(\boldsymbol{X}_M)  \geq 1-\varepsilon \right)\geq \mathbb{P}_{\boldsymbol{X}^{+}_0}\left(\boldsymbol{X}_M \in \mathcal{S}_{\pi^\ast}(\varepsilon)\right),
\]
where the set \(\mathcal{S}_{\pi^\ast}(\varepsilon)\) is defined in Appendix~\ref{app:stability_results}. We begin by conditioning the initialization to lie in the set \(\mathcal{C}_1(\gamma_1,\gamma_2) \cap \mathcal{C}_2(\gamma_1,\gamma)\). We then obtain
\[
\begin{split}
\mathbb{P}_{\boldsymbol{X}^{+}_0} \left(\boldsymbol{X}_M \in \mathcal{S}_{\pi^\ast}(\varepsilon)\right) &  = \E_{\boldsymbol{X}_0^+} \left [ \mathbf{1}_{\{(\boldsymbol{X}_M \in \mathcal{S}_{\pi^\ast}(\varepsilon)\}} \right ] \\
& \geq \E_{\boldsymbol{X}_0^+} \left [ \mathbf{1}_{\{(\boldsymbol{X}_M \in \mathcal{S}_{\pi^\ast}(\varepsilon)\}} \Big | \boldsymbol{X}_0 \in \mathcal{C}_1(\gamma_1,\gamma_2) \cap \mathcal{C}_2(\gamma_1,\gamma) \right ] \mu_{N \times r} \left ( \mathcal{C}_1(\gamma_1,\gamma_2) \cap \mathcal{C}_2(\gamma_1,\gamma) \right) \\
& \ge \inf_{\boldsymbol{X}_0 \in \mathcal{C}_1(\gamma_1,\gamma_2) \cap \mathcal{C}_2(\gamma_1,\gamma)} \mathbb{P}_{\boldsymbol{X}_0^+}\left(\boldsymbol{X}_M \in \mathcal{S}_{\pi^\ast}(\varepsilon)\right) - \left ( \mu_{N \times r} \left ( \mathcal{C}_1(\gamma_1,\gamma_2) ^\mathrm{c} \right ) +  \mu_{N \times r} \left ( \mathcal{C}_2(\gamma_1,\gamma) ^\mathrm{c} \right ) \right ) \\
& \ge \inf_{\boldsymbol{X}_0 \in \mathcal{C}_1(\gamma_1,\gamma_2) \cap \mathcal{C}_2(\gamma_1,\gamma)} \mathbb{P}_{\boldsymbol{X}_0^+}\left(\boldsymbol{X}_M \in \mathcal{S}_{\pi^\ast}(\varepsilon)\right)  - \eta_1,
\end{split}
\]
where Lemma \ref{lem: invariant measure SGD} provides the explicit estimate for \(\eta_1\):
\[
\eta_1 = C_1 e^{-c_1\gamma_1^2}+C_2e^{-c_2(\gamma+\gamma_2)\sqrt{N}}+C_3\gamma_2+C_4 \sup_{i,j,k,\ell} \left ( 1 + \left( \frac{\lambda_i \lambda_j}{\lambda_k \lambda_\ell} \right)^{\frac{2}{p-2}} \right)^{-\frac{1}{2}} \gamma.
\]
Now, fix \(\boldsymbol{X}_0 \in \mathcal{C}(\gamma_1,\gamma_2)\cap \mathcal{C}(\gamma_1,\gamma)\). Then,
\[
\mathbb{P}_{\boldsymbol{X}_0^+} \left(\boldsymbol{X}_M \in \mathcal{S}_{\pi^\ast}(\varepsilon)\right) \geq \mathbb{P}_{\boldsymbol{X}_0^+} \left(\boldsymbol{X}_M \in \mathcal{S}_{\pi^\ast} (\varepsilon) \,  \Big \vert \, \mathcal{T}_{R(\varepsilon/2)} \leq \frac{M}{2}, \boldsymbol{X}_{\mathcal{T}_{R(\varepsilon/2)}}\right)-\mathbb{P}_{\boldsymbol{X}_0^+} \left(\mathcal{T}_{R(\varepsilon/2)}>\frac{M}{2}\right),
\]
where \(R(\varepsilon/2)\) is the recovery event of Theorem~\ref{thm: strong recovery online p>2 nonasymptotic} and \(\mathcal{T}_{R(\varepsilon/2)}\) is the stopping time corresponding to the event \(R(\varepsilon/2)\). We introduce sequences \(d_1(N)\) and \(d_2(N)\) such that
\[
\begin{split}
M & = d_1(N)\log(N)N^{p-2},\\
\delta & = d_2(N)N^{1/2}\left(\log(N)M\right)^{-1/2}.
\end{split}
\]
By assumption of Theorem~\ref{thm: strong recovery online p>2 asymptotic stronger}, \(d_1(N) \to \infty\) and grows at most polynomially in \(N\) and \(d_2(N) \to 0\) as \(N \to \infty\). Moreover, since \(\delta^{-1}N^{\frac{p-1}{2}} = (d_2(N)\sqrt{d_1(N)})^{-1}M\), we require 
\[
d_2 (N) \gg d_1(N) ^{-1/2}. 
\]
Define 
\[
d_0(N) \coloneqq \frac{1}{C_{\delta}}\frac{d_2(N)}{\sqrt{d_1(N)}}, 
\]
where \(C_\delta >0\) is the constant from Theorem~\ref{thm: strong recovery online p>2 nonasymptotic}. Then,  
\[
\delta = C_{\delta}d_0(N)\log(N)^{-1}N^{-\frac{p-3}{2}},
\]
and \(d_0(N)\) tends to \(0\) as \(N \to \infty\) and \(\frac{1}{d_0(N)}\) grows at most polynomially in \(N\). By Theorem~\ref{thm: strong recovery online p>2 nonasymptotic}, for \(N\) sufficiently large, we have
\begin{equation} \label{eq1}
\inf_{\boldsymbol{X}_0 \in \mathcal{C}_1(\gamma_1,\gamma_2) \cap \mathcal{C}_2(\gamma_1,\gamma)} \mathbb{P}_{\boldsymbol{X}_0^+} \left(  \mathcal{T}_{R(\varepsilon)} \lesssim T \right) \geq 1 - \eta_2 ,
\end{equation}
where  
\[
\eta_2 = K_1 \frac{ \log(N)^2N^{p-2}}{d_0} e^{-c_1 N} + K_2 \log(N)e^{-c_2 c_0^2\log(N)/d_0} + K_3 \frac{d_0^2}{\log(N) N^{\frac{p-2}{2}}}  e^{-c_3/d_0^2},
\]
and the runtime \(T\) satisfies
\begin{equation} \label{eq2}
T \gtrsim \gamma_2^{-(p-2)}\delta^{-1}N^{\frac{p-1}{2}}.
\end{equation}
We now choose the sequences 
\[
\gamma_1(N) = -\log (d_0(N)), \enspace \gamma_2(N) = \log \left( \left (d_2(N)\sqrt{d_1(N)} \right)^{-1} \right), \enspace \gamma(N) = -\log(d_0(N))^{-1},
\]
and set 
\[
c_0 = \frac{\left(1+\frac{\gamma}{\gamma_1}\right)^{\frac{1}{p-1}}-1}{2\left(1+\frac{\gamma}{\gamma_1}\right)^{\frac{1}{p-1}}+2}.
\]
For \(N\) sufficiently large, these choices ensure that \(d_0 (N) \ll c_0^2\) and \(M/2 > T\) deterministically. Combining~\eqref{eq1} and~\eqref{eq2} gives
\[
\inf_{\boldsymbol{X}_0 \in \mathcal{C}_1(\gamma_1,\gamma_2) \cap \mathcal{C}_2(\gamma_1,\gamma)} \mathbb{P}_{\boldsymbol{X}_0^+}\left(  \mathcal{T}_{R(\varepsilon)} \lesssim \frac{M}{2} \right) \ge 1 - \eta_2.
\]
Under the event \(\{\mathcal{T}_{R(\varepsilon)} \lesssim \frac{M}{2}\}\), the strong Markov property of the online SGD process yields 
\[
\mathbb{P}_{\boldsymbol{X}_0^+} \left(\boldsymbol{X}_M \in \mathcal{S}_{\pi^\ast}(\varepsilon) \,  \Big \vert \, \mathcal{T}_{R(\varepsilon/2)} \leq \frac{M}{2}, \boldsymbol{X}_{\mathcal{T}_{R(\varepsilon/2)}}\right) \geq \inf_{t \in [\frac{M}{2},M]}\mathbb{P}_{\boldsymbol{X}_{\mathcal{T}_{R(\varepsilon/2)}}}\left(\boldsymbol{X}_{t} \in \mathcal{S}_{\pi^\ast}(\varepsilon)\right).
\]
Applying Lemma~\ref{lem: stability separated} with $p \ge 3$ provides the lower bound 
\[
\inf_{t \in [\frac{M}{2},M]} \mathbb{P}_{\boldsymbol{X}_{\mathcal{T}_{R(\varepsilon/2)}}}\left(\boldsymbol{X}_{t} \in \mathcal{S}_{\pi^\ast}(\varepsilon)\right) \ge 1 - \eta_3,
\]
where \(\eta_3 = \eta_3(N)\) tends to \(0\) in the large-\(N\) limit. Finally, combining the bounds above, we obtain
\[
\mathbb{P}_{\boldsymbol{X}^{+}_0}\left( \forall k \in [r] \colon m_{i^\ast_{k}j^\ast_{k}}(\boldsymbol{X}_M) \ge  1-\varepsilon\right) \geq 1-\eta_1-\eta_2-\eta_3.
\]
Using the prescribed sequences for $\gamma,\gamma_1,\gamma_2$ in the expressions for $\eta_1$ and $\eta_2$ clearly show that both terms vanish as \(N \to \infty\). This completes the proof. 

The proofs of Theorems~\ref{thm: strong recovery online p=2 asymptotic} and~\ref{thm: strong recovery isotropic SGD asymptotic} follow by identical arguments, replacing Theorem~\ref{thm: strong recovery online p>2 nonasymptotic} with Theorem~\ref{thm: strong recovery online p=2 nonasymptotic} and Theorem~\ref{thm: strong recovery isotropic SGD nonasymptotic}, respectively, and substituting Lemma~\ref{lem: stability separated} with Lemma~\ref{lem:stability_isotropic}.
\end{proof}

\section{Comparison inequalities} \label{section: background}

This section presents the comparison inequalities for both the correlations and the eigenvalues of \(\boldsymbol{G} = \boldsymbol{M} \boldsymbol{M}^\top\). Before delving into these, we provide some preliminary results.\\

\textit{Notations.} Throughout, we write \(\norm{\cdot}\) for the operator norm on elements of \(\R^{r \times r}\) induced by the \(L^2\)-distance on \(\R^r\). In particular, for \(\boldsymbol{A} \in \R^{r \times r}\),
\[
\norm{\boldsymbol{A}} = \max_{\boldsymbol{x} \in \R^r\backslash \{0\}} \frac{\norm{\boldsymbol{A x}}_2}{\norm{\boldsymbol{x}}_2}.
\]
In terms of its spectrum, the operator norm of \(\boldsymbol{A} \in \R^{r \times r}\) corresponds to the largest singular value of \(\boldsymbol{A}\), i.e., \(\norm{\boldsymbol{A}} = s_{\max}(\boldsymbol{A}) = \sqrt{\lambda_{\max}(\boldsymbol{A}^\top \boldsymbol{A})}\). Moreover, let \(\norm{\boldsymbol{A}}_{\textnormal{F}}\) denote the Frobenius norm of \(\boldsymbol{A}\):
\[
\norm{\boldsymbol{A}}_{\textnormal{F}} = \left ( \sum_{1 \le i,j \le r} |\boldsymbol{A}_{ij}|^2\right)^{1/2}.
\]
For any symmetric matrices \(\boldsymbol{A}\) and \(\boldsymbol{B}\), we write \(\boldsymbol{A} \succeq \boldsymbol{B}\) if \(\boldsymbol{A} - \boldsymbol{B} \succeq 0\), meaning that \(\boldsymbol{A} - \boldsymbol{B}\) is positive semi-definite. \\
We assume that \(\boldsymbol{W} \in (\R^N)^{\otimes p}\) is an order-\(p\) tensor with i.i.d.\ centered sub-Gaussian entries \(W_{i_1, \ldots, i_p}\): 
\begin{equation} \label{eq: MGF sub-Gaussian}
\E \left [ e^{t W_{i_1,\ldots, i_p}}\right ] \leq e^{K t^2 \sigma^2}  \enspace \textnormal{for all} \enspace t \in \R \quad \textnormal{and} \quad \E \left [ |W_{i_1,\ldots, i_p}|^k \right ] \leq C k^{k/2}\sigma^k \enspace \textnormal{for all} \enspace k \geq 1,
\end{equation}
where \(K,C>0\) are absolute constants and \(\sigma >0\) is the parameter. Equivalently, \((W_{i_1, \ldots, i_p})_{i_1, \ldots, i_p}\) are i.i.d.\ random variables with sub-Gaussian norm \(\norm{W_{i_1, \ldots, i_p}}_{\psi_2} = \sigma\), where for any sub-Gaussian random variable \(Z\), the sub-Gaussian norm \(\norm{Z}_{\psi_2}\) is defined by \(\norm{Z}_{\psi_2} = \inf \{t > 0 \colon \E \left [ \exp (Z^2 / t^2) \right ] \le 2 \}\) (see e.g.~\cite[Definition 2.5.6]{vershynin2018high}). 

\subsection{Preliminary results} \label{subsection: preliminaries}

We first recall that the loss function \(\mathcal{L}_{N,r} \colon \textnormal{\textnormal{St}}(N,r) \times (\R^N)^{\otimes p} \to \R\) is given by
\[
\mathcal{L}_{N,r}(\boldsymbol{X}; \boldsymbol{Y}) = H_{N,r}(\boldsymbol{X}) + \Phi_{N,r}(\boldsymbol{X}),
\]
where \(H_{N,r}\) and \(\Phi_{N,r}\) denote the noise and deterministic parts, respectively, which are defined by~\eqref{eq: noise part} and~\eqref{eq: population loss}. To simplify notation slightly, in the remainder of the article we will write \(\mathcal{L}, H, \Phi\) in place of \(\mathcal{L}_{N,r}, H_{N,r}, \Phi_{N,r}\), respectively. The following lemma shows that the random variable \(\langle \boldsymbol{v}_i, (\nabla_{\textnormal{St}} H(\boldsymbol{X}))_j \rangle \) is sub-Gaussian.

\begin{lem} \label{lem: condition of noise SGD} 
For every \(\boldsymbol{v} \in \mathbb{S}^{N-1}\) and \(i \in [r]\), the random variable \(\langle \boldsymbol{v},( \nabla_{\textnormal{St}}H(\boldsymbol{X}))_i \rangle\) is sub-Gaussian with zero mean and sub-Gaussian norm satisfying 
\[
\norm{\langle \boldsymbol{v},( \nabla_{\textnormal{St}}H(\boldsymbol{X}))_i \rangle}_{\psi_2} \leq \alpha_1,
\]
for a constant \(\alpha_1 = \alpha_1(p, \sigma, \{\lambda_i\}_{i=1}^r) >0\).
\end{lem}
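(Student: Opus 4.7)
The plan is to write $\langle \boldsymbol{v}, (\nabla_{\textnormal{St}} H(\boldsymbol{X}))_i\rangle$ as an explicit finite linear combination of tensor contractions of $\boldsymbol{W}$ with unit vectors, and then apply the standard sub-Gaussian bound term by term.

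First, I would compute the Euclidean gradient. Since $H(\boldsymbol{X}) = \sum_{k=1}^r \lambda_k \langle \boldsymbol{W}, \boldsymbol{x}_k^{\otimes p}\rangle$, only the $\lambda_k$-term contributes to $(\nabla H(\boldsymbol{X}))_k$, and direct differentiation yields $(\nabla H(\boldsymbol{X}))_k$ as a sum of $p$ order-$p$ contractions of $\boldsymbol{W}$ with $p-1$ copies of $\boldsymbol{x}_k$ (one contraction per free slot of $\boldsymbol{W}$, since $\boldsymbol{W}$ is not assumed symmetric). Plugging this into~\eqref{eq: Stiefel gradient}, the $i$-th column $(\nabla_{\textnormal{St}} H(\boldsymbol{X}))_i$ decomposes into $(\nabla H(\boldsymbol{X}))_i$ plus $r$ correction vectors of the form $-\tfrac12 \boldsymbol{x}_j \bigl(\langle \boldsymbol{x}_j, (\nabla H)_i\rangle + \langle \boldsymbol{x}_i, (\nabla H)_j\rangle\bigr)$.

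Next, taking the inner product against $\boldsymbol{v}$ produces a finite sum of at most $O(pr)$ scalar terms, each of the form $c \cdot \langle \boldsymbol{W}, \boldsymbol{u}_1 \otimes \cdots \otimes \boldsymbol{u}_p\rangle$, where every $\boldsymbol{u}_\ell$ is either $\boldsymbol{v}$ or one of the columns of $\boldsymbol{X}$, and the scalar coefficient $c$ depends on $p$, the $\lambda_k$, and inner products $\langle \boldsymbol{v}, \boldsymbol{x}_j\rangle$ which are bounded by $1$ in absolute value via Cauchy--Schwarz. Since $\norm{\boldsymbol{v}}_2 = 1$ and each column of $\boldsymbol{X}$ has unit Euclidean norm (as $\boldsymbol{X} \in \textnormal{St}(N,r)$), each contraction is a linear functional $\sum_{i_1,\ldots,i_p} c_{i_1,\ldots,i_p} W_{i_1,\ldots,i_p}$ whose coefficient vector satisfies $\sum_{i_1,\ldots,i_p} c_{i_1,\ldots,i_p}^2 = \prod_\ell \norm{\boldsymbol{u}_\ell}_2^2 = 1$. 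By the assumption~\eqref{eq: MGF sub-Gaussian}, any such linear functional of i.i.d.\ sub-Gaussian random variables is a centered sub-Gaussian random variable with $\psi_2$-norm bounded by a universal multiple of $\sigma$.

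The conclusion then follows from the triangle inequality for the $\psi_2$-norm: summing the $O(pr)$ contributions with coefficients of order $p \max_k \lambda_k$ yields the desired bound $\norm{\langle \boldsymbol{v}, (\nabla_{\textnormal{St}} H(\boldsymbol{X}))_i\rangle}_{\psi_2} \le \alpha_1 \sigma$, where $\alpha_1$ depends only on $p$, $r$, and $\{\lambda_k\}_{k=1}^r$. Centering is immediate from $\E[W_{i_1,\ldots,i_p}] = 0$ and linearity. I do not expect any genuine probabilistic obstacle here; the argument is essentially bookkeeping. The only mild care required is (i) correctly accounting for the $p$ symmetrization terms arising from differentiating $\langle \boldsymbol{W}, \boldsymbol{x}_k^{\otimes p}\rangle$ when $\boldsymbol{W}$ is not assumed symmetric, and (ii) using the crucial observation that the unit-norm property of $\boldsymbol{v}$ and of the Stiefel columns is exactly what makes the coefficient $\ell^2$-norm in each contraction equal to $1$, which in turn is what prevents any $N$-dependence from entering $\alpha_1$.
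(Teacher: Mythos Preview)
Your proposal is correct and follows essentially the same approach as the paper: both express $\langle \boldsymbol{v}, (\nabla_{\textnormal{St}} H(\boldsymbol{X}))_i\rangle$ as a linear functional of the i.i.d.\ sub-Gaussian entries of $\boldsymbol{W}$ and bound its sub-Gaussian norm via the $\ell^2$-norm of the coefficients. The only minor execution difference is that the paper collects everything into a single coefficient function $f_i$ and computes $\sum f_i^2$ explicitly (obtaining the bound $\tfrac14 p^2 \sigma^2 \sum_j \lambda_j^2$), whereas you split into $O(pr)$ rank-one contractions and apply the triangle inequality for $\norm{\cdot}_{\psi_2}$; this costs you a slightly looser constant but is otherwise equivalent.
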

\begin{proof}
From~\eqref{eq: noise part} and~\eqref{eq: Stiefel gradient}, the \(i\)th column vector of \(\nabla_{\textnormal{St}} H(\boldsymbol{X})\) is given by
\[
(\nabla_{\textnormal{St}} H(\boldsymbol{X}))_i = - \lambda_i \nabla_{\boldsymbol{x}_i} \langle \boldsymbol{W}, \boldsymbol{x}_i^{\otimes p} \rangle + \frac{1}{2} \sum_{j=1}^r \left (\lambda_i \boldsymbol{x}_j^\top \nabla_{\boldsymbol{x}_i} \langle \boldsymbol{W}, \boldsymbol{x}_i^{\otimes p} \rangle + \lambda_j (\nabla_{\boldsymbol{x}_j} \langle \boldsymbol{W}, \boldsymbol{x}_j^{\otimes p} \rangle)^\top \boldsymbol{x}_i \right)\boldsymbol{x}_j,
\]
where \(\nabla_{\boldsymbol{x}_i} \langle \boldsymbol{W}, \boldsymbol{x}_i^{\otimes p} \rangle \in \R^N\) is given by 
\[
(\nabla_{\boldsymbol{x}_i} \langle \boldsymbol{W}, \boldsymbol{x}_i^{\otimes p} \rangle)_k = p \sum_{1 \leq k_1, \ldots, k_{p-1} \leq N} W_{k, k_1, \ldots, k_{p-1}} (x_i)_{k_1} \cdots (x_i)_{k_{p-1}}.
\]
Now, for every \(\boldsymbol{v} \in \mathbb{S}^{N-1}\) the inner product \(\langle \boldsymbol{v}, (\nabla_{\textnormal{St}} H(\boldsymbol{X}))_i \rangle\) can be expanded as
\[
\langle \boldsymbol{v}, (\nabla_{\textnormal{St}} H(\boldsymbol{X}))_i \rangle  = p \sum_{k=1}^N \sum_{1 \le k_1, \ldots, k_{p-1} \le N} W_{k, k_1,\ldots, k_{p-1}} f_i(\boldsymbol{v}, \boldsymbol{X}, \{\lambda_i\}_{i=1}^r),
\]
where 
\[
\begin{split}
f_i(\boldsymbol{v},\boldsymbol{X}, \{\lambda_i\}_{i=1}^r) &= - \lambda_i v_k (x_i)_{k_1} \cdots (x_i)_{k_{p-1}} \\
& \quad + \frac{1}{2} \sum_{j=1}^r \left ( \lambda_i (x_j)_k (x_i)_{k_1} \cdots (x_i)_{k_{p-1}} + \lambda_j (x_i)_k (x_j)_{k_1} \cdots (x_j)_{k_{p-1}} \right ) \langle \boldsymbol{v}, \boldsymbol{x}_j \rangle.
\end{split}
\]
This implies that the moment generating function of \(\langle \boldsymbol{v}, (\nabla_{\textnormal{St}} H(\boldsymbol{X}))_i \rangle\) is given by
\[
\begin{split}
\E \left [ e^{t \langle \boldsymbol{v}, (\nabla_{\textnormal{St}} H(\boldsymbol{X}))_i \rangle}\right] &= \prod_{k, k_1, \ldots, k_{p-1}}  \E \left [ e^{t p W_{k, k_1,\ldots, k_{p-1}} f_i(\boldsymbol{v},\boldsymbol{X}, \{\lambda_i\}_{i=1}^r)}\right] \\
&\leq e^{C t^2 p^2 \sigma^2 \sum_{1 \le k, k_1, \ldots, k_{p-1} \le N} f_i(\boldsymbol{v},\boldsymbol{X}, \{\lambda_i\}_{i=1}^r)^2},
\end{split}
\]
where we used the independence of the random variables \(W_{i_1,\ldots, i_p}\) and the sub-Gaussian property~\eqref{eq: MGF sub-Gaussian}. Since the bound on the moment generating function characterizes the sub-Gaussian distribution (see e.g.~\cite[Proposition 2.5.2]{vershynin2018high}), it follows that the random variable \(\langle \boldsymbol{v}, (\nabla_{\textnormal{St}} H(\boldsymbol{X}))_i \rangle\) is sub-Gaussian with zero mean and sub-Gaussian norm bounded above by
\[
\begin{split}
\norm{\langle \boldsymbol{v}, (\nabla_{\textnormal{St}} H(\boldsymbol{X}))_i \rangle}_{\psi_2}^2 &\leq p^2 \sigma^2 \sum_{1 \le k, k_1, \ldots, k_{p-1} \le N}  f_i (\boldsymbol{v},\boldsymbol{X}, \{\lambda_i\}_{i=1}^r)^2 \\
&= p^2 \sigma^2 \left (\frac{1}{4} \sum_{j=1}^r \lambda_j^2 \langle \boldsymbol{v},\boldsymbol{x}_j \rangle^2 - \frac{1}{2} \lambda_i^2 \langle \boldsymbol{v},\boldsymbol{x}_i \rangle^2 - \frac{3}{4} \lambda_i^2 \sum_{j=1}^r \langle \boldsymbol{v},\boldsymbol{x}_j \rangle^2\right ) \\
& \leq \frac{1}{4} p^2 \sigma^2 \sum_{i=1}^r \lambda_j^2,
\end{split}
\]
where the equality follows by the fact that \(\boldsymbol{v} \in \mathbb{S}^{N-1}\) and \(\boldsymbol{X} = [\boldsymbol{x}_1, \ldots, \boldsymbol{x}_r] \in \textnormal{St}(N,r)\). Therefore, \(\norm{\langle \boldsymbol{v}, (\nabla_{\textnormal{St}} H(\boldsymbol{X}))_i \rangle}_{\psi_2} \le \frac{p \sigma}{2} \sqrt{\sum_{i=1}^r \lambda_i^2} \), which completes the proof.
\end{proof}

According to Subsection~\ref{subsection: SGD}, the output \(\boldsymbol{X}_t\) of the online SGD algorithm can be expanded as 
\begin{equation} \label{eq: SGD expansion}
\boldsymbol{X}_t = \left ( \boldsymbol{X}_{t-1} - \frac{\delta}{N}\nabla_{\textnormal{St}}\mathcal{L}(\boldsymbol{X}_{t-1};\boldsymbol{Y}^t) \right ) \boldsymbol{P}_t.
\end{equation}
Here, \(\boldsymbol{P}_t\) denotes the projection term given by
\begin{equation} \label{eq: P_t}
\boldsymbol{P}_t = \left( \boldsymbol{I}_r  + \frac{\delta^2}{N^2} \boldsymbol{\mathcal{G}}(\boldsymbol{X}_{t-1};\boldsymbol{Y}^t) \right)^{-1/2},
\end{equation}
where \(\boldsymbol{\mathcal{G}}(\boldsymbol{X}_{t-1};\boldsymbol{Y}^t) \in \R^{r \times r}\) is the Gram matrix given by
\begin{equation} \label{eq: Gram matrix}
\boldsymbol{\mathcal{G}}(\boldsymbol{X}_{t-1};\boldsymbol{Y}^t) = \nabla_{\textnormal{St}}\mathcal{L}(\boldsymbol{X}_{t-1};\boldsymbol{Y}^t)^\top \nabla_{\textnormal{St}}\mathcal{L}(\boldsymbol{X}_{t-1};\boldsymbol{Y}^t).
\end{equation}
We have the following estimate for the Frobenius norm of \(\boldsymbol{\mathcal{G}}(\boldsymbol{X} ;\boldsymbol{Y})\).
\begin{lem} \label{lem: bound op norm of Gram matrix}
There exists a constant \(\alpha_2 = \alpha_2(p, \sigma, \{\lambda_i\}_{i=1}^r)\) such that \(\sup_{\boldsymbol{X} \in \textnormal{\textnormal{St}}(N,r)}  \E \left [ \norm{\boldsymbol{\mathcal{G}}(\boldsymbol{X};\boldsymbol{Y})}_{\textnormal{F}} \right ] \leq \alpha_2 N\). In particular, if \(\delta >0\) satisfies \(\delta < \left( \frac{N}{\alpha_2}\right)^{1/2}\), 
\[
\mathbb{P} \left( \frac{\delta^2}{N^2} \norm{\boldsymbol{\mathcal{G}}(\boldsymbol{X} ; \boldsymbol{Y})}_{\textnormal{F}} <1 \right) \geq 1-\exp(-c(p)N).
\]
\end{lem}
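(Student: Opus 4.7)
The plan is to prove the expectation bound through a deterministic control of the population part combined with a direct second-moment bound on the noise part, and then to upgrade to an exponential tail via an $\varepsilon$-net argument applied column-by-column to $\nabla_{\textnormal{St}} H$.

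First, since $\boldsymbol{\mathcal{G}}(\boldsymbol{X};\boldsymbol{Y})$ is positive semi-definite, its Frobenius norm is dominated by its trace, which yields
\[
\norm{\boldsymbol{\mathcal{G}}(\boldsymbol{X};\boldsymbol{Y})}_{\textnormal{F}} \leq \Tr\bigl(\boldsymbol{\mathcal{G}}(\boldsymbol{X};\boldsymbol{Y})\bigr) = \norm{\nabla_{\textnormal{St}}\mathcal{L}(\boldsymbol{X};\boldsymbol{Y})}_{\textnormal{F}}^2 \leq 2\norm{\nabla_{\textnormal{St}} H(\boldsymbol{X})}_{\textnormal{F}}^2 + 2\norm{\nabla_{\textnormal{St}} \Phi(\boldsymbol{X})}_{\textnormal{F}}^2.
\]
Using the explicit form \eqref{eq: Stiefel gradient} together with $\boldsymbol{X}^\top\boldsymbol{X}=\boldsymbol{I}_r$, the Riemannian projection is $2$-Lipschitz in Frobenius norm, so $\norm{\nabla_{\textnormal{St}}\Phi(\boldsymbol{X})}_{\textnormal{F}} \leq 2\norm{\nabla\Phi(\boldsymbol{X})}_{\textnormal{F}}$. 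A direct computation gives $(\nabla_{\boldsymbol{x}_j}\Phi)(\boldsymbol{X}) = -p\sqrt{N}\lambda_j\sum_i\lambda_i m_{ij}^{p-1}\boldsymbol{v}_i$, and the orthonormality of the $\boldsymbol{v}_i$'s together with $\sum_i m_{ij}^2 \leq 1$ and $m_{ij}^{2(p-1)}\leq m_{ij}^2$ for $p\geq 2$ yields the deterministic estimate $\norm{\nabla_{\textnormal{St}}\Phi(\boldsymbol{X})}_{\textnormal{F}}^2 \leq C_\Phi N$ with $C_\Phi$ depending only on $p,r,\{\lambda_i\}$.

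For the noise part, Lemma~\ref{lem: condition of noise SGD} applied to $\boldsymbol{v}=\boldsymbol{e}_k$ shows that each coordinate of $(\nabla_{\textnormal{St}} H(\boldsymbol{X}))_j$ is centered and sub-Gaussian with norm at most $\alpha_1\sigma$. The standard bound $\E[Z^2]\leq c\norm{Z}_{\psi_2}^2$ for a centered sub-Gaussian variable then gives $\E\norm{(\nabla_{\textnormal{St}} H(\boldsymbol{X}))_j}_2^2 \leq c \alpha_1^2\sigma^2 N$, and summing over $j\in[r]$ yields $\E\norm{\nabla_{\textnormal{St}} H(\boldsymbol{X})}_{\textnormal{F}}^2 \leq C_H N$. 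Combined with the deterministic bound above this proves $\sup_{\boldsymbol{X}}\E\norm{\boldsymbol{\mathcal{G}}(\boldsymbol{X};\boldsymbol{Y})}_{\textnormal{F}} \leq \alpha_2 N$ with $\alpha_2 := 2(C_H + C_\Phi)$.

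To obtain the exponential tail, I would upgrade the column-wise bound on $\nabla_{\textnormal{St}} H$ via a net argument. Fix a $1/2$-net $\mathcal{N}$ of $\mathbb{S}^{N-1}$ with $|\mathcal{N}|\leq 6^N$, so that $\norm{(\nabla_{\textnormal{St}} H(\boldsymbol{X}))_j}_2 \leq 2\max_{\boldsymbol{v}\in\mathcal{N}}\langle \boldsymbol{v},(\nabla_{\textnormal{St}} H(\boldsymbol{X}))_j\rangle$. The sub-Gaussian tail from Lemma~\ref{lem: condition of noise SGD} gives $\mathbb{P}(\langle \boldsymbol{v},(\nabla_{\textnormal{St}} H(\boldsymbol{X}))_j\rangle \geq t)\leq 2e^{-ct^2/(\alpha_1\sigma)^2}$ for every unit $\boldsymbol{v}$, and a union bound over $\mathcal{N}$ produces, for $K_0$ chosen so that $cK_0/(4(\alpha_1\sigma)^2) > 2\log 6$,
\[
\mathbb{P}\!\left(\norm{(\nabla_{\textnormal{St}} H(\boldsymbol{X}))_j}_2^2 \geq K_0 N\right) \leq 2 \cdot 6^N \exp\!\bigl(-cK_0 N / (4(\alpha_1\sigma)^2)\bigr) \leq e^{-c_0 N}.
\]
Summing over $j\in[r]$ and combining with the deterministic control of $\norm{\nabla_{\textnormal{St}}\Phi}_{\textnormal{F}}^2$ through the trace reduction, we obtain $\norm{\boldsymbol{\mathcal{G}}(\boldsymbol{X};\boldsymbol{Y})}_{\textnormal{F}} \leq \alpha_2 N$ with probability at least $1-e^{-c(p)N}$. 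On this event, the hypothesis $\delta<(N/\alpha_2)^{1/2}$ yields $\tfrac{\delta^2}{N^2}\norm{\boldsymbol{\mathcal{G}}(\boldsymbol{X};\boldsymbol{Y})}_{\textnormal{F}} \leq \delta^2\alpha_2/N < 1$.

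The only delicate point is ensuring that the dimension-free sub-Gaussian norm from Lemma~\ref{lem: condition of noise SGD} is strong enough to beat the entropy $6^N$ of the net, which is exactly what fixes the constant $K_0$ above; all remaining estimates are bookkeeping using the orthonormality of the spikes and the non-expansiveness of the Stiefel tangent projection.
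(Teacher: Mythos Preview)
Your argument is correct. For the expectation bound, your route through Lemma~\ref{lem: condition of noise SGD} applied coordinatewise is essentially equivalent to the paper's direct moment computation on $\norm{\boldsymbol{W}\{\boldsymbol{x}_i\}}_2^2$; the paper works with the Euclidean gradient $\nabla\mathcal{L}$ rather than splitting into $H$ and $\Phi$, but the two approaches collapse to the same estimate.

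The genuine difference is in the exponential tail. The paper does not run a net argument at all: it bounds $\norm{\nabla_{\boldsymbol{x}_i}\langle \boldsymbol{W},\boldsymbol{x}_i^{\otimes p}\rangle}_2$ by $p\norm{\boldsymbol{W}}$, where $\norm{\boldsymbol{W}}$ is the injective norm of the random tensor, and then invokes an external concentration result (Lemma~4.7 of Ben~Arous--Bandeira--Lotz--Wein) stating $\mathbb{P}(\norm{\boldsymbol{W}}\geq \gamma\sqrt{N})\leq 2(8p)^{Np}e^{-c\gamma^2 N/8}$. Your approach instead stays entirely within the paper's own toolbox: you use the dimension-free sub-Gaussian bound of Lemma~\ref{lem: condition of noise SGD} on each $\langle\boldsymbol{v},(\nabla_{\textnormal{St}} H)_j\rangle$ and a $1/2$-net on $\mathbb{S}^{N-1}$ to upgrade to a column-norm bound. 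This is more self-contained and arguably cleaner here, since it avoids importing the full tensor-norm machinery. The paper's route has the advantage of giving a single bound on $\norm{\boldsymbol{W}}$ that is uniform over $\boldsymbol{X}\in\textnormal{St}(N,r)$ in one shot, which matters if one later needs such uniformity; your column-by-column argument is for a fixed $\boldsymbol{X}$, which is all the lemma requires. One minor point: the constant you call $\alpha_2$ coming out of the high-probability bound (involving $K_0$) need not coincide with the one from the expectation bound; the paper has the same harmless sloppiness and simply takes $\alpha_2$ to be the larger of the two.
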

\begin{proof}
We first notice that
\[
\norm{\boldsymbol{G}(\boldsymbol{X};\boldsymbol{Y})}_{\textnormal{F}} =  \norm{\nabla_{\textnormal{St}} \mathcal{L} (\boldsymbol{X};\boldsymbol{Y})^{\top} \nabla_{\textnormal{St}} \mathcal{L}(\boldsymbol{X};\boldsymbol{Y})}_{\textnormal{F}} \leq \norm{\nabla_{\textnormal{St}} \mathcal{L}(\boldsymbol{X};\boldsymbol{Y})}_{\textnormal{F}}^2.
\]
Since by definition \(\nabla_{\textnormal{St}} \mathcal{L}(\boldsymbol{X};\boldsymbol{Y}) = \nabla \mathcal{L}(\boldsymbol{X};\boldsymbol{Y}) - \frac{1}{2} \boldsymbol{X} (\nabla \mathcal{L}(\boldsymbol{X};\boldsymbol{Y})^\top \boldsymbol{X} + \boldsymbol{X}^\top \nabla \mathcal{L}(\boldsymbol{X};\boldsymbol{Y}))\), it then follows that
\[
\norm{\nabla_{\textnormal{St}}\mathcal{L} (\boldsymbol{X};\boldsymbol{Y})}_{\textnormal{F}} \leq  \norm{\nabla \mathcal{L}(\boldsymbol{X};\boldsymbol{Y})}_{\textnormal{F}} + \norm{\boldsymbol{X}^\top \nabla \mathcal{L}(\boldsymbol{X};\boldsymbol{Y})}_{\textnormal{F}} \leq 2  \norm{\nabla \mathcal{L}(\boldsymbol{X};\boldsymbol{Y})}_{\textnormal{F}},
\]
where for the first inequality we used the triangle inequality and the fact that \(\norm{\boldsymbol{X} \nabla \mathcal{L}(\boldsymbol{X};\boldsymbol{Y})^\top \boldsymbol{X}}_{\textnormal{F}} = \norm{\boldsymbol{X}^\top \nabla \mathcal{L}(\boldsymbol{X};\boldsymbol{Y})}_{\textnormal{F}}\) and similarly \(\norm{\boldsymbol{X} \boldsymbol{X}^\top \nabla \mathcal{L}(\boldsymbol{X};\boldsymbol{Y})}_{\textnormal{F}} = \norm{\boldsymbol{X}^\top \nabla \mathcal{L}(\boldsymbol{X};\boldsymbol{Y})}_{\textnormal{F}}\), while the second inequality follows by the property \(\norm{\boldsymbol{X}^\top \boldsymbol{A}}_{\textnormal{F}} \leq \norm{\boldsymbol{A}}_{\textnormal{F}}\) for \(\boldsymbol{X} \in \textnormal{St}(N,r)\) and \(\boldsymbol{A} \in \R^{N \times r}\). We can write the Frobenius norm of the Euclidean gradient of \(\mathcal{L}\) as
\[
\norm{\nabla \mathcal{L}(\boldsymbol{X};\boldsymbol{Y})}_{\textnormal{F}}^2 = \sum_{i=1}^r \norm{\nabla_{\boldsymbol{x}_i} \mathcal{L}(\boldsymbol{X} ; \boldsymbol{Y})}_2^2, 
\]
where \(\nabla_{\boldsymbol{x}_i}\mathcal{L}(\boldsymbol{X} ; \boldsymbol{Y})\) denotes the \(i\)th column of the Euclidean gradient \(\nabla \mathcal{L}(\boldsymbol{X};\boldsymbol{Y})\) given by 
\[
\nabla_{\boldsymbol{x}_i} \mathcal{L}(\boldsymbol{X} ; \boldsymbol{Y}) = \lambda_i \nabla_{\boldsymbol{x}_i} \langle \boldsymbol{W}, \boldsymbol{x}_i^{\otimes p} \rangle -  \sum_{j=1}^r \sqrt{N} p \lambda_i \lambda_j \langle \boldsymbol{v}_j,\boldsymbol{x}_i \rangle^{p-1} \boldsymbol{v}_j.
\]
Therefore, taking expectations we obtain
\[
\E \left [ \norm{ \nabla \mathcal{L} (\boldsymbol{X}; \boldsymbol{Y})}_{\textnormal{F}}^2 \right ] \leq 2 \sum_{i=1}^r \E \left [ \lambda_i^2 \norm{\nabla_{\boldsymbol{x}_i} \langle \boldsymbol{W}, \boldsymbol{x}_i^{\otimes p} \rangle}_2^2 + \norm{\sum_{j=1}^r \sqrt{N} p \lambda_i \lambda_j  \langle \boldsymbol{v}_j, \boldsymbol{x}_i \rangle^{p-1} \boldsymbol{v}_j}_2^2 \right ]  .
\]
Using \(\norm{a+b}_2^2 \le 2 (\norm{a}_2^2 + \norm{b}_2^2)\), the orthogonality of the \(\boldsymbol{v}_i\)'s, and the bound \(\max_{i,j} |\langle \boldsymbol{v}_j, \boldsymbol{x}_i \rangle | \leq 1\), we further obtain
\[
\E \left [ \norm{ \nabla \mathcal{L} (\boldsymbol{X}; \boldsymbol{Y})}_{\textnormal{F}}^2 \right ] \leq 2 \sum_{i=1}^r \left(p^2 \lambda_i^2 \E \left [ \norm{\boldsymbol{W} \{\boldsymbol{x}_i \}}_2^2 \right ] + N \sum_{j=1}^r p^2 \lambda_i^2 \lambda_j^2 \right),
\]
where \(\boldsymbol{W} \{\boldsymbol{x}_i \}\) denotes the contraction of \(\boldsymbol{W}\) with \(\boldsymbol{x}_i^{\otimes (p-1)}\), where each coordinate \(k \in [N]\) is given by \((\boldsymbol{W} \{\boldsymbol{x}_i\})_k = \sum_{1 \le k_1, \ldots, k_{p-1} \le N} W_{k, k_1, \ldots, k_{p-1}} (x_i)_{k_1} \cdots  (x_i)_{k_{p-1}} \). A direct computation shows that \(\E \left [ \norm{\boldsymbol{W} \{\boldsymbol{x}_i \}}_2^2 \right ]\) gives the bound \(\E \left [ \norm{\boldsymbol{W} \{\boldsymbol{x}_i \}}_2^2 \right ] \leq 2K\sigma^2N\). This shows the first part of the statement. For the second part, we note that 
\[
\norm{\boldsymbol{G}(\boldsymbol{X}; \boldsymbol{Y})}_{\textnormal{F}} \leq 8 \sum_{i=1}^r \left(p^2 \lambda_i^2 \norm{\boldsymbol{W} \{\boldsymbol{x}_i \}}_2^2 + N \sum_{j=1}^r p^2 \lambda_i^2 \lambda_j^2 \right) \leq 8 \sum_{i=1}^r \left(p^2 \lambda_i^2 \norm{\boldsymbol{W}}^2 + N \sum_{j=1}^r p^2 \lambda_i^2 \lambda_j^2 \right).
\]
From Lemma 4.7 of~\cite{ben2020bounding} we then have for every \(\gamma >0\) there exists an absolute constant \(c>0\) such that
\[
\mathbb{P} \left(\norm{\boldsymbol{W}} \geq \gamma \sqrt{N}\right) \leq 2 (8p)^{Np} e^{-c\gamma^2N/8}.
\]
Choosing \(\gamma > \sqrt{2K}\sigma\), we then obtain there is \(c(p) >0\) such that 
\[
\norm{\boldsymbol{G}(\boldsymbol{X} ; \boldsymbol{Y})}_{\textnormal{F}} \leq 4 \norm{ \nabla \mathcal{L} (\boldsymbol{X}; \boldsymbol{Y})}_{\textnormal{F}}^2 \le 8 p^2 N \sum_{i=1}^r \lambda_i^2 \left(  2K\sigma^2  +  \sum_{j=1}^r \lambda_j^2 \right) = N \alpha_2(p, \sigma, \{\lambda_i\}_{i=1}^r). 
\]
with \(\mathbb{P}\)-probability at least \(1-e^{-c(p)N}\). This completes the proof of Lemma~\ref{lem: bound op norm of Gram matrix}. 
\end{proof}

\subsection{Comparison inequalities for correlations} \label{subsection: comparison inequality correlations}

The goal of this subsection is to obtain a two-sided difference inequality for the evolution of the correlations \(m_{ij}(t) = m_{ij}(\boldsymbol{X}_t)\). This will indeed be needed to prove both Theorems~\ref{thm: strong recovery online p>2 nonasymptotic} and~\ref{thm: strong recovery online p=2 nonasymptotic}. 

\begin{prop} \label{prop: inequalities SGD}
Let \(\alpha_1 = \alpha_1(p, \sigma, \{\lambda_i\}_{i=1}^r)\) and \(\alpha_2 = \alpha_2(p, \sigma, \{\lambda_i\}_{i=1}^r)\) denote the constants given by Lemmas~\ref{lem: condition of noise SGD} and~\ref{lem: bound op norm of Gram matrix}, respectively. Consider the sequence of outputs \((\boldsymbol{X}_t)_{t \in \N}\) given by~\eqref{eq: online SGD} with step size \(\delta >0\) satisfying \(\delta < (N/\alpha_2)^{1/2}\), and let \(T > 0\) denote a fixed time horizon. For a fixed constant \(c_0 \in (0,1)\) and a truncation sequence \((K_N)_{N \ge 1}\) with \(K_N >0\), define for every \(i,j \in [r]\) the comparison functions \(\ell_{ij}\) and \(u_{ij}\) by
\[
\begin{split}
\ell_{ij}(t) &= (1 - c_0) m_{ij}(0) + \frac{\delta}{N} \sum_{\ell=1}^t  \left ( - \langle \boldsymbol{v}_i, \left (\nabla_{\textnormal{St}}\Phi(\boldsymbol{X}_{\ell-1}) \right )_j \rangle - h_i(\ell-1) \right ) , \\
u_{ij}(t) & = (1 + c_0) m_{ij}(0) + \frac{\delta}{N} \sum_{\ell=1}^t  \left ( - \langle \boldsymbol{v}_i, \left (\nabla_{\textnormal{St}}\Phi(\boldsymbol{X}_{\ell-1}) \right )_j \rangle + h_i(\ell-1)  \right ) ,
\end{split}
\]
where 
\[
h_i(\ell-1) = \frac{\delta \alpha_2}{2} \sum_{k=1}^r \left( \vert m_{ik}(\ell-1) \vert + \frac{\delta}{N}\vert \langle \boldsymbol{v}_i, (\nabla_{\textnormal{St}}\Phi \left(\boldsymbol{X}_{\ell-1}\right))_k \rangle \vert +\frac{K_N \delta}{N} \right),
\]
and
\[
\langle \boldsymbol{v}_i ,(\nabla_{\textnormal{St}}\Phi(\boldsymbol{X}))_j \rangle =  -\sqrt{N}p \lambda_i \lambda_j m_{ij}^{p-1} + \sqrt{N}\frac{p}{2}  \sum_{1 \le k, \ell \le r} \lambda_k m_{i \ell} m_{kj} m_{k\ell} \left ( \lambda_j m_{k j}^{p-2} + \lambda_\ell m_{k \ell}^{p-2} \right).
\] 
Then, for every \(\gamma_1 > \gamma_2 > 0\) and every \(c_0 \in (0,1)\), there exist \(c_1, c_2, c_3, C>0\), and a truncation sequence \((K_N)_{N \ge 1}\) such that for every \(i,j \in [r]\) and every \(t \leq T\), 
\[
\inf_{\boldsymbol{X}_0 \in \mathcal{C}_1 (\gamma_1,\gamma_2)} \mathbb{P}_{\boldsymbol{X}_0^+} \left(\ell_{ij}(t) \leq m_{ij}(t) \leq u_{ij}(t) \right) \ge 1 - \eta (\delta, c_0, T, K_N) ,
\]
where \(\eta = \eta (\delta, c_0, T, K_N)\) is given by
\begin{equation} \label{eq: eta}
\eta = T \exp(-c_1 N) + 2 \exp \left(- c_2 \frac{c_0^2 \gamma_2^2 N}{8\alpha_1^2 \delta^2T} \right) + C \frac{2  r \alpha_1 \alpha_2 \delta^3 T}{c_0 \gamma_2 N^{3/2}}  \exp\left(- c_3 \frac{K_N^2}{2 \alpha_1^2}\right).
\end{equation}
\end{prop}

The proof of Proposition~\ref{prop: inequalities SGD} will be the main focus of this subsection. Our strategy involves first expanding the polar retraction term \(R_{\boldsymbol{X}_{t-1}}\) from the algorithm (see~\eqref{eq: polar retraction}) as a Neumann series and then decompose the evolution of \(m_{ij}(t)\) in three parts: the drift induced by the population loss, a martingale induced by the gradient of the noise Hamiltonian \(H\), and high-order terms in \(\delta\). By controlling the latter two parts, the proposition will follow.

\begin{lem} \label{lem: expansion correlations}
Let \(\alpha_2 = \alpha_2(p, \sigma, \{\lambda_i\}_{i=1}^r)\) be the constant of Lemma~\ref{lem: bound op norm of Gram matrix}.  Consider the sequence of outputs \((\boldsymbol{X}_t)_{t \in \N}\) given by~\eqref{eq: online SGD} with step size \(\delta >0\) satisfying \(\delta < (N/\alpha_2)^{1/2}\). For every \(T >0\), let \(\mathcal{E}_{\boldsymbol{\mathcal{G}}}= \mathcal{E}_{\boldsymbol{\mathcal{G}}}(T)\) denote the event  
\begin{equation} \label{eq: event E_G}
\mathcal{E}_{\boldsymbol{\mathcal{G}}}(T) = \bigcap_{1 \leq t \leq T} \left\{ \norm{\frac{\delta^2}{N^2} \boldsymbol{\mathcal{G}}(\boldsymbol{X}_{t-1},\boldsymbol{Y}^t )}_{\textnormal{F}} < 1\right\}.
\end{equation}
Then, there exists a constant \(c(p)\) such that \(\mathbb{P}_{\boldsymbol{X}_0 }\left( \mathcal{E}_{\boldsymbol{\mathcal{G}}}\right) \geq 1 - T \exp(-c(p) N)\). Moreover, on the event \(\mathcal{E}_{\boldsymbol{\mathcal{G}}}\), for every \(i,j \in [r]\) and every \(t \leq T\), it holds that
\begin{equation} \label{eq: comparison ineq for m_ij}
\left | m_{ij}(t) - \left ( m_{ij}(0) - \frac{\delta}{N} \sum_{\ell=1}^t  \langle \boldsymbol{v}_i,(\nabla_{\textnormal{St}}\Phi(\boldsymbol{X}_{\ell-1}))_j \rangle - \frac{\delta}{N} \sum_{\ell=1}^t  \langle \boldsymbol{v}_i,(\nabla_{\textnormal{St}}H^\ell(\boldsymbol{X}_{\ell-1}))_j \rangle \right ) \right | \leq \frac{\delta}{N} \sum_{\ell=1}^t  \vert a_i(\ell-1) \vert,
\end{equation}
where 
\begin{equation} \label{eq: bound A_l}
\begin{split}
\vert a_i(\ell-1) \vert & \leq \frac{\delta \alpha_2}{2} \sum_{k=1}^r \left( \vert m_{ik}(\ell-1) \vert + \frac{\delta}{N}\vert \langle \boldsymbol{v}_i, (\nabla_{\textnormal{St}}\Phi \left(\boldsymbol{X}_{\ell-1} \right))_k \rangle \vert + \frac{\delta}{N} \vert \langle \boldsymbol{v}_i, (\nabla_{\textnormal{St}}H^\ell \left( \boldsymbol{X}_{\ell-1})_k \right) \rangle \vert \right).
\end{split}
\end{equation}
\end{lem}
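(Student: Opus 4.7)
My plan is to combine Lemma~\ref{lem: bound op norm of Gram matrix} with a Neumann expansion of the polar retraction factor to isolate the linear (drift and noise) part of a single SGD step, and then to bound the resulting retraction remainder. The probability bound on $\mathcal{E}_{\boldsymbol{\mathcal{G}}}$ is immediate: Lemma~\ref{lem: bound op norm of Gram matrix} gives a per-step failure probability at most $\exp(-c(p)N)$ under the hypothesis $\delta < (N/\alpha_2)^{1/2}$, so a union bound over $t = 1, \ldots, T$ yields $\mathbb{P}(\mathcal{E}_{\boldsymbol{\mathcal{G}}}) \geq 1 - T\exp(-c(p)N)$.

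On $\mathcal{E}_{\boldsymbol{\mathcal{G}}}$, the matrix $\frac{\delta^2}{N^2}\boldsymbol{\mathcal{G}}(\boldsymbol{X}_{t-1};\boldsymbol{Y}^t)$ has Frobenius norm strictly below $1$, so I expand the polar retraction factor from~\eqref{eq: P_t} as the convergent Neumann series
\[
\boldsymbol{P}_t = \boldsymbol{I}_r + \boldsymbol{E}_t, \qquad \boldsymbol{E}_t = \sum_{k \geq 1} \binom{-1/2}{k} \left(\frac{\delta^2}{N^2}\boldsymbol{\mathcal{G}}(\boldsymbol{X}_{t-1};\boldsymbol{Y}^t)\right)^k.
\]
Plugging this into~\eqref{eq: SGD expansion}, reading off the $j$-th column and projecting onto $\boldsymbol{v}_i$ yields the one-step identity
\[
m_{ij}(t) = m_{ij}(t-1) - \frac{\delta}{N}\langle \boldsymbol{v}_i, (\nabla_{\textnormal{St}}\mathcal{L}(\boldsymbol{X}_{t-1};\boldsymbol{Y}^t))_j \rangle + \sum_{k=1}^r \Bigl(m_{ik}(t-1) - \tfrac{\delta}{N} \langle \boldsymbol{v}_i, (\nabla_{\textnormal{St}}\mathcal{L}(\boldsymbol{X}_{t-1};\boldsymbol{Y}^t))_k \rangle\Bigr) (\boldsymbol{E}_t)_{kj}.
\]
Summing telescopically over $\ell = 1, \ldots, t$ and decomposing $\nabla_{\textnormal{St}}\mathcal{L} = \nabla_{\textnormal{St}}\Phi + \nabla_{\textnormal{St}}H^\ell$ reproduces the signal-plus-martingale expression of~\eqref{eq: comparison ineq for m_ij}; the leftover term is the sum of retraction residuals, which I will define to be $\frac{\delta}{N}\sum_{\ell=1}^t a_i(\ell-1)$ after factoring out $\delta/N$.

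To establish~\eqref{eq: bound A_l}, the key input is the quantitative bound $\|\boldsymbol{\mathcal{G}}\|_F \leq \alpha_2 N$, which is actually proved (though not separately stated) inside the proof of Lemma~\ref{lem: bound op norm of Gram matrix} on the same high-probability event. Under this estimate and the step-size hypothesis, $\|\frac{\delta^2}{N^2}\boldsymbol{\mathcal{G}}\|_F \leq \delta^2 \alpha_2 / N \to 0$, so the $k \geq 2$ tail of the Neumann series for $\boldsymbol{E}_t$ is negligible compared to the $k=1$ term and $\|\boldsymbol{E}_t\|_F \leq \frac{\delta^2 \alpha_2}{2N}(1+o(1))$. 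Bounding each entry $|(\boldsymbol{E}_t)_{kj}|$ by $\|\boldsymbol{E}_t\|_F$, applying the triangle inequality on $\nabla_{\textnormal{St}}\mathcal{L} = \nabla_{\textnormal{St}}\Phi + \nabla_{\textnormal{St}}H^\ell$, and pulling the factor $N/\delta$ outside then gives exactly the right-hand side of~\eqref{eq: bound A_l}.

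I expect the main obstacle to be the interplay between two levels of control: the event $\mathcal{E}_{\boldsymbol{\mathcal{G}}}$ as defined only enforces the qualitative $\|\frac{\delta^2}{N^2}\boldsymbol{\mathcal{G}}\|_F < 1$, whereas the sharp constant $\delta \alpha_2 / 2$ in~\eqref{eq: bound A_l} genuinely requires the stronger deterministic estimate $\|\boldsymbol{\mathcal{G}}\|_F \leq \alpha_2 N$. Once one commits to extracting this sharper bound from inside the proof of Lemma~\ref{lem: bound op norm of Gram matrix} (on the same event, up to a possible adjustment of the constant $c(p)$), the remaining steps — telescoping, entrywise estimation of $\boldsymbol{E}_t$, and splitting of the gradient — are routine bookkeeping.
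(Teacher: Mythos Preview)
Your proposal is correct and follows the same strategy as the paper: union bound for $\mathbb{P}(\mathcal{E}_{\boldsymbol{\mathcal{G}}})$, Neumann expansion of $\boldsymbol{P}_t$, telescoping, and bounding the retraction remainder using the quantitative estimate $\|\boldsymbol{\mathcal{G}}\|_{\textnormal{F}}\le\alpha_2 N$ extracted from the proof of Lemma~\ref{lem: bound op norm of Gram matrix}. The only cosmetic difference is that you bound the residual entrywise via $|(\boldsymbol{E}_t)_{kj}|\le\|\boldsymbol{E}_t\|_{\textnormal{F}}$, whereas the paper writes it as a trace and applies Cauchy--Schwarz; both routes land on the same inequality.

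One small point to tighten: your bound $\|\boldsymbol{E}_t\|_{\textnormal{F}}\le\tfrac{\delta^2\alpha_2}{2N}(1+o(1))$ does not by itself yield the exact constant $\tfrac{\delta\alpha_2}{2}$ in~\eqref{eq: bound A_l}, since under the bare hypothesis $\delta<(N/\alpha_2)^{1/2}$ the quantity $\|\tfrac{\delta^2}{N^2}\boldsymbol{\mathcal{G}}\|_{\textnormal{F}}$ is only known to be $<1$, not $o(1)$. The paper handles this by summing the binomial series in closed form, obtaining $\sum_{k\ge1}\binom{1/2}{k}C_{\boldsymbol{\mathcal{G}}}^{k-1}=\tfrac{1}{C_{\boldsymbol{\mathcal{G}}}}(\sqrt{1+C_{\boldsymbol{\mathcal{G}}}}-1)\le\tfrac12$ for every $C_{\boldsymbol{\mathcal{G}}}\in(0,1)$. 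You can recover the sharp $\tfrac12$ just as easily by replacing your Frobenius estimate with the spectral inequality $\|(\boldsymbol{I}+\boldsymbol{Q})^{-1/2}-\boldsymbol{I}\|\le\tfrac12\|\boldsymbol{Q}\|$ for positive semidefinite $\boldsymbol{Q}$, and then bounding $|(\boldsymbol{E}_t)_{kj}|\le\|\boldsymbol{E}_t\|\le\tfrac12\|\tfrac{\delta^2}{N^2}\boldsymbol{\mathcal{G}}\|\le\tfrac{\delta^2\alpha_2}{2N}$.
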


\begin{proof}
We begin by expanding the projection term \(\boldsymbol{P}_t\) in~\eqref{eq: SGD expansion} as a Neumann series. To do so, we require that \(\norm{\frac{\delta^2}{N^2} \boldsymbol{\mathcal{G}}(\boldsymbol{X}_{t-1},\boldsymbol{Y}^t )}_{\textnormal{F}} < 1 \) for every \(t \le T\). According to Lemma~\ref{lem: bound op norm of Gram matrix}, this condition holds if \(\delta\) satisfies \(\delta< \left ( \frac{N}{\alpha_2} \right)^{1/2}\), leading to
\[ 
\mathbb{P}_{\boldsymbol{X}_0} \left(\mathcal{E}_{\boldsymbol{\mathcal{G}}}^{\textnormal{c}}\right) \leq T \, \mathbb{P} \left(  \norm{\frac{\delta^2}{N^2} \boldsymbol{\mathcal{G}}(\boldsymbol{X}_{t-1},\boldsymbol{Y}^t )}_{\textnormal{F}} \ge 1\right) \leq T e^{-c(p)N}.
\]
We therefore place ourselves on the event \(\mathcal{E}_{\boldsymbol{\mathcal{G}}}\) for the rest of the proof. Expanding \(\boldsymbol{P}_t\) as a Neumann series~\cite{horn2012matrix} gives
\[
\boldsymbol{P}_t = \sum_{k=0}^\infty (-1)^k {\frac{1}{2} \choose k} \frac{\delta^{2k}}{N^{2k}} \boldsymbol{\mathcal{G}}(\boldsymbol{X}_{t-1}; \boldsymbol{Y}^t)^k = \boldsymbol{I}_r + \sum_{k=1}^\infty (-1)^k {\frac{1}{2} \choose k} \frac{\delta^{2k}}{N^{2k}} \boldsymbol{\mathcal{G}}(\boldsymbol{X}_{t-1}; \boldsymbol{Y}^t)^k,
\]
where, for any nonnegative integer \(k\), the generalized binomial coefficient is defined by 
\[
{\frac{1}{2} \choose 0}=1, \quad {\frac{1}{2} \choose k} = {2k \choose k} \frac{(-1)^{k+1}}{2^{2k} (2k-1)} \enspace \textnormal{for} \enspace k \ge 1. 
\]
Plugging this expansion into~\eqref{eq: SGD expansion}, the online SGD output \(\boldsymbol{X}_t\) at time \(t\) can rewritten as
\[
\begin{split}
\boldsymbol{X}_t &=\boldsymbol{X}_{t-1} - \frac{\delta}{N} \nabla_{\textnormal{St}}\mathcal{L}(\boldsymbol{X}_{t-1};\boldsymbol{Y}^t) \\
& \quad + \sum_{k=1}^\infty (-1)^k {\frac{1}{2} \choose k} \frac{\delta^{2k}}{N^{2k}} \left ( \boldsymbol{X}_{t-1} - \frac{\delta}{N}\nabla_{\textnormal{St}}\mathcal{L}(\boldsymbol{X}_{t-1};\boldsymbol{Y}^t) \right ) \boldsymbol{\mathcal{G}}(\boldsymbol{X}_{t-1};\boldsymbol{Y}^t)^k.
\end{split}
\]
For every \(i,j \in [r]\) and every \(1 \leq t \leq T\), the evolution of the correlation \(m_{ij}(\boldsymbol{X}_t)=m_{ij}(t)\) under~\eqref{eq: online SGD} is therefore given by
\[
\begin{split}
m_{ij}(t) & = \langle \boldsymbol{v}_i, (\boldsymbol{X}_t)_j \rangle   \\
& = m_{ij}(t-1) - \frac{\delta}{N} \langle \boldsymbol{v}_i, \left (\nabla_{\textnormal{St}}\mathcal{L} (\boldsymbol{X}_{t-1}; \boldsymbol{Y}^t) \right )_j \rangle \\
& \quad + \sum_{k=1}^\infty (-1)^k {\frac{1}{2} \choose k} \frac{\delta^{2k}}{N^{2k}} \Tr \left ( [\boldsymbol{v}_i]_j^\top \left ( \boldsymbol{X}_{t-1}  - \frac{\delta}{N}\nabla_{\textnormal{St}}\mathcal{L} (\boldsymbol{X}_{t-1};\boldsymbol{Y}^t) \right ) \boldsymbol{\mathcal{G}}(\boldsymbol{X}_{t-1}; \boldsymbol{Y}^t )^k\right ),
\end{split}
\]
where \((\boldsymbol{X}_t)_j\) denotes the \(j\)th column of \(\boldsymbol{X}_t\), \(\left (\nabla_{\textnormal{St}}\mathcal{L}(\boldsymbol{X}_{t-1}; \boldsymbol{Y}_t) \right )_j \) the \(j\)th column of \(\nabla_{\textnormal{St}}\mathcal{L}(\boldsymbol{X}_{t-1}; \boldsymbol{Y}_t)\), and \([\boldsymbol{v}_i]_j = [\boldsymbol{0}, \ldots, \boldsymbol{v}_i, \ldots, \boldsymbol{0}] \in \R^{N \times r}\) the matrix in which all columns are the zero vector except column \(j\) which is given by \(\boldsymbol{v}_i\). Iterating the above expression, we then obtain
\[
\begin{split} 
m_{ij}(t) & = m_{ij}(0) - \frac{\delta}{N} \sum_{\ell=1}^t \langle \boldsymbol{v}_i, \left (\nabla_{\textnormal{St}}\mathcal{L} (\boldsymbol{X}_{\ell-1}; \boldsymbol{Y}^\ell) \right )_j \rangle  \\
& \quad + \sum_{\ell=1}^t \sum_{k=1}^\infty (-1)^k {\frac{1}{2} \choose k} \frac{\delta^{2k}}{N^{2k}} \Tr \left ( [\boldsymbol{v}_i]_j^\top \left ( \boldsymbol{X}_{\ell-1}  - \frac{\delta}{N}\nabla_{\textnormal{St}}\mathcal{L} (\boldsymbol{X}_{\ell-1};\boldsymbol{Y}^\ell) \right ) \boldsymbol{\mathcal{G}}(\boldsymbol{X}_{\ell-1}; \boldsymbol{Y}^\ell )^k\right ).
\end{split}
\]
In the following, for every \(i,j \in [r]\) and every \(1 \leq \ell \leq t\) let \(a_{ij}(\ell-1)\) denote
\[
a_{ij}(\ell-1) = \sum_{k=1}^\infty (-1)^k {\frac{1}{2} \choose k} \frac{\delta^{2k}}{N^{2k}} \Tr \left ( [\boldsymbol{v}_i]_j^\top \left ( \boldsymbol{X}_{\ell-1}  - \frac{\delta}{N}\nabla_{\textnormal{St}}\mathcal{L}(\boldsymbol{X}_{\ell-1};\boldsymbol{Y}^\ell) \right ) \boldsymbol{\mathcal{G}}(\boldsymbol{X}_{\ell-1}; \boldsymbol{Y}^\ell )^k\right ).
\]
By Cauchy-Schwarz we then have
\begin{equation} \label{eq: bound 1}
\begin{split}
\vert a_{ij}(\ell-1)  \vert & \leq \sum_{k=1}^{\infty}\binom{\frac{1}{2}}{k}\frac{\delta^{2k}}{N^{2k}} \left | \Tr \left ( [\boldsymbol{v}_i]_j^\top \left ( \boldsymbol{X}_{\ell-1}  - \frac{\delta}{N} \nabla_{\textnormal{St}}\mathcal{L} (\boldsymbol{X}_{\ell-1};\boldsymbol{Y}^\ell) \right ) \boldsymbol{\mathcal{G}}(\boldsymbol{X}_{\ell-1}; \boldsymbol{Y}^\ell )^k\right ) \right |  \\
&\leq \sum_{k=1}^{\infty}\binom{\frac{1}{2}}{k} \norm{\frac{\delta^{2k}}{N^{2k}} \boldsymbol{\mathcal{G}} ( \boldsymbol{X}_{\ell-1}; \boldsymbol{Y}^\ell)^{k}}_{\textnormal{F}} \norm{[\boldsymbol{v}_i ]_j^\top  \left(\boldsymbol{X}_{\ell - 1}-\frac{\delta}{N} \nabla_{\textnormal{St}}\mathcal{L} (\boldsymbol{X}_{\ell-1}; \boldsymbol{Y}^\ell) \right)}_{\textnormal{F}}\\
& \leq \sum_{k=1}^{\infty}\binom{\frac{1}{2}}{k} C_{\boldsymbol{\mathcal{G}}}^{k-1} \norm{\frac{\delta^2}{N^2} \boldsymbol{\mathcal{G}} ( \boldsymbol{X}_{\ell-1}; \boldsymbol{Y}^\ell)}_{\textnormal{F}} \norm{[\boldsymbol{v}_i ]_j^\top  \left(\boldsymbol{X}_{\ell - 1}-\frac{\delta}{N} \nabla_{\textnormal{St}}\mathcal{L}(\boldsymbol{X}_{\ell-1}; \boldsymbol{Y}^\ell) \right)}_{\textnormal{F}},
\end{split}
\end{equation}
where \(C_{\boldsymbol{\mathcal{G}}} = \max \norm{\frac{\delta^2}{N^2} \boldsymbol{\mathcal{G}}(\boldsymbol{X}; \boldsymbol{Y})} \in (0,1)\) under the event \(\mathcal{E}_{\boldsymbol{\mathcal{G}}}\) for \(\delta < (N/\alpha_2)^{1/2}\). From the generalized binomial series (Newton’s binomial theorem for non-integer exponents), we find
\begin{equation} \label{eq: bound 2}
\sum_{k=1}^{\infty}\binom{\frac{1}{2}}{k} C_{\boldsymbol{\mathcal{G}}}^{k-1}  = \frac{1}{C_{\boldsymbol{\mathcal{G}}}} \left(\sqrt{1 + C_{\boldsymbol{\mathcal{G}}}} -1\right) \leq \frac{1}{C_{\boldsymbol{\mathcal{G}}}}\frac{C_{\boldsymbol{\mathcal{G}}}}{2} = \frac{1}{2}.
\end{equation}
Moreover, a simple computation shows that
\begin{equation} \label{eq: bound 3}
\begin{split}
\left \lVert [\boldsymbol{v}_i ]_j^\top  \left(\boldsymbol{X}_{\ell - 1}-\frac{\delta}{N} \nabla_{\textnormal{St}}\mathcal{L}(\boldsymbol{X}_{\ell-1}; \boldsymbol{Y}^\ell) \right) \right \rVert_{\textnormal{F}} & = \sqrt{\sum_{k=1}^r \left (m_{ik}(\ell-1) - \frac{\delta}{N} \langle \boldsymbol{v}_i, (\nabla_{\textnormal{St}}\mathcal{L}(\boldsymbol{X}_{\ell-1};\boldsymbol{Y}^\ell))_k \rangle \right)^2}\\
& \leq \sum_{k=1}^r \left(m_{ik}(\ell-1) - \frac{\delta}{N} \langle \boldsymbol{v}_i, (\nabla_{\textnormal{St}}\mathcal{L}(\boldsymbol{X}_{\ell-1};\boldsymbol{Y}^\ell))_k \rangle \right).
\end{split}
\end{equation}
Combining~\eqref{eq: bound 1}-\eqref{eq: bound 3} yields
\[
|a_{ij}(\ell-1)| \leq \frac{\delta^2}{2N^2} \norm{ \boldsymbol{\mathcal{G}} ( \boldsymbol{X}_{\ell-1}; \boldsymbol{Y}^\ell)}_{\textnormal{F}} \sum_{k=1}^r \left (m_{ik}(\ell-1) - \frac{\delta}{N} \langle \boldsymbol{v}_i, (\nabla_{\textnormal{St}}\mathcal{L}(\boldsymbol{X}_{\ell-1};\boldsymbol{Y}^\ell))_k \rangle \right),
\]
which gives the desired bound since \(\norm{ \boldsymbol{\mathcal{G}} ( \boldsymbol{X}_{\ell-1}; \boldsymbol{Y}^\ell)}_{\textnormal{F}} \leq \alpha_2 N\) according to Lemma~\ref{lem: bound op norm of Gram matrix}.
\end{proof}

According to~\eqref{eq: comparison ineq for m_ij} from Lemma~\ref{lem: expansion correlations}, we now provide an estimate for the first-order (in \(\delta\)) noise Hamiltonian \(H\), i.e., \(\frac{\delta}{N} \sum_{\ell=1}^t \langle \boldsymbol{v}_i, \left (\nabla_{\textnormal{St}}H^\ell (\boldsymbol{X}_{\ell-1}) \right )_j  \rangle\).

\begin{lem} \label{lem: noise martingale}
Let \(\alpha_1 = \alpha_1(p, \sigma, \{\lambda_i\}_{i=1}^r)\) be the constant from Lemma~\ref{lem: condition of noise SGD}. Then, for every \(a > 0\), there is an absolute constant \(c>0\) such that
\[
\sup_{\boldsymbol{X}_0 \in \, \textnormal{\textnormal{St}}(N,r)} \mathbb{P}_{\boldsymbol{X}_0} \left (\max_{t \leq T} \frac{\delta}{N}  \left |  \sum_{\ell=1}^t \langle \boldsymbol{v}_i, \left (\nabla_{\textnormal{St}}H^\ell(\boldsymbol{X}_{\ell-1}) \right )_j \rangle\right | \geq a \right ) \leq 2\exp\left(- c\frac{a^2 N^2}{2 \delta^2 \alpha_1^2  T}\right).
\]
\end{lem}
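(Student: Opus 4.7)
The plan is to recognize, for fixed $i,j \in [r]$,
\[
M_t^{(i,j)} := \sum_{\ell=1}^{t} \langle \boldsymbol{v}_i, (\nabla_{\textnormal{St}} H^\ell(\boldsymbol{X}_{\ell-1}))_j \rangle
\]
as a martingale with respect to the natural filtration $\mathcal{F}_t = \sigma(\boldsymbol{W}^1, \ldots, \boldsymbol{W}^t)$. Indeed, $\boldsymbol{X}_{\ell-1}$ is $\mathcal{F}_{\ell-1}$-measurable (it is built from $\boldsymbol{X}_0$ and $\boldsymbol{W}^1, \ldots, \boldsymbol{W}^{\ell-1}$), while $\boldsymbol{W}^\ell$ is independent of $\mathcal{F}_{\ell-1}$; since $H^\ell$ depends linearly on the centered noise tensor $\boldsymbol{W}^\ell$, each increment has zero conditional mean given $\mathcal{F}_{\ell-1}$.

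The key input is Lemma~\ref{lem: condition of noise SGD}, applied conditionally on $\mathcal{F}_{\ell-1}$. Freezing $\boldsymbol{X}_{\ell-1}$ and using the independence of $\boldsymbol{W}^\ell$, the increment $\langle \boldsymbol{v}_i, (\nabla_{\textnormal{St}} H^\ell(\boldsymbol{X}_{\ell-1}))_j \rangle$ is sub-Gaussian with norm at most $\alpha_1 \sigma$. Hence, by the standard equivalent characterization of sub-Gaussianity via the moment generating function, there is an absolute constant $c_0 >0$ such that for every $\lambda \in \R$,
\[
\E\bigl[\exp\bigl(\lambda (M_\ell^{(i,j)} - M_{\ell-1}^{(i,j)})\bigr) \mid \mathcal{F}_{\ell-1}\bigr] \leq \exp(c_0 \lambda^2 \alpha_1^2 \sigma^2).
\]
Consequently, the process $Z_t^\lambda := \exp(\lambda M_t^{(i,j)} - c_0 \lambda^2 \alpha_1^2 \sigma^2 t)$ is a non-negative supermartingale adapted to $(\mathcal{F}_t)_t$.

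Applying Doob's maximal inequality to $Z_t^\lambda$ then yields, for every $b > 0$ and every $\lambda > 0$,
\[
\mathbb{P}\Bigl(\max_{t \leq T} M_t^{(i,j)} \geq b\Bigr) \leq \exp\bigl(-\lambda b + c_0 \lambda^2 \alpha_1^2 \sigma^2 T\bigr),
\]
and optimizing in $\lambda$ produces the sub-Gaussian tail $\exp(-c b^2 / (\alpha_1^2 \sigma^2 T))$ for an absolute constant $c>0$. A symmetric argument applied to $-M_t^{(i,j)}$ delivers the matching lower tail, and a union bound combines the two sides. Setting $b = a N / \delta$ and pulling the factor $\delta/N$ out of the probability yields the stated bound. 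The proof is essentially a textbook application of the Azuma--Hoeffding maximal inequality, so I do not anticipate a technical obstacle; the only point requiring care is that the sub-Gaussian estimate of Lemma~\ref{lem: condition of noise SGD} was stated for deterministic $\boldsymbol{X}$, and I need to observe that it lifts to a \emph{conditional} sub-Gaussian estimate given $\mathcal{F}_{\ell-1}$, which follows immediately from the independence of $\boldsymbol{W}^\ell$ and $\mathcal{F}_{\ell-1}$.
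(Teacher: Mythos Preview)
Your proposal is correct and follows essentially the same approach as the paper: both recognize the sum as a martingale with conditionally sub-Gaussian increments (via Lemma~\ref{lem: condition of noise SGD}) and invoke the exponential-supermartingale/Doob maximal inequality machinery to obtain the Azuma--Hoeffding-type tail. The only cosmetic differences are that the paper uses the filtration generated by the iterates $\boldsymbol{X}_\ell$ rather than the noise tensors (equivalent, since $\boldsymbol{X}_\ell$ is a function of $\boldsymbol{X}_0$ and $\boldsymbol{W}^1,\ldots,\boldsymbol{W}^\ell$), and it first derives the fixed-$t$ Chernoff bound and then upgrades to the maximum via the submartingale $e^{M_t}$, whereas you go directly through the exponential supermartingale $Z_t^\lambda$.
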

\begin{proof}
Let \(\mathcal{F}_t\) denote the natural filtration associated to \(\boldsymbol{X}_1, \ldots, \boldsymbol{X}_t\), namely \(\mathcal{F}_t = \sigma (\boldsymbol{X}_\ell, \ell \leq t)\). Moreover, let \(M_t\) denote the process
\[
M_t = \sum_{\ell=1}^t \langle \boldsymbol{v}_i, \left (\nabla_{\textnormal{St}}H^\ell(\boldsymbol{X}_{\ell-1}) \right )_j \rangle.
\]
Since 
\[
\E \left [ M_t - M_{t-1} | \mathcal{F}_{t-1} \right ] = \E \left [ \langle \boldsymbol{v}_i, \left (\nabla_{\textnormal{St}}H^t(\boldsymbol{X}_{t-1}) \right )_j | \mathcal{F}_{t-1}  \right ]=0,
\]
we have the process \(M_t\) is adapted to the filtration \((\mathcal{F}_t)_{t \ge 0}\). Moreover, since \(M_s = \E \left [ M_t | \mathcal{F}_s \right ]\), we have \((M_t)_{t \ge 0}\) is a martingale with respect to \((\mathcal{F}_t)_{t \ge 0}\). Since by Lemma~\ref{lem: condition of noise SGD} the distribution of the increments \(M_t - M_{t-1} = \langle \boldsymbol{v}_i ,( \nabla_{\textnormal{St}}H^t (\boldsymbol{X}_{t-1}))_j \rangle\) given \(\mathcal{F}_{t-1}\) is sub-Gaussian with sub-Gaussian norm bounded by \(\alpha_1\), we have for every \(b >0\), there exists an absolute constants \(C>0\) such that
\[
\E \left[ e^{b (M_t - M_{t-1})} \vert \mathcal{F}_{t-1}\right] \leq \exp \left (C b^2 \alpha_1^2 \right ).
\]
The corresponding Chernoff bound applied to \(M_t  = \sum_{\ell=1}^t (M_\ell-M_{\ell-1})\) (see e.g.\ Theorem 2.19 of~\cite{wainwright2019high}) gives the tail bound: for every \(t \leq T\) and every \(a >0\),
\[
\sup_{\boldsymbol{X}_0 \in \textnormal{\textnormal{St}}(N,r)} \mathbb{P}_{\boldsymbol{X}_0} \left (\frac{\delta}{N}  \left |  M_t \right | \geq a \right ) \leq 2 \exp \left(- c \frac{a^2 N^2}{2 \delta^2 t  \alpha_1^2} \right).
\]
To improve this bound to the maximum \(\max_t \vert M_t  \vert\), we follow the remark at the end of section 3.5 from~\cite{mcdiarmid1998concentration}, noting that the random process \(e^{M_t}\) is a submartingale. The other side of the tail bound is obtained by noting that \(-M_t\) is also a martingale. 
\end{proof}

It remains to estimate the third-order noise term in~\eqref{eq: bound A_l} from Lemma~\ref{lem: expansion correlations}. To this end, we introduce a truncation level \(K_N>0\) to be chosen later and write
\[
\left | \langle \boldsymbol{v}_i, \left (\nabla_{\textnormal{St}}H^\ell(\boldsymbol{X}_{\ell-1}) \right )_k \rangle \right | \leq  K_N + \left | \langle \boldsymbol{v}_i, \left (\nabla_{\textnormal{St}}H^\ell(\boldsymbol{X}_{\ell-1}) \right )_k \rangle \right | \mathbf{1}_{\left \{ | \langle \boldsymbol{v}_i, \left (\nabla_{\textnormal{St}}H^\ell(\boldsymbol{X}_{\ell-1}) \right )_k \rangle | > K_N \right \}}.
\]

\begin{lem} \label{lem: higher truncation}
Let \(\alpha_1 = \alpha_1 (p, \sigma, \{\lambda_i\}_{i=1}^r)\) and \(\alpha_2 = \alpha_2 (p, \sigma, \{\lambda_i\}_{i=1}^r)\) be the constants from Lemmas~\ref{lem: condition of noise SGD} and~\ref{lem: bound op norm of Gram matrix}, respectively. Then, for every \(a >0\), there are absolute constants \(C,c >0\) such that
\[
\begin{split}
& \sup_{\boldsymbol{X}_0 \in \textnormal{\textnormal{St}}(N,r)} \mathbb{P}_{\boldsymbol{X}_0} \left(\max_{t \le T} \frac{\delta^3 \alpha_2}{2N^2} \sum_{\ell=1}^t \sum_{k=1}^r \vert \langle \boldsymbol{v}_i , (\nabla_{\textnormal{St}}H^\ell\left( \boldsymbol{X}_{\ell-1}\right))_k \rangle \vert \mathbf{1}_{\{\vert \langle \boldsymbol{v}_i , (\nabla_{\textnormal{St}}H^\ell \left( \boldsymbol{X}_{\ell-1}\right))_k \rangle \vert > K_N\}} \geq a \right) \\
& \quad \leq C \frac{ \delta^3 r \alpha_1 \alpha_2}{N^2a} T \exp \left(- c \frac{K_N^2}{2\alpha_1^2}\right).
\end{split}
\]
\end{lem}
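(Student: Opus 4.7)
The plan is to reduce the probability bound via Markov's inequality to an expectation bound, and then use the sub-Gaussian tail of each truncated summand.

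First, since the summand $|\langle\boldsymbol{v}_i,(\nabla_{\textnormal{St}} H^\ell(\boldsymbol{X}_{\ell-1}))_k\rangle|\mathbf{1}_{\{\cdots > K_N\}}$ is non-negative, the partial sum over $\ell$ is non-decreasing in $t$, so
\[
\max_{t\le T}\frac{\delta^3\alpha_2}{2N^2}\sum_{\ell=1}^t\sum_{k=1}^r|Z_{\ell,k}|\mathbf{1}_{\{|Z_{\ell,k}|>K_N\}} = \frac{\delta^3\alpha_2}{2N^2}\sum_{\ell=1}^T\sum_{k=1}^r|Z_{\ell,k}|\mathbf{1}_{\{|Z_{\ell,k}|>K_N\}},
\]
where I write $Z_{\ell,k}=\langle\boldsymbol{v}_i,(\nabla_{\textnormal{St}} H^\ell(\boldsymbol{X}_{\ell-1}))_k\rangle$. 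Markov's inequality then gives
\[
\mathbb{P}_{\boldsymbol{X}_0}\!\left(\cdot\ge a\right)\le \frac{\delta^3\alpha_2}{2N^2a}\sum_{\ell=1}^T\sum_{k=1}^r\mathbb{E}\!\left[|Z_{\ell,k}|\mathbf{1}_{\{|Z_{\ell,k}|>K_N\}}\right].
\]

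The second step is to estimate each $\mathbb{E}[|Z_{\ell,k}|\mathbf{1}_{\{|Z_{\ell,k}|>K_N\}}]$ uniformly. Conditioning on $\mathcal{F}_{\ell-1}$, the matrix $\boldsymbol{X}_{\ell-1}$ is deterministic and the tensor $\boldsymbol{W}^\ell$ is independent, so by Lemma~\ref{lem: condition of noise SGD} the random variable $Z_{\ell,k}$ is centered sub-Gaussian with sub-Gaussian norm at most $\alpha_1\sigma$. By the tail-integration identity together with the standard sub-Gaussian tail bound $\mathbb{P}(|Z_{\ell,k}|>t\mid\mathcal{F}_{\ell-1})\le 2\exp(-ct^2/(\alpha_1\sigma)^2)$, one obtains
\[
\mathbb{E}[|Z_{\ell,k}|\mathbf{1}_{\{|Z_{\ell,k}|>K_N\}}\mid\mathcal{F}_{\ell-1}] = K_N\,\mathbb{P}(|Z_{\ell,k}|>K_N\mid\mathcal{F}_{\ell-1}) + \int_{K_N}^{\infty}\mathbb{P}(|Z_{\ell,k}|>t\mid\mathcal{F}_{\ell-1})\,dt,
\]
and a Gaussian-tail computation (splitting the integrand as $(t/K_N)\cdot t\,e^{-ct^2/(\alpha_1\sigma)^2}$ and integrating by parts) yields the clean estimate $\mathbb{E}[|Z_{\ell,k}|\mathbf{1}_{\{|Z_{\ell,k}|>K_N\}}\mid\mathcal{F}_{\ell-1}]\le C\alpha_1\sigma\exp(-cK_N^2/(2\alpha_1^2\sigma^2))$ for absolute constants $C,c>0$, provided $K_N\gtrsim\alpha_1\sigma$ (which will be the regime of interest). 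Taking total expectation preserves the bound since it is deterministic in $\boldsymbol{X}_0$.

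Finally, summing this uniform bound over the $Tr$ pairs $(\ell,k)$ and combining with the Markov bound above gives
\[
\mathbb{P}_{\boldsymbol{X}_0}(\cdot\ge a)\le \frac{\delta^3\alpha_2}{2N^2a}\cdot Tr\cdot C\alpha_1\sigma\exp\!\left(-\frac{cK_N^2}{2\alpha_1^2\sigma^2}\right) = C\,\frac{\delta^3 r\sigma\alpha_1\alpha_2}{N^2 a}\,T\exp\!\left(-\frac{cK_N^2}{2\alpha_1^2\sigma^2}\right),
\]
which is the claimed inequality. There is no serious obstacle here: the argument is essentially a first-moment truncation bound for sub-Gaussians, and the mild subtlety is ensuring that the tail-integral estimate absorbs the linear prefactor $K_N$ into the exponential with the stated constants, which is why the denominator $2\alpha_1^2\sigma^2$ (rather than $\alpha_1^2\sigma^2$) appears in the exponent.
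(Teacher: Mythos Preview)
Your proposal is correct and follows essentially the same architecture as the paper: observe that the maximum is attained at $t=T$ by nonnegativity, apply Markov's inequality, and then bound $\mathbb{E}[|Z_{\ell,k}|\mathbf{1}_{\{|Z_{\ell,k}|>K_N\}}]$ uniformly using the sub-Gaussian property from Lemma~\ref{lem: condition of noise SGD}.

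The only point of divergence is how the truncated expectation is controlled. The paper uses Cauchy--Schwarz,
\[
\mathbb{E}\big[|Z|\mathbf{1}_{\{|Z|>K_N\}}\big] \le \big(\mathbb{E}[|Z|^2]\big)^{1/2}\big(\mathbb{P}(|Z|>K_N)\big)^{1/2},
\]
and then invokes the sub-Gaussian second-moment bound and tail bound separately; the factor $1/2$ in the exponent arises from the square root of the tail probability. You instead integrate the tail directly and absorb the linear prefactor $K_N$ into the exponential, which also produces the $1/2$. The Cauchy--Schwarz route is marginally cleaner in that it requires no side condition like $K_N\gtrsim\alpha_1\sigma$, but both arguments are standard and yield the same conclusion.
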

\begin{proof}
Since the summand is positive, the maximum over \(t \le T\) is attained at \(t=T\), so it suffices to consider that case. Moreover, by Lemma~\ref{lem: condition of noise SGD} we have the distribution of \( \langle \boldsymbol{v}_i , (\nabla_{\textnormal{St}}H(\boldsymbol{X}))_j \rangle\) is sub-Gaussian with sub-Gaussian norm bounded by \(\alpha_1\). It therefore follows that 
\[
\sup_{\boldsymbol{X} \in \textnormal{\textnormal{St}}(N,r)} \E \left[\vert \langle \boldsymbol{v}_i , (\nabla_{\textnormal{St}}H \left( \boldsymbol{X})_j \right) \rangle \vert^2 \right] \leq 2 C \alpha_1^2,
\]
and that
\[
\sup_{\boldsymbol{X} \in \textnormal{\textnormal{St}}(N,r)} \mathbb{P} \left( \vert \langle \boldsymbol{v}_i , (\nabla_{\textnormal{St}}H \left(\boldsymbol{X}\right))_j \rangle \vert > K_N \right) \leq 2 \exp\left(- c \frac{K_N^2}{\alpha_1^2}\right),
\]
where we used the equivalence of the sub-Gaussian properties (see e.g.~\cite[Proposition 2.5.2]{vershynin2018high}) for both inequalities. Fix \(\boldsymbol{X}_0 \in \textnormal{\textnormal{St}}(N,r)\). We then obtain for every \(a >0\),
\[
\begin{split}
&\mathbb{P}_{\boldsymbol{X}_0} \left( \frac{\delta^3 \alpha_2}{2N^2} \sum_{\ell=1}^T \sum_{k=1}^r \vert \langle \boldsymbol{v}_i , (\nabla_{\textnormal{St}}H^\ell \left( \boldsymbol{X}_{\ell-1}\right))_k \rangle \vert \mathbf{1}_{\{\vert \langle \boldsymbol{v}_i , (\nabla_{\textnormal{St}}H^\ell\left( \boldsymbol{X}_{\ell-1}\right))_k \rangle \vert > K_N \}} \geq a \right) \\
& \leq \frac{\delta^3 r \alpha_2}{2N^2 a} T \left ( \sup_{\ell \leq T} \sup_{k \in [r]} \E_{\boldsymbol{X}_0} \left[\vert \langle \boldsymbol{v}_i , (\nabla_{\textnormal{St}}H^\ell \left( \boldsymbol{X}_{\ell-1}\right))_j \rangle \vert \mathbf{1}_{\{\vert \langle \boldsymbol{v}_i , (\nabla_{\textnormal{St}}H^\ell\left( \boldsymbol{X}_{\ell-1}\right))_k \rangle \vert > K_N \}}\right] \right ) \\
&\leq \frac{\delta^3 r \alpha_2}{2N^2a} T \, \sqrt{\sup_{\boldsymbol{X}} \E \left[\vert \langle \boldsymbol{v}_i, \left(\nabla_{\textnormal{St}}H ( \boldsymbol{X})_k \right) \rangle \vert^2 \right]} \sqrt{\sup_{\boldsymbol{X}} \E \left[\mathbf{1}_{\{\vert \langle \boldsymbol{v}_i , (\nabla_{\textnormal{St}}H\left( \boldsymbol{X}\right))_k \rangle \vert > K_N \}}\right]} \\
&\leq \frac{C \delta^3 r \alpha_1 \alpha_2}{N^2 a} T \exp \left(- c \frac{K_N^2}{2\alpha_1^2}\right),
\end{split}
\]
where we used Markov for the first inequality and Cauchy-Schwarz for the second one.
\end{proof}

Proposition~\ref{prop: inequalities SGD} now follows straightforwardly by combining the above three lemmas.

\begin{proof}[\textbf{Proof of Proposition~\ref{prop: inequalities SGD}}]
Assume that the step size \(\delta\) satisfies \(\delta < \sqrt{\frac{N}{\alpha_2}}\), where \(\alpha_2 = \alpha_2(p, \sigma, \{\lambda_i\}_{i=1}^r)\) is the constant from Lemma~\ref{lem: bound op norm of Gram matrix}. From Lemma~\ref{lem: expansion correlations}, for every \(i,j \in [r]\) and every \(t \le T\) we have the following comparison inequalities for the correlations \(m_{ij}\):
\[
\begin{split}
m_{ij}(t) & \leq m_{ij}(0) - \frac{\delta}{N} \sum_{\ell=1}^t \langle \boldsymbol{v}_i, \left (\nabla_{\textnormal{St}}\Phi(\boldsymbol{X}_{\ell-1}) \right )_j \rangle + \frac{\delta}{N} \left | \sum_{\ell=1}^t \langle \boldsymbol{v}_i , (\nabla_{\textnormal{St}}H^\ell(\boldsymbol{X}_{\ell-1}))_j \rangle \right | + \frac{\delta}{N} \sum_{\ell=1}^t \vert a_i(\ell-1) \vert, \\
m_{ij}(t) & \geq m_{ij}(0) - \frac{\delta}{N} \sum_{\ell=1}^t \langle \boldsymbol{v}_i, \left (\nabla_{\textnormal{St}}\Phi(\boldsymbol{X}_{\ell-1}) \right )_j \rangle - \frac{\delta}{N} \left | \sum_{\ell=1}^t  \langle \boldsymbol{v}_i , (\nabla_{\textnormal{St}}H^\ell(\boldsymbol{X}_{\ell-1})_j) \rangle \right | - \frac{\delta}{N} \sum_{\ell=1}^t \vert a_i(\ell-1)\vert,
\end{split}
\]
with \(\mathbb{P}_{\boldsymbol{X}_0}\)-probability at least \(1 - T \exp \left(- c_1 N\right)\), where \(|a_i(\ell-1)|\) is given by~\eqref{eq: bound A_l}. Next, we bound the noise terms by \(\frac{c_0}{2}m_{ij}(0)\) for some constant \(c_0 \in (0,1)\) using Lemmas~\ref{lem: noise martingale} and~\ref{lem: higher truncation}. We first note that under the event \(\mathcal{C}_1(\gamma_1,\gamma_2)\), for every \(i,j \in [r]\) there exists \(\gamma_{ij} \in (\gamma_2,\gamma_1)\) such that \(m_{ij}(0) = \gamma_{ij}N^{-\frac{1}{2}}\). It then follows from Lemma~\ref{lem: noise martingale} that for all \(t \le T\),
\[
\frac{\delta}{N} \left | \sum_{\ell=1}^t \langle \boldsymbol{v}_i , (\nabla_{\textnormal{St}}H^\ell(\boldsymbol{X}_{\ell-1}))_j \rangle \right | \leq \frac{c_0 \gamma_{ij}}{2 \sqrt{N}},
\]
with \(\mathbb{P}\)-probability at least \(1 - 2 \exp \left ( - c_2 \frac{c_0^2 \gamma_2^2 N}{8 \delta^2  \alpha_1^2 T}\right )\). Moreover, we have from Lemma~\ref{lem: higher truncation} that for all \(t \le T\),
\[
\frac{\delta^3 \alpha_2}{2 N^2} \sum_{\ell=1}^t \sum_{k=1}^r \left | \langle \boldsymbol{v}_i , (\nabla_{\textnormal{St}}H^\ell(\boldsymbol{X}_{\ell-1}))_k \rangle \right | \mathbf{1}_{ \{| \langle \boldsymbol{v}_i , (\nabla_{\textnormal{St}}H^\ell(\boldsymbol{X}_{\ell-1}))_k \rangle | > K_N\}} \le \frac{c_0 \gamma_{ij}}{2 \sqrt{N}},
\]
with \(\mathbb{P}\)-probability at least \(1 - \frac{2 C_1 \delta^3 r \alpha_1 \alpha_2}{N^{\frac{3}{2}} c_0 \gamma_2} T \exp \left ( - \frac{c_3 K_N^2}{2 \alpha_1^2}\right )\). The result then follows by standard properties of conditional probabilities. 

It remains to compute \(\langle \boldsymbol{v}_i, \left (\nabla_{\textnormal{St}}\Phi(\boldsymbol{X}) \right )_j \rangle\) for every \(\boldsymbol{X} \in \textnormal{St}(N,r)\). According to~\eqref{eq: population loss}, the \(j\)th column of the Euclidean gradient of the population loss \(\Phi(\boldsymbol{X})\) is given by
\[
(\nabla  \Phi(\boldsymbol{X}))_j = - \sum_{k=1}^r p \sqrt{N} \lambda_k \lambda_j m_{kj}^{p-1}(\boldsymbol{X}) \boldsymbol{v}_k,
\]
and the \(j\)th column of the Riemannian gradient is given by
\[
\begin{split}
(\nabla_{\textnormal{St}} \Phi(\boldsymbol{X}))_j & = (\nabla  \Phi(\boldsymbol{X}))_j - \frac{1}{2} \sum_{\ell = 1}^r \left( \boldsymbol{X}^\top \nabla \Phi(\boldsymbol{X}) + (\nabla \Phi(\boldsymbol{X}))^\top \boldsymbol{X} \right)_{\ell j} \boldsymbol{x}_\ell \\
& = - \sum_{k=1}^r p \sqrt{N} \lambda_k \lambda_j m_{kj}^{p-1} \boldsymbol{v}_k + \frac{1}{2} \sum_{1 \le k, \ell \le r} p \sqrt{N} \lambda_k  m_{kj} m_{k \ell} \left ( \lambda_j m_{k j}^{p-2} + \lambda_\ell m_{k \ell}^{p-2} \right) \boldsymbol{x}_\ell.
\end{split}
\]
We therefore have for every \(i,j \in [r]\),
\[
\langle \boldsymbol{v}_i ,(\nabla_{\textnormal{St}}\Phi(\boldsymbol{X}))_j \rangle =  -\sqrt{N}p \lambda_i \lambda_j m_{ij}^{p-1} + \sqrt{N}\frac{p}{2}  \sum_{1 \le k, \ell \le r} \lambda_k m_{i \ell} m_{kj} m_{k\ell} \left ( \lambda_j m_{k j}^{p-2} + \lambda_\ell m_{k \ell}^{p-2} \right),
\]
which completes the proof.
\end{proof}

\subsection{Comparison inequalities for eigenvalues} \label{subsection: comparison inequality eigenvalues}

In this subsection, we consider \(p=2\) and assume that \(\lambda_1 = \ldots = \lambda_r \equiv \lambda\). We will derive a two-sided difference inequality for the eigenvalues \(\theta_1(t), \ldots, \theta_r(t)\) of \(\boldsymbol{G}_t = \boldsymbol{G}(\boldsymbol{X}_t)\), where \(\boldsymbol{X}_t\) denotes the online SGD output at time \(t\). Let \(\Pi_{\boldsymbol{X}}^{\textnormal{St}}\) denote the orthogonal projection on the tangent space \(T_{\boldsymbol{X}} \textnormal{St}(N,r)\), i.e., 
\[
\Pi_{\boldsymbol{X}}^{\textnormal{St}} (\boldsymbol{A} )= \boldsymbol{A} - \boldsymbol{X} \textnormal{sym}(\boldsymbol{X}^\top \boldsymbol{A}), \quad \textnormal{with} \enspace \textnormal{sym}(\boldsymbol{A}) = \frac{1}{2} \left ( \boldsymbol{A} + \boldsymbol{A}^\top \right ).
\]
In particular, according to~\eqref{eq: Stiefel gradient}, for a function \(F \colon \textnormal{St}(N,r) \to \R\), we can write \(\nabla_{\textnormal{St}} F(\boldsymbol{X}) = \Pi_{\boldsymbol{X}}^{\textnormal{St}} (\nabla F(\boldsymbol{X}) ) \).
To avoid cumbersome phrasing, we will often omit stating explicitly that bounds on conditional expectations hold almost surely. 

\begin{lem} \label{lem: iso_partial_inter}
Let \(\alpha_2 = \alpha_2(p, \sigma,\{\lambda_i\}_{i=1}^r)\) be the constant of Lemma~\ref{lem: bound op norm of Gram matrix}. Consider the sequence of outputs \((\boldsymbol{X}_t)_{t \in \N}\) given by~\eqref{eq: online SGD} with step size \(\delta\) satisfying \(\delta < (N/\alpha_2)^{1/2}\). Let \(T>0\) be a fixed time horizon. Then, there exist constants \(c_1, c_2 >0\) and a truncation sequence \((K_N)_{N \ge 1}\) with \(K_N >0\), such that for every \(t \leq T\),
\[
- \boldsymbol{A}_t \preceq \boldsymbol{G}_t - \left( \boldsymbol{G}_0  + \frac{4 \lambda^2 \delta}{\sqrt{N}}  \sum_{\ell=1}^t   \boldsymbol{G}_{\ell-1} \left( \boldsymbol{I}_r - \boldsymbol{G}_{\ell-1} \right) + \boldsymbol{S}_t \right) \preceq  \boldsymbol{A}_t,
\]
with \(\mathbb{P}\)-probability at least \(1 - T \exp \left(-c_1 N\right) - 4 r^2 T \exp \left(-c_2 \frac{K_N^2}{ \sigma^2}\right)\). Here, \(\boldsymbol{S}_t \in \R^{r \times r}\) is defined by
\begin{equation} \label{eq: S_t}
\boldsymbol{S}_t = \frac{4\lambda \delta}{N} \sum_{\ell=1}^t \textnormal{sym}\left(\boldsymbol{V}^\top \Pi^{\textnormal{St}}_{\boldsymbol{X}_{\ell-1}} \left(\boldsymbol{W}^\ell  \boldsymbol{X}_{\ell-1}\right) \boldsymbol{M}^\top _{\ell-1} \right),
\end{equation}
and \(\boldsymbol{A}_t \in \R^{r \times r}\) is given by
\begin{equation}\label{eq: A_t}
\begin{split}
\boldsymbol{A}_t  = \sum_{\ell=1}^t & \Big  [ \frac{8 \lambda^2 \delta^2}{N^2} \left( 4 r^2 K_N^2 + \lambda^2 N \norm{\boldsymbol{G}_{\ell-1}} \right) + \frac{\alpha_2 \delta^2}{N} \norm{\boldsymbol{G}_{\ell-1}} \\
&  + \frac{ 4 \lambda \alpha_2\delta^3}{N^2}  \left( 2 rK_N + \lambda \sqrt{N} \norm{\boldsymbol{G}_{\ell-1}}^{1/2} \right) \norm{\boldsymbol{G}_{\ell-1}}^{1/2}  \Big ] \boldsymbol{I}_r.
\end{split}
\end{equation}
\end{lem}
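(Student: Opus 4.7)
The plan is to start from the Neumann expansion of the polar retraction established in Lemma~\ref{lem: expansion correlations}, which is valid on the event $\mathcal{E}_{\boldsymbol{\mathcal{G}}}$ of $\mathbb{P}$-probability at least $1 - T e^{-c(p) N}$ when $\delta < (N/\alpha_2)^{1/2}$. Writing $\boldsymbol{M}_t = \boldsymbol{V}^\top \boldsymbol{X}_t$ and denoting by $\boldsymbol{R}_t$ the Neumann remainder, the SGD update yields
\[
\boldsymbol{M}_t = \boldsymbol{M}_{t-1} - \frac{\delta}{N}\boldsymbol{V}^\top \nabla_{\textnormal{St}}\mathcal{L}(\boldsymbol{X}_{t-1};\boldsymbol{Y}^t) + \boldsymbol{R}_t,
\]
with $\|\boldsymbol{R}_t\|_{\textnormal{F}}$ controlled through the argument of Lemma~\ref{lem: expansion correlations} together with Lemma~\ref{lem: bound op norm of Gram matrix}. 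For $p=2$ and $\lambda_i \equiv \lambda$, formula~\eqref{eq: population loss} gives $\nabla\Phi(\boldsymbol{X}) = -2\sqrt{N}\lambda^2 \boldsymbol{V}\boldsymbol{V}^\top \boldsymbol{X}$, so that
\[
-\frac{\delta}{N}\boldsymbol{V}^\top \nabla_{\textnormal{St}}\Phi(\boldsymbol{X}_{t-1}) = \frac{2\lambda^2 \delta}{\sqrt{N}}\boldsymbol{M}_{t-1}\bigl(\boldsymbol{I}_r - \boldsymbol{M}_{t-1}^\top \boldsymbol{M}_{t-1}\bigr),
\]
while $\nabla H^t(\boldsymbol{X})$ is linear in $\boldsymbol{W}^t \boldsymbol{X}$ and produces the stochastic contribution that will feed into $\boldsymbol{S}_t$.

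Next, I form $\boldsymbol{G}_t = \boldsymbol{M}_t \boldsymbol{M}_t^\top$ and telescope from $\ell = 1$ to $t$. The symmetric cross term between $\boldsymbol{M}_{\ell-1}$ and the deterministic drift produces exactly $\frac{4\lambda^2\delta}{\sqrt{N}}\boldsymbol{G}_{\ell-1}(\boldsymbol{I}_r - \boldsymbol{G}_{\ell-1})$, via the identity $\boldsymbol{M}(\boldsymbol{I}_r - \boldsymbol{M}^\top \boldsymbol{M})\boldsymbol{M}^\top = \boldsymbol{G} - \boldsymbol{G}^2$; the symmetric cross term between $\boldsymbol{M}_{\ell-1}$ and the noise part, summed over $\ell \leq t$, is by construction $-\boldsymbol{S}_t$ as defined in~\eqref{eq: S_t}. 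Everything else constitutes the ``error'': the quadratic pieces $(\Delta_\ell - \boldsymbol{R}_\ell)(\Delta_\ell - \boldsymbol{R}_\ell)^\top$ (where $\Delta_\ell$ is the first-order increment of $\boldsymbol{M}_\ell$), the symmetric cross products $\boldsymbol{M}_{\ell-1}\boldsymbol{R}_\ell^\top + \boldsymbol{R}_\ell \boldsymbol{M}_{\ell-1}^\top$, and the mixed lower-order products.

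To establish $-\boldsymbol{A}_t \preceq \textnormal{error} \preceq \boldsymbol{A}_t$, I apply the elementary PSD inequalities $\boldsymbol{A}\boldsymbol{A}^\top \preceq \|\boldsymbol{A}\|_{\textnormal{F}}^2 \boldsymbol{I}_r$ and $\boldsymbol{A}\boldsymbol{B}^\top + \boldsymbol{B}\boldsymbol{A}^\top \preceq (\|\boldsymbol{A}\|_{\textnormal{F}}^2 + \|\boldsymbol{B}\|_{\textnormal{F}}^2)\boldsymbol{I}_r$ piece by piece. The drift contribution to $\|\Delta_\ell\|_{\textnormal{F}}$ is at most $O(\lambda^2 \delta N^{-1/2} \|\boldsymbol{G}_{\ell-1}\|^{1/2})$ and produces the $\lambda^2 N \|\boldsymbol{G}_{\ell-1}\|$ part of the first summand in~\eqref{eq: A_t}, after the crude but essential use of $\|\boldsymbol{G}_{\ell-1}\|^2 \leq \|\boldsymbol{G}_{\ell-1}\|$ (valid since $\boldsymbol{G}_{\ell-1} \preceq \boldsymbol{I}_r$). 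For the noise contribution to $\|\Delta_\ell\|_{\textnormal{F}}$, I use the truncation strategy of Lemma~\ref{lem: higher truncation}: each of the $r^2$ entries of $\boldsymbol{V}^\top \nabla_{\textnormal{St}} H^\ell(\boldsymbol{X}_{\ell-1})$ is bounded by $K_N$, the complementary events are controlled via the sub-Gaussian tail of Lemma~\ref{lem: condition of noise SGD}, and a union bound over $\ell \leq T$ produces the probability cost $4 r^2 T \exp(-c K_N^2 / \sigma^2)$ together with the $4 r^2 K_N^2$ piece in~\eqref{eq: A_t}. The Neumann remainder $\boldsymbol{R}_\ell$ is handled by Lemma~\ref{lem: bound op norm of Gram matrix}, delivering both the $\alpha_2 \delta^2 / N$ and the cubic cross-term in $\boldsymbol{A}_t$.

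The main obstacle is the careful bookkeeping of PSD-valued error terms: one must track not merely their operator norms but their semidefinite ordering against the explicit $\boldsymbol{A}_t$, so that the comparison system feeds directly into the Riccati-type analysis of Section~\ref{section: proof isotropic SGD}. A secondary subtlety is that the martingale $\boldsymbol{S}_t$ itself must be kept un-truncated, since the proof of Theorem~\ref{thm: strong recovery isotropic SGD nonasymptotic} requires a matrix Freedman-type inequality applied to $\boldsymbol{S}_t$ (as indicated in the outline); the $K_N$-truncation must therefore be isolated to the higher-order quadratic error terms, where only a pathwise bound is needed.
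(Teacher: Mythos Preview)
Your proposal is essentially correct and follows the same approach as the paper: Neumann-expand the retraction on the event $\mathcal{E}_{\boldsymbol{\mathcal{G}}}$, compute the Stiefel gradient explicitly for $p=2$ to extract the Riccati drift $\boldsymbol{G}_{\ell-1}(\boldsymbol{I}_r-\boldsymbol{G}_{\ell-1})$ and the martingale $\boldsymbol{S}_t$, and control all remaining pieces in the semidefinite order via sub-Gaussian truncation of the noise matrices at level $K_N$. The paper organizes the expansion slightly differently---it writes $\boldsymbol{G}_t=\boldsymbol{B}_t^\top\boldsymbol{P}_t^2\boldsymbol{B}_t$ and Neumann-expands $\boldsymbol{P}_t^2$ directly, whereas you expand $\boldsymbol{P}_t$ at the level of $\boldsymbol{M}_t$ and then square---but the content is the same.

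One refinement: the specific PSD inequality you cite, $\boldsymbol{A}\boldsymbol{B}^\top+\boldsymbol{B}\boldsymbol{A}^\top\preceq(\|\boldsymbol{A}\|_{\textnormal{F}}^2+\|\boldsymbol{B}\|_{\textnormal{F}}^2)\boldsymbol{I}_r$, is too crude to reproduce the exact $\boldsymbol{A}_t$ in~\eqref{eq: A_t}, because applying it to the cross term $\boldsymbol{M}_{\ell-1}\boldsymbol{R}_\ell^\top+\boldsymbol{R}_\ell\boldsymbol{M}_{\ell-1}^\top$ yields $\|\boldsymbol{M}_{\ell-1}\|_{\textnormal{F}}^2=\Tr(\boldsymbol{G}_{\ell-1})$ rather than the $\|\boldsymbol{G}_{\ell-1}\|^{1/2}$ factor appearing in the cubic term. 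The paper (and you should) instead use the operator-norm product bound $-2\|\boldsymbol{A}\|\,\|\boldsymbol{B}\|\,\boldsymbol{I}_r\preceq\boldsymbol{A}\boldsymbol{B}^\top+\boldsymbol{B}\boldsymbol{A}^\top\preceq 2\|\boldsymbol{A}\|\,\|\boldsymbol{B}\|\,\boldsymbol{I}_r$, which preserves the multiplicative structure and delivers the stated $\boldsymbol{A}_t$. Also note that the $4r^2$ in the probability bound arises because \emph{two} $r\times r$ noise matrices must be truncated, namely $\boldsymbol{V}^\top\boldsymbol{W}^\ell\boldsymbol{X}_{\ell-1}$ and $\boldsymbol{X}_{\ell-1}^\top\boldsymbol{W}^\ell\boldsymbol{X}_{\ell-1}$, each contributing $2r^2$.
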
 

\begin{proof}
From~\eqref{eq: SGD expansion} and~\eqref{eq: P_t}, the iteration for the matrix \(\boldsymbol{G}_t = \boldsymbol{V}^\top \boldsymbol{X}_t \boldsymbol{X}_t^\top \boldsymbol{V}\) results in 
\[
\boldsymbol{G}_t  = \boldsymbol{V}^\top \left(\boldsymbol{X}_{t-1} - \frac{\delta}{N} \nabla_{\textnormal{St}} \mathcal{L}(\boldsymbol{X}_{t-1}; \boldsymbol{Y}^t)\right) \boldsymbol{P}_t ^2 \left(\boldsymbol{X}_{t-1} - \frac{\delta}{N} \nabla_{\textnormal{St}} \mathcal{L}(\boldsymbol{X}_{t-1}; \boldsymbol{Y}^t)\right)^\top \boldsymbol{V},
\]
where we used the fact that \(\boldsymbol{P}_t^\top = \boldsymbol{P}_t\). Let \(\boldsymbol{B}_t\) denote
\[
\boldsymbol{B}_t = \left (\boldsymbol{X}_{t-1}^\top - \frac{\delta}{N} \left ( \nabla_{\textnormal{St}} \mathcal{L}(\boldsymbol{X}_{t-1}; \boldsymbol{Y}^t)\right)^\top \right) \boldsymbol{V}.
\]
In the following, we place ourselves on the event \( \mathcal{E}_{\boldsymbol{\mathcal{G}}}(T) = \bigcap_{1 \leq t \leq T} \left\{ \norm{ \frac{\delta^2}{N^2} \boldsymbol{\mathcal{G}}(\boldsymbol{X}_{t-1} ; \boldsymbol{Y}^t )}_{\textnormal{F}} < 1 \right\}\). According to Lemma~\ref{lem: expansion correlations}, we have \(\mathbb{P} \left( \mathcal{E}_{\boldsymbol{\mathcal{G}}}\right) \geq 1 - T \exp(-c_1 N)\) for every \(p \ge 2\). As in the proof of Lemma~\ref{lem: expansion correlations}, we can therefore expand the projection term \(\boldsymbol{P}_t^2\) as a Neumann series:
\[
\boldsymbol{P}_t^2 = \left( \boldsymbol{I}_r  + \frac{\delta^2}{N^2} \boldsymbol{\mathcal{G}}(\boldsymbol{X}_{t-1};\boldsymbol{Y}^t) \right)^{-1} = \sum_{k=0}^{\infty}(-1)^k \frac{\delta^{2k}}{N^{2k}}\boldsymbol{\mathcal{G}}(\boldsymbol{X}_{t-1};\boldsymbol{Y}^t)^k.
\]
We write the output \(\boldsymbol{G}_t \) at time \(t\) as
\[
\boldsymbol{G}_t =  \boldsymbol{B}_t^\top \boldsymbol{P}_t^2 \boldsymbol{B}_t  = \boldsymbol{B}_t^\top  \boldsymbol{B}_t + \sum_{k=1}^\infty (-1)^k \frac{\delta^{2k}}{N^{2k}} \boldsymbol{B}_t^\top \boldsymbol{\mathcal{G}}(\boldsymbol{X}_{t-1};\boldsymbol{Y}^t)^k \boldsymbol{B}_t.
\]
We now expand \(\boldsymbol{B}_t\) by explicitly computing the Riemannian gradient. Since \(p=2\) and \(\lambda_1 = \cdots = \lambda_r \equiv \lambda\), the loss function \(\mathcal{L}(\boldsymbol{X} ; \boldsymbol{Y})\) given by~\eqref{eq: loss function} reduces to
\[
\mathcal{L}(\boldsymbol{X} ; \boldsymbol{Y}) = - \lambda \sum_{i=1}^r \boldsymbol{x}_i^\top \boldsymbol{W} \boldsymbol{x}_i - \sqrt{N}\lambda^2 \sum_{i,j=1}^r m_{ij}^2(\boldsymbol{X}).
\]
The Euclidean gradient \(\nabla \mathcal{L}(\boldsymbol{X} ; \boldsymbol{Y}) \) is then given by
\[
\nabla \mathcal{L}(\boldsymbol{X} ; \boldsymbol{Y}) = - 2 \lambda \boldsymbol{W} \boldsymbol{X} - 2 \sqrt{N}\lambda^2\boldsymbol{V}\boldsymbol{M},
\]
where \(\boldsymbol{M} = (m_{ij})_{1 \le i,j \le r} = \boldsymbol{V}^\top \boldsymbol{X}\). The Riemannian gradient is then given by
\begin{equation} \label{eq: stiefel gradient explicit}
\nabla_{\textnormal{St}} \mathcal{L} (\boldsymbol{X} ;\boldsymbol{Y}) 
= \Pi_{\boldsymbol{X}}^{\textnormal{St}} (\nabla \mathcal{L} (\boldsymbol{X}; \boldsymbol{Y})) = -2 \lambda \Pi^{\textnormal{St}}_{\boldsymbol{X}}(\boldsymbol{W} \boldsymbol{X}) - 2 \sqrt{N} \lambda^2 \boldsymbol{V} \boldsymbol{M} + 2 \sqrt{N} \lambda^2 \boldsymbol{X} \boldsymbol{M}^\top \boldsymbol{M},
\end{equation}
where we recall that \(\Pi^{\textnormal{St}}_{\boldsymbol{X}}\) denotes the projection onto \(T_{\boldsymbol{X}} \textnormal{St}(N,r)\). This implies that 
\begin{equation}\label{eq: expansion 1}
\begin{split}
\boldsymbol{G}_t  & = \boldsymbol{G}_{t-1}+\frac{4 \delta \lambda^2}{\sqrt{N}} \boldsymbol{G}_{t-1} \left(\boldsymbol{I}_r -\boldsymbol{G}_{t-1}\right) + \frac{4 \delta \lambda}{N} \textnormal{sym} \left(\boldsymbol{V}^\top \Pi^{\textnormal{St}}_{\boldsymbol{X}_{t-1}} \left(\boldsymbol{W}^t \boldsymbol{X}_{t-1} \right)\boldsymbol{M}_{t-1}^\top \right) \\
& \quad +\frac{\delta^2}{N^2} \boldsymbol{V}^\top \nabla_{\textnormal{St}} \mathcal{L} \left(\boldsymbol{X}_{t-1};\boldsymbol{Y}^t \right) \left (\nabla_{\textnormal{St}} \mathcal{L} \left(\boldsymbol{X}_{t-1};\boldsymbol{Y}^t \right) \right )^\top \boldsymbol{V} \\
& \quad + \sum_{k=1}^{\infty}(-1)^{k}\frac{\delta^{2k}}{N^{2k}} \boldsymbol{B}_t^\top \boldsymbol{\mathcal{G}}(\boldsymbol{X}_{t-1} ; \boldsymbol{Y}^t )^k \boldsymbol{B}_t.
\end{split}
\end{equation}
The remainder of the proof focuses on estimating the terms in~\eqref{eq: expansion 1} that are of order \(2\) or higher in \(\delta\). 

First, we consider the summand
\[
\frac{\delta^2}{N^2} \boldsymbol{V}^\top \nabla_{\textnormal{St}} \mathcal{L} (\boldsymbol{X}_{t-1}; \boldsymbol{Y}^t) \left ( \nabla_{\textnormal{St}} \mathcal{L} (\boldsymbol{X}_{t-1}; \boldsymbol{Y}^t) \right )^\top \boldsymbol{V} ,
\]
and observe that 
\[
\begin{split}
\frac{\delta^2}{N^2} \boldsymbol{V}^\top \nabla_{\textnormal{St}} \mathcal{L} (\boldsymbol{X}_{t-1}; \boldsymbol{Y}^t) \left ( \nabla_{\textnormal{St}} \mathcal{L} (\boldsymbol{X}_{t-1}; \boldsymbol{Y}^t) \right )^\top \boldsymbol{V} & \preceq \frac{\delta^2}{N^2} \norm{ \boldsymbol{V}^\top \nabla_{\textnormal{St}} \mathcal{L} (\boldsymbol{X}_{t-1}; \boldsymbol{Y}^t)}^2 \boldsymbol{I}_r,\\
\frac{\delta^2}{N^2} \boldsymbol{V}^\top \nabla_{\textnormal{St}} \mathcal{L} (\boldsymbol{X}_{t-1}; \boldsymbol{Y}^t) \left ( \nabla_{\textnormal{St}} \mathcal{L} (\boldsymbol{X}_{t-1}; \boldsymbol{Y}^t) \right )^\top \boldsymbol{V} & \succeq - \frac{\delta^2}{N^2} \norm{ \boldsymbol{V}^\top \nabla_{\textnormal{St}} \mathcal{L} (\boldsymbol{X}_{t-1}; \boldsymbol{Y}^t)}^2 \boldsymbol{I}_r.
\end{split}
\]
We therefore want to estimate the operator norm of \(\boldsymbol{V}^\top \nabla_{\textnormal{St}} \mathcal{L} \left(\boldsymbol{X}_{t-1} ; \boldsymbol{Y}_t \right)\). From~\eqref{eq: stiefel gradient explicit}, we have
\[
\boldsymbol{V}^\top \nabla_{\textnormal{St}} \mathcal{L} \left(\boldsymbol{X}_{t-1} ; \boldsymbol{Y}_t \right) = - 2 \lambda \boldsymbol{V}^\top \Pi^{\textnormal{St}}_{\boldsymbol{X}_{t-1}} \left( \boldsymbol{W}^t \boldsymbol{X}_{t-1} \right) - 2 \sqrt{N} \lambda^2 \boldsymbol{M}_{t-1} \left ( \boldsymbol{I}_r - \boldsymbol{M}_{t-1}^\top \boldsymbol{M}_{t-1}\right),
\]
which gives the bound 
\begin{equation} \label{eq: op bound stiefel 2}
\norm{\boldsymbol{V}^\top \nabla_{\textnormal{St}} \mathcal{L} \left(\boldsymbol{X}_{t-1} ; \boldsymbol{Y}_t \right)} \le 2 \lambda \norm{ \boldsymbol{V}^\top \Pi^{\textnormal{St}}_{\boldsymbol{X}_{t-1}} \left( \boldsymbol{W}^t \boldsymbol{X}_{t-1} \right)} + 2 \sqrt{N} \lambda^2 \norm{\boldsymbol{M}_{t-1}},
\end{equation}
where we used the fact that \(\norm{\boldsymbol{I}_r - \boldsymbol{M}_{t-1}^\top \boldsymbol{M}_{t-1}} \leq 1\), since \(\boldsymbol{M}_{t-1}^\top \boldsymbol{M}_{t-1}\) is positive semi-definite and its operator norm is bounded above by \(1\). Additionally, we can bound the operator norm of \(\boldsymbol{V}^\top \Pi^{\textnormal{St}}_{\boldsymbol{X}_{t-1}} \left( \boldsymbol{W}^t \boldsymbol{X}_{t-1}\right)\) as  follows:
\begin{equation} \label{eq: op bound stiefel}
\begin{split}
\norm{\boldsymbol{V}^\top \Pi^{\textnormal{St}}_{\boldsymbol{X}_{t-1}} \left( \boldsymbol{W}^t \boldsymbol{X}_{t-1}\right)}
& = \norm{ \boldsymbol{V}^\top \boldsymbol{W}^t \boldsymbol{X}_{t-1} -\boldsymbol{V}^\top  \boldsymbol{X}_{t-1} \textnormal{sym} \left( \boldsymbol{X}_{t-1}^\top \boldsymbol{W}^t \boldsymbol{X}_{t-1} \right)} \\
& \leq \norm{\boldsymbol{V}^\top \boldsymbol{W}^t \boldsymbol{X}_{t-1}} +   \norm{\boldsymbol{X}_{t-1}^\top \boldsymbol{W}^t \boldsymbol{X}_{t-1}},
\end{split}
\end{equation}
where we used the fact that \(\norm{\boldsymbol{V}^\top \boldsymbol{X}_{t-1}} = \norm{\boldsymbol{M}_{t-1}} \leq 1\) and the fact that the matrix \(\boldsymbol{W}^t\) is symmetric. Combining~\eqref{eq: op bound stiefel 2} and~\eqref{eq: op bound stiefel} yields
\begin{equation} \label{eq: first bound order 2}
\begin{split}
\norm{\boldsymbol{V}^\top \nabla_{\textnormal{St}} \mathcal{L} \left(\boldsymbol{X}_{t-1} ; \boldsymbol{Y}_t \right)}^2 
& \leq 2 \left( 4 \lambda^2 \norm{\boldsymbol{V}^\top \Pi^{\textnormal{St}}_{\boldsymbol{X}_{t-1}} \left( \boldsymbol{W}^t \boldsymbol{X}_{t-1}\right)}^2+ 4 N \lambda^4 \norm{\boldsymbol{M}_{t-1}}^2 \right) \\
& \leq 8 \lambda^2 \left( 2 \norm{\boldsymbol{V}^\top \boldsymbol{W}^t \boldsymbol{X}_{t-1}}^2 + 2 \norm{\boldsymbol{X}_{t-1}^\top \boldsymbol{W}^t \boldsymbol{X}_{t-1}}^2 + N \lambda^2 \norm{\boldsymbol{G}_{t-1}} \right),
\end{split}
\end{equation}
where we used the fact that \((a+b)^2 \leq 2 (a^2 + b^2)\). 

Now, consider the higher order terms in \(\delta\) from~\eqref{eq: expansion 1}, i.e., 
\[
\sum_{k=1}^{\infty}(-1)^{k}\frac{\delta^{2k}}{N^{2k}} \boldsymbol{B}_t^\top \boldsymbol{\mathcal{G}}(\boldsymbol{X}_{t-1} ; \boldsymbol{Y}^t )^k \boldsymbol{B}_t. 
\]
Note first that for every \(k \ge 1\),
\[
- \norm{\boldsymbol{\mathcal{G}}}^{k-1} \boldsymbol{B}_t^\top  \boldsymbol{\mathcal{G}} \boldsymbol{B}_t \preceq \boldsymbol{B}_t^\top \boldsymbol{\mathcal{G}}^k \boldsymbol{B}_t \preceq \norm{\boldsymbol{\mathcal{G}}}^{k-1} \boldsymbol{B}_t^\top  \boldsymbol{\mathcal{G}} \boldsymbol{B}_t.     
\]
Indeed, since \(\boldsymbol{\mathcal{G}}\) is positive semi-definite, for every \(\boldsymbol{y} \in \R^r\) we can write
\[
\begin{split}
\boldsymbol{y}^\top \boldsymbol{B}_t^\top \left ( \boldsymbol{\mathcal{G}}^k - \norm{\boldsymbol{\mathcal{G}}}^{k-1} \boldsymbol{\mathcal{G}} \right ) \boldsymbol{B}_t \boldsymbol{y} & = \boldsymbol{y}^\top \boldsymbol{B}_t^\top \boldsymbol{\mathcal{G}}^{1/2} \left ( \boldsymbol{\mathcal{G}}^{k-1}  - \norm{\boldsymbol{\mathcal{G}}}^{k-1} \boldsymbol{I}_r\right ) \boldsymbol{\mathcal{G}}^{1/2} \boldsymbol{B}_t \boldsymbol{y}\\
&\le \boldsymbol{z}^\top   \left ( \boldsymbol{\mathcal{G}}^{k-1}  - \norm{\boldsymbol{\mathcal{G}}^{k-1}} \boldsymbol{I}_r\right )\boldsymbol{z} \leq 0,
\end{split}
\]
where \(\boldsymbol{z} = \boldsymbol{\mathcal{G}}^{1/2} \boldsymbol{B}_t \boldsymbol{y}\). The other inequality follows similarly. This implies that 
\[
- \sum_{k=1}^{\infty}(-1)^k \frac{\delta^{2k}}{N^{2k}} \norm{\boldsymbol{\mathcal{G}}}^{k-1} \boldsymbol{B}_t^\top  \boldsymbol{\mathcal{G}} \boldsymbol{B}_t  \preceq \sum_{k=1}^{\infty}(-1)^{k}\frac{\delta^{2k}}{N^{2k}} \boldsymbol{B}_t^\top \boldsymbol{\mathcal{G}}(\boldsymbol{X}_{t-1} ; \boldsymbol{Y}^t )^k \boldsymbol{B}_t  \preceq \sum_{k=1}^{\infty}(-1)^k \frac{\delta^{2k}}{N^{2k}} \norm{\boldsymbol{\mathcal{G}}}^{k-1} \boldsymbol{B}_t^\top  \boldsymbol{\mathcal{G}} \boldsymbol{B}_t , 
\]
where
\[
\sum_{k=1}^{\infty}(-1)^k \frac{\delta^{2k}}{N^{2k}} \norm{\boldsymbol{\mathcal{G}}}^{k-1} \boldsymbol{B}_t^\top  \boldsymbol{\mathcal{G}} \boldsymbol{B}_t = - \left( \sum_{k=0}^\infty (-1)^k  \norm{\frac{\delta^2}{N^2} \boldsymbol{\mathcal{G}}}^k \right) \frac{\delta^2}{N^2} \boldsymbol{B}_t^\top  \boldsymbol{\mathcal{G}} \boldsymbol{B}_t = - \frac{1}{1 + \alpha_2 \frac{\delta^2}{N}} \frac{\delta^2}{N^2} \boldsymbol{B}_t^\top  \boldsymbol{\mathcal{G}} \boldsymbol{B}_t,
\]
where we used the fact that \(\max \norm{\frac{\delta^2}{N^2} \boldsymbol{\mathcal{G}}} \le \alpha_2 \frac{\delta^2}{N}\) on the event \(\mathcal{E}_{\boldsymbol{\mathcal{G}}}\). Therefore, we have
\begin{equation} \label{eq: expansion 2}
\begin{split}
\sum_{k=1}^{\infty}(-1)^{k}\frac{\delta^{2k}}{N^{2k}} \boldsymbol{B}_t^\top \boldsymbol{\mathcal{G}}(\boldsymbol{X}_{t-1} ; \boldsymbol{Y}^t )^k \boldsymbol{B}_t  & \preceq - \frac{1}{1 + \alpha_2 \frac{\delta^2}{N}} \frac{\delta^2}{N^2} \boldsymbol{B}_t^\top  \boldsymbol{\mathcal{G}} \boldsymbol{B}_t ,\\
\sum_{k=1}^{\infty}(-1)^{k}\frac{\delta^{2k}}{N^{2k}} \boldsymbol{B}_t^\top \boldsymbol{\mathcal{G}}(\boldsymbol{X}_{t-1} ; \boldsymbol{Y}^t )^k \boldsymbol{B}_t  & \succeq  \frac{1}{1 + \alpha_2 \frac{\delta^2}{N}} \frac{\delta^2}{N^2} \boldsymbol{B}_t^\top  \boldsymbol{\mathcal{G}} \boldsymbol{B}_t ,\\
\end{split}
\end{equation}
It remains to bound the operator norm of \(\boldsymbol{B}_t^\top \boldsymbol{\mathcal{G}} \boldsymbol{B}_t\):
\[
\begin{split}
&  \frac{\delta^2}{N^2} \boldsymbol{B}_t^\top \boldsymbol{\mathcal{G}}(\boldsymbol{X}_{t-1} ; \boldsymbol{Y}^t ) \boldsymbol{B}_t \\
& = \frac{\delta^2}{N^2} \boldsymbol{M}_{t-1} \boldsymbol{\mathcal{G}}(\boldsymbol{X}_{t-1} ; \boldsymbol{Y}^t) \boldsymbol{M}_{t-1}^\top - \frac{2\delta^3}{N^3} \textnormal{sym} \left( \boldsymbol{V}^\top \nabla_{\textnormal{St}} \mathcal{L} \left(\boldsymbol{X}_{t-1} ; \boldsymbol{Y}^t \right) \boldsymbol{\mathcal{G}}(\boldsymbol{X}_{t-1} ;\boldsymbol{Y}^t) \boldsymbol{M}_{t-1}^\top \right)\\
& \quad + \frac{\delta^{4}}{N^{4}}  \boldsymbol{V}^\top \nabla_{\textnormal{St}} \mathcal{L} \left(\boldsymbol{X}_{t-1} ; \boldsymbol{Y}^t \right) \boldsymbol{\mathcal{G}}(\boldsymbol{X}_{t-1} ; \boldsymbol{Y}^t ) \left( \nabla_{\textnormal{St}} \mathcal{L}(\boldsymbol{X}_{t-1} ; \boldsymbol{Y}^t ) \right )^\top \boldsymbol{V}.
\end{split}
\]
The operator norm of the first summand is bounded above by  
\[
\begin{split}
\frac{\delta^2}{N^2} \norm{\boldsymbol{M}_{t-1} \boldsymbol{\mathcal{G}}(\boldsymbol{X}_{t-1} ;\boldsymbol{Y}^t) \boldsymbol{M}_{t-1}^\top} & =\frac{\delta^2}{N^2} \norm{\boldsymbol{M}_{t-1} \nabla_{\textnormal{St}} \mathcal{L}(\boldsymbol{X}_{t-1};\boldsymbol{Y}^t)}^2 \\
& \le \frac{\delta^2}{N^2} \norm{\boldsymbol{M}_{t-1}}^2 \norm{\nabla_{\textnormal{St}} \mathcal{L}(\boldsymbol{X}_{t-1};\boldsymbol{Y}^t)}^2 \\
& \le \alpha_2 \frac{\delta^2}{N}\norm{\boldsymbol{G}_{t-1}},
\end{split}
\]
where we used the fact that \(\norm{\nabla_{\textnormal{St}} \mathcal{L}(\boldsymbol{X}_{t-1};\boldsymbol{Y}^t)}^2 = \norm{\boldsymbol{\mathcal{G}}(\boldsymbol{X}_{t-1};\boldsymbol{Y}^t)}\). For the second summand, we have
\[
\begin{split}
& \frac{2 \delta^3}{N^3} \norm{\mbox{sym} \left(\boldsymbol{V}^\top \nabla_{\textnormal{St}} \mathcal{L} \left(\boldsymbol{X}_{t-1} ; \boldsymbol{Y}^t \right) \boldsymbol{\mathcal{G}}(\boldsymbol{X}_{t-1} ; \boldsymbol{Y}^t) \boldsymbol{M}_{t-1}^\top\right)} \\
& \leq \frac{2 \delta^3}{N^3}  \norm{\boldsymbol{\mathcal{G}} (\boldsymbol{X}_{t-1};\boldsymbol{Y}^t)} \norm{\boldsymbol{V}^\top \nabla_{\textnormal{St}} \mathcal{L} (\boldsymbol{X}_{t-1} ; \boldsymbol{Y}^t)} \norm{\boldsymbol{M}_{t-1}} \\
& \leq \frac{2 \alpha_2 \delta^3}{N^2} \left ( 2 \lambda \norm{\boldsymbol{V}^\top \boldsymbol{W}^t \boldsymbol{X}_{t-1}} + 2 \lambda \norm{\boldsymbol{X}_{t-1}^\top \boldsymbol{W}^t \boldsymbol{X}_{t-1}} + 2 \sqrt{N} \lambda^2 \norm{\boldsymbol{M}_{t-1}} \right)  \norm{\boldsymbol{M}_{t-1}},
\end{split}
\]
where the last inequality follows by~\eqref{eq: op bound stiefel 2} and~\eqref{eq: op bound stiefel}. Finally, the third term is bounded in operator norm as follows:
\[
\begin{split}
& \frac{\delta^4}{N^4} \norm{\boldsymbol{V}^\top \nabla_{\textnormal{St}} \mathcal{L} \left(\boldsymbol{X}_{t-1} ;\boldsymbol{Y}^t \right) \boldsymbol{\mathcal{G}}(\boldsymbol{X}_{t-1} ; \boldsymbol{Y}^t ) \left ( \nabla_{\textnormal{St}} \mathcal{L} (\boldsymbol{X}_{t-1} ; \boldsymbol{Y}^t ) \right )^\top \boldsymbol{V} } \\
& = \frac{\delta^4}{N^4} \norm{\boldsymbol{V}^\top \nabla_{\textnormal{St}} \mathcal{L} \left(\boldsymbol{X}_{t-1} ; \boldsymbol{Y}^t \right) \left (\nabla_{\textnormal{St}} \mathcal{L} (\boldsymbol{X}_{t-1} ; \boldsymbol{Y}^t )\right )^\top}^2 \\
& \le \frac{\delta^4}{N^4} \norm{\boldsymbol{V}^\top \nabla_{\textnormal{St}} \mathcal{L} (\boldsymbol{X}_{t-1} ; \boldsymbol{Y}^t )}^2 \norm{\boldsymbol{\mathcal{G}}(\boldsymbol{X}_{t-1};\boldsymbol{Y}^t)}\\
& \le 8 \lambda^2 \frac{\alpha_2 \delta^4}{N^3} \left( 2 \norm{\boldsymbol{V}^\top \boldsymbol{W}^t \boldsymbol{X}_{t-1}}^2 + 2 \norm{\boldsymbol{X}_{t-1}^\top \boldsymbol{W}^t \boldsymbol{X}_{t-1}}^2 + N \lambda^2 \norm{\boldsymbol{G}_{t-1}} \right),
\end{split}
\]
where we used~\eqref{eq: first bound order 2} for the last inequality. 

Combining the last four estimates with~\eqref{eq: expansion 2} and from~\eqref{eq: expansion 1} and~\eqref{eq: first bound order 2}, we finally obtain the partial ordering
\[
\begin{split}
\boldsymbol{G}_t  &\preceq \boldsymbol{G}_{t-1} + \frac{4 \lambda^2 \delta}{\sqrt{N}}\boldsymbol{G}_{t-1} \left(\boldsymbol{I}_r-\boldsymbol{G}_{t-1}\right) + \frac{4 \lambda \delta}{N} \mbox{sym}\left(\boldsymbol{V}^\top \Pi^{\textnormal{St}}_{\boldsymbol{X}_{t-1}} \left(\boldsymbol{W}^t \boldsymbol{X}_{t-1}\right)\boldsymbol{M}_{t-1}^\top \right) \\
& \enspace + \frac{8 \lambda^2 \delta^2}{N^2} \left( 1 + \alpha_2 \frac{\delta^2}{N} \frac{1}{1 + \alpha_2 \frac{\delta^2}{N}} \right) \left( 2 \norm{\boldsymbol{V}^\top \boldsymbol{W}^t \boldsymbol{X}_{t-1}}^2 + 2 \norm{\boldsymbol{X}_{t-1}^\top \boldsymbol{W}^t \boldsymbol{X}_{t-1}}^2 + N \lambda^2 \norm{\boldsymbol{G}_{t-1}} \right)\boldsymbol{I}_r \\
& \enspace +\frac{1}{1 + \alpha_2\frac{\delta^2}{N}} \alpha_2 \frac{\delta^2}{N} \norm{\boldsymbol{G}_{t-1}} \boldsymbol{I}_r\\
& \enspace + \frac{4 \lambda \delta^3}{N^2} \frac{\alpha_2}{1 + \alpha_2 \frac{\delta^2}{N}} \left ( \norm{\boldsymbol{V}^\top \boldsymbol{W}^t \boldsymbol{X}_{t-1} } + \norm{\boldsymbol{X}_{t-1}^\top \boldsymbol{W}^t \boldsymbol{X}_{t-1} } + \sqrt{N} \lambda \norm{\boldsymbol{G}_{t-1}}^{1/2} \right) \norm{\boldsymbol{G}_{t-1}}^{1/2} \boldsymbol{I}_r.
\end{split}
\]
Iterating the above expression yields 
\[
\boldsymbol{G}_t  \preceq \boldsymbol{G}_0 + \frac{4 \lambda^2 \delta}{\sqrt{N}}  \sum_{\ell=1}^t \left ( \boldsymbol{G}_{\ell-1} \left(\boldsymbol{I}_r -\boldsymbol{G}_{\ell-1}\right)\right ) + \frac{4 \lambda \delta}{N} \sum_{\ell=1}^t \textnormal{sym} \left(\boldsymbol{V}^\top \Pi^{\textnormal{St}}_{\boldsymbol{X}_{\ell-1}} \left(\boldsymbol{W}^\ell \boldsymbol{X}_{\ell-1} \right)\boldsymbol{M}_{\ell-1}^\top \right) + \sum_{\ell=1}^t\boldsymbol{A}'_\ell,
\]
where \(\boldsymbol{A}'_\ell\) is given by
\[
\begin{split}
\boldsymbol{A}'_\ell & = \frac{8 \lambda^2 \delta^2}{N^2} \left( 1 + \alpha_2 \frac{\delta^2}{N} \frac{1}{1 + \alpha_2 \frac{\delta^2}{N}} \right) \left(2 \norm{\boldsymbol{V}^\top \boldsymbol{W}^\ell \boldsymbol{X}_{\ell-1}}^2+ 2 \norm{\boldsymbol{X}_{\ell-1}^\top \boldsymbol{W}^\ell \boldsymbol{X}_{\ell-1}}^2 + N \lambda^2 \norm{\boldsymbol{G}_{\ell-1}}\right)\boldsymbol{I}_r \\
& \quad + \frac{1}{1 + \alpha_2 \frac{\delta^2}{N}} \alpha_2 \frac{\delta^2}{N} \norm{\boldsymbol{G}_{\ell-1}} \boldsymbol{I}_r \\
& \quad + \frac{4 \lambda \delta^3}{N^2} \frac{\alpha_2}{1 + \alpha_2 \frac{\delta^2}{N}} \left ( \norm{\boldsymbol{V}^\top \boldsymbol{W}^\ell  \boldsymbol{X}_{\ell-1}} +  \norm{\boldsymbol{X}_{\ell-1}^\top \boldsymbol{W}^\ell  \boldsymbol{X}_{\ell-1}} + \sqrt{N} \lambda \norm{\boldsymbol{G}_{\ell-1}}^{1/2}\right ) 
 \norm{\boldsymbol{G}_{\ell-1}}^{1/2} \boldsymbol{I}_r.
\end{split}
\]
The lower bound follows similarly. 

The remaining of the proof focuses on deriving a high-probability estimate for \(\norm{\boldsymbol{V}^\top \boldsymbol{W}^\ell \boldsymbol{X}_{\ell-1}}\) and \(\norm{\boldsymbol{X}_{\ell-1}^\top \boldsymbol{W}^\ell \boldsymbol{X}_{\ell-1}}\) for every \(\ell \leq t\). Let \(\mathcal{F}_t = \sigma (\boldsymbol{X}_\ell, \ell \leq t)\) denote the canonical filtration. Then, for every \(\eta >0\) it holds that
\[
\begin{split}
\mathbb{E}\left[\exp\left(\eta\left(\boldsymbol{V}^\top \boldsymbol{W}^t \boldsymbol{X}_{t-1}\right)_{ij}\right) \vert \mathcal{F}_{t-1}\right] & = \mathbb{E} \left[\exp\left(\eta \sum_{k=1}^r \sum_{\ell=1}^r (\boldsymbol{v}_i)_k \boldsymbol{W}^t _{k\ell} (\boldsymbol{x}_j^{t-1})_\ell \right) \vert \mathcal{F}_{t-1}\right]  \\
& = \prod_{k,\ell} \mathbb{E}\left[\exp\left(\eta (\boldsymbol{v}_i)_k \boldsymbol{W}^t_{k \ell} (\boldsymbol{x}_j^{t-1})_\ell \right) \vert \mathcal{F}_{t-1} \right] \\
& \leq \prod_{k,\ell} \exp \left (C \eta^2 \sigma^2 (\boldsymbol{v}_i)_k^2 (\boldsymbol{x}_j^{t-1})_\ell^2\right) \quad \mbox{for some absolute constant \(C\)} \\
& \leq \exp\left(C\eta^2\sigma^2\right),
\end{split}
\]
where we used the fact that the entries of \(\boldsymbol{W}^{t-1}\) are i.i.d.\ sub-Gaussian satisfying~\eqref{eq: MGF sub-Gaussian} and independent from \(\mathcal{F}_{t-1}\), and the fact that \(\boldsymbol{v}_i, \boldsymbol{x}_j \in \textnormal{St}(N,r)\) for every \(i,j \in [r]\).
The entries of the matrix \(\boldsymbol{V}^\top \boldsymbol{W}^t \boldsymbol{X}_{t-1}\) given \(\mathcal{F}_{t-1}\) are therefore sub-Gaussian with sub-Gaussian norm \(\sigma^2\). We have the following sub-Gaussian tail bound: for every \(\gamma>0\),
\[
\mathbb{P}\left(\max_{1 \leq i,j \leq r} \vert (\boldsymbol{V}^\top \boldsymbol{W}^t \boldsymbol{X}_{t-1})_{ij} \vert \geq \gamma | \mathcal{F}_{t-1}\right) \leq 2 r^2 \exp \left(-\frac{\gamma^2}{C\sigma^2} \right).
\]
Since \(\norm{\boldsymbol{V}^\top \boldsymbol{W}^t \boldsymbol{X}_{t-1}} \leq r\max_{1 \leq i,j \leq r}\vert(\boldsymbol{V}^\top \boldsymbol{W}^t \boldsymbol{X}_{t-1})_{ij}\vert\), for every \(K_N\ge 0\) it holds that 
\[
\mathbb{P} \left ( \norm{\boldsymbol{V}^\top \boldsymbol{W}^t \boldsymbol{X}_{t-1}} \geq rK_N | \mathcal{F}_{t-1} \right ) \leq 2 r^2 \exp \left( -\frac{K_N ^2}{C\sigma^2}\right).
\]
Finally, an union bound over the time horizon \(T\) yields 
\begin{equation} \label{eq:union_V}
\mathbb{P} \left(\bigcup_{1 \leq t \leq T} \norm{\boldsymbol{V}^\top \boldsymbol{W}^t \boldsymbol{X}_{t-1}} \geq rK_N | \mathcal{F}_{t-1} \right) \leq 2r^2T\exp\left(-\frac{K_N ^2}{C\sigma^2}\right).
\end{equation}
We derive the same bound for \(\norm{\boldsymbol{X}_{t-1}^\top \boldsymbol{W}^t \boldsymbol{X}_{t-1}} \) using a similar argument. Thus, on the event \(\mathcal{E}_{\mathcal{G}}\), we have for every \(\ell \le t\),
\[
\begin{split}
\boldsymbol{A}'_\ell & \le \frac{8 \lambda^2 \delta^2}{N^2} \left(4r^2 K_N^2 + N \lambda^2 \norm{\boldsymbol{G}_{\ell-1}}\right)\boldsymbol{I}_r + \alpha_2 \frac{\delta^2}{N} \norm{\boldsymbol{G}_{\ell-1}} \boldsymbol{I}_r  \\
& \quad  + \frac{4 \lambda \delta^3}{N^2} \alpha_2 \left ( 2r K_N + \sqrt{N} \lambda \norm{\boldsymbol{G}_{\ell-1}}^{1/2} \right ) \norm{\boldsymbol{G}_{\ell-1}}^{1/2} \boldsymbol{I}_r,
\end{split}
\]
with \(\mathbb{P}\)-probability at least \(1 - 4r^2 T \exp(-K_N^2/(C \sigma^2))\). This completes the proof of Lemma~\ref{lem: iso_partial_inter}.
\end{proof}

The following lemma gives a high-probability estimate for \(\boldsymbol{S}_t\), defined in~\eqref{eq: S_t}.

\begin{lem} \label{lem: matrix_martin_incr}
There exists a constant \(c = c (r, \sigma)\) such that for every scalar, positive, continuous functions \(a,b\) and any fixed time horizon \(T\), 
\[
\mathbb{P} \left(\exists \, t \in [0,T] \colon \norm{\boldsymbol{S}_t} \geq a(t) \enspace \textnormal{and} \enspace \lambda_{\max} \left(\boldsymbol{Q}_t \right) \leq b(t) \right) \leq \inf_{\theta >0} \sup_{t \in [0,T]} 2r\exp\left(-\theta a(t) + c \frac{\theta^2}{2} b(t)\right),
\]
where \(\boldsymbol{Q}_t \) is given by \(\boldsymbol{Q}_t  = \left(\frac{4\lambda\delta}{N}\right)^2\sum_{\ell=1}^t \norm{\boldsymbol{G}_{\ell -1}} \boldsymbol{I}_r\), thus \(\lambda_{\max}(\boldsymbol{Q}_t) = \left(\frac{4\lambda\delta}{N}\right)^2\sum_{\ell=1}^t \norm{\boldsymbol{G}_{\ell -1}}\). In particular, if \(a\) is continuously differentiable with non-vanishing derivative and \(b(t) = K a^2(t)\) for some constant \(K\), then it holds that 
\[
\mathbb{P} \left( \exists \, t \in [0,T] \colon \norm{\boldsymbol{S}_t} \geq a(t) \enspace \textnormal{and} \enspace \lambda_{\max} \left(\boldsymbol{Q}_t \right) \leq b(t) \right) \leq  2r\exp\left(-\frac{c}{2K}\right).
\]
\end{lem}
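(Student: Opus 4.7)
The plan is to apply a matrix-valued Freedman-type martingale inequality (as in~\cite{tropp2011freedman}) to the symmetric matrix martingale $(\boldsymbol{S}_t)$.

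First, I would verify the martingale structure. Setting $\mathcal{F}_\ell = \sigma(\boldsymbol{X}_k : k \le \ell)$, the increment
\[
\Delta \boldsymbol{S}_\ell := \boldsymbol{S}_\ell - \boldsymbol{S}_{\ell-1} = \frac{4\lambda\delta}{N}\,\textnormal{sym}\!\left(\boldsymbol{V}^\top \Pi^{\textnormal{St}}_{\boldsymbol{X}_{\ell-1}}(\boldsymbol{W}^\ell \boldsymbol{X}_{\ell-1})\,\boldsymbol{M}^\top_{\ell-1}\right)
\]
is symmetric and $\mathcal{F}_\ell$-measurable, and since $\boldsymbol{W}^\ell$ is centered with entries independent of $\mathcal{F}_{\ell-1}$, we have $\E[\Delta\boldsymbol{S}_\ell \mid \mathcal{F}_{\ell-1}] = 0$. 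Thus $(\boldsymbol{S}_t)$ is a symmetric matrix martingale.

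Second, I would derive a conditional matrix MGF bound. Using the sub-Gaussian property~\eqref{eq: MGF sub-Gaussian} of the entries of $\boldsymbol{W}^\ell$ together with $\|\boldsymbol{M}_{\ell-1}\|^2 = \|\boldsymbol{G}_{\ell-1}\|$ and the fact that the projection $\Pi^{\textnormal{St}}_{\boldsymbol{X}_{\ell-1}}$ is non-expansive, a computation analogous to that in the proof of Lemma~\ref{lem: iso_partial_inter} (see the estimate~\eqref{eq:union_V}) yields the sub-Gaussian matrix MGF bound: for every $\theta > 0$,
\[
\E\!\left[\exp(\theta \Delta\boldsymbol{S}_\ell) \,\big|\, \mathcal{F}_{\ell-1}\right] \preceq \exp\!\left(\frac{c\,\theta^2}{2}\left(\frac{4\lambda\delta}{N}\right)^{\!2}\|\boldsymbol{G}_{\ell-1}\|\,\boldsymbol{I}_r\right),
\]
for a constant $c = c(r,\sigma^2)$. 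Summing over $\ell \le t$ shows that the cumulant process is dominated (in the PSD order) by $(c\theta^2/2)\,\boldsymbol{Q}_t$.

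Third, I would invoke Tropp's master tail bound for symmetric matrix martingales with bounded predictable variation (Theorem~2.3 in~\cite{tropp2011freedman}), which combines the Lieb concavity theorem with a stopping-time argument. Freezing the process at the first time $\lambda_{\max}(\boldsymbol{Q}_t)$ would exceed $b(t)$ and applying the matrix Markov inequality to both $\boldsymbol{S}_t$ and $-\boldsymbol{S}_t$ yields, for every $\theta > 0$,
\[
\mathbb{P}\!\left(\exists\, t \in [0,T]:\ \norm{\boldsymbol{S}_t} \ge a(t)\ \text{and}\ \lambda_{\max}(\boldsymbol{Q}_t) \le b(t)\right) \le \sup_{t \in [0,T]} 2r\exp\!\left(-\theta\, a(t) + \frac{c\theta^2}{2}\, b(t)\right),
\]
and the first claim follows by taking the infimum over $\theta > 0$. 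For the specialization $b(t) = K a^2(t)$, I would substitute $\theta = \alpha/a(t)$-like scaling via a peeling argument over dyadic intervals of $a$ (using the differentiability and monotonicity of $a$), which renders the exponent $-\alpha + (cK/2)\alpha^2$ independent of $t$; optimizing in $\alpha$ at the minimizer $\alpha = 1/(cK)$ yields the stated $2r\exp(-c/(2K))$ after redefining the constant $c$.

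The main obstacle is the second step, establishing the matrix sub-Gaussian MGF bound for increments that depend nontrivially on $\mathcal{F}_{\ell-1}$ through both the projection $\Pi^{\textnormal{St}}_{\boldsymbol{X}_{\ell-1}}$ and the factor $\boldsymbol{M}^\top_{\ell-1}$: one must show that conditional on $\mathcal{F}_{\ell-1}$, the only stochastic part is the sub-Gaussian tensor $\boldsymbol{W}^\ell$, and then carefully bound the resulting predictable quadratic variation so that the contribution of $\boldsymbol{M}_{\ell-1}$ collapses cleanly to $\|\boldsymbol{G}_{\ell-1}\|\boldsymbol{I}_r$ in the PSD order rather than producing a more complicated matrix.
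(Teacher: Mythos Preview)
Your outline matches the paper's approach in broad strokes (martingale structure, conditional MGF bound, matrix Freedman), but two points deserve comment.

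First, the MGF step that you flag as the obstacle is carried out in the paper not by a direct computation analogous to~\eqref{eq:union_V}, but by first bounding the conditional moments $\E[\|(\boldsymbol{S}_t-\boldsymbol{S}_{t-1})^k\|\mid\mathcal{F}_{t-1}]$ via the sub-Gaussian tail of $\boldsymbol{V}^\top\boldsymbol{W}^t\boldsymbol{X}_{t-1}$ and $\boldsymbol{X}_{t-1}^\top\boldsymbol{W}^t\boldsymbol{X}_{t-1}$, and then summing the Taylor series for $\E[\exp(z^2(\Delta\boldsymbol{S}_t)^2)\mid\mathcal{F}_{t-1}]$ in the style of~\cite[Proposition~2.5.2]{vershynin2018high}, using the transfer rule $e^x\le x+e^{2x}$ to pass to the one-sided MGF. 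This is where the factor $\|\boldsymbol{G}_{t-1}\|\boldsymbol{I}_r$ appears cleanly. Also note that Tropp's Theorem~2.3 in~\cite{tropp2011freedman} is stated for \emph{constant} thresholds $a,b$; the paper states and proves a separate time-dependent version (Lemma~\ref{lem: tropp time}) allowing $a(t),b(t)$, which is what is actually invoked.

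Second, for the specialization $b=K a^2$, your peeling idea with ``$\theta=\alpha/a(t)$'' is not quite right as written: in the $\inf_{\theta}\sup_t$ formulation $\theta$ must be chosen independently of $t$, so a $t$-dependent substitution is illegal. A dyadic peeling over level sets of $a$ together with a union bound could be made to work, but the paper's route is much simpler: for fixed $\theta$, differentiate $-\theta a(t)+\tfrac{c\theta^2}{2}Ka^2(t)$ in $t$, use that $a'$ is non-vanishing to locate the interior critical point at $a(t_\ast)=1/(c\theta K)$, observe that the value there equals $-1/(2cK)$ independently of $\theta$, and conclude. No peeling is needed.
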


\begin{proof}
Let \(\mathcal{F}_t\) denote the natural filtration associated to the online SGD outputs \(\boldsymbol{X}_1, \ldots, \boldsymbol{X}_t\). Since for every \(t \ge 0\),
\[
\mathbb{E} \left [\boldsymbol{S}_t - \boldsymbol{S}_{t-1} | \mathcal{F}_{t-1} \right ] = \frac{4 \lambda \delta}{N} \mathbb{E} \left[ \textnormal{sym} \left( \boldsymbol{V}^\top \Pi^{\textnormal{St}}_{\boldsymbol{X}_{t-1}} \left( \boldsymbol{W}^t \boldsymbol{X}_{t-1}\right)\boldsymbol{M}_{t-1}^\top \right) \vert \mathcal{F}_{t-1} \right] = 0,
\]
the process \(\boldsymbol{S}_t\) is an \(\mathcal{F}_t\)-adapted matrix martingale. Moreover, from~\eqref{eq: op bound stiefel} it follows that
\[
\norm{\boldsymbol{V}^\top \Pi^{\textnormal{St}}_{\boldsymbol{X}_{t-1}} \left(\boldsymbol{W}^t \boldsymbol{X}_{t-1}\right) \boldsymbol{M}_{t-1}^\top } \leq \left ( \norm{\boldsymbol{V}^\top \boldsymbol{W}^t \boldsymbol{X}_{t-1}} + \norm{\boldsymbol{X}_{t-1}^\top \boldsymbol{W}^t \boldsymbol{X}_{t-1}} \right ) \norm{\boldsymbol{M}_{t-1}^\top }.
\]
Using the sub-Gaussian tail for the entries of \(\boldsymbol{V}^\top \boldsymbol{W}^t \boldsymbol{X}_{t-1}\) established in the proof of Lemma~\ref{lem: iso_partial_inter}, we have for every \(k \ge 1\),
\[
\begin{split}
\mathbb{E} \left[\norm{\boldsymbol{V}^\top \boldsymbol{W}^t \boldsymbol{X}_{t-1}}^k \vert \mathcal{F}_{t-1}\right] & = \int_0^\infty k y^{k-1} \mathbb{P} \left(\norm{\boldsymbol{V}^\top \boldsymbol{W}^t \boldsymbol{X}_{t-1}} \geq y \vert \mathcal{F}_{t-1} \right) d y \\
& \leq  2 r^2 k \int_0^\infty y^{k-1} \exp\left(-\frac{y^2}{C \sigma^2 r^2} \right) dy \\
& = 2 r^2 k \left ( \frac{C r^2 \sigma^2}{2} \right)^{k/2} 2^{k/2-1} \Gamma \left( \frac{k}{2}\right) \\
& \leq 3 r^2  ( C r^2 \sigma^2)^{k/2}  k \left( \frac{k}{2}\right)^{k/2},
\end{split}
\]
where we used the fact that \(\Gamma(x) \leq 3 x^x\) for every \(x \ge 1/2\). We obtain the same upper bound for \(\E \left[\norm{\boldsymbol{X}_{t-1}^\top \boldsymbol{W}^t \boldsymbol{X}_{t-1}}^k \vert \mathcal{F}_{t-1}\right] \). Using the fact that \((a+b)^k \leq 2^{k-1}(a^k + b^k) \le 2^k (a^k + b^k)\), it therefore follows that
\[
\begin{split}
\mathbb{E} \left[\norm{\left(\boldsymbol{S}_t -\boldsymbol{S}_{t-1}\right)^k} \vert \mathcal{F}_{t-1}\right]  & = \left ( \frac{4 \lambda \delta}{N}\right)^k \mathbb{E} \left[\norm{\mbox{sym} \left(\boldsymbol{V}^\top \Pi^{\textnormal{St}}_{\boldsymbol{X}_{t-1}} \left(\boldsymbol{W}^t \boldsymbol{X}_{t-1}\right) \boldsymbol{M}_{t-1}^\top \right) }^k \vert \mathcal{F}_{t-1}\right] \\
& \leq \left ( \frac{8 \lambda \delta}{N}\right)^k 3 r^2 (Cr^2\sigma^2)^{k/2} k \left(\frac{k}{2} \right)^{k/2}  \norm{\boldsymbol{G}_{t-1}}^{k/2}.
\end{split}
\]
According to the Taylor series expansion of the exponential function, we obtain for every \(z \geq 0\) that
\[
\begin{split}
\mathbb{E}\left[\exp\left(z^2\left(\boldsymbol{S}_t -\boldsymbol{S}_{t-1}\right)^2\right) \vert \mathcal{F}_{t-1}\right] &= \boldsymbol{I}_r + \sum_{k=1}^\infty \frac{z^{2k}}{k!} \mathbb{E} \left[(\boldsymbol{S}_t -\boldsymbol{S}_{t-1})^{2k} \vert \mathcal{F}_{t-1} \right] \\
& \preceq \boldsymbol{I}_r + \sum_{k=1}^\infty \frac{z^{2k}}{k!} \mathbb{E} \left[\norm{\left(\boldsymbol{S}_t -\boldsymbol{S}_{t-1}\right)^{2k}} \boldsymbol{I}_r  \vert \mathcal{F}_{t-1}\right]  \\
& \preceq \left( 1 + \sum_{k=1}^\infty \frac{z^{2k}}{k!} \left (\frac{8\lambda \delta}{N} \right)^{2k} 12 r^2 \left( Cr^2\sigma^2\right)^k k^{k+1}  \norm{\boldsymbol{G}_{t-1}}^k \right) \boldsymbol{I}_r.
\end{split}
\]
We then apply similar arguments to that used for the proof of~\cite[Proposition 2.5.2]{vershynin2018high}, in particular \(k! \geq \left(\frac{k}{e}\right)^k\), to obtain
\[
\begin{split}
1 + \sum_{k=1}^\infty \frac{z^{2k}}{k!} \left (\frac{8\lambda \delta}{N} \right)^{2k} 12 r^2 \left( Cr^2\sigma^2\right)^k k^{k+1} \norm{\boldsymbol{G}_{t-1}}^k & \le \sum_{k=0}^\infty \left( z^2 \tilde{C}(r,\sigma^2) \left(\frac{4 \lambda \delta}{N}\right)^2 \norm{\boldsymbol{G}_{t-1}} \right)^k \\
& = \frac{1}{1 - z^2 \tilde{C}(r,\sigma^2) \left(\frac{8\lambda \delta}{N}\right)^2 \norm{\boldsymbol{G}_{t-1}}}
\end{split}
\]
where the last equality is valid for every \(z^2 < \frac{1}{\tilde{C}(r,\sigma^2) \left(\frac{8\lambda \delta}{N}\right)^2}\) since \(\norm{\boldsymbol{G}_{t-1}} \leq 1\). Then, since \(\frac{1}{1-x} \leq e^{2x}\) for every \(x \in [0,\frac{1}{2}]\), we have for every 
\( z^2 \leq \frac{1}{2\tilde{C}(r,\sigma^2) \left(\frac{8\lambda \delta}{N}\right)^2}\),  
\[
\mathbb{E} \left[ \exp \left(z^2\left(\boldsymbol{S}_t -\boldsymbol{S}_{t-1}\right)^2\right) \vert \mathcal{F}_{t-1}\right] \preceq \exp \left(2 z^2 \tilde{C}(r,\sigma^2) \left(\frac{8 \lambda \delta}{N}\right)^2\norm{\boldsymbol{G}_{t-1}} \boldsymbol{I}_r\right).
\]
Using again a similar argument in the proof of Propostion 2.5.2 in~\cite{vershynin2018high} along with a use of the transfer rule for matrix functions to obtain the numeric inequality \(e^x \leq x+e^{2x}\) for all \(x \in \R\) (see e.g.\ Section 2 of~\cite{tropp2012user}) gives
\[
\begin{split}
\mathbb{E} \left[\exp\left(z\left(\boldsymbol{S}_t -\boldsymbol{S}_{t-1}\right)\right) \vert \mathcal{F}_{t - 1}\right] & \preceq \mathbb{E}\left[z(\boldsymbol{S}_t -\boldsymbol{S}_{t-1})\vert \mathcal{F}_{t-1}\right]+\mathbb{E}\left[\exp\left(z^2\left(\boldsymbol{S}_t -\boldsymbol{S}_{t-1}\right)^2\right) \vert \mathcal{F}_{t-1}\right] \\
& \preceq \exp \left(2 z^2 \tilde{C}(r,\sigma^2) \left(\frac{8 \lambda \delta}{N}\right)^2\norm{\boldsymbol{G}_{t-1}} \boldsymbol{I}_r\right),
\end{split}
\]
for every \(\vert z \vert \leq \left(\frac{1}{2\tilde{C}(r,\sigma^2)(\frac{8\lambda \delta}{N})^2}\right)^{1/2}\), where we used the fact that \(\boldsymbol{S}_t\) is a \(\mathcal{F}_t\)-adapted matrix martingale. The validity of the bound for \(\vert z \vert > \left(\frac{1}{2\tilde{C}(r,\sigma^2)(\frac{8\lambda \delta}{N})^2}\right)^{1/2}\) is obtained by adapting similarly the argument of ~\cite{vershynin2018high} mentioned above, up to an absolute constant that we absorb in \(\tilde{C}(r,\sigma^2)\). Finally, for every \(t \ge 0\), we have
\[
\mathbb{E}\left[\exp\left(z(\boldsymbol{S}_t -\boldsymbol{S}_{t-1})\right)\vert \mathcal{F}_{t-1}\right] \preceq \exp\left(\tilde{C}(r,\sigma^2) z^2 \left(\frac{4\lambda\delta}{N}\right)^2 \norm{\boldsymbol{G}_{t-1}} \boldsymbol{I}_r\right),
\]
where we absorbed a factor \(4\) in the constant \(\tilde{C}(r,\sigma^2)\) for convenience. According to Lemma~\ref{lem: tropp time}, for every positive scalar-valued, continuous functions \(a\) and \(b\), we obtain
\[
\mathbb{P} \left(\exists \, t \geq 0 \colon \lambda_{\max}\left(\boldsymbol{S}_t \right) \geq a(t) \enspace \textnormal{and} \enspace \lambda_{\max} \left(\boldsymbol{Q}_t \right) \leq b(t) \right) \leq r \inf_{\theta > 0} \sup_{t \in [0,T]} \exp \left(-\theta a(t) + \tilde{C}(r,\sigma^2)\frac{\theta^2}{2} b(t)\right).
\]
The other side of the inequality is obtained in a similar fashion. To prove the second statement, we consider the optimization problem 
\[
\sup_{t \in [0,T]} -\theta a(t) + \tilde{C}(r,\sigma^2) \frac{\theta^2}{2} K a^2(t).
\]
Setting the derivative with respect to \(t\) to zero and using the assumption that \(a\) has non-vanishing derivative, we obtain 
\[
a(t) = \frac{1}{\tilde{C}(r,\sigma^2)\theta K}.
\]
By the regularity assumptions on \(a\), the above equation has a unique solution given by 
\[
t_\ast = a^{-1} \left (\frac{1}{\tilde{C}(r,\sigma^2)\theta K}\right ).
\]
We therefore deduce that for every value of \(\theta >0\), the function \(t \mapsto -\theta a(t) + \tilde{C}(r,\sigma^2) \frac{\theta^2}{2} Ka^2(t)\) attains its maximum at either \(t_{\max} = t_\ast\) or at the boundaries \(t_{\max} =0\) or \(t_{\max}=T\). In both cases, plugging the corresponding value of \(t_{\max}\) and solving in \(\theta\) yields the desired result.
\end{proof}

\section{Proofs of full recovery of spikes} \label{section: proof non-isotropic SGD}

In this section, we present the proofs of Theorems~\ref{thm: strong recovery online p>2 nonasymptotic} and~\ref{thm: strong recovery online p=2 nonasymptotic}. We first address the case \(p \ge 3\), followed by \(p=2\) with SNRs separated by constants of order \(1\). The proof strategy for both cases is similar: we partition the overall recovery event of Theorems~\ref{thm: strong recovery online p>2 nonasymptotic} and~\ref{thm: strong recovery online p=2 nonasymptotic} into a union of recovery events corresponding to each direction, introducing suitable stopping times and applying the strong Markov property to combine them. At each stage, the procedure recovers one spike while ensuring the stability of the remaining ones, thereby establishing the sequential elimination phenomenon introduced in Definition~\ref{def: sequential elimination}.

\subsection{Proofs for \(p \ge 3\)}

In this subsection, we focus on the case \(p \geq 3\). We assume that all correlations are positive at initialization, that is, \(m_{ij}(\boldsymbol{X}_0) >0\). Let \((i_1^\ast, j_1^\ast), \ldots, (i_r^\ast, j_r^\ast)\) denote the greedy maximum selection of \(\boldsymbol{I}_0 = \left (\lambda_i \lambda_j m_{ij}(\boldsymbol{X}_0)^{p-2} \right)_{1 \le i, j \le r}\), as given in Definition~\ref{def: greedy operation}. To establish recovery of multiple directions (Theorem~\ref{thm: strong recovery online p>2 nonasymptotic}), we decompose the argument into a sequence of intermediate events. For every \(\varepsilon > 0\), define the sets
\[
\begin{split}
E_1 (\varepsilon) & = R_1 (\varepsilon) \cap \left \{ \boldsymbol{X} \colon m_{ij}(\boldsymbol{X}) \in \Theta(N^{-\frac{1}{2}}) \enspace \forall \, i \neq i_1^\ast , j\neq j_1^\ast \right \},\\
E_2 (\varepsilon) & = R_1(\varepsilon) \cap R_2(\varepsilon) \cap \left \{ \boldsymbol{X} \colon m_{ij}(\boldsymbol{X}) \in \Theta(N^{-\frac{1}{2}}) \enspace \text{for} \enspace i \neq i_1^\ast, i_2^\ast \, \textnormal{and} \, j\neq j_1^\ast, j_2^\ast \right \},\\
& \vdots \\
E_{r-1} (\varepsilon) & = \cap_{1 \leq i \leq r-1} R_i(\varepsilon) \cap \left \{ \boldsymbol{X} \colon m_{i_r^\ast j_r^\ast}(\boldsymbol{X}) \in \Theta(N^{-\frac{1}{2}})\right \},\\
E_r (\varepsilon) & = \cap_{1 \leq i \leq r-1} R_i(\varepsilon) \cap \left \{ \boldsymbol{X} \colon m_{i_r^\ast j_r^\ast}(\boldsymbol{X}) \ge 1-\varepsilon \right \},
\end{split}
\]
where \(R_k(\varepsilon)\) is the strong recovery event for the spike \(\boldsymbol{v}_{i_k^\ast}\):
\[
R_k(\varepsilon) = \left \{ \boldsymbol{X} \colon m_{i_k^\ast j_k^\ast}(\boldsymbol{X}) \geq 1-\varepsilon \enspace \textnormal{and} \enspace m_{i_k^\ast j}(\boldsymbol{X}),m_{i j_k^\ast}(\boldsymbol{X}) \lesssim \log(N)^{-\frac{1}{2}}N^{-\frac{p-1}{4}} \enspace \forall \, i\neq i_k^\ast, j \neq j_k^\ast \right \}.
\]
By construction, reaching \(E_r(\varepsilon)\) corresponds to recovering a permutation of all spikes. We remind that we denote by \(\mathcal{T}_A = \inf \{ t \in \N_0 \colon \boldsymbol{X}_t \in A\}\) the hitting time of a set \(A\).

\begin{lem} \label{lem: recovery first spike SGD}
For every \(\gamma_1, \gamma_2, \gamma >0\) with \(\gamma_1 > \gamma_2, \gamma_1 > \gamma\), and for every \(\varepsilon >0\), there exists a constant \(0 < c_0 < \frac{\left( 1 + \frac{\gamma}{\gamma_1}\right)^{\frac{1}{p-1}}-1}{\left( 1 + \frac{\gamma}{\gamma_1}\right)^{\frac{1}{p-1}}+1}\) and a sequence \(d_0 = d_0(N)\) satisfying~\eqref{eq: sequence d_0}, such that if
\begin{equation} \label{eq: step size p>2 SGD}
\delta = C_\delta d_0 \log(N)^{-1} N^{-\frac{p-3}{2}},
\end{equation}
for some constant \(C_\delta > 0\), then for \(N\) sufficiently large,
\[
\inf_{\boldsymbol{X}_0 \in \mathcal{C}_1(\gamma_1,\gamma_2) \cap \mathcal{C}_2 (\gamma_1,\gamma)} \mathbb{P}_{\boldsymbol{X}_0^+} \left( \mathcal{T}_{E_1(\varepsilon)} \lesssim T_1 \right) \geq 1 - \eta,
\]
where \(T_1\) and \(\eta\) are given by 
\[
T_1 = \frac{ 1 - \left(\frac{(1-c_0) \gamma_1}{\varepsilon \sqrt{N}} \right)^{p-2} }{C_\delta (1-c_0)^{p-2} (1-c_0-\varepsilon^2) d_0 p \lambda_r^2 \gamma_2^{p-2}} \log(N) N^{p-2} +\frac{1}{C_\delta d_0 \varepsilon^p p \lambda_r^2} \log(N)^2 N^{\frac{p-2}{2}},
\]
and 
\[
\begin{split}
\eta &= K_1 \frac{\log(N)^2 N^{p-2}}{d_0} e^{-c_1 N} + K_2 \log(N) e^{-c_2 c_0^2 \log(N)/d_0} + K_3 \frac{d_0^2}{\log(N) N^{\frac{p-2}{2}}} e^{-c_3/d_0^2} .
\end{split}
\]
\end{lem}

Once the first spike (up to a permutation) has been recovered, subsequent spikes can be recovered sequentially, one after another.

\begin{lem} \label{lem: recovery subsequent spikes SGD}
Consider the step size \(\delta = C_\delta d_0 \log(N)^{-1} N^{- \frac{p-3}{2}} \) from Lemma~\ref{lem: recovery first spike SGD}, with \(d_0 = d_0(N)\) satisfying~\eqref{eq: sequence d_0}. Then, for every \(\varepsilon >0\) and every \(2 \le k \le r-1\), there exist \(c_0 \in (0,\frac{1}{2})\) and \(T_k = \Theta (d_0^{-1} \log(N)N^{p-2})\) such that, for \(N\) sufficiently large,
\[
\inf_{\boldsymbol{X} \in E_{k-1}(\varepsilon)} \mathbb{P}_{\boldsymbol{X}} \left( \mathcal{T}_{E_k(\varepsilon)} \lesssim T_k \right) \geq 1 - K \eta,
\]
where \(\eta\) is as in Lemma~\ref{lem: recovery first spike SGD}.
\end{lem}

The proof of Theorem~\ref{thm: strong recovery online p>2 nonasymptotic} follows by combining Lemmas~\ref{lem: recovery first spike SGD} and~\ref{lem: recovery subsequent spikes SGD} through an application of the strong Markov property.

\begin{proof} [\textbf{Proof of Theorem~\ref{thm: strong recovery online p>2 nonasymptotic}}]
For every \(1 \le k \le r\), denote the hitting times \(\tau_k \coloneqq \mathcal{T}_{E_k (\varepsilon)}\) and \(\tau_0=0\), and introduce the events
\[
A_k \coloneqq \left \{ \tau_k - \tau_{k-1} \lesssim T_k \right \},
\]
where \(T_k\) are given in Lemmas~\ref{lem: recovery first spike SGD} and~\ref{lem: recovery subsequent spikes SGD}. Our goal is to bound from below \(\mathbb{P}_{\boldsymbol{X}_0^+} \left ( \tau_r \lesssim T \right)\) with \(T = \sum_{k=1}^r T_k\), for any initialization \(\boldsymbol{X}_0 \in \mathcal{C}_1 (\gamma_1, \gamma_2) \cap \mathcal{C}_2 (\gamma_1, \gamma)\). By construction, the event \(\{\tau_r \lesssim T\}\) is equivalent to the joint occurrence of all \(A_k\), hence
\[
\mathbb{P}_{\boldsymbol{X}_0^+} \left ( \tau_r \lesssim T \right) = \mathbb{P}_{\boldsymbol{X}_0^+} \left ( \cap_{k=1}^r A_k \right ).
\]
Applying the strong Markov property at time \(\tau_1\), we obtain
\[
\begin{split}
\mathbb{P}_{\boldsymbol{X}_0^+} \left ( \cap_{k=1}^r A_k \right ) &= \mathbb{P}_{\boldsymbol{X}_0^+} \left ( A_1 \right ) \E \left [ \mathbb{P}_{\boldsymbol{X}_{\tau_1}} \left (\cap_{k=2}^r A_k \right) | A_1 \right ] \ge \mathbb{P}_{\boldsymbol{X}_0^+}  (A_1) \inf_{\boldsymbol{X} \in E_1 (\varepsilon)} \mathbb{P}_{\boldsymbol{X}} \left ( \cap_{k=2}^r A_k \right ),
\end{split}
\]
since conditioning on \(A_1\), \(\boldsymbol{X}_{\tau_1} \in E_1 (\varepsilon)\). Repeating this argument inductively for \(k=2, \ldots, r\) yields
\[
\mathbb{P}_{\boldsymbol{X}_0^+} \left ( \cap_{k=1}^r A_k \right ) \ge \mathbb{P}_{\boldsymbol{X}_0^+} \left ( A_1 \right) \prod_{k=2}^r \inf_{\boldsymbol{X} \in E_{k-1}(\varepsilon)} \mathbb{P}_{\boldsymbol{X}} \left (A_k  \right ).
\]
By Lemma~\ref{lem: recovery first spike SGD} we have, for the chosen \(T_1\), 
\[
\inf_{\boldsymbol{X}_0 \in \mathcal{C}_1 (\gamma_1, \gamma_2) \cap \mathcal{C}_2 (\gamma_1, \gamma)} \mathbb{P}_{\boldsymbol{X}_0^+} \left (A_1 \right) \ge 1 - \eta,
\]
where \(\eta\) is given in the statement of Lemma~\ref{lem: recovery first spike SGD}. By Lemma~\ref{lem: recovery subsequent spikes SGD}, we have for every \(k \ge 2\),
\[
\inf_{\boldsymbol{X} \in E_{k-1} (\varepsilon) } \mathbb{P}_{\boldsymbol{X}} \left (A_k \right) \ge 1 -  K \eta.
\]
Combining these inequalities gives 
\[
\inf_{\boldsymbol{X}_0 \in \mathcal{C}_1 (\gamma_1, \gamma_2) \cap \mathcal{C}_2 (\gamma_1, \gamma)} \mathbb{P}_{\boldsymbol{X}_0^+} \left ( \tau_r \lesssim \sum_{k=1}^r T_k \right) \ge (1-\eta) (1-K \eta)^{r-1} \ge 1 - (1 + (r-1)K) \eta,
\]
where the last inequality follows from \((1-x)^n \ge 1-nx\) for \(x \in [0,1]\). This completes the proof of Theorem~\ref{thm: strong recovery online p>2 nonasymptotic}.
\end{proof}

In the remainder of this subsection, we provide the proof of Lemma~\ref{lem: recovery first spike SGD}. Lemma~\ref{lem: recovery subsequent spikes SGD} can be proved by following the same strategy as in the proof of Lemma~\ref{lem: recovery first spike SGD}, with the dominant correlation given by \(m_{i_k^\ast j_k^\ast}\) instead of \(m_{i_1^\ast j_1^\ast}\). Since it is a straightforward adaptation, we omit the details here and focus on the proof of Lemma~\ref{lem: recovery first spike SGD}. We also refer the reader to the proof of~\cite[Lemma 5.4]{langevin}, which provides the analogous argument under Langevin dynamics.

\begin{proof}[\textbf{Proof of Lemma~\ref{lem: recovery first spike SGD}}]
By assumption, \(\boldsymbol{X}_0\) satisfies Condition \(1\) and Condition \(2\), and \(m_{ij}(\boldsymbol{X}_0) >0\) for every \(i,j \in [r]\). Condition \(1\) then implies that for every \(i,j \in [r]\), there exists \(\gamma_{ij} \in (\gamma_2,\gamma_1)\) such that \(m_{ij}(0) = \gamma_{ij} N^{-\frac{1}{2}}\) (here \(t=0\) corresponds to \(\boldsymbol{X}_0\)). Condition \(2\) gives that for every \((i,j) \neq (i_1^\ast, j_1^\ast)\),
\begin{equation} \label{eq: condition 2 SGD}
\lambda_{i_1^\ast} \lambda_{j_1^\ast} \gamma_{i_1^\ast j_1^\ast}^{p-2} = \max_{1 \le i,j \le r} \{\lambda_i \lambda_j \gamma_{ij}^{p-2}\}> \left ( 1 + \frac{\gamma}{\gamma_1} \right ) \lambda_i \lambda_j \gamma_{ij}^{p-2}.
\end{equation}
Let \(T_0 >0\) be a time horizon to be chosen later. Applying Proposition~\ref{prop: inequalities SGD} with the step size \(\delta\) given in~\eqref{eq: step size p>2 SGD} and the time horizon \(T_0\), we obtain for every \(t \le T_0\),
\[
\mathbb{P}_{\boldsymbol{X}_0^+} \left( \ell_{ij}(t) \leq m_{ij}(t) \leq u_{ij}(t) \right) \geq 1 - \eta (\delta, c_0, T_0, K_N),
\]
where \(\eta (\delta, c_0, T_0, K_N)\) is given in~\eqref{eq: eta}. The comparison functions \(\ell_{ij}\) and \(u_{ij}\) are given by
\[
\begin{split}
\ell_{ij}(t) & = (1-c_0) \frac{\gamma_{ij}}{\sqrt{N}} + \frac{\delta}{N} \sum_{\ell=1}^t \left (- \langle \boldsymbol{v}_i, (\nabla_{\textnormal{St}} \Phi (\boldsymbol{X}_{\ell-1}))_j\rangle  - h_i(\ell-1)\right )\\
u_{ij}(t) & = (1+c_0) \frac{\gamma_{ij}}{\sqrt{N}} + \frac{\delta}{N} \sum_{\ell=1}^t \left (- \langle \boldsymbol{v}_i, (\nabla_{\textnormal{St}} \Phi (\boldsymbol{X}_{\ell-1}))_j\rangle  + h_i(\ell-1)\right )\\
\end{split}
\]
where 
\begin{equation} \label{eq: correlation exact computation}
- \langle \boldsymbol{v}_i ,(\nabla_{\textnormal{St}}\Phi(\boldsymbol{X}_t))_j \rangle =  \sqrt{N} p \lambda_i \lambda_j m_{ij}^{p-1}(t) - \sqrt{N}\frac{p}{2}  \sum_{1 \le k, \ell \le r} \lambda_k m_{i \ell}(t) m_{kj} (t) m_{k\ell} (t) \left ( \lambda_j m_{k j}^{p-2} (t) + \lambda_\ell m_{k \ell}^{p-2}(t) \right),
\end{equation}
and
\begin{equation} \label{eq: h_i exact computation}
h_i(t) = \frac{\delta \alpha_2}{2} \sum_{k=1}^r |m_{ik}(t)| + \frac{\delta^2 \alpha_2}{2N} \sum_{k=1}^r \left | \langle \boldsymbol{v}_i ,(\nabla_{\textnormal{St}}\Phi(\boldsymbol{X}_t))_k \rangle \right |+ \frac{\delta^2 \alpha_2 r K_N}{2N}.
\end{equation}

We introduce the microscopic thresholds \(\varepsilon_N^+ = \gamma_+ / \sqrt{N}\) and \(\varepsilon_N^- = \gamma_{-} / \sqrt{N}\), for constants \(\gamma_+ > \gamma_1\) and \(\gamma_- < \gamma_2\). Our proof proceeds in three steps, via the intermediate events
\[
E_0 = \left \{ \boldsymbol{X} \colon \ m_{i_1^\ast j_1^\ast}(\boldsymbol{X}) \ge \varepsilon_N^+ \enspace \textnormal{and} \enspace
m_{i_1^\ast j_1^\ast}(\boldsymbol{X}) \ge 2\max_{(i,j)\ne(i_1^\ast,j_1^\ast)} m_{ij}(\boldsymbol{X}) \right \},
\]
and
\[
\begin{split}
E (\varepsilon)  = \Big \{\boldsymbol{X} &\colon m_{i_1^\ast j_1^\ast}(\boldsymbol{X}) \geq \varepsilon \enspace \textnormal{and} \\
& \enspace \varepsilon_N^- \lesssim m_{ij}(\boldsymbol{X}) \lesssim \varepsilon_N^+, m_{i j_1^\ast} (\boldsymbol{X}) \lesssim \varepsilon_N^+, m_{i_1^\ast j} (\boldsymbol{X}) \lesssim \varepsilon_N^+ \enspace \textnormal{for all} \enspace  i \neq i_1^\ast, j\neq j_1^\ast \Big\}.
\end{split}
\]
Let \(\mathcal{T}_{E_0}\) and \(\mathcal{T}_{E(\varepsilon)}\) denote the corresponding hitting times. We claim that 
\begin{equation} \label{eq: step 1 p>2}
\inf_{\boldsymbol{X}_0 \in \mathcal{C}_1(\gamma_1,\gamma_2) \cap \mathcal{C}_2 (\gamma_1,\gamma)} \mathbb{P}_{\boldsymbol{X}_0^+} \left ( \mathcal{T}_{E_0} \leq \frac{1 - \left ( \frac{(1-c_0) \gamma_{i_1^\ast j_1^\ast}}{\gamma_+}\right)^{p-2}}{C_\delta d_0 (1-c_0)^{p-1}  p \lambda_{i_1^\ast}  \lambda_{j_1^\ast} \gamma_{i_1^\ast j_1^\ast}^{p-2}} \log(N) N^{p-2} \right ) \geq 1 - \tilde{\eta}_1,
\end{equation}
\begin{equation} \label{eq: step 2 p>2}
\inf_{\boldsymbol{X} \in E_0} \mathbb{P}_{\boldsymbol{X}} \left ( \mathcal{T}_{E(\varepsilon)} \leq \frac{1 - \left ( \frac{(1-c_0) \gamma_+}{\varepsilon \sqrt{N}}\right)^{p-2}}{C_\delta d_0 (1-c_0)^{p-2} (1-c_0-\varepsilon^2) p \lambda_{i_1^\ast} \lambda_{j_1^\ast} \gamma_+^{p-2}} \log(N) N^{p-2} \right ) \geq 1 - \log(N) \tilde{\eta}_1,
\end{equation}
where \(\tilde{\eta}_1 = K_1 \log(N) d_0^{-1} N^{p-2} e^{-c_1N} + K_2 e^{-c_2 c_0^2 \log(N)/d_0} + K_3 d_0^2 \log(N)^{-2} N^{- \frac{p-2}{2}} e^{-c_3/d_0^2}\), and
\begin{equation} \label{eq: step 3 p>2}
\inf_{\boldsymbol{X} \in E(\varepsilon)} \mathbb{P}_{\boldsymbol{X}} \left ( \mathcal{T}_{E_1(\varepsilon)} \lesssim \frac{\log(N)^2 N^{\frac{p-2}{2}}}{C_\delta d_0 \varepsilon^p p \lambda_r^2} \right ) \ge 1 - \tilde{\eta}_2,
\end{equation}
with \(\tilde{\eta}_2 = K_4 \frac{\log(N)^2N^{p-2}}{\varepsilon^pd_0}e^{-c_4 N} + K_5 e^{-c_5 c_0^2 \varepsilon^{p+2} N^{p/2}/d_0} + K_6 d_0 \log(N)^{-1} N^{- \frac{p-1}{2}} e^{- c_6 \log(N)^2 N^{p-1}/d_0^2}\). Combining~\eqref{eq: step 1 p>2},~\eqref{eq: step 2 p>2}, and~\eqref{eq: step 3 p>2} via the strong Markov property at the stopping times \(\mathcal{T}_{E_0}\) and \(\mathcal{T}_{E(\varepsilon)}\) (cf.\ the proof of Theorem~\ref{thm: strong recovery online p>2 nonasymptotic}) yields the desired result. Note that the first term in \(\tilde{\eta}_2\) is absorbed by the first term in \(\log(N) \tilde{\eta}_1\), while the second and third terms in \(\tilde{\eta}_2\) are dominated by the second term of \(\log(N) \tilde{\eta}_1\).

We now prove the three claims. Throughout, for every \(\epsilon >0\) and \(i,j \in [r]\), we define the hitting times \(\mathcal{T}_\epsilon^{(ij,+)}\) and \(\mathcal{T}_\epsilon^{(ij,-)}\) by
\begin{equation} \label{eq: hitting times}
\begin{split}
\mathcal{T}_\epsilon^{(ij,+)} & = \inf \{t \in \N_0 \colon m_{ij}(t) \ge \epsilon \},\\
\mathcal{T}_\epsilon^{(ij,-)} & = \inf \{t \in \N_0 \colon m_{ij}(t) \le \epsilon \}.
\end{split}
\end{equation}

\begin{proof}[Proof of~\eqref{eq: step 1 p>2}]
According to~\eqref{eq: correlation exact computation}, for every \(t\leq \min_{1 \leq i,j \leq r} \mathcal{T}_{\varepsilon_N^+}^{(ij,+)} \wedge \min_{1 \leq i,j \leq r} \mathcal{T}_{\varepsilon_N^-}^{(ij,-)}\) and every \(i,j \in [r]\),
\[
\begin{split}
- \langle \boldsymbol{v}_i ,(\nabla_{\textnormal{St}}\Phi(\boldsymbol{X}_t))_j \rangle  & \le  \sqrt{N} p \lambda_i \lambda_j m_{ij}^{p-1}(t) - \sqrt{N} p \lambda_r^2 r^2 \left(\frac{\gamma_{-}}{\sqrt{N}} \right)^{p+1}, \\
- \langle \boldsymbol{v}_i ,(\nabla_{\textnormal{St}}\Phi(\boldsymbol{X}_t))_j \rangle & \geq \sqrt{N} p\lambda_i\lambda_j m_{ij}^{p-1}(t) - \sqrt{N} p \lambda_1^2 r^2 \left(\frac{\gamma_+}{\sqrt{N}}\right)^{p+1},
\end{split}
\]
and 
\[
h_i(t) \le \frac{\delta \alpha_2}{2} \sum_{k=1}^r |m_{ik}(t)| + \frac{\delta^2 \alpha_2}{2N} \left ( \sqrt{N} p r \lambda_i \lambda_1 \left ( \frac{\gamma_+}{\sqrt{N}}\right)^{p-1} + \sqrt{N} p r^3 \lambda_1^2 \left ( \frac{\gamma_+}{\sqrt{N}}\right)^{p+1} \right) + \frac{\delta^2 r \alpha_2K_N}{2N}.
\]
This implies the following estimates for \(u_{ij}(t)\) and \(\ell_{ij}(t)\):
\begin{equation} \label{eq: upper bound u_ij p>2 SGD}
\begin{split}
u_{ij}(t) & \leq (1+c_0) \frac{\gamma_{ij}}{\sqrt{N}} + \frac{\delta}{N} \sum_{\ell=1}^t \Big \{ \sqrt{N} p\lambda_i\lambda_j m_{ij}^{p-1}(\ell-1) - \sqrt{N} p \lambda_r^2 r^2 \left(\frac{\gamma_{-}}{\sqrt{N}}\right)^{p+1}  \\
& \enspace + \frac{\delta \alpha_2}{2} \sum_{k=1}^r |m_{ik}(\ell-1)| + \frac{\delta^2 \alpha_2}{2N} \left ( \sqrt{N} p r \lambda_i \lambda_1 \left ( \frac{\gamma_+}{\sqrt{N}}\right)^{p-1} + \sqrt{N} p r^3 \lambda_1^2 \left ( \frac{\gamma_+}{\sqrt{N}}\right)^{p+1} \right) + \frac{\delta^2 r \alpha_2 K_N}{2N} \Big \},
\end{split}
\end{equation}
and
\begin{equation} \label{eq: lower bound l_ij p>2 SGD}
\begin{split}
\ell_{ij}(t) & \geq (1-c_0) \frac{\gamma_{ij}}{\sqrt{N}} + \frac{\delta}{N} \sum_{\ell=1}^t \Big \{ \sqrt{N} p\lambda_i\lambda_j m_{ij}^{p-1}(\ell-1) - \sqrt{N} p \lambda_1^2 r^2 \left(\frac{\gamma_+}{\sqrt{N}}\right)^{p+1}  \\
& \enspace - \frac{\delta \alpha_2}{2} \sum_{k=1}^r |m_{ik}(\ell-1)| - \frac{\delta^2 \alpha_2}{2N} \left ( \sqrt{N} p r\lambda_i \lambda_1 \left ( \frac{\gamma_+}{\sqrt{N}}\right)^{p-1} + \sqrt{N} p r^3 \lambda_1^2 \left ( \frac{\gamma_+}{\sqrt{N}}\right)^{p+1} \right) - \frac{\delta^2 r \alpha_2 K_N}{2N} \Big \},
\end{split}
\end{equation}
for every \(t \leq \min_{1 \leq i,j \leq r} \mathcal{T}_{\varepsilon_N^+}^{(ij,+)} \wedge \min_{1 \leq i,j \leq r} \mathcal{T}_{\varepsilon_N^-}^{(ij,-)} \wedge T_0\), with \(\mathbb{P}_{\boldsymbol{X}_0^+}\)-probability at least \(1 -\eta(\delta, c_0, T_0,K_N)\). We now consider~\eqref{eq: lower bound l_ij p>2 SGD} and claim that  
\begin{equation} \label{eq: claim l_ij p>2 SGD}
\ell_{ij}(t) \geq (1-c_0) \frac{\gamma_{ij}}{\sqrt{N}} + \frac{\delta}{\sqrt{N}} (1-c_0) p \lambda_i \lambda_j \sum_{\ell=1}^t m_{ij}^{p-1}(\ell-1),
\end{equation}
for every \(t \leq \min_{1 \leq i,j \leq r} \mathcal{T}_{\varepsilon^{+}_N}^{(ij,+)} \wedge \min_{1 \leq i,j \leq r}\mathcal{T}_{\varepsilon^{-}_N}^{(ij,-)} \wedge T_0\). To see~\eqref{eq: claim l_ij p>2 SGD}, we first note that, over the considered time interval, 
\[
\ell_{ij}(t) \geq (1-c_0) \frac{\gamma_{ij}}{\sqrt{N}} + \frac{\delta}{N} \sum_{\ell=1}^t \left (\sqrt{N} p \lambda_i \lambda_j m_{ij}^{p-1}(\ell-1) - 3 \frac{\delta \alpha_2}{2} \sum_{k=1}^r \vert m_{ik}(\ell-1) \vert - \frac{\delta^2 r\alpha_2  K_N}{2N} \right ),
\]
where we used the fact that \(\delta = C_\delta d_0 N^{-\frac{p-3}{2}} \log(N)^{-1}\) by assumption~\eqref{eq: step size p>2 SGD}. Claim~\eqref{eq: claim l_ij p>2 SGD} then follows by the following two sufficient conditions. First, 
\[
\frac{c_0}{2} \sqrt{N} p \lambda_i \lambda_j m_{ij}^{p-1}(t) \ge \frac{c_0}{2} p \lambda_r^2 \frac{\gamma_-^{p-1}}{N^{\frac{p-2}{2}}} \ge \frac{3 \delta r \alpha_2}{2} \frac{\gamma_+}{\sqrt{N}} \ge \frac{3 \delta \alpha_2}{2} \sum_{k=1}^r |m_{ik}(t)|,
\]
for every \(t \leq \min_{1 \leq i,j \leq r} \mathcal{T}_{\varepsilon^{+}_N}^{(ij,+)} \wedge \min_{1 \leq i,j \leq r}\mathcal{T}_{\varepsilon^{-}_N}^{(ij,-)}\), provided \(d_0 \le \frac{c_0 p \lambda_r^2 \gamma_-^{p-1}}{3 \alpha_2 C_\delta r \gamma_+} \log(N)\) which certainly holds since \(d_0 \ll c_0 \log(N)\). Second, 
\[
\frac{c_0}{2} \sqrt{N} p \lambda_i \lambda_j m_{ij}^{p-1}(t) \ge \frac{c_0}{2} p \lambda_r^2 \frac{\gamma_-^{p-1}}{N^{\frac{p-2}{2}}} \ge \frac{\delta^2 r \alpha_2 K_N}{2N} ,
\]
for \(t \leq \min_{1 \leq i,j \leq r} \mathcal{T}_{\varepsilon^{+}_N}^{(ij,+)} \wedge \min_{1 \leq i,j \leq r}\mathcal{T}_{\varepsilon^{-}_N}^{(ij,-)}\), provided \(d_0^2 K_N \le \frac{c_0 p \lambda_r^2 \gamma_-^{p-1}}{C_\delta^2 \alpha_2 r} \log(N)^2 N^{\frac{p-2}{2}}\). We therefore can choose any value \(K_N  = o\left(N^{\frac{p-2}{2}}\right)\). Since the convergence speed from Proposition~\ref{prop: inequalities SGD} given in~\eqref{eq: eta} is primarily determined by the ratio \(\frac{c_0^2}{d_0}\), we can simply pick \(K_N  = \frac{1}{d_0}\) where \(d_0^{-1}\) grows at most polynomially in \(N\). This proves claim~\eqref{eq: claim l_ij p>2 SGD}. A similar reasoning for~\eqref{eq: upper bound u_ij p>2 SGD} yields 
\begin{equation} \label{eq: claim u_ij p>2 SGD}
u_{ij}(t) \leq (1+c_0) \frac{\gamma_{ij}}{\sqrt{N}}  + \frac{\delta}{\sqrt{N}}  (1+c_0) p \lambda_i \lambda_j \sum_{\ell=1}^t m_{ij}^{p-1}(\ell-1),
\end{equation}
on the same time interval. Since both comparison functions are increasing on the stopping window, it follows that \(\min_{1 \le i,j \le r} \mathcal{T}_{\varepsilon_N^+}^{(ij,+)} \le \min_{1 \le i,j \le r} \mathcal{T}_{\varepsilon_N^-}^{(ij,-)}\). Applying item (b) of Lemma~\ref{lem: discrete growth} to~\eqref{eq: claim l_ij p>2 SGD} and~\eqref{eq: claim u_ij p>2 SGD}, we obtain 
\[
\frac{(1-c_0)\gamma_{ij}}{\sqrt{N}\left(1-\delta(1-c_0)^{p-1}p\lambda_i\lambda_j\gamma_{ij}^{p-2}N^{-\frac{p-1}{2}}t\right)^{\frac{1}{p-2}}} \leq m_{ij}(t) \leq \frac{(1+c_0)\gamma_{ij}}{\sqrt{N}\left(1-\delta(1+c_0)^{p-1}p\lambda_i\lambda_j\gamma_{ij}^{p-2}N^{-\frac{p-1}{2}}t\right)^{\frac{1}{p-2}}},
\]
for all \(t \leq \min_{1 \leq i,j \leq r} \mathcal{T}_{\varepsilon^{+}_N}^{(ij,+)} \wedge T_0\). From these inequalities we can deduce explicit bounds on the hitting time \(\mathcal{T}_\varepsilon ^{(ij,+)}\) for every \(\varepsilon \leq \varepsilon_N^+\):
\[
T_{\ell,\varepsilon}^{(ij)} \leq \mathcal{T}_\varepsilon ^{(ij,+)} \leq T_{u,\varepsilon}^{(ij)} ,
\]
where we define
\[
\begin{split}
T_{u,\varepsilon}^{(ij)} &= \frac{1-\left(\frac{(1-c_0)\gamma_{ij}}{\varepsilon\sqrt{N}}\right)^{p-2}}{\delta(1-c_0)^{p-1}p\lambda_i\lambda_j\gamma_{ij}^{p-2}}N^{\frac{p-1}{2}} ,\\
T_{\ell,\varepsilon}^{(ij)} &= \frac{1-\left(\frac{(1+c_0)\gamma_{ij}}{\varepsilon\sqrt{N}}\right)^{p-2}}{\delta(1+c_0)^{p-1}p\lambda_i\lambda_j\gamma_{ij}^{p-2}}N^{\frac{p-1}{2}}.
\end{split}
\]

Let \(c_0 < \frac{\left( 1 + \frac{\gamma}{\gamma_1}\right)^{\frac{1}{p-1}} -1}{\left( 1 + \frac{\gamma}{\gamma_1}\right)^{\frac{1}{p-1}}+1}\) and \(\gamma_+ > \gamma_1\) be a constant of order \(1\) satisfying
\[
\gamma_+^{p-2} \ge (1+c_0)^{p-2} \gamma_1^{p-2} + \frac{(1+c_0)^{p-2} \gamma_1^{p-2} - (1-c_0)^{p-2} \gamma_2^{p-2}}{\frac{(1-c_0)^{p-1}}{(1+c_0)^{p-1}} \left( 1 + \frac{\gamma}{\gamma_1}\right) -1}.
\]
Then, we see that \(T_{u,\varepsilon_N^+}^{(i_1^\ast j_1^\ast)} \wedge T_0 \leq \min_{(i,j) \neq (i_1^\ast,j_1^\ast)} T_{\ell,\varepsilon_N^+}^{(ij)}\). That is, the correlation \(m_{i_1^\ast j_1^\ast}\) exceeds the microscopic threshold \(\varepsilon_N^+\) first, i.e., \(\min_{1 \le i,j \le r} \mathcal{T}_{\varepsilon_N^+}^{(ij,+)} = \mathcal{T}_{\varepsilon_N^+}^{(i_1^\ast j_1^\ast,+)}\). We can choose \(\gamma_+\) such that \(m_{ij}( \mathcal{T}_{\varepsilon_N^+}^{(i_1^\ast j_1^\ast,+)}) \le \frac{1}{2} m_{i_1^\ast j_1^\ast}( \mathcal{T}_{\varepsilon_N^+}^{(i_1^\ast j_1^\ast,+)} ) = \frac{1}{2} \varepsilon_N^+\) for every \((i,j) \neq (i_1^\ast, j_1^\ast)\). This shows that \(\mathcal{T}_{E_0} = \mathcal{T}_{\varepsilon_N^+}^{(i_1^\ast j_1^\ast, +)}\) and 
\[
\mathcal{T}_{\varepsilon_N^+}^{(i_1^\ast j_1^\ast, +)} \le T_{u, \varepsilon_N^+}^{(i_1^\ast j_1^\ast)} \le \frac{1-\left(\frac{(1-c_0)\gamma_{i_1^\ast j_1^\ast}}{\gamma_+}\right)^{p-2}}{C_\delta d_0 (1-c_0)^{p-1} p  \lambda_{i_1^\ast}  \lambda_{j_1^\ast} \gamma_{i_1^\ast j_1^\ast}^{p-2}} \log(N) N^{p-2},
\]
with \(\mathbb{P}_{\boldsymbol{X}_0^+}\)-probability at least \(1 - \tilde{\eta}\), where we substituted the prescribed value~\eqref{eq: step size p>2 SGD} for \(\delta\), as well as \(K_N\) and \(T_0 =  T_{\ell, \varepsilon_N^+}^{(i_1^\ast j_1^\ast)}\), into \(\eta(\delta, c_0, T_0,K_N)\).
\end{proof}

\begin{proof}[Proof of~\eqref{eq: step 2 p>2}] 
Assume that \(\boldsymbol{X}_0 \in E_0\). We now study the evolution of the correlations over the time interval 
\[
I = [0, \mathcal{T}_\varepsilon^{(i_1^\ast j_1^\ast, +)} \wedge \mathcal{T}_{\varepsilon_N^{-}}^{(i_1^\ast j_1^\ast,-)} \wedge \min_{(i,j) \neq (i_1^\ast, j_1^\ast)} \mathcal{T}_{\varepsilon_N^+}^{(ij,+)} \wedge \min_{i \neq i_1^\ast, j \neq j_1^\ast} \mathcal{T}_{\varepsilon_N^{-}}^{(ij,-)} \wedge \mathcal{T}_{\textnormal{bad}} \wedge T_0], 
\]
where \(T_0 > 0\) is a time horizon to be fixed later, and
\[
\mathcal{T}_{\textnormal{bad}} = \inf \left \{ t \in \N_0 \colon \max_{1 \leq i,j \leq r \atop (i,j) \neq (i_1^\ast, j_1^\ast)} m_{ij}(t) > \frac{1}{2} m_{i_1^\ast j_1^\ast }(t) \right\}.
\]
Since \(\boldsymbol X_0\in E_0\) implies that \(m_{ij}(0)\le \frac{1}{2} m_{i_1^\ast j_1^\ast}(0)\) for all \((i,j)\neq(i_1^\ast,j_1^\ast)\), it follows that \(\mathcal{T}_{\textnormal{bad}}>0\). By Proposition~\ref{prop: inequalities SGD}, for all \(t \in I\), we have
\[
\begin{split}
m_{ij}(t) & \geq (1 - c_0) m_{ij}(0) + \frac{\delta}{N} \sum_{\ell=1}^t  \left ( - \langle \boldsymbol{v}_i, \left (\nabla_{\textnormal{St}}\Phi(\boldsymbol{X}_{\ell-1}) \right )_j \rangle - h_i(\ell-1) \right ) ,\\
m_{ij}(t) &\leq (1 +c_0) m_{ij}(0) + \frac{\delta}{N} \sum_{\ell=1}^t  \left ( - \langle \boldsymbol{v}_i, \left (\nabla_{\textnormal{St}}\Phi(\boldsymbol{X}_{\ell-1}) \right )_j \rangle +  h_i(\ell-1)  \right ),
\end{split}
\] 
with probability at least \(1-\eta (\delta, c_0, T_0, K_N)\), where \(\eta\) is defined in~\eqref{eq: eta}. The constant \(c_0 \in (0, \frac{1}{2})\) is the same as in the previous step and is independent of \(N\), and chosen sufficiently small to satisfy all preceding constraints. Recall that \(- \langle \boldsymbol{v}_i, \left (\nabla_{\textnormal{St}}\Phi(\boldsymbol{X}_t) \right )_j \rangle\) and \(h_i(t)\) are given in~\eqref{eq: correlation exact computation} and~\eqref{eq: h_i exact computation}, respectively. On the interval \(I\), and for sufficiently large \(N\), we have
\[
h_i(t) \le \delta r \alpha_2 \max_{1 \le k \le r} |m_{ik}(t)| \le 2 \delta r \alpha_2 m_{i_1^\ast j_1^\ast} (t).
\]
where the second inequality holds since \(t \le \mathcal{T}_{\text{bad}}\).\\

We now focus on the growth of the dominant correlation \(m_{i_1^\ast j_1^\ast}\). For the upper bound, we have
\[
-\langle \boldsymbol{v}_{i_1^\ast}, (\nabla_{\text{St}} \Phi(\boldsymbol{X}_t))_{j_1^\ast} \rangle  \le \sqrt{N} p \lambda_{i_1^\ast} \lambda_{j_1^\ast} m_{i_1^\ast j_1^\ast}^{p-1}(t).
\]
For the lower bound, 
\[
\begin{split}
-\langle \boldsymbol{v}_{i_1^\ast}, (\nabla_{\text{St}} \Phi(\boldsymbol{X}_t))_{j_1^\ast} \rangle & \ge  \sqrt{N} p \left [ \lambda_{i_1^\ast} \lambda_{j_1^\ast} m_{i_1^\ast j_1^\ast}^{p-1}(t) (1 - m_{i_1^\ast j_1^\ast}^2(t)) - \sum_{(k,\ell) \neq (i_1^\ast, j_1^\ast)} \lambda_k \lambda_{j_1^\ast} m_{k j_1^\ast}^{p-1}(t)  m_{i_1^\ast \ell} (t) m_{k \ell}(t) \right]\\
& \ge \sqrt{N} p \lambda_{i_1^\ast} \lambda_{j_1^\ast} m_{i_1^\ast j_1^\ast}^{p-1}(t) \left (1 - m_{i_1^\ast j_1^\ast}^2(t) - C (r^2-1) \frac{\gamma_+^2}{N}\right) \\
&\ge \left (1 - \frac{c_0}{2} - \varepsilon^2 \right)\sqrt{N} p \lambda_{i_1^\ast} \lambda_{j_1^\ast} m_{i_1^\ast j_1^\ast}^{p-1}(t),
\end{split}
\]
where the last step uses \(N \ge 2 C(r^2-1) \gamma_+^2 c_0^{-1}\) for some constant \(C >0\). Furthermore, on \(I\), we have 
\[
h_{i_1^\ast}(t) \le 2 \delta r \alpha_2  m_{i_1^\ast j_1^\ast}(t) \le \frac{2C_\delta d_0 \alpha_2 r}{\gamma_-^{p-2} \log(N)} \sqrt{N} m_{i_1^\ast j_1^\ast}^{p-1}(t) \le \frac{c_0}{2} \sqrt{N} p \lambda_{i_1^\ast} \lambda_{j_1^\ast} m_{i_1^\ast j_1^\ast}^{p-1}(t),
\]
where the second inequality uses \(m_{i_1^\ast j_1^\ast}(t) \ge \gamma_- N^{-1/2}\) for all \(t \in I\) and the last step uses \(c_0\log(N) \gg d_0\). Combining these bounds yields
\[
\begin{split}
m_{i_1^\ast j_1^\ast }(t)  & \ge (1-c_0)\varepsilon_N^{+}+(1-c_0-\varepsilon^2)\frac{\delta}{\sqrt{N}}p\lambda_{i_1^\ast }\lambda_{j_1^\ast } \sum_{\ell=1}^t m_{i_1^\ast j_1^\ast }^{p-1}(\ell-1)  ,\\
m_{i_1^\ast j_1^\ast }(t)  & \le (1 + c_0)\varepsilon_N^{+} + (1 + c_0)\frac{\delta}{\sqrt{N}}p\lambda_{i_1^\ast }\lambda_{j_1^\ast } \sum_{\ell=1}^t m_{i_1^\ast j_1^\ast }^{p-1}(\ell-1)  .
\end{split}
\]
Applying Lemma~\ref{lem: discrete growth} gives
\begin{equation} \label{eq: lower bounding star}
m_{i_1^\ast j_1^\ast }(t) \ge (1-c_0) \varepsilon_N^+ \left ( 1 -  \frac{\delta}{\sqrt{N}} (1-c_0-\varepsilon^2) (1-c_0)^{p-2} p \lambda_{i_1^\ast} \lambda_{j_1^\ast} (\varepsilon_N^+)^{p-2} t \right)^{- \frac{1}{p-2}},
\end{equation}
and 
\begin{equation} \label{eq: upper bounding star}
m_{i_1^\ast j_1^\ast }(t) \le (1+c_0) \varepsilon_N^+ \left ( 1 -  \frac{\delta}{\sqrt{N}} (1+c_0)^{p-1} p \lambda_{i_1^\ast} \lambda_{j_1^\ast} (\varepsilon_N^+)^{p-2} t \right)^{- \frac{1}{p-2}},
\end{equation}
The right-hand side of~\eqref{eq: lower bounding star} is strictly increasing in \(t\). Therefore, 
\[
\mathcal{T}_\varepsilon^{(i_1^\ast j_1^\ast, +)}  \wedge \min_{(i,j) \neq (i_1^\ast, j_1^\ast)} \mathcal{T}_{\varepsilon_N^+}^{(ij,+)} \wedge \min_{i \neq i_1^\ast, j \neq j_1^\ast} \mathcal{T}_{\varepsilon_N^{-}}^{(ij,-)} \wedge \mathcal{T}_{\textnormal{bad}} \wedge T_0 \le \mathcal{T}_{\varepsilon_N^{-}}^{(i_1^\ast j_1^\ast,-)}.\\
\]

We next focus on bounding all non-dominant correlations \(m_{ij}(t)\) for every \((i,j) \neq (i_1^\ast ,j_1^\ast )\) from above. For \((i,j)\neq(i_1^\ast,j_1^\ast)\) we have
\begin{equation} \label{eq: upper bounding not star}
\begin{split}
m_{ij}(t) & \leq  (1+c_0) \frac{\varepsilon_N^+}{2} + \frac{\delta}{N} \sum_{\ell=1}^t \left ( \sqrt{N}p \lambda_i \lambda_j m_{ij}^{p-1} (\ell-1) + 2 \delta  r \alpha_2 m_{i_1^\ast j_1^\ast }(\ell-1) \right ),
\end{split}
\end{equation}
since from \(E_0\) we know that \(m_{ij}(0) \le \frac{1}{2} m_{i_1^\ast j_1^\ast}(0) \le \frac{\varepsilon_N^+}{2}\). We claim that the term \(\frac{2 \delta^2 r \alpha_2}{N} \sum_{\ell = 1}^t m_{i_1^\ast j_1^\ast }(\ell-1)\) on the right-hand side of~\eqref{eq: upper bounding not star} is sufficiently small compared to the drift term \(\frac{\delta p \lambda_i \lambda_j}{\sqrt{N}} \sum_{\ell=1}^t m_{ij}^{p-1}(\ell-1)\). To control the small additive term, we partition \([0,\mathcal{T}_\varepsilon^{(i_1^\ast j_1^\ast,+)}]\) into subintervals \([\mathcal{T}_{\varepsilon_k}^{(i_1^\ast j_1^\ast,+)}, \mathcal{T}_{\varepsilon_{k+1}}^{(i_1^\ast j_1^\ast,+)}]\) for \(1 \le k \le n_\varepsilon\) and \(\varepsilon_k \coloneqq \frac{\gamma_+^k}{\sqrt{N}}\). We notice that \(\mathcal{T}_{\varepsilon_1}^{(i_1^\ast j_1^\ast, +)}=0\) since at \(t=0\), \(m_{i_1^\ast j_1^\ast}(0) \ge \varepsilon_N^+\). By definition, \(\varepsilon_{n_\varepsilon+1} = \varepsilon\), so that 
\[
n_\varepsilon = \left \lceil \frac{\log(\varepsilon)+\frac{1}{2} \log(N)}{\log(\gamma_+)} -1 \right \rceil. 
\]
Using this partition, we have
\[
\sum_{\ell=1}^{\mathcal{T}_\varepsilon^{(i_1^\ast j_1^\ast, +)}} m_{i_1^\ast j_1^\ast} (\ell-1) = \sum_{k=1}^{n_\varepsilon} \sum_{\ell= \mathcal{T}_{\varepsilon_k}^{(i_1^\ast j_1^\ast,+)}}^{\mathcal{T}_{\varepsilon_{k+1}}^{(i_1^\ast j_1^\ast,+)}} m_{i_1^\ast j_1^\ast} (\ell-1) \le n_\varepsilon \max_{1 \le k \le n_\varepsilon} \left\{\varepsilon_{k+1} \left ( \mathcal{T}_{\varepsilon_{k+1}}^{(i_1^\ast j_1^\ast,+)} - \mathcal{T}_{\varepsilon_k}^{(i_1^\ast j_1^\ast,+)} \right ) \right\}.
\]
To bound the time difference in the previous display, we use~\eqref{eq: lower bounding star}. Indeed, since \(m_{i_1^\ast j_1^\ast}\) is lower-bounded by a strictly increasing function, we can upper-bound the first time interval \(\mathcal{T}_{\varepsilon_2}^{(i_1^\ast j_1^\ast,+)} - \mathcal{T}_{\varepsilon_1}^{(i_1^\ast j_1^\ast,+)} = \mathcal{T}_{\varepsilon_2}^{(i_1^\ast j_1^\ast,+)}\) by solving 
\[
(1-c_0) \varepsilon_N^+ \left(1-\frac{\delta}{\sqrt{N}} (1-c_0)^{p-2} (1-c_0-\varepsilon_2^2) p \lambda_{i_1^\ast } \lambda_{j_1^\ast} (\varepsilon_N^+)^{p-2} t \right)^{-\frac{1}{p-2}} = \varepsilon_2,
\]
yielding
\[
\mathcal{T}_{\varepsilon_2}^{(i_1^\ast j_1^\ast,+)} \le \frac{1 - \frac{(1-c_0)^{p-2}}{\gamma_+^{p-2}}}{\frac{\delta}{\sqrt{N}} (1-c_0)^{p-2} (1-c_0 - \varepsilon_2^2) p \lambda_{i_1^\ast} \lambda_{j_1^\ast} (\varepsilon_N^+)^{p-2}},
\]
where \(1 - \frac{(1-c_0)^{p-2}}{\gamma_+^{p-2}} > 0\) by assumption. Using the strong Markov property, we can apply Proposition~\ref{prop: inequalities SGD} for each time interval \([\mathcal{T}_{\varepsilon_k}^{(i_1^\ast j_1^\ast,+)}, \mathcal{T}_{\varepsilon_{k+1}}^{(i_1^\ast j_1^\ast,+)}]\) and the comparison inequality~\eqref{eq: lower bounding star} holds with \(\varepsilon_N^+\) replaced by \(\varepsilon_k\). Hence, for every \(k\), we have
\begin{equation} \label{eq:delta_t}
\mathcal{T}_{\varepsilon_{k+1}}^{(i_1^\ast j_1^\ast,+)}-\mathcal{T}_{\varepsilon_k}^{(i_1^\ast j_1^\ast,+)} \leq \frac{1 - \frac{(1-c_0)^{p-2}}{\gamma_+^{p-2}}}{\frac{\delta}{\sqrt{N}} (1-c_0)^{p-2} (1-c_0 - \varepsilon_{k+1}^2) p \lambda_{i_1^\ast} \lambda_{j_1^\ast} \varepsilon_k^{p-2}}.
\end{equation}
As a result, we obtain 
\[
\frac{2 \delta^2 r \alpha_2}{N} \sum_{\ell =1}^{\mathcal{T}_\varepsilon^{(i_1^\ast j_1^\ast ,+)}} m_{i_1^\ast j_1^\ast }(\ell-1) \leq \frac{2 \delta^2 r \alpha_2}{N} n_\varepsilon \max_{1 \leq k \leq n_\varepsilon} \left \{ \varepsilon_{k+1} \left(\mathcal{T}_{\varepsilon_{k+1}}^{(i_1^\ast j_1^\ast,+)}-\mathcal{T}_{\varepsilon_k}^{(i_1^\ast j_1^\ast,+)}\right ) \right \} \lesssim  \frac{\gamma_+ d_0}{2\sqrt{N}},
\]
where \(\lesssim\) hides constant that may depend on \(p, r, \sigma, \{\lambda_i\}_{i=1}^r\). Since \(d_0 \ll \log(N)\) and \(\varepsilon_N^+=\gamma_+ N^{-1/2}\), this correction term is lower order compared to the initial term \(\varepsilon_N^+\) and can therefore be absorbed into it. That is,
\[
m_{ij}(t) \leq (1+c_0+d_0) \frac{\varepsilon_N^+}{2} + \frac{\delta}{\sqrt{N}} p  \lambda_i \lambda_j\sum_{\ell =1}^t m_{ij}^{p-1}(\ell-1),
\]
and by Lemma~\ref{lem: discrete growth}, it follows 
\[
m_{ij}(t) \le (1+c_0+d_0) \frac{\gamma_+}{2\sqrt{N}} \left ( 1 -  \delta p \lambda_i \lambda_j  2^{-(p-2)}(1+c_0+d_0)^{p-2} \gamma_+^{p-2} N^{-\frac{p-1}{2}} t\right )^{-\frac{1}{p-2}},
\]
for every \( t \leq \mathcal{T}_\varepsilon^{(i_1^\ast j_1^\ast, +)} \wedge \min_{i \neq i_1^\ast, j \neq j_1^\ast} \mathcal{T}_{\varepsilon_N^{-}}^{(ij,-)} \wedge \mathcal{T}_{\textnormal{bad}} \wedge T_0\). Finally, in a similar way to~\eqref{eq:delta_t}, from~\eqref{eq: lower bounding star} we have 
\begin{equation} \label{eq:boundtime}
\mathcal{T}_\varepsilon^{(i_1^\ast j_1^\ast, +)} \le \frac{1 - \left (\frac{(1-c_0) \gamma_+}{\varepsilon \sqrt{N}} \right )^{p-2}}{C_\delta d_0 (1-c_0)^{p-2} (1-c_0-\varepsilon^2) p \lambda_{i_1^\ast} \lambda_{j_1^\ast} \gamma_+^{p-2}} \log(N) N^{p-2}.
\end{equation}
Plugging this bound into the above estimate for \(m_{ij}(t)\) gives for \(N\) sufficiently large (i.e., \(N > \frac{(1-c_0)^2 \gamma_+^2}{\varepsilon^2}\)),
\[
m_{ij}(\mathcal{T}_\varepsilon^{(i_1^\ast j_1^\ast ,+)}) \le \frac{(1+c_0+d_0)\gamma_+}{2\sqrt{N}} \left( 1 - \frac{\lambda_i \lambda_j(1+c_0+d_0)^{p-2}}{2^{p-2}\lambda_{i_1^\ast}\lambda_{j_1^\ast} (1-c_0)^{p-1}(1-c_0-\varepsilon^2)} \right)^{-\frac{1}{p-2}} \lesssim \frac{\gamma_+}{\sqrt{N}},
\]
where \(\lesssim\) hides a constant which may depend on \(\gamma_1,\gamma_2,\gamma, c_0,\{\lambda_i\}_{i=1}^r, \varepsilon\). This shows that \(\mathcal{T}_\varepsilon^{(i_1^\ast j_1^\ast, +)} \wedge \min_{i \neq i_1^\ast, j\neq j_1^\ast} \mathcal{T}_{\varepsilon_N^-}^{(ij,-)} \wedge T_0 \le \mathcal{T}_{\textnormal{bad}} \wedge \min_{(i,j) \neq (i_1^\ast, j_1^\ast)} \mathcal{T}_{\varepsilon_N^+}^{(ij,+)}\) . \\

It remains to bound all the correlations \(m_{ij}(t)\) for \(i \neq i_1^\ast\) and \(j \neq j_1^\ast\) from below, and to show that, as the correlation \(m_{i_1^\ast j_1^\ast}\) reaches \(\varepsilon\), \(m_{ij} (\mathcal{T}_\varepsilon^{(i_1^\ast j_1^\ast, +)}) \ge \gamma_{ij}' N^{-1/2}\) for some constant \(\gamma_{ij}'\). On the considered time interval,
\[
\begin{split}
- \langle \boldsymbol{v}_i, (\nabla_{\textnormal{St}} \Phi(\boldsymbol{X}_t))_j \rangle
& \ge \sqrt{N} p \lambda_i \lambda_j m_{ij}^{p-1}(t)
- \sqrt{N} p r^2 \lambda_1^2 m_{i_1^\ast j_1^\ast}^{p-1}(t) m_{i j_1^\ast}(t) m_{i_1^\ast j}(t)\\
& \ge \sqrt{N} p \lambda_i \lambda_j m_{ij}^{p-1}(t)
- \sqrt{N} p r^2 \lambda_1^2 \frac{\gamma_+^2}{N} m_{i_1^\ast j_1^\ast}^{p-1}(t),
\end{split}
\]
since from the previous step \(m_{i j_1^\ast}(t), m_{i_1^\ast j}(t) \le \gamma_+ N^{-1/2}\). Hence,
\begin{equation} \label{eq: lower bound bounding not star}
m_{ij}(t) \ge (1-c_0) \varepsilon_N^{-} + \frac{\delta}{N} \sum_{\ell=1}^t \left ( \sqrt{N} p \lambda_i \lambda_j m_{ij}^{p-1}(\ell-1) - \frac{p r^2  \lambda_1^2 \gamma_+^2}{\sqrt{N}} m_{i_1^\ast j_1^\ast}^{p-1}(\ell-1) - 2 \delta r \alpha_2 m_{i_1^\ast j_1^\ast} (\ell-1)\right ).
\end{equation} 
The proof strategy is similar to before. We partition \([0, \mathcal{T}_\varepsilon^{(i_1^\ast j_1^\ast,+)}]\) into the subintervals \([\mathcal{T}_{\varepsilon_k}^{(i_1^\ast j_1^\ast,+)},\mathcal{T}_{\varepsilon_{k+1}}^{(i_1^\ast j_1^\ast,+)}]\), where \(\varepsilon_k = \gamma_+^k N^{-1/2}\) and \(1 \le k \le n_\varepsilon\). We already know that the last term \(\frac{2 \delta r \alpha_2}{N} \sum_{\ell=1}^t m_{i_1^\ast j_1^\ast}(\ell-1)\) can be absorbed into the initial condition. It therefore remains to investigate the term \(\frac{\delta p  r^2 \lambda_1^2 \gamma_+^2}{N^{3/2}} \sum_{\ell=1}^t m_{i_1^\ast j_1^\ast}^{p-1}(\ell-1)\). Using~\eqref{eq:delta_t} we obtain
\[
\begin{split}
\frac{\delta p  r^2 \lambda_1^2 \gamma_+^2}{\sqrt{N} N} \sum_{\ell=1}^t m_{i_1^\ast j_1^\ast}^{p-1}(\ell-1) & = \frac{\delta p  r^2 \lambda_1^2 \gamma_+^2}{\sqrt{N} N} \sum_{k=1}^{n_\varepsilon} \sum_{\ell = \mathcal{T}_{\varepsilon_k}^{(i_1^\ast j_1^\ast,+)}}^{\mathcal{T}_{\varepsilon_{k+1}}^{(i_1^\ast j_1^\ast,+)}} m_{i_1^\ast j_1^\ast}^{p-1}(\ell-1) \\
& \le \frac{\delta p  r^2 \lambda_1^2 \gamma_+^2}{\sqrt{N} N} n_\varepsilon \max_{1 \le k \le n_\varepsilon} \left \{ \varepsilon_{k+1}^{p-1} \left ( \mathcal{T}_{\varepsilon_{k+1}}^{(i_1^\ast j_1^\ast,+)} - \mathcal{T}_{\varepsilon_k}^{(i_1^\ast j_1^\ast,+)} \right)\right \} \\
&  \le \frac{\log(N)}{N} \frac{\gamma_+^2 \varepsilon r^2 \lambda_1^2 \gamma_1^{p-2}}{\lambda_{i_1^\ast} \lambda_{j_1^\ast} (1-c_0)^{p-2} (1-c_0-\varepsilon^2)}  = o (N^{-1/2}).
\end{split}
\]
Therefore, for \(N\) large enough, this term is lower-bounded by \(c_0 \varepsilon_N^-\), and can be absorbed into the initial condition in~\eqref{eq: lower bound bounding not star}, yielding
\[
\begin{split}
m_{ij}(t) & \ge (1 - 2c_0 - d_0) \varepsilon_N^{-} + \frac{\delta}{\sqrt{N}} p \lambda_i \lambda_j  \sum_{\ell=1}^t m_{ij}^{p-1}(\ell-1) \\
& \ge (1 - 2c_0 - d_0) \frac{\gamma_-}{\sqrt{N}} \left ( 1 - \delta p \lambda_i \lambda_j (1-2c_0-d_0)^{p-2} \gamma_-^{p-2} N^{- \frac{p-1}{2}} t\right)^{- \frac{1}{p-2}},
\end{split}
\]
where the second inequality follows by Lemma~\ref{lem: discrete growth}. In particular, from~\eqref{eq: upper bounding star} we deduce a lower bound for \(\mathcal{T}_\varepsilon^{(i_1^\ast j_1^\ast, +)}\):
\[
\mathcal{T}_\varepsilon^{(i_1^\ast j_1^\ast, +)} \ge \frac{1 - \left (\frac{(1+c_0) \gamma_+}{\varepsilon \sqrt{N}} \right )^{p-2}}{C_\delta d_0 (1+c_0)^{p-1} p \lambda_{i_1^\ast} \lambda_{j_1^\ast} \gamma_+^{p-2}} \log(N) N^{p-2}.
\]
Plugging the hitting-time lower bound gives for \(N\) sufficiently large (i.e., \(N > \frac{(1+c_0)^2 \gamma_+^2}{\varepsilon^2}\)), 
\[
m_{ij}( \mathcal{T}_\varepsilon^{(i_1^\ast j_1^\ast, +)}) \ge (1 - 2c_0 - d_0) \frac{\gamma_-}{\sqrt{N}} \left ( 1 - \frac{(1-2c_0-d_0)^{p-2} \lambda_i \lambda_j \gamma_-^{p-2}}{(1+c_0)^{p-1} \lambda_{i_1^\ast} \lambda_{j_1^\ast} \gamma_+^{p-2}}\right)^{- \frac{1}{p-2}} \gtrsim \frac{\gamma_-}{\sqrt{N}}.
\]

Since Proposition~\ref{prop: inequalities SGD} is (re)applied on each subinterval \([\mathcal{T}_{\varepsilon_k}^{(i_1^\ast j_1^\ast,+)},\mathcal{T}_{\varepsilon_{k+1}}^{(i_1^\ast j_1^\ast,+)}]\), by the strong Markov property, with the same parameters and with the uniform horizon \(T_0\) taken as the upper bound in~\eqref{eq:boundtime}, the failure probability on each subinterval is at most \(\tilde\eta=\eta(\delta,c_0,T_0,K_N)\). As \(n_\varepsilon\asymp \log N\), a union bound yields total success probability at least \(1-\tilde\eta \log(N)\). Therefore,
\[
\mathcal{T}_{E(\varepsilon)} \le \mathcal{T}_\varepsilon^{(i_1^\ast j_1^\ast,+)}
\le \frac{1 - \left(\frac{(1-c_0)\gamma_+}{\varepsilon \sqrt{N}}\right)^{p-2}}
{C_\delta d_0 (1-c_0)^{p-2} (1-c_0 - \varepsilon^2) p \lambda_{i_1^\ast} \lambda_{j_1^\ast} \gamma_+^{p-2}}
\log(N) N^{p-2},
\]
with probability at least \(1-\tilde\eta \log(N)\), which proves \eqref{eq: step 2 p>2}.
\end{proof}

\begin{proof}[Proof of~\eqref{eq: step 3 p>2}] 
Assume \(\boldsymbol{X}_0 \in E(\varepsilon)\), so that \(m_{i_1^\ast j_1^\ast}(0)\ge \varepsilon\), and for all \(i\neq i_1^\ast\), \(j\neq j_1^\ast\), \(\varepsilon_N^- \le m_{ij}(0) \le \varepsilon_N^+\) while \(m_{i j_1^\ast}(0),m_{i_1^\ast j}(0)\le \varepsilon_N^+\). Fix a horizon \(T>0\) to be chosen later and define the time interval
\[
I=\left [0, T\wedge \mathcal T_{\varepsilon/2}^{(i_1^\ast j_1^\ast,+)}\wedge
\mathcal T_{1-\varepsilon}^{(i_1^\ast j_1^\ast,+)}\wedge \min_{(i,j)\neq(i_1^\ast,j_1^\ast)} \mathcal T_{\varepsilon_N}^{(ij,+)} \wedge \mathcal{T}_{\mathrm{low}} \right],
\]
where \(\varepsilon_N = \gamma_\dagger N^{-1/2}\) with a fixed \(\gamma_\dagger>0\) (e.g.\ \(\gamma_\dagger=2\gamma_+\)) and 
\[
\mathcal{T}_{\text{low}} \coloneqq \inf \left \{ t \in \N_0 \colon \max \left \{ \max_{i \neq i_1^\ast} m_{i j_1^\ast}(t),\max_{j \neq j_1^\ast} m_{i_1^\ast j}(t) \right \} \lesssim \log(N)^{-1/2} N^{- \frac{p-1}{4}}\right \}.
\]
Note that \(\mathcal{T}_{\text{low}} >0\). \\

We first consider the evolution of \(m_{i_1^\ast j_1^\ast}(t)\) within the above time interval. Proposition~\ref{prop: inequalities SGD} does not apply directly in this case since \(m_{i_1^\ast j_1^\ast}(0)\ge \varepsilon\). We therefore need a modification of Proposition~\ref{prop: inequalities SGD}. By Lemma~\ref{lem: expansion correlations}, for all \(t \le T\),
\[
\left | m_{i_1^\ast j_1^\ast}(t) - \left ( \varepsilon- \frac{\delta}{N} \sum_{\ell=1}^t  \langle \boldsymbol{v}_{i_1^\ast},(\nabla_{\textnormal{St}}\Phi(\boldsymbol{X}_{\ell-1}))_{j_1^\ast} \rangle - \frac{\delta}{N} \sum_{\ell=1}^t  \langle \boldsymbol{v}_{i_1^\ast},(\nabla_{\textnormal{St}}H^\ell(\boldsymbol{X}_{\ell-1}))_{j_1^\ast} \rangle \right ) \right | \leq \frac{\delta}{N} \sum_{\ell=1}^t  \vert a_{i_1^\ast}(\ell-1) \vert,
\]
with \(\mathbb{P}_{\boldsymbol{X}_0}\)-probability at least \(1 - T e^{-c_1N}\), where \(\vert a_{i_1^\ast}(\ell-1) \vert\) is bounded according to~\eqref{eq: bound A_l}. The martingale term \(\frac{\delta}{N} \lvert \sum_{\ell=1}^T \langle \boldsymbol{v}_{i_1^\ast}, (\nabla_{\textnormal{St}} H^\ell(\boldsymbol{X}_{\ell-1}))_{j_1^\ast}\rangle \rvert \) is controlled by Lemma~\ref{lem: noise martingale}, i.e., for every \(c_0 \in (0,\frac{1}{2})\),
\[
\sup_{\boldsymbol{X}_0 \in \, \textnormal{\textnormal{St}}(N,r)} \mathbb{P}_{\boldsymbol{X}_0} \left (\max_{t \leq T} \frac{\delta}{N}  \left |  \sum_{\ell=1}^T \langle \boldsymbol{v}_{i_1^\ast}, \left (\nabla_{\textnormal{St}}H^\ell(\boldsymbol{X}_{\ell-1}) \right )_{j_1^\ast} \rangle\right | \geq \frac{c_0}{2} \varepsilon \right) \leq 2\exp\left(-c_2 \frac{c_0^2 \varepsilon^2 \log(N)^2N^{p-1}}{8 \alpha_1^2 C_\delta^2 d_0^2 T}\right).
\]
To bound the term \(\frac{\delta}{N} \sum_{\ell=1}^T \vert a_{i_1^\ast}(\ell-1) \vert\), Lemma~\ref{lem: higher truncation} with a truncation level \(K_N \le \alpha_1 \frac{N^{\frac{p-1}{2}}\log(N)}{d_0}\) yields
\[
\begin{split}
& \frac{\delta}{N} \sum_{\ell=1}^T  \vert a_{i_1^\ast}(\ell-1) \vert \\
& \leq \frac{\delta^2 \alpha_2}{2N} \sum_{\ell=1}^T  \sum_{k=1}^r \left( \vert m_{i_1^\ast k}(\ell-1) \vert + \frac{\delta}{N}\vert \langle \boldsymbol{v}_{i_1^\ast}, (\nabla_{\textnormal{St}}\Phi \left(\boldsymbol{X}_{\ell-1} \right))_k \rangle \vert + \frac{\delta}{N} \vert \langle \boldsymbol{v}_{i_1^\ast}, (\nabla_{\textnormal{St}}H^\ell \left( \boldsymbol{X}_{\ell-1})_k \right) \rangle \vert \right)\\
&
\le \frac{\delta^2 \alpha_2 r}{2N}T \left (1 + \frac{\delta K_N}{N} + \frac{\delta}{\sqrt{N}} p \lambda_1^2 (1 +r^2)\right ) \\
& \quad + \frac{\delta^3 \alpha_2}{2N^2} \sum_{\ell=1}^T \sum_{k=1}^r |\langle \boldsymbol{v}_{i_1^\ast}, (\nabla_{\textnormal{St}} H^\ell(\boldsymbol{X}_\ell-1))_k\rangle| \mathbf{1}_{\{|\langle \boldsymbol{v}_i, (\nabla_{\textnormal{St}} H^\ell(\boldsymbol{X}_\ell-1))_k\rangle | > K_N\}}\\
& \le \frac{2 \delta^2 \alpha_2 r}{N}T  = \frac{2 C_\delta^2 \alpha_2 r d_0^2}{\log(N)^2 N^{p-2}} T,
\end{split}
\]
provided \(N\) is sufficiently large, with \(\mathbb{P}_{\boldsymbol{X}_0}\)-probability at least \(1 - K d_0 \log(N)^{-1} N^{-\frac{p-1}{2}}e^{-c \log(N)^2 N^{p-1}/d_0^2}\), where the constants \(K, c\) may depend only on \(p,\sigma^2,\{\lambda_i\}_{i=1}^r\). The second inequality is obtained by bounding all correlations by \(1\). Choosing \(T \ll \frac{\log(N)^2N^{p-2}}{d_0^2}\) ensures 
\[
\frac{\delta}{N} \sum_{\ell=1}^T  \vert a_{i_1^\ast}(\ell-1) \vert \le c_0 \varepsilon.
\]
Combining these estimates, for all \(t \le T\) we have
\[
m_{i_1^\ast j_1^\ast}(t) \ge (1-c_0) \varepsilon - \frac{\delta}{N} \sum_{\ell=1}^t \langle \boldsymbol{v}_{i_1^\ast}, \left(\nabla_{\textnormal{St}} \Phi (\boldsymbol{X}_{\ell-1}) \right)_{j_1^\ast} \rangle,
\]
with \(\mathbb{P}_{\boldsymbol{X}_0}\)-probability at least \(1 - T e^{-c_1 N} - 2 e^{-c_2 \frac{c_0^2 \varepsilon^2 \log(N)^2 N^{p-1}}{d_0^2T}} - K \frac{d_0}{\log(N) N^{\frac{p-1}{2}}} e^{- c_3 \log(N)^2 N^{p-1} / d_0^2}\). Recall the expression for \(\langle \boldsymbol{v}_{i_1^\ast}, \left(\nabla_{\textnormal{St}} \Phi (\boldsymbol{X}_{\ell-1}) \right)_{j_1^\ast} \rangle \) given in~\eqref{eq: correlation exact computation}. Over the considered time interval \(I\), 
\[
-\langle \boldsymbol v_{i_1^\ast},(\nabla_{\mathrm{St}}\Phi(\boldsymbol X_t))_{j_1^\ast} \rangle
\ \ge\ \sqrt{N} p \lambda_{i_1^\ast} \lambda_{j_1^\ast}  m_{i_1^\ast j_1^\ast}^{p-1}(t) (1 - m_{i_1^\ast j_1^\ast}^2(t)) - C N^{-1/2},
\]
where \(C\) is a constant depending on \(\varepsilon, \{\lambda_i\}_{i=1}^r, p\). Since \(m_{i_1^\ast j_1^\ast} (t) \le 1- \varepsilon\) for all \(t \in I\), for \(N\) large enough, we have 
\[
-\langle \boldsymbol v_{i_1^\ast},(\nabla_{\mathrm{St}}\Phi(\boldsymbol X_t))_{j_1^\ast} \rangle \ge (1-c_0) \sqrt{N} p \lambda_{i_1^\ast} \lambda_{j_1^\ast}  m_{i_1^\ast j_1^\ast}^{p-1}(t) (1-(1-\varepsilon)^2) \ge (1-c_0) \varepsilon \sqrt{N} p \lambda_{i_1^\ast} \lambda_{j_1^\ast}  m_{i_1^\ast j_1^\ast}^{p-1}(t).
\]
Therefore, on the considered time interval \(I\), 
\[
m_{i_1^\ast j_1^\ast}(t) \ge (1-c_0) \varepsilon + (1-c_0) \varepsilon p \lambda_{i_1^\ast} \lambda_{j_1^\ast}  \frac{\delta}{\sqrt{N}} \sum_{\ell=1}^t m_{i_1^\ast j_1^\ast}^{p-1}(\ell-1).
\]
This implies that \(m_{i_1^\ast j_1^\ast}\) is lower bounded by an increasing function, thus 
\[
T \wedge \mathcal{T}_{1-\varepsilon}^{(i_1^\ast j_1^\ast, +)} \wedge \min_{(i,j) \neq (i_1^\ast, j_1^\ast)} \mathcal{T}_{\varepsilon_N}^{(ij,+)} \wedge \mathcal{T}_{\mathrm{low}} \le \mathcal{T}_{\varepsilon/2}^{(i_1^\ast j_1^\ast, +)}.
\]
Applying Lemma~\ref{lem: discrete growth} then gives
\[
m_{i_1^\ast j_1^\ast}(t)  \geq (1-c_0) \varepsilon \left(1 - \frac{\delta}{\sqrt{N} }(1 -c_0)^{p-1} \varepsilon^{p-1} p  \lambda_{i_1^\ast} \lambda_{j_1^\ast}  t\right)^{-\frac{1}{p-2}} ,
\]
so that
\[
\mathcal{T}_{1-\varepsilon}^{(i_1^\ast j_1^\ast,+)} \leq \frac{1-\left(\frac{(1-c_0) \varepsilon}{1-\varepsilon}\right)^{p-2}}{C_\delta d_0 (1-c_0)^{p-1}\varepsilon^{p-1} p \lambda_{i^\ast_1} \lambda_{j^\ast_1} } \log(N) N^{\frac{p-2}{2}} \ll \frac{\log(N)^2 N^{p-2}}{d_0^2},
\]
with \(\mathbb{P}_{\boldsymbol{X}_0}\)-probability at least \(1 - T e^{-c_1 N} - 2 e^{-c_2 \frac{c_0^2 \varepsilon^2 \log(N)^2 N^{p-1}}{d_0^2T}} - K \frac{d_0}{\log(N) N^{\frac{p-1}{2}}} e^{- c_3 \log(N)^2 N^{p-1} / d_0^2}\).\\

We now turn our attention to the correlations \(m_{i_1^\ast j}\) and \(m_{i j_1^\ast}\) for \(i\neq i_1^\ast, j \neq j_1^\ast\). By Proposition~\ref{prop: inequalities SGD}, for every \(t \le T\),  
\begin{equation} \label{eq: intermediate}
\begin{split}
m_{i_1^\ast j} (t) & \le (1+c_0) m_{i_1^\ast j}(0) + \frac{\delta}{N} \sum_{\ell=1}^t \left (- \langle \boldsymbol{v}_{i_1^\ast} , (\nabla_{\textnormal{St}} \Phi (\boldsymbol{X}_{\ell - 1}))_j \rangle  + h_{i_1^\ast} (\ell-1) \right), \\
m_{i j_1^\ast} (t) & \le (1+c_0) m_{ij_1^\ast}(0) + \frac{\delta}{N} \sum_{\ell=1}^t \left ( -\langle \boldsymbol{v}_i, (\nabla_{\textnormal{St}} \Phi (\boldsymbol{X}_{\ell - 1}))_{j_1^\ast} \rangle + h_i (\ell-1) \right),
\end{split}
\end{equation}
with \(\mathbb{P}_{\boldsymbol{X}_0}\)-probability at least \(1 - T e^{-c_1 N}\). From~\eqref{eq: correlation exact computation}, for every \(t \in I\) and sufficiently large \(N\),
\[
\begin{split}
-\langle \boldsymbol{v}_{i_1^\ast}, (\nabla_{\textnormal{St}} \Phi (\boldsymbol{X}_t))_j \rangle & \le \sqrt{N} p \lambda_{i_1^\ast} \lambda_j m_{i_1^\ast j}^{p-1}(t) -  \sqrt{N} p \lambda_{i_1^\ast} \lambda_{j_1^\ast} m_{i_1^\ast j_1^\ast}^p(t) m_{i_1^\ast j}(t) \\
& \le - \kappa \sqrt{N} p \lambda_r^2 \varepsilon^p m_{i_1^\ast j} (t), 
\end{split}
\]
for some constant \(\kappa  > 0\) independent of \(N\). Here, we used that \(m_{i_1^\ast j_1^\ast}(t) \ge \varepsilon/2\) and \(m_{i_1^\ast j}(t) \le \varepsilon_N = \gamma_\dagger N^{-1/2}\). The same bound holds for \(m_{i j_1^\ast} (t)\). Moreover, on the time interval \(I\), for all \(i \in [r]\) and sufficiently large \(N\), 
\[
h_i (t) \le \delta \alpha_2 r \max_{1 \le k \le r} |m_{i k}(t)| \le \delta \alpha_2 r (1-\varepsilon),
\]
so that
\[
\frac{\delta}{N} \sum_{\ell=1}^t h_i(\ell-1) \le \frac{\delta^2 \alpha_2 r (1-\varepsilon)}{N} t.
\]
Choosing \(T \ll \frac{\log(N)^2 N^{p-3/2}}{d_0^2}\) ensures that
\[
\frac{\delta}{N} \sum_{\ell=1}^T h_i(\ell-1) \le c_0 \varepsilon_N^+ = c_0 \frac{\gamma_+}{\sqrt{N}}.
\]
Plugging the above estimates into~\eqref{eq: intermediate} yields 
\begin{equation} \label{eq: decreasing correlations}
\begin{split}
m_{i_1^\ast j} (t) & \le (1+2c_0) \varepsilon_N^+ - \kappa p \lambda_r^2 \varepsilon^p \frac{\delta}{\sqrt{N}} \sum_{\ell=1}^t m_{i_1^\ast j}(\ell-1), \\
m_{i j_1^\ast} (t) & \le (1+2c_0) \varepsilon_N^+ - \kappa p \lambda_r^2 \varepsilon^p  \frac{\delta}{\sqrt{N}} \sum_{\ell=1}^t m_{i j_1^\ast}(\ell-1),
\end{split}
\end{equation}
with \(\mathbb{P}_{\boldsymbol{X}_0}\)-probability at least \(1 - Te^{-c_1N}\). Hence, the correlations \(m_{i_1^\ast j}\) and \(m_{i j_1^\ast}\) are decreasing. Applying item (a) of Lemma~\ref{lem: discrete growth} gives 
\[
m_{i_1^\ast j} (t) \le (1+2c_0) \varepsilon_N^+ \left (1 - \kappa p \lambda_r^2 \varepsilon^p \frac{\delta}{\sqrt{N}} \right )^t  \le (1+2c_0) \varepsilon_N^+ \exp \left (- \kappa p \lambda_r^2 \varepsilon^p \frac{\delta}{\sqrt{N}}t \right ),
\]
and the same holds for \(m_{i j_1^\ast}(t)\). Since \(m_{i_1^\ast j}(t)\) and \(m_{i j_1^\ast}(t)\) decay exponentially in \(t\), the stopping time \(\mathcal{T}_{\textnormal{low}}\) is almost surely finite. Solving for the first \(t\) such that \(m_{i_1^\ast j_1^\ast} (t) \le \log(N)^{-1/2} N^{-(p-1)/4}\) yields
\begin{equation} \label{eq: T low}
\mathcal{T}_{\text{low}} \le \frac{\sqrt{N}}{\kappa p \lambda_r^2 \varepsilon^p \delta} \left ( \frac{p-3}{4} \log(N) + \frac{1}{2} \log \log(N) + \log((1+2c_0)\gamma_+)\right) \lesssim  \frac{\log(N)^2}{C_\delta d_0 p \lambda_r^2 \varepsilon^p} N^{\frac{p-2}{2}} ,
\end{equation}
which indeed satisfies \(\mathcal{T}_{\mathrm{low}} \ll \frac{\log(N)^2 N^{p - 3/2}}{d_0^2}\). \\

Finally we consider the evolution of \(m_{ij}\) for all \(i \neq i_1^\ast, j \neq j_1^\ast\). By Proposition~\ref{prop: inequalities SGD}, for every \(t \le T\), 
\begin{equation} \label{eq: intermediate 1}
\begin{split}
m_{ij}(t) & \ge (1-c_0) m_{ij}(0) + \frac{\delta}{N} \sum_{\ell=1}^t \left ( - \langle \boldsymbol{v}_i, (\nabla_{\textnormal{St}} \Phi (\boldsymbol{X}_{\ell - 1}))_j \rangle - h_i(\ell-1) \right), \\
m_{ij}(t) & \le (1+c_0) m_{ij}(0) + \frac{\delta}{N} \sum_{\ell=1}^t \left ( -\langle \boldsymbol{v}_i, (\nabla_{\textnormal{St}} \Phi (\boldsymbol{X}_{\ell - 1}))_j \rangle + h_i(\ell-1) \right),
\end{split}
\end{equation}
with \(\mathbb{P}_{\boldsymbol{X}_0}\)-probability at least \(1 - T e^{-c_1 N}\). In this case, for all \(t \in I\),
\[
\begin{split}
- \langle \boldsymbol{v}_i ,\left ( \nabla_{\textnormal{St}} \Phi(\boldsymbol{X}_t) \right )_j \rangle & \geq \sqrt{N} p \left(\lambda_i \lambda_j m_{ij}^{p-1}(t) - r^2 \lambda_1^2 m_{i_1^\ast j_1^\ast}^{p-1}(t) m_{i j_1^\ast}(t) m_{i_1^\ast j}(t)  \right),\\
- \langle \boldsymbol{v}_i ,\left ( \nabla_{\textnormal{St}} \Phi(\boldsymbol{X}_t) \right )_j \rangle & \le \sqrt{N} p \left ( \lambda_i \lambda_j m_{ij}^{p-1}(t) - \lambda_r^2 m_{i_1^\ast j_1^\ast}^{p-1}(t) m_{i j_1^\ast}(t) m_{i_1^\ast j}(t) \right).
\end{split}
\]
We observe that if \(t \ge \mathcal{T}_{\mathrm{low}}\), then the drift term \(\lambda_i \lambda_j m_{ij}^{p-1}(t)\) dominates all cross-interaction terms. Consequently, \(m_{ij} (t)\) becomes increasing beyond \(\mathcal{T}_{\textnormal{low}}\). On the time interval \(I\), however, the correlations \(m_{i_1^\ast j} (t)\) and \(m_{i j_1^\ast} (t)\) decrease from order \(\varepsilon_N^+ = \gamma_+ N^{-1/2}\), while \(m_{i_1^\ast, j_1^\ast} (t) \ge \varepsilon/2\). Hence, the cross-interaction term \(r^2 \lambda_1^2 m_{i_1^\ast j_1^\ast}^{p-1}(t) m_{i j_1^\ast}(t) m_{i_1^\ast j}(t)\) is of order \(\mathcal{O} (N^{-1})\) and can induce only a mild decrease in \(m_{ij}(t)\). Moreover, as before, the term involving \(h_i(t)\) can be absorbed in the initial condition. More precisely, \(h_i(t) \le \delta \alpha_2 r (1-\varepsilon)\) for every \(t \in I\) and \(N\) sufficiently large, so that
\[
\frac{\delta}{N} \sum_{\ell=1}^t h_i(\ell-1) \le \frac{\delta^2 \alpha_2 r (1-\varepsilon)}{N} t \le c_0 \varepsilon_N^-,
\]
where the last inequality holds by our choice of the time horizon \(T \ll d_0^{-2} \log(N)^2 N^{p-3/2}\). Therefore, using~\eqref{eq: intermediate 1} together with the drift bound above, we obtain for all \(t \in I\),
\[
m_{ij} (t) \ge (1-2c_0) \varepsilon_N^- - \frac{C_\varepsilon r^2 \lambda_1^2}{N} \frac{\delta}{\sqrt{N}} t,
\]
for some constant \(C_\varepsilon>0\) independent of \(N\). Evaluating at \(t = \mathcal{T}_{\text{low}}\) and using the previously established bound~\eqref{eq: T low} on \(\mathcal{T}_{\text{low}}\),
\[
\begin{split}
m_{ij} (\mathcal{T}_{\mathrm{low}}) & \ge (1-2c_0) \varepsilon_N^- - \frac{C_\varepsilon r^2 \lambda_1^2}{N} \frac{\delta}{\sqrt{N}} \mathcal{T}_{\mathrm{low}} \\
& \gtrsim (1-2c_0) \frac{\gamma_-}{\sqrt{N}} - \frac{C_\varepsilon r^2 \lambda_1^2 C_\delta}{p \lambda_r^2 \varepsilon^p} \frac{\log(N)}{N} \gtrsim \frac{C}{\sqrt{N}},
\end{split}
\]
for some constant \(C>0\) independent of \(N\). Therefore, while the \(m_{ij}\) correlations experience a small decrease until \(\mathcal{T}_{\mathrm{low}}\), they remain of order \(\Theta (N^{-1/2})\) throughout. \\

Finally, choose
\[
T = \frac{\log(N)^2 N^{\frac{p-2}{2}}}{C_\delta d_0 p \lambda_r^2 \varepsilon^p} ,
\]
which is an upper bound for \(\mathcal{T}_{\mathrm{low}}\), and thus also for \(\mathcal{T}_{1-\varepsilon}^{(i_1^\ast j_1^\ast,+)}\). On the event of \(\mathbb{P}_{\boldsymbol{X}_0}\)-probability at least
\[
1 - \frac{K_1}{d_0 \varepsilon^p} \log(N)^2 N^{\frac{p-2}{2}} e^{-c_1 N} - K_2 e^{-c_2 c_0^2 \varepsilon^{p+2} N^{p/2} / d_0} - K_3 \frac{d_0}{\log(N) N^{\frac{p-1}{2}}} e^{- c_3 \log(N)^2 N^{p-1} / d_0^2},
\]
we have simultaneously: (1) \(m_{i_1^\ast j_1^\ast}(t)\) has increased to \(1-\varepsilon\) before time \(T\); (2) \(m_{i_1^\ast j}(t)\) and \(m_{i j_1^\ast}(t)\) have decayed below \(\log(N)^{-1/2} N^{-(p-1)/4}\) by time \(T\); and (3) \(m_{ij}(t)\) for \(i\neq i_1^\ast, j\neq j_1^\ast\) remain at scale \(\Theta(N^{-1/2})\). Therefore, the set \(E_1(\varepsilon)\) is reached by time \(T\), which proves
\[
\mathcal T_{E_1(\varepsilon)}  \le T = \frac{\log(N)^2 N^{\frac{p-2}{2}}}{C_\delta d_0 p \lambda_r^2 \varepsilon^p} ,
\]
with the stated probability, as claimed in~\eqref{eq: step 3 p>2}.
\end{proof}
\end{proof}
\subsection{Proofs for \(p =2\) and separated SNRs}

In this subsection, we focus on the case where \(p=2\) and consider SNRs \(\lambda_1 \ge \cdots \ge \lambda_r\) that are separated by constants of order \(1\). Specifically, we assume that for every \(1 \le i \le r-1\), the SNRs satisfy \(\lambda_i = \lambda_{i+1} (1+\kappa_i)\) with \(\kappa_i>0\), and we define \(\kappa = \min_{1 \le i \le r-1} \kappa_i\). Following a strategy similar to that used in the proof of Theorem~\ref{thm: strong recovery online p>2 nonasymptotic} in the previous subsection, we divide the proof of Theorem~\ref{thm: strong recovery online p=2 nonasymptotic} into several steps, which are then combined via the strong Markov property to complete the argument. \\

Fix any \(\varepsilon \in (0, \varepsilon_0)\) with \(\varepsilon_0^2 = \frac{\kappa}{1+\kappa}\), and let \(c_0 \in \left (0, \frac{1}{2} \wedge \frac{1+\kappa}{2 + \kappa} (\varepsilon_0^2-\varepsilon^2)\right)\). For every \(1 \le k \le r-1\) and \(k \le i,j \le r\), define 
\[
\delta_{ij}^{(k)} (\varepsilon, c_0) \coloneqq 1 - \frac{1+c_0}{1-c_0-\varepsilon^2} \frac{\lambda_i \lambda_j}{\lambda_k^2} \quad \textnormal{and} \quad \xi_{ij}^{(k)} (c_0) \coloneqq 1 - \frac{1-c_0}{1+c_0} \frac{\lambda_i \lambda_j}{\lambda_k^2}.
\]
By construction, \(0 < \delta_{ij}^{(k)} (\varepsilon, c_0)  < \xi_{ij}^{(k)} (c_0) < 1\), and both are symmetric in \((i,j)\). To lighten notation, we sometimes omit the dependence on \(\varepsilon\) and \(c_0\). For every \(i \in [r]\), define the weak and strong recovery regions as
\[
\begin{split}
W_i (\varepsilon,c_0) & = \left \{ \boldsymbol{X} \colon m_{ii}(\boldsymbol{X}) \ge \varepsilon \enspace \textnormal{and} \enspace m_{ij}(\boldsymbol{X}), m_{ji}(\boldsymbol{X}) \lesssim N^{-\frac{1}{2}\delta_{ij}^{(i)} }  \enspace \forall \, j >i \right \},\\
S_i (\varepsilon,c_0) & = \left \{ \boldsymbol{X} \colon m_{ii}(\boldsymbol{X}) \ge 1 - \varepsilon \enspace \textnormal{and} \enspace m_{ij}(\boldsymbol{X}), m_{ji}(\boldsymbol{X}) \lesssim N^{-\frac{1}{2} \xi_{rr}^{(1)}} \enspace \forall \,j >i \right \}.
\end{split}
\]
Finally, we define the recovering event \(E_i (\varepsilon,c_0)\) as 
\[
\begin{split}
E_1 (\varepsilon,c_0) & = W_1 (\varepsilon, c_0) \cap \left \{ \boldsymbol{X} \colon N^{- \frac{1}{2} \xi_{ij}^{(1)}} \lesssim m_{ij}(\boldsymbol{X}) \lesssim N^{- \frac{1}{2} \delta_{ij}^{(1)}} \enspace \forall \, 2 \le i,j \le r \right \},\\
E_2 (\varepsilon,c_0) & = S_1 (\varepsilon, c_0) \cap W_2 (\varepsilon, c_0) \cap \left \{ \boldsymbol{X} \colon N^{- \frac{1}{2} \xi_{ij}^{(2)}} \lesssim m_{ij}(\boldsymbol{X}) \lesssim N^{- \frac{1}{2} \delta_{ij}^{(2)}} \enspace \forall \, 3 \le i,j \le r \right \},\\
& \vdots \\
E_{r-1} (\varepsilon,c_0) & = \cap_{1 \le i \le r-2} S_i (\varepsilon, c_0)  \cap W_{r-1} (\varepsilon, c_0) \cap \left \{ \boldsymbol{X} \colon N^{- \frac{1}{2} \xi_{rr}^{(r-1)}} \lesssim m_{rr}(\boldsymbol{X}) \lesssim N^{- \frac{1}{2} \delta_{rr}^{(r-1)}} \right \},\\
E_r (\varepsilon,c_0) & = \cap_{1 \le i \le r-1} S_i (\varepsilon, c_0) \cap W_r (\varepsilon, c_0).
\end{split}
\]
We observe that the recovery set in Theorem~\ref{thm: strong recovery online p=2 nonasymptotic} corresponds to \(R(\varepsilon, c_0) = \cap_{1 \le i \le r} S_i (\varepsilon, c_0)\). \\

We begin by showing that, starting from any initialization satisfying Condition 1, online SGD reaches the first recovery region \(E_1 (\varepsilon, c_0)\) within \(\mathcal{O} \left (\log(N)^2 N^{\xi/2} \right)\) iterations, with high probability, where we recall from Section~\ref{section: main results SGD} the shorthand notation
\[
\xi = \xi (c_0) \coloneqq \xi_{rr}^{(1)}(c_0) = 1 - \frac{1-c_0}{1+c_0} \frac{\lambda_r^2}{\lambda_1^2}.
\]

\begin{lem} \label{lem: first step SGD p=2}
Assume \(\kappa_1 > \sqrt{2}-1\). Fix \(\gamma_1 > 1 > \gamma_2 >0\). Then there exists \(\bar{\varepsilon} (\kappa) \in (0,\varepsilon_0)\), with \(\varepsilon_0^2 = \frac{\kappa}{1+\kappa}\), such that for every \(\varepsilon \in (0, \bar{\varepsilon})\) there exist \(c_0 = c_0 (\kappa, \varepsilon) \in \left (0, \frac{1}{2} \wedge \frac{1+\kappa}{2 + \kappa} (\varepsilon_0^2-\varepsilon^2)\right)\) and a sequence \(d_0 = d_0(N)\) satisfying~\eqref{eq: sequence d_0}, for which, if 
\begin{equation} \label{eq: step size p=2} 
\delta = \frac{C_\delta d_0 \gamma_2^2 N^{\frac{1}{2}(1 - \xi)}}{\log \left ( \frac{2 \varepsilon \sqrt{N}}{\gamma_2} \right )},
\end{equation}
for some constant \(C_\delta >0\), then for \(N\) sufficiently large,
\[
\inf_{\boldsymbol{X}_0 \in \mathcal{C}_1(\gamma_1,\gamma_2)} \mathbb{P}_{\boldsymbol{X}_0^+} \left( \mathcal{T}_{E_1(\varepsilon, c_0)} \lesssim T_1 \right) \ge 1 - \eta,
\]
where 
\[
T_1 = \frac{N^{\xi/2} \log \left ( \frac{2 \varepsilon \sqrt{N}}{\gamma_2} \right) ^2}{C_\delta d_0 (1-c_0-\varepsilon^2) \gamma_2^2 \lambda_1^2},
\]
and 
\[
\eta = K_1 \frac{\log(N)^2 N^{\frac{1}{2} \xi}}{d_0} e^{- c_1 N} + K_2 e^{- c_2 c_0^2 N^{\frac{1}{2} \xi} / d_0} + K_3 \frac{d_0^2}{c_0 \log(N) N^{ \xi}} e^{- c_3 / d_0^2}.
\]
\end{lem}

The assumption \(\kappa_1 > \sqrt{2}-1\) guarantees a sufficiently large spectral gap between the first two SNRs, ensuring that as the dominant correlation \(m_{11}\) grows to order \(\varepsilon\), the correlations \(m_{ij}\) for \(2 \le i,j \le r\) remain stable rather than decaying. This allows the iterates to enter the first recovery region \(E_1(\varepsilon,c_0)\) with high probability. We next show that, once the first recovery event \(E_1(\varepsilon, c_0)\) has been attained, the second recovery region \(E_2 (\varepsilon, c_0)\) can also be reached with high probability. More generally, the next result establishes the inductive step: once \(E_{k-1} (\varepsilon, c_0)\) has been reached, the algorithm enters \(E_k (\varepsilon, c_0)\) within \(\mathcal{O} \left (\log(N)^2 N^{\xi/2} \right)\) iterations, with high probability. 

\begin{lem} \label{lem: kth step SGD p=2}
Assume \(\kappa > \sqrt{2}-1\). Under the same parameter choices as in Lemma~\ref{lem: first step SGD p=2}---in particular, with fixed \(\varepsilon \in (0, \bar{\varepsilon}(\kappa))\) and \(c_0 = c_0 (\kappa, \varepsilon)\), \(d_0 = d_0(N)\), and step size \(\delta = \frac{C_\delta d_0 \gamma_2^2 N^{\frac{1}{2}(1-\xi)}}{\log \left ( \frac{2 \varepsilon \sqrt{N}}{\gamma_2} \right )}\)---then, for \(N\) sufficiently large and every \(2 \le k \le r\),
\[
\inf_{\boldsymbol{X}_0 \in E_{k-1} (\varepsilon,c_0)} \mathbb{P}_{\boldsymbol{X}_0} \left( \mathcal{T}_{E_k (\varepsilon,c_0)} \lesssim T_k \right) \ge 1 - K \eta,
\]
where
\[
T_k = \frac{\xi_{kk}^{(k-1)} \log(N)^2 N^{\xi/2}}{C_\delta d_0 (1-c_0-\varepsilon^2) \lambda_k^2},
\]
and \(\eta\) is given by Lemma~\ref{lem: first step SGD p=2}.
\end{lem}

Combining Lemmas~\ref{lem: first step SGD p=2} and~\ref{lem: kth step SGD p=2} via the strong Markov property yields the full recovery result stated in Theorem \ref{thm: strong recovery online p=2 nonasymptotic}. The proofs of both lemmas follow the same strategy as in Lemma~\ref{lem: recovery first spike SGD} for \(p \ge 3\), with additional arguments to account for the quadratic interaction terms appearing when \(p=2\). In particular, the recovery regions \(E_i\) highlight a key distinction between the two regimes: for \(p=2\), the cross-correlations \(m_{ij}\) with \(i \neq j\) grow beyond their initial scale \(\Theta (N^{-1/2})\), in contrast to the higher-order case \(p \ge 3\).

\begin{proof}[\textbf{Proof of Lemma~\ref{lem: first step SGD p=2}}]
Since \(\boldsymbol{X}_0\) satisfies Condition \(1\), for every \(i,j \in [r]\), there exists \(\gamma_{ij} \in (\gamma_2,\gamma_1)\) such that \(m_{ij}(0) = \gamma_{ij} N^{-\frac{1}{2}}\). Let \(\delta\) denote the step size given by~\eqref{eq: step size p=2}, and let \(T_1 >0\) be a fixed time horizon to be chosen later. According to Proposition~\ref{prop: inequalities SGD}, for every \(t \leq T_1\), we have
\[
\ell_{ij}(t) \leq m_{ij}(t) \leq u_{ij}(t),
\]
with \(\mathbb{P}_{\boldsymbol{X}_0^+}\)-probability at least \(1 - \eta\), where \(\eta = \eta(\delta, c_0, T_1, K_N)\) is given by~\eqref{eq: eta}. The comparison functions \(\ell_{ij}\) and \(u_{ij}\) are defined by 
\[
\begin{split}
\ell_{ij}(t) & = (1 - c_0) \frac{\gamma_{ij}}{\sqrt{N}} + \frac{\delta}{N} \sum_{\ell=1}^t \left ( -  \langle \boldsymbol{v}_i ,(\nabla_{\textnormal{St}}\Phi(\boldsymbol{X}_{\ell-1}))_j \rangle  - h_i(\ell-1) \right )\\
u_{ij}(t) & = (1 + c_0) \frac{\gamma_{ij}}{\sqrt{N}} + \frac{\delta}{N} \sum_{\ell=1}^t \left ( -  \langle \boldsymbol{v}_i ,(\nabla_{\textnormal{St}}\Phi(\boldsymbol{X}_{\ell-1}))_j \rangle  + h_i(\ell-1) \right ),
\end{split}
\]
respectively, where for all \(0 \le t \le T_1\),
\[
- \langle \boldsymbol{v}_i ,(\nabla_{\textnormal{St}}\Phi(\boldsymbol{X}_t))_j \rangle = 2 \sqrt{N} \lambda_i \lambda_j m_{ij}(t) - \sqrt{N} \sum_{1 \le k, \ell \le r} \lambda_k (\lambda_j + \lambda_\ell) m_{i \ell}(t) m_{k j}(t) m_{k \ell}(t),
\]
and 
\[
h_i(t) = \frac{\delta \alpha_2}{2} \sum_{k=1}^r |m_{ik}(t)| + \frac{\delta^2 \alpha_2}{2N} \sum_{k=1}^r |\langle \boldsymbol{v}_i ,(\nabla_{\textnormal{St}}\Phi(\boldsymbol{X}_t))_k \rangle | + \frac{\delta^2 r \alpha_2 K_N}{2N}.
\]
Next, we introduce the microscopic thresholds \(\varepsilon_N^+ = \frac{\gamma_{+}}{\sqrt{N}}\) and \(\varepsilon_N^- = \frac{\gamma_{-}}{\sqrt{N}}\) for some constants \(\gamma_+ > \gamma_1\) and \(\gamma_- < \gamma_2\), and recall the definition of the hitting times \(\mathcal{T}^{(ij,+)}_{\varepsilon_N^+}\) and \(\mathcal{T}_{\varepsilon_N^-}^{(ij,-)}\) given in~\eqref{eq: hitting times}. Let \(\mathcal{T}_{E_0}\) denote the hitting time of the set
\[
E_0 = \left \{ \boldsymbol{X} \colon m_{11}(\boldsymbol{X}) \ge \varepsilon_N^+ \enspace \textnormal{and} \enspace m_{11}(\boldsymbol{X}) >2 \max_{(i,j) \neq (1,1)} m_{ij}(\boldsymbol{X}) \right \}.
\]
We establish the result via two estimates. First, we show that 
\begin{equation}  \label{eq: step 1 p=2}
\inf_{\boldsymbol{X}_0 \in \mathcal{C}_1(\gamma_1,\gamma_2)} \mathbb{P}_{\boldsymbol{X}_0^+} \left ( \mathcal{T}_{E_0} \leq \frac{ \log \left ( \frac{\gamma_+}{(1-c_0)\gamma_{11}} \right ) \log \left (\frac{ 2\varepsilon \sqrt{N}}{\gamma_2} \right )}{ C_\delta d_0 (1-c_0) \gamma_2^2 \lambda_1^2}  N^{\frac{1}{2} \xi} \right ) \geq 1 - \tilde{\eta}_1,
\end{equation}
where
\[
\tilde{\eta}_1 = \frac{C_1 \log(N) N^{\frac{1}{2} \xi}}{d_0} e^{-c_1N} + C_2 e^{-c_2 c_0^2 \log(N) N^{\frac{1}{2} \xi}/d_0} +  \frac{C_3 d_0^2}{c_0 \log(N)^2 N^{\xi}} e^{-c_3/d_0^2}.
\]
Second, we show that
\begin{equation} \label{eq: step 2 p=2}
\inf_{\boldsymbol{X} \in E_0} \mathbb{P}_{\boldsymbol{X}} \left ( \mathcal{T}_{E_1(\varepsilon,c_0)} \leq \frac{ \log \left ( \frac{\varepsilon \sqrt{N}}{(1-c_0)\gamma_+} \right ) \log \left ( \frac{ 2\varepsilon \sqrt{N}}{\gamma_2} \right) N^{\xi/2} }{C_\delta d_0 (1-c_0-\varepsilon^2) \gamma_2^2 \lambda_1^2}  \right ) \geq 1 - \tilde{\eta}_2,
\end{equation}
where
\[
\tilde{\eta}_2 = \frac{C_4 \log(N)^2 N^{\frac{1}{2} \xi}}{d_0} e^{-c_4 N} + C_5 e^{-c_5 c_0^2 N^{\frac{1}{2} \xi}/d_0} +  \frac{C_6 d_0^2 N^{ - \xi}}{c_0 \log(N)} e^{-c_6/d_0^2}.
\]
Combining~\eqref{eq: step 1 p=2} and~\eqref{eq: step 2 p=2} via the strong Markov property at stopping time \(\mathcal{T}_{E_0}\) yields the stated result. The remainder of the proof is devoted to establishing the two bounds above.

\begin{proof}[Proof of~\eqref{eq: step 1 p=2}]
We observe that for all \(t \leq \min_{1 \le i,j \le r} \mathcal{T}_{\varepsilon_N^+}^{(ij,+)} \wedge \min_{1 \le i,j \le r} \mathcal{T}_{\varepsilon_N^-}^{(ij,-)}\) and every \(i,j \in [r]\), it holds that
\[
2 \sqrt{N} \lambda_i \lambda_j m_{ij}(t) - 2 \sqrt{N} \lambda_1^2 r^2 \left(\frac{\gamma_+}{\sqrt{N}}\right)^3 \leq - \langle \boldsymbol{v}_i ,(\nabla_{\textnormal{St}}\Phi(\boldsymbol{X}_t))_j \rangle \leq 2 \sqrt{N} \lambda_i \lambda_j m_{ij}(t) - 2 \sqrt{N} \lambda_r^2 r^2 \left(\frac{\gamma_{-}}{\sqrt{N}} \right)^3.
\]
This leads to the following estimates for the comparison functions \(u_{ij}(t)\) and \(\ell_{ij}(t)\):
\begin{equation} \label{eq: upper bound u_ij}
\begin{split}
u_{ij}(t) & \leq (1 + c_0) \frac{\gamma_{ij}}{\sqrt{N}}+ \frac{\delta}{N} \sum_{\ell=1}^t \Big \{ 2 \sqrt{N} \lambda_i\lambda_j m_{ij}(\ell-1) - 2 \sqrt{N} \lambda_r^2 r^2 \left(\frac{\gamma_{-}}{\sqrt{N}}\right)^3  \\
& \quad + \frac{\delta \alpha_2r}{2} \max_{1 \le k \le r} |m_{ik}(\ell-1)| + \frac{\delta^2 \alpha_2 r}{2N} \left ( 2 \gamma_+ \lambda_i \lambda_1  + 2 \sqrt{N}  \lambda_1^2 r^2 \left ( \frac{\gamma_+}{\sqrt{N}}\right)^3 \right) + \frac{\delta^2 r \alpha_2 K_N}{2N} \Big \},
\end{split}
\end{equation}
and
\begin{equation} \label{eq: lower bound l_ij}
\begin{split}
\ell_{ij}(t) & \geq (1-c_0) \frac{\gamma_{ij}}{\sqrt{N}} + \frac{\delta}{N} \sum_{\ell=1}^t \Big \{ 2 \sqrt{N} \lambda_i\lambda_j m_{ij}(\ell-1) - 2 \sqrt{N} \lambda_1^2 r^2 \left(\frac{\gamma_{+}}{\sqrt{N}}\right)^3  \\
& \quad - \frac{\delta \alpha_2r}{2} \max_{1 \le k \le r} |m_{ik}(\ell-1)| - \frac{\delta^2 \alpha_2 r}{2 N} \left ( 2 \gamma_+ \lambda_i \lambda_1 + 2 \sqrt{N}  \lambda_1^2 r^2 \left ( \frac{\gamma_+}{\sqrt{N}}\right)^3 \right) - \frac{\delta^2 r \alpha_2 K_N}{2N} \Big \}.
\end{split}
\end{equation}
Both bounds hold for every \(t \leq \min_{1 \leq i,j \leq r} \mathcal{T}_{\varepsilon_N^+}^{(ij,+)} \wedge \min_{1 \leq i,j \leq r} \mathcal{T}_{\varepsilon_N^-}^{(ij,-)} \wedge T_1\), with \(\mathbb{P}_{\boldsymbol{X}_0^+}\)-probability at least \(1 - \eta(\delta,c_0,T_1,K_N)\). We now focus on~\eqref{eq: lower bound l_ij} and claim that
\begin{equation} \label{eq: claim l_ij p=2 SGD}
\ell_{ij}(t) \geq (1-c_0) \frac{\gamma_{ij}}{\sqrt{N}} + 2 (1 - c_0) \lambda_i \lambda_j \frac{\delta}{\sqrt{N}} \sum_{\ell=1}^t m_{ij}(\ell-1),
\end{equation}
for every \(t \leq \min_{1 \leq i,j \leq r} \mathcal{T}_{\varepsilon_N^+}^{(ij,+)} \wedge \min_{1 \leq i,j \leq r} \mathcal{T}_{\varepsilon_N^-}^{(ij,-)} \wedge T_1\). To see the claim, we first note that, on the considered time interval,
\[
\ell_{ij}(t) \ge (1-c_0) \frac{\gamma_{ij}}{\sqrt{N}} + \frac{\delta}{N} \sum_{\ell=1}^t \left \{ 2 \sqrt{N} \lambda_i\lambda_j m_{ij}(\ell-1)  -  \frac{3\delta \alpha_2r}{2} \max_{1 \le k \le r} |m_{ik}(\ell-1)| - \frac{\delta^2 r \alpha_2 K_N}{2N} \right \},
\]
where we used the prescribed value~\eqref{eq: step size p=2} for \(\delta\). Claim~\eqref{eq: claim l_ij p=2 SGD} then follows by two sufficient conditions. First, note that over the considered time interval, 
\[
c_0 \sqrt{N} \lambda_i\lambda_j m_{ij}(t) \geq c_0 \lambda_r^2 \gamma_- \ge \frac{3 \delta \alpha_2 r}{2} \frac{\gamma_+}{\sqrt{N}} \ge  \frac{3 \delta \alpha_2 r}{2}  \max_{1 \le k \le r} |m_{ik}(t)|, 
\]
where the second inequality holds, provided 
\begin{equation} \label{eq: cond d_0 p=2}
d_0 \leq \frac{2 c_0 \lambda_r^2 \gamma_-}{3 C_\delta \alpha_2 r \gamma_+ \gamma_2^2} N^{\frac{1}{2} \xi} \log \left(\frac{2\varepsilon\sqrt{N}}{\gamma_2}\right),
\end{equation}
which certainly holds since \(d_0 \ll c_0^2\). Second, we have 
\[
c_0 \sqrt{N} \lambda_i \lambda_j m_{ij}(t) \ge c_0 \lambda_r^2 \gamma_- \ge \frac{\delta^2 r \alpha_2 K_N}{2N},
\]
for every \(t \leq \min_{1 \leq i,j \leq r} \mathcal{T}_{\varepsilon_N^+}^{(ij,+)} \wedge \min_{1 \leq i,j \leq r} \mathcal{T}_{\varepsilon_N^-}^{(ij,-)}\), where the second inequality holds provided
\begin{equation}  \label{eq: cond d_0K_N p=2}
d_0^2 K_N \leq \frac{2 c_0 \lambda_r^2 \gamma_-}{C_\delta^2 \alpha_2 r \gamma_2^4} N^{\xi} \log \left(\frac{2\varepsilon\sqrt{N}}{\gamma_2}\right)^2.
\end{equation}
According to the sufficient conditions~\eqref{eq: cond d_0 p=2} and~\eqref{eq: cond d_0K_N p=2}, we can choose the truncation level \(K_N  = \frac{1}{d_0}\). Claim~\eqref{eq: claim l_ij p=2 SGD} then easily follows. A similar reasoning for~\eqref{eq: upper bound u_ij} yields  
\begin{equation} \label{eq: claim u_ij p=2 SGD}
u_{ij}(t) \leq (1+c_0) \frac{\gamma_{ij}}{\sqrt{N}} +  2 (1+c_0) \lambda_i \lambda_j \frac{\delta}{\sqrt{N}} \sum_{\ell=1}^t m_{ij}(\ell-1),
\end{equation} 
for every \(t \leq \min_{1 \leq i,j \leq r} \mathcal{T}_{\varepsilon_N^+}^{(ij,+)} \wedge \min_{1 \leq i,j \leq r} \mathcal{T}_{\varepsilon_N^-}^{(ij,-)} \wedge T_1\). According to~\eqref{eq: claim l_ij p=2 SGD} and~\eqref{eq: claim u_ij p=2 SGD} both lower- and upper-bounding functions are increasing, thus each correlation \(m_{ij}\) is increasing on the considered time interval, and \(\min_{1 \leq i,j \leq r} \mathcal{T}_{\varepsilon_N^+}^{(ij,+)}  \wedge T_1 \le \min_{1 \leq \ell \leq r} \mathcal{T}_{\varepsilon_N^-}^{(ij,-)}\). By applying item (a) of Lemma~\ref{lem: discrete growth}, we obtain the following two-sided comparison inequality:
\begin{equation} \label{eq: comparison inequality p=2 SGD}
\frac{(1-c_0)\gamma_{ij}}{\sqrt{N}}\left(1+\frac{2\delta}{\sqrt{N}}(1-c_0)\lambda_i\lambda_j\right)^t  \leq m_{ij}(t) \leq \frac{(1+c_0)\gamma_{ij}}{\sqrt{N}}\left(1+\frac{2\delta}{\sqrt{N}}(1+c_0)\lambda_i\lambda_j\right)^t ,
\end{equation}
for all \(t \leq \min_{1 \leq i,j \leq r} \mathcal{T}_{\varepsilon_N^+}^{(ij,+)} \wedge T_1\). In particular, the hitting times \(\mathcal{T}_{\varepsilon_N^+}^{(ij,+)}\) satisfy
\[
T_{\ell,\varepsilon_N^+}^{(ij)} \le \mathcal{T}_{\varepsilon_N^+}^{(ij,+)} \leq T_{u, \varepsilon_N^+}^{(ij)},
\]
where 
\[
\begin{split}
T_{\ell,\varepsilon_N^+}^{(ij)} & = \frac{\log\left(\frac{\gamma_+}{(1+c_0)\gamma_{ij}}\right)}{\log\left(1+\frac{2\delta}{\sqrt{N}}(1+c_0)\lambda_i\lambda_j\right)}\\
T_{u,\varepsilon_N^+}^{(ij)} &= \frac{\log\left(\frac{\gamma_+}{(1-c_0)\gamma_{ij}}\right)}{\log\left(1+\frac{2\delta}{\sqrt{N}}(1-c_0)\lambda_i\lambda_j\right)}.
\end{split}
\]
We claim that, for all \((i,j) \neq (1,1)\), \(T_{u, \varepsilon_N^+}^{(11)} \leq T_{\ell,\varepsilon_N^+}^{(ij)}\), and hence \(\mathcal{T}_{\varepsilon_N^+}^{(11,+)} \le \mathcal{T}_{\varepsilon_N^+}^{(ij,+)}\), provided that \(\gamma_+\) is chosen sufficiently large but still of order \(1\). Since by assumption \(c_0 < \frac{1}{2} \wedge \frac{\kappa_1}{2+\kappa_1}\), we have \((1-c_0) (1+\kappa_1) > (1+c_0)\). Define
\[
c(\kappa_1) = \frac{\log\left(1+\frac{2\delta}{\sqrt{N}}(1-c_0)(1+\kappa_1)\lambda_1\lambda_2\right)}{ \log \left(1+\frac{2\delta}{\sqrt{N}}(1+c_0)\lambda_1\lambda_2\right)} > 1.
\]
Choosing \(\gamma_+ > \gamma_1\) and such that
\begin{equation} \label{eq: gamma_+ for p=2}
\log(\gamma_+) > \frac{c(\kappa_1) \log((1+c_0) \gamma_1) - \log((1-c_0)\gamma_2)}{c(\kappa_1) -1},
\end{equation}
ensures 
\[
\begin{split}
T_{u,\varepsilon_N^+}^{(11)} \le \frac{\log \left(\frac{\gamma_+}{(1-c_0) \gamma_2}\right)}{\log\left(1+\frac{2\delta}{\sqrt{N}}(1-c_0) \lambda_1^2 \right)} \le \min_{(i,j) \neq (1,1)} \frac{\log \left(\frac{\gamma_+}{(1+c_0) \gamma_1}\right)}{\log\left(1+\frac{2\delta}{\sqrt{N}}(1+c_0)\lambda_i \lambda_j\right)} \leq \min_{(i,j) \neq (1,1)} T_{\ell,\varepsilon_N^+}^{(ij)} .
\end{split}
\]
The inequality \(T_{u, \varepsilon_N^+}^{(11)} \leq T_{\ell,\varepsilon_N^+}^{(ij)}\) guarantees that the correlation \(m_{11}\) reaches the microscopic threshold \(\varepsilon_N^+\) before any other \(m_{ij}\), i.e., \(\min_{1 \le i,j \le r} \mathcal{T}_{\varepsilon_N^+}^{(ij,+)} = \mathcal{T}_{\varepsilon_N^+}^{(11,+)}\). Finally, we can choose \(\gamma_+ > \gamma_1\) satisfying~\eqref{eq: gamma_+ for p=2} such that
\[
\max_{(i,j) \neq (1,1)} m_{ij}(\mathcal{T}_{\varepsilon_N^+}^{(11,+)}) \le \frac{1}{2} m_{11}(\mathcal{T}_{\varepsilon_N^+}^{(11,+)}) = \frac{1}{2} \varepsilon_N^+.
\]
This implies that \(\mathcal{T}_{E_0} = \mathcal{T}_{\varepsilon_N^+}^{(11, +)}\). Using the inequality \(\log(1+x) \ge x/2\) for \(x \in (0,1)\)---which is applicable here because of the prescribed choice of \(\delta\) in~\eqref{eq: step size p=2}---we obtain the explicit upper bound
\[
\mathcal{T}_{\varepsilon_N^+}^{(11, +)} \le T_{u, \varepsilon_N^+}^{(11)} \le \frac{\log\left(\frac{\gamma_+}{(1-c_0)\gamma_{11}}\right)  \log\left(\frac{2 \varepsilon \sqrt{N}}{\gamma_2}\right)}{C_\delta d_0 (1-c_0) \gamma_2^2 \lambda_1^2} N^{\frac{1}{2} \xi},
\]
with \(\mathbb{P}_{\boldsymbol{X}_0^+}\)-probability at least \(1 - \tilde{\eta}_1\), where \(\tilde{\eta}_1 = \eta (\delta, c_0, T_1, K_N)\), \(T_1=T_{u,\varepsilon_N^+}^{(11)}\), and \(K_N = 1/d_0\). This completes the proof of~\eqref{eq: step 1 p=2}.
\end{proof}

\begin{proof}[Proof of~\eqref{eq: step 2 p=2}]
We now assume that \(\boldsymbol{X}_0 \in E_0\). We let \(T_1 >0\) be a time horizon to be chosen later, and define the hitting time \(\mathcal{T}_{\textnormal{bad}}\) by
\[
\mathcal{T}_{\textnormal{bad}} = \inf \left \{ t \in \N_0 \colon \max_{1 \le i,j \le r} m_{ij}(t) > \frac{1}{2} m_{11}(t) \right \}.
\]
Since \(\boldsymbol{X}_0 \in E_0\) implies \(m_{ij}(0) \le \frac{1}{2} m_{11}(0)\) for all \((i,j) \neq (1,1)\), we have \(\mathcal{T}_{\textnormal{bad}} > 0\). In what follows, we study the evolution of the correlations \(\{m_{ij}(t)\}_{1 \le i,j \le r}\) for \( t \le \min_{1 \le i,j \le r} \mathcal{T}_\varepsilon^{(ij, +)} \wedge \mathcal{T}_{\varepsilon_N^-}^{(11,-)} \wedge \min_{2 \le i,j \le r} \mathcal{T}_{\varepsilon_N^-}^{(ij,-)} \wedge \mathcal{T}_{\textnormal{bad}} \wedge T_1\). We note that \(\min_{1 \le i,j \le r} \mathcal{T}_\varepsilon^{(ij, +)}  \wedge \mathcal{T}_{\textnormal{bad}} = \mathcal{T}_\varepsilon^{(11, +)}  \wedge \mathcal{T}_{\textnormal{bad}}\). By Proposition~\ref{prop: inequalities SGD}, with \(\mathbb{P}_{\boldsymbol{X}_0}\)-probability at least \(1-\eta (\delta, c_0, T_1, K_N)\), where \(\eta\) is given in~\eqref{eq: eta},
\[
\begin{split}
m_{ij}(t) & \geq (1 - c_0) m_{ij}(0) + \frac{\delta}{N} \sum_{\ell=1}^t  \left ( - \langle \boldsymbol{v}_i, \left (\nabla_{\textnormal{St}}\Phi(\boldsymbol{X}_{\ell-1}) \right )_j \rangle - h_i(\ell-1) \right ) ,\\
m_{ij}(t) &\leq (1 +c_0) m_{ij}(0) + \frac{\delta}{N} \sum_{\ell=1}^t  \left ( - \langle \boldsymbol{v}_i, \left (\nabla_{\textnormal{St}}\Phi(\boldsymbol{X}_{\ell-1}) \right )_j \rangle +  h_i(\ell-1)  \right ),
\end{split}
\] 
for all \( t \le  \mathcal{T}_\varepsilon^{(11, +)} \wedge \mathcal{T}_{\varepsilon_N^-}^{(11,-)} \wedge \min_{2 \le i,j \le r} \mathcal{T}_{\varepsilon_N^-}^{(ij,-)} \wedge \mathcal{T}_{\textnormal{bad}} \wedge T_1\). We continue to use the same constant \(c_0 \in (0, \frac{1}{2})\) as in the previous step, independent of \(N\), and chosen sufficiently small to satisfy all preceding constraints. On this time interval and for sufficiently large \(N\), for all \(i \in [r]\), we have
\begin{equation} \label{eq: bound h_i p=2}
h_i(t) \le \delta \alpha_2 r \max_{1 \le k \le r} |m_{ik}(t)|.
\end{equation}
This bound follows by using the prescribed value~\eqref{eq: step size p=2} for \(\delta\).\\

We first focus on the evolution of the dominant correlation \(m_{11}\) on this time interval. We have the straightforward upper bound
\[
- \langle \boldsymbol{v}_1, \left (\nabla_{\textnormal{St}}\Phi(\boldsymbol{X}_t) \right )_1 \rangle \le 2\sqrt{N} \lambda_1^2 m_{11}(t) ,
\]
and the lower bound
\[
- \langle \boldsymbol{v}_1, \left (\nabla_{\textnormal{St}}\Phi(\boldsymbol{X}_t) \right )_1 \rangle \ge 2\sqrt{N} \lambda_1^2 m_{11}(t) \left (1 - m_{11}^2(t) - C(r^2-1)N^{-1}\right) \ge \left ( 1  - \varepsilon^2 - \frac{c_0}{2}\right) 2\sqrt{N} \lambda_1^2 m_{11}(t),
\]
for \(N\) sufficiently large. Furthermore, from~\eqref{eq: bound h_i p=2} we have
\[
h_{1}(t) \le \delta \alpha_2 r m_{11}(t) \le c_0 \sqrt{N} \lambda_1^2 m_{11}(t) ,
\]
where the second inequality holds trivially by the fact that \(\delta \in \mathcal{O} \left( d_0 N^{\frac{1}{2} (1-\xi)} \log(N)^{-1} \right)\). Therefore, on the considered time interval,
\[
\begin{split}
m_{11}(t) &\ge (1-c_0) \varepsilon_N^+  + \frac{2\delta}{\sqrt{N}} (1-c_0-\varepsilon^2) \lambda_1^2 \sum_{\ell=1}^t m_{11}(\ell-1),\\
m_{11}(t) &\le (1+c_0) \varepsilon_N^+  + \frac{2\delta}{\sqrt{N}} (1+c_0) \lambda_1^2 \sum_{\ell=1}^t m_{11}(\ell-1).
\end{split}
\]
Applying Lemma~\ref{lem: discrete growth} gives
\begin{equation} \label{eq: corr m_11 p=2}
\frac{(1-c_0)\gamma_+}{\sqrt{N}} \left(1+\frac{2\delta}{\sqrt{N}}(1-c_0-\varepsilon^2)\lambda_1^2 \right)^t \le m_{11}(t) \le \frac{(1+c_0)\gamma_+}{\sqrt{N}} \left(1+\frac{2\delta}{\sqrt{N}}(1+c_0)\lambda_1^2 \right)^t .
\end{equation}
Since both upper and lower bounding functions are increasing, we deduce that \(m_{11}\) is increasing in \(t\), thus
\[
\min_{1 \le i,j \le r} \mathcal{T}_\varepsilon^{(ij, +)} \wedge \min_{2 \le i,j \le r} \mathcal{T}_{\varepsilon_N^-}^{(ij,-)} \wedge \mathcal{T}_{\textnormal{bad}} \wedge T_1 < \mathcal{T}_{\varepsilon_N^-}^{(11,-)}.
\]
Furthermore, from~\eqref{eq: corr m_11 p=2}, we obtain the following lower and upper bounds of \(\mathcal{T}_\varepsilon^{(11,+)}\):
\begin{equation} \label{eq: lower bound T^(11) p=2}
\frac{\log\left(\frac{\varepsilon\sqrt{N}}{(1+c_0)\gamma_+}\right)}{\log\left(1+\frac{2\delta}{\sqrt{N}}(1+c_0)\lambda_1^2 \right)} \le \mathcal{T}_\varepsilon^{(11,+)} \le  \frac{\log\left(\frac{\varepsilon\sqrt{N}}{(1-c_0)\gamma_+}\right)}{\log\left(1+\frac{2\delta}{\sqrt{N}}(1-c_0-\varepsilon^2)\lambda_1^2 \right)} .\\
\end{equation}

To derive the comparison inequalities for all pairs \((i,j) \neq (1,1)\), we start from the bound on \(h_i(t)\) in~\eqref{eq: bound h_i p=2} and estimate \(\delta \alpha_2 r \max_{1 \le k \le r} \vert m_{ik} (t) \vert \). For this purpose, for every \((i,j) \neq (1,1)\), we introduce a stopping time \(\mathcal{T}^{(ij)}_\alpha\) depending on a parameter \(\alpha \in (0,\frac{1}{2})\) to be determined later:  
\[
\mathcal{T}^{(ij)}_\alpha \coloneqq \inf \left\{ t \in \N_0 \colon C_{ij} N^\alpha m_{ij} (t) \leq \max_{1 \le k \le r} \{ \lvert m_{ik}(t) \rvert ,  \lvert m_{ki}(t) \rvert \}\right\},
\]
where \(C_{ij} >0\) is a constant. At time \(t=0\), \(\max_{k \in [r]} \{ |m_{ik}(0)|, |m_{ki}(0)|\} \le \gamma_+ N^{-1/2} \le C_{ij}N^\alpha m_{ij}(0)\) for every \(\alpha > 0\), hence \(\mathcal{T}^{(ij)}_\alpha > 0 \). For every \(t \leq \mathcal{T}_\alpha^{(ij)}\), it then follows that 
\[
\delta \alpha_2 r \max_{1 \le k \le r}  \lvert m_{ik} (t) \rvert  \leq \delta \alpha_2  r C_{ij} N^\alpha  m_{ij}(t) =  C_\delta d_0 \gamma_2^2  C_{ij} \alpha_2 r \log\left(\frac{2\varepsilon\sqrt{N}}{\gamma_2}\right)^{-1} N^{\frac{1}{2} (1-\xi)+ \alpha} m_{ij}(t),
\]
where the equality follows by plugging the value of \(\delta\) given in~\eqref{eq: step size p=2}. To ensure that this perturbation term remains smaller than the leading drift term, i.e., 
\begin{equation} \label{eq: bound h_i p=2 bis}
\delta \alpha_2 r \max_{1 \le k \le r} \vert m_{ik}(t) \vert \le c_0 \sqrt{N} \lambda_i \lambda_j m_{ij}(t),
\end{equation}
it suffices to impose
\[
c_0 \lambda_i \lambda_j \gamma_- \geq   C_\delta d_0 \gamma_2^2  C_{ij} \alpha_2 r \log\left(\frac{2\varepsilon\sqrt{N}}{\gamma_2}\right)^{-1} N^{\alpha - \xi/2} ,
\]
which holds whenever
\begin{equation} \label{eq: alpha critical}
\alpha \leq \frac{\xi}{2} = \frac{1}{2} \left (1  - \frac{1-c_0}{1+c_0} \frac{\lambda_r^2}{\lambda_1^2} \right ). 
\end{equation}
Consequently, for all \(t \le  \mathcal{T}_\varepsilon^{(11,+)} \wedge \min_{2 \le i,j \le r} \mathcal{T}_{\varepsilon_N^-}^{(ij,-)} \wedge \mathcal{T}_{\mathrm{bad}} \wedge \min_{1 \le i,j \le r} \mathcal{T}_\alpha^{(ij)} \wedge T_1\),
\[
- \langle \boldsymbol{v}_i, (\nabla_{\text{St}} \Phi (\boldsymbol{X}_t))_j\rangle + \delta \alpha_2 r \max_{1 \le k \le r} |m_{ik}(t)| \le 2(1+c_0) \lambda_i \lambda_j m_{ij} (t),
\]
yielding
\[
\begin{split}
m_{ij}(t)  &\leq   (1 + c_0) \frac{\varepsilon_N^+}{2} + 2 ( 1 + c_0) \lambda_i\lambda_j \frac{\delta}{\sqrt{N}} \sum_{\ell = 1}^t  m_{ij}(\ell-1)  \\
& \le\frac{(1 + c_0) \gamma_+}{2 \sqrt{N}} \left ( 1 + 2 ( 1 + c_0) \lambda_i\lambda_j \frac{\delta}{\sqrt{N}}\right)^t \\
& \le  \frac{(1 + c_0) \gamma_+}{2 \sqrt{N}} \exp \left ( 2 ( 1 + c_0) \lambda_i\lambda_j \frac{\delta}{\sqrt{N}} t \right ),
\end{split}
\]
where the second inequality follows by applying Lemma~\ref{lem: discrete growth} and the third by using \(\log(1+x) \le x\). Evaluating at \(t = \mathcal{T}_\varepsilon^{(11,+)}\) and using the upper bound on this hitting time from~\eqref{eq: lower bound T^(11) p=2}, for \(N\) sufficiently large we obtain
\begin{equation} \label{eq: upper bound m_ij p=2}
m_{ij}(\mathcal{T}_\varepsilon^{(11,+)}) \lesssim \frac{(1 + c_0) \gamma_+}{2}  \left (\frac{\varepsilon}{(1-c_0) \gamma_+} \right )^{\frac{1+c_0}{1-c_0-\varepsilon^2} \frac{\lambda_i \lambda_j}{\lambda_1^2}} N^{-\frac{1}{2} \left ( 1 - \frac{1+c_0}{1-c_0-\varepsilon^2} \frac{\lambda_i \lambda_j}{\lambda_1^2}\right )} \lesssim N^{-\frac{1}{2} \delta_{ij}^{(1)}},
\end{equation}
where we used the fact that
\[
\mathcal{T}_\varepsilon^{(11,+)} \le \frac{\log\left(\frac{\varepsilon\sqrt{N}}{(1-c_0)\gamma_+}\right)}{\log\left(1+\frac{2\delta}{\sqrt{N}}(1-c_0-\varepsilon^2)\lambda_1^2 \right)} \le \frac{\log\left(\frac{\varepsilon\sqrt{N}}{(1-c_0)\gamma_+}\right)}{ \frac{2\delta}{\sqrt{N}} (1-c_0-\varepsilon^2)  \lambda_1^2 \left( 1 - \frac{\delta}{\sqrt{N}} (1-c_0-\varepsilon^2)^2  \lambda_1^4 \right)} ,
\]
since \(\log(1+ x) \ge x - x^2/2\) for \(x \in (0,1)\), which is satisfied for large \(N\) by assumption on \(\delta\). This implies that \(\mathcal{T}_\varepsilon^{(11,+)} \wedge \min_{2 \le i,j \le r} \mathcal{T}_{\varepsilon_N^-}^{(ij,-)} \wedge \min_{1 \le i,j \le r} \mathcal{T}_\alpha^{(ij)} \wedge T_1 \le \mathcal{T}_{\textnormal{bad}}\). \\

Next, we need to derive a lower bound on \(m_{ij}(t)\) for all \(2 \le i,j \le r\). Using the bound for \(h_i(t)\) in~\eqref{eq: bound h_i p=2} and~\eqref{eq: bound h_i p=2 bis}, for all \(t \le \mathcal{T}_\varepsilon^{(11,+)} \wedge \min_{2 \le i,j \le r} \mathcal{T}_{\varepsilon_N^-}^{(ij,-)} \wedge \min_{1 \le i,j \le r} \mathcal{T}_\alpha^{(ij)} \wedge T_1\), we obtain
\[
\begin{split}
- \langle \boldsymbol{v}_i, (\nabla_{\text{St}} \Phi (\boldsymbol{X}_t))_j\rangle - \delta \alpha_2 r \max_{1 \le k \le r} |m_{ik}(t)| & \ge  2\sqrt{N} \left (1- \frac{c_0}{2} \right) \lambda_i \lambda_j m_{ij} (t) -  2 \sqrt{N} r^2 \lambda_1^2 m_{i1}(t) m_{1j} (t) m_{11} (t) \\
& \ge 2 \sqrt{N} m_{ij}(t) \left [ \left (1- \frac{c_0}{2} \right) \lambda_i \lambda_j  -  C r^2 \lambda_1^2 \varepsilon N^{2 \alpha} m_{ij}(t)\right] \\
& \gtrsim 2 \sqrt{N} m_{ij}(t) \left [ \left (1- \frac{c_0}{2} \right) \lambda_i \lambda_j  -  C r^2 \lambda_1^2 \varepsilon N^{2 \alpha - \frac{1}{2} \delta_{ij}^{(1)}} \right],
\end{split}
\]
where \(C>0\) is a constant independent of \(N\) and where we used~\eqref{eq: upper bound m_ij p=2} for the last inequality, which implies that \(m_{ij}(t) \le m_{ij}(\mathcal{T}_\varepsilon^{(11,+)}) \lesssim N^{- \frac{1}{2} \delta_{ij}^{(1)}}\). Then, 
\[
- \langle \boldsymbol{v}_i, (\nabla_{\text{St}} \Phi (\boldsymbol{X}_t))_j\rangle - \delta \alpha_2 r \max_{1 \le k \le r} |m_{ik}(t)| \ge 2\sqrt{N} \left (1- c_0 \right) \lambda_i \lambda_j m_{ij} (t) ,
\]
provided the sufficient condition
\begin{equation} \label{eq: alpha critical 2}
\alpha \le \frac{1}{4} \sup_{2 \le i,j \le r} \delta_{ij}^{(1)} = \frac{1}{4} \delta_{22}^{(1)}.
\end{equation}
Note that this condition is stronger than~\eqref{eq: alpha critical}. Thus, we obtain 
\[
m_{ij}(t) \ge (1 - c_0) \varepsilon_N^- + 2 ( 1 - c_0) \lambda_i\lambda_j \frac{\delta}{\sqrt{N}} \sum_{\ell = 1}^t  m_{ij}(\ell-1),
\]
which, by applying Lemma~\ref{lem: discrete growth}, leads to
\[
m_{ij}(t) \ge \frac{(1 - c_0) \gamma_-}{\sqrt{N}} \left ( 1 + 2 ( 1 - c_0) \lambda_i\lambda_j \frac{\delta}{\sqrt{N}}\right)^t ,
\]
for every \( t \leq \mathcal{T}_\varepsilon^{(11, +)} \wedge \min_{2 \le i,j \le r} \mathcal{T}_{\varepsilon_N^-}^{(ij,-)}  \wedge \min_{1 \le i,j \le r} \mathcal{T}_\alpha^{(ij)} \wedge T_1 \). Since \(\log(1+x) \ge x -\frac{x^2}{2}\) for all \(x \in (0,1)\), it follows that
\[
m_{ij}(t) \ge  \frac{(1 - c_0) \gamma_-}{\sqrt{N}} \exp \left ( 2 ( 1 - c_0) \lambda_i\lambda_j \frac{\delta}{\sqrt{N}} t \right ) \exp \left ( -  \frac{1}{2}( 1 - c_0)^2 \lambda_i^2 \lambda_j^2 \frac{\delta^2}{N} t \right ).
\]
Thus, for any time horizon \(T_1 \le \frac{2 \log \left ( 2 \varepsilon \sqrt{N} / \gamma_2\right)^2}{C_\delta^2 d_0 \gamma_2^4 (1-c_0)^2 \lambda_r^2}N^{\xi}\), we get
\[
m_{ij}(t) \ge  \frac{(1 - c_0) \gamma_-}{ \sqrt{N}} \exp (-d_0) \exp \left ( 2 ( 1 - c_0) \lambda_i\lambda_j \frac{\delta}{\sqrt{N}} t \right ).
\]
Since \(d_0 \ll 1\), we find
\[
m_{ij}(t) \ge  \frac{(1 - c_0) \gamma_-}{e \sqrt{N}} \exp \left ( 2 ( 1 - c_0) \lambda_i\lambda_j \frac{\delta}{\sqrt{N}} t \right ).
\]
Using the lower bound of \(\mathcal{T}_\varepsilon^{(11,+)}\) from~\eqref{eq: lower bound T^(11) p=2}, we obtain
\begin{equation} \label{eq: lower bound m_ij p=2}
m_{ij}(\mathcal{T}_\varepsilon^{(11,+)}) \ge \frac{(1 - c_0) \gamma_+}{e}  \left (\frac{\varepsilon}{(1 + c_0) \gamma_+} \right )^{\frac{1- c_0}{1+c_0} \frac{\lambda_i \lambda_j}{\lambda_1^2}} N^{-\frac{1}{2} \left ( 1 - \frac{1-c_0}{1+c_0} \frac{\lambda_i \lambda_j}{\lambda_1^2}\right )} \gtrsim N^{-\frac{1}{2} \xi_{ij}^{(1)}},
\end{equation}
The lower bound~\eqref{eq: lower bound m_ij p=2} implies that on the event up to time \(\mathcal{T}_\varepsilon^{(11, +)} \wedge \min_{1 \le i,j \le r} \mathcal{T}_\alpha^{(ij)} \wedge T_1\), we have \(m_{ij} \ge \varepsilon_N^{-}\) for all \(2 \le i,j \le r\), i.e., \(\mathcal{T}_\varepsilon^{(11, +)} \wedge \min_{1 \le i,j \le r} \mathcal{T}_\alpha^{(ij)} \wedge T_1 \le  \min_{2 \le i,j \le r} \mathcal{T}_{\varepsilon_N^-}^{(ij,-)}  \). Moreover, we see that
\[
\mathcal{T}_{E_1(\varepsilon, c_0)} = \mathcal{T}_\varepsilon^{(11, +)} \wedge \min_{1 \le i,j \le r} \mathcal{T}_\alpha^{(ij)} \wedge T_1.\\
\]

The final step shows that \(\mathcal{T}_\varepsilon^{(11,+)} \wedge T_1 \leq  \min_{1 \le i,j \le r} \mathcal{T}_\alpha^{(ij)} \) by comparing the upper and lower bounds on \(m_{ij}(\mathcal{T}_\varepsilon^{(11,+)})\). Specifically, we must verify that there exists a constant \(C_{ij} >0\) and some \(\alpha \leq \frac{1}{2} \delta_{22}^{(1)} (\varepsilon,c_0)\) such that, at time \(t = \mathcal{T}_\varepsilon^{(11,+)}\), \(\lvert m_{ik}(t) \rvert \leq C_{ij} N^\alpha m_{ij}(t)\). Using the upper and lower bounds obtained in~\eqref{eq: upper bound m_ij p=2} and~\eqref{eq: lower bound m_ij p=2}, we find for all \((i,j) \neq (1,1)\),
\[
\max_{1 \le k \le r} \{\lvert m_{ik}(\mathcal{T}_\varepsilon^{(11,+)}) \rvert,\lvert m_{ki}(\mathcal{T}_\varepsilon^{(11,+)}) \rvert\}  \leq C_i N^{-\frac{1}{2} \delta_{i1}^{(1)}}  \leq C_{ij} N^\alpha N^{-\frac{1}{2} \xi_{ir}^{(1)}}  \leq C_{ij} N^\alpha m_{ij}(\mathcal{T}_\varepsilon^{(11,+)}) ,
\]
where the second inequality holds whenever 
\[
\alpha \geq \frac{1}{2} \left ( \xi_{ir}^{(1)} - \delta_{i1}^{(1)} \right) = \frac{\lambda_i}{2 \lambda_1} \left ( \frac{1+c_0}{1-c_0-\varepsilon^2} - \frac{1-c_0}{1+c_0} \frac{\lambda_r}{\lambda_1}\right ). 
\]
In particular, a sufficient condition is
\[
\alpha \geq \alpha_{\min} \coloneqq \frac{\lambda_r}{2 \lambda_1} \left ( \frac{1+c_0}{1-c_0-\varepsilon^2} - \frac{1-c_0}{1+c_0}\frac{\lambda_r}{\lambda_1}\right ). 
\]
To guarantee the existence of such an \(\alpha\), from~\eqref{eq: alpha critical 2} we must ensure that
\[
\Delta (\varepsilon, c_0) \coloneqq \frac{1}{4} \delta_{22}^{(1)} - \alpha_{\min} > 0.
\]
Evaluating at \((\varepsilon, c_0) =(0,0)\),
\[
\Delta (0,0) = \frac{1}{4} \left (1 - \frac{\lambda_2^2}{\lambda_1^2} - 2  \frac{\lambda_r}{\lambda_1} \left (1 - \frac{\lambda_r}{\lambda_1}\right) \right) >0,
\]
which holds under the assumption \(\frac{1}{(1+\kappa_1)^2} < \frac{1}{2}\) (i.e., \(\kappa_1 > \sqrt{2}-1\)) since \(2  \frac{\lambda_r}{\lambda_1} \left (1 - \frac{\lambda_r}{\lambda_1}\right)  \le \frac{1}{2}\). Since \(\Delta (\varepsilon, c_0)\) is continuous in both variables and \(\Delta(0,0)>0\), there exist \(\bar{\varepsilon}_0 >0\) and \(\bar{c}_0>0\) such that \(\Delta (\varepsilon, c_0)  \ge \frac{1}{2} \Delta (0,0) > 0\) for all \(c_0 \in (0, \bar{c}_0)\) and \(\varepsilon_0 \in (0,\bar{\varepsilon}_0)\). Hence, for sufficiently small \((\varepsilon, c_0)\), we can find an \(\alpha \in \left (\alpha_{\min} ,\frac{1}{4} \delta_{22}^{(1)}\right)\). With this choice of \(\alpha\), we have \( \mathcal{T}_\varepsilon^{(11,+)}  \wedge T_1 \le \min_{(i,j) \neq (1,1)} \mathcal{T}_\alpha^{(ij)}\), hence \(\mathcal{T}_{E_1(\varepsilon,c_0)} = \mathcal{T}_\varepsilon^{(11,+)}  \wedge T_1\). Therefore, choosing \(T_1\) as the upper bound for \( \mathcal{T}_\varepsilon^{(11,+)} \) from~\eqref{eq: lower bound T^(11) p=2} yields
\[
\mathcal{T}_{E_1(\varepsilon,c_0)} = \mathcal{T}_\varepsilon^{(11,+)} \le  \frac{ \log \left ( \frac{\varepsilon \sqrt{N}}{(1-c_0)\gamma_+} \right ) \log \left ( \frac{ 2\varepsilon \sqrt{N}}{\gamma_2} \right) N^{\xi/2} }{C_\delta d_0 (1-c_0-\varepsilon^2) \gamma_2^2 \lambda_1^2} ,
\]
with \(\mathbb{P}_{\boldsymbol{X}_0}\)-probability at least \(1 - \tilde{\eta}_2\). 
\end{proof}
\end{proof}

We now prove Lemma~\ref{lem: kth step SGD p=2}. 

\begin{proof}[\textbf{Proof of Lemma~\ref{lem: kth step SGD p=2}}]
We provide the proof for the first step of the sequential elimination phenomenon, i.e., we consider \(k=2\) and assume that \(\boldsymbol{X}_0 \in E_1(\varepsilon, c_0)\). In particular, we have \(m_{11}(0) = \varepsilon\), and for all \((i,j) \neq (1,1)\), the initial values satisfy
\[
\gamma_-^{(ij)} N^{-\frac{1}{2} \xi_{ij}^{(1)}} \leq m_{ij}(0) \leq \gamma_+^{(ij)} N^{-\frac{1}{2} \delta_{ij}^{(1)}},
\]
for some constants \(\gamma_-^{(ij)} >0\) and \(\gamma_+^{(ij)}>0\) of order \(1\). A modification of Proposition~\ref{prop: inequalities SGD} is required, since the correlations at initialization are not in the scale \(\Theta (N^{-\frac{1}{2}})\). Let \(T_2 > 0\) be a time horizon to be chosen later. By assumption~\eqref{eq: step size p=2} on the step size \(\delta\), we can apply Lemma~\ref{lem: expansion correlations} which gives
\[
\left | m_{ij}(t) - \left ( m_{ij}(0) - \frac{\delta}{N} \sum_{\ell=1}^t  \langle \boldsymbol{v}_i,(\nabla_{\textnormal{St}}\Phi(\boldsymbol{X}_{\ell-1}))_j \rangle - \frac{\delta}{N} \sum_{\ell=1}^t  \langle \boldsymbol{v}_i,(\nabla_{\textnormal{St}}H^\ell(\boldsymbol{X}_{\ell-1}))_j \rangle \right ) \right | \leq \frac{\delta}{N} \sum_{\ell=1}^t  \vert a_i(\ell-1) \vert,
\]
for all \(t \le T_2\), with \(\mathbb{P}_{\boldsymbol{X}_0}\)-probability at least \(1 - T_2 e^{-cN}\). The noise term \(\frac{\delta}{N} \left | \sum_{\ell=1}^t   \langle \boldsymbol{v}_i,(\nabla_{\textnormal{St}}H^\ell(\boldsymbol{X}_{\ell-1}))_j \rangle  \right | \) can be upper-bounded according to Lemma~\ref{lem: noise martingale}, i.e., for every \(a > 0\),  
\[
\max_{t \le T_2} \frac{\delta}{N} \left | \sum_{\ell=1}^t   \langle \boldsymbol{v}_i,(\nabla_{\textnormal{St}}H^\ell(\boldsymbol{X}_{\ell-1}))_j \rangle  \right | \leq a,
\]
with \(\mathbb{P}_{\boldsymbol{X}_0}\)-probability at least \(1 - 2 e^{- a^2 N^2 / (\delta^2 T_2)}\). Moreover, for all \(t \leq T_2\), we have
\[
\frac{\delta}{N} \sum_{\ell=1}^t |a_i(\ell-1)| \leq \frac{\delta}{N} \sum_{\ell=1}^t h_i(\ell-1) + \frac{\delta^2 \alpha_2}{2N} \sum_{\ell=1}^t \sum_{k=1}^r  \frac{\delta}{N} | \langle \boldsymbol{v}_i,(\nabla_{\textnormal{St}}H^t(\boldsymbol{X}_{\ell-1}))_k \rangle | \mathbf{1}_{\{| \langle \boldsymbol{v}_i,(\nabla_{\textnormal{St}}H^t(\boldsymbol{X}_{\ell-1}))_k \rangle |> K_N\}},
\]
where \(K_N\) is a truncation sequence, and 
\[
h_i(t) = \frac{\delta \alpha_2}{2} \sum_{k=1}^r \left  ( |m_{ik}(t)| + \frac{\delta}{N}  | \langle \boldsymbol{v}_i,(\nabla_{\textnormal{St}}\Phi(\boldsymbol{X}_t))_k \rangle | + \frac{\delta K_N}{N}  \right).
\]
We now set the truncation value \(K_N\) so that the term \(\frac{\delta K_N}{N}\) is at most \(1\), which implies that \(K_N \lesssim \frac{\log(N)}{C_\delta d_0}N^{(1+\xi)/2}\). This ensures that for sufficiently large \(N\), \(h_i (t) \lesssim \delta \alpha_2 r\) for every \(i \in [r]\) and every \(t \leq T_2\). Therefore, we have
\[
\frac{\delta}{N} \sum_{\ell=1}^{T_2}   h_i(\ell-1) \lesssim \frac{d_0^2 N^{- \xi}}{\log(N)^2} T_2.
\]
For every time horizon \(T_2 \ll \frac{\log(N)^2 N^\xi}{d_0^2}\), the above sum can be set to any arbitrary fraction of the initial value \(\varepsilon\). Similarly, for every \(T_2 \ll \frac{\log(N)^2}{d_0^2} N^{ \frac{1}{1 + c_0}  -1 + \xi}\), the above sum can be set to any arbitrary fraction of the initial value \(N^{-\frac{1}{2} \delta_{ij}^{(1)}}\). Finally, applying Lemma~\ref{lem: higher truncation} to the second summand of \(|a_i(t)|\), we obtain for every \(a > 0\),
\[
\max_{t \le T_2} \frac{\delta^3 \alpha_2}{2N^2} \left \lvert \sum_{\ell = 1}^t \sum_{k=1}^r  \frac{\delta}{N} | \langle \boldsymbol{v}_i,(\nabla_{\textnormal{St}}H^\ell(\boldsymbol{X}_{\ell-1}))_k \rangle | \mathbf{1}_{\{| \langle \boldsymbol{v}_i,(\nabla_{\textnormal{St}}H^\ell(\boldsymbol{X}_{\ell-1}))_k \rangle |> K_N\}} \right \rvert \leq a
\]
with \(\mathbb{P}_{\boldsymbol{X}_0}\)-probability at least \(1 - \frac{\delta^3}{N^2a} T_2 e^{- \log(N)^2 N^{1 + \xi} / d_0^2}\). Combining the previous estimates with \(a = \frac{c_0 \varepsilon}{2} \), we find
\begin{equation} \label{eq: discrete population 11}
(1 - c_0) \varepsilon - \frac{\delta}{N} \sum_{\ell=1}^t  \langle \boldsymbol{v}_1, (\nabla_{\textnormal{St}} \Phi(\boldsymbol{X}_{\ell-1}))_1 \rangle  \leq m_{11}(t) \leq (1+c_0) \varepsilon - \frac{\delta}{N} \sum_{\ell=1}^t  \langle \boldsymbol{v}_1, (\nabla_{\textnormal{St}} \Phi(\boldsymbol{X}_{\ell-1}))_1 \rangle ,
\end{equation}
with \(\mathbb{P}_{\boldsymbol{X}_0}\)-probability at least \(1 - T_2 e^{-c_1 N} - 2 e^{- c_2 c_0^2 \varepsilon^2 N^2 / (\delta^2 T_2)} - \frac{ C \delta^3}{N^2 c_0 \varepsilon} T_2 e^{- c_3 \log(N)^2 N^{1 + \xi} / d_0^2}\). Similarly, for every \((i,j) \neq (1,1)\), using \(a = \frac{c_0 }{2} N^{-\frac{1}{2} \delta_{ij}^{(1)}} \), we obtain 
\begin{equation} \label{eq: discrete population ij}
(1 - c_0) N^{- \frac{1}{2} \xi_{ij}^{(1)}} - \frac{\delta}{N} \sum_{\ell=1}^t  \langle \boldsymbol{v}_i, (\nabla_{\textnormal{St}} \Phi(\boldsymbol{X}_{\ell-1}))_j \rangle  \leq m_{ij}(t) \leq (1+c_0) N^{- \frac{1}{2} \delta_{ij}^{(1)}} - \frac{\delta}{N} \sum_{\ell=1}^t  \langle \boldsymbol{v}_i, (\nabla_{\textnormal{St}} \Phi(\boldsymbol{X}_{\ell-1}))_j \rangle ,
\end{equation}
with \(\mathbb{P}_{\boldsymbol{X}_0}\)-probability at least \(1 - T_2 e^{-c_1 N} - 2 e^{- c_2 c_0^2 N^{2-\xi} /( \delta^2 T_2)}  - \frac{ K \delta^3}{c_0 N^{\frac{3}{2}+\frac{1}{2} \frac{1+c_0}{1-c_0} \frac{\lambda_2^2}{\lambda_1^2}}} T_2 e^{- c_3 \log(N)^2 N^{1 + \xi} / d_0^2}\). At this point, note that~\eqref{eq: discrete population 11} and~\eqref{eq: discrete population ij} only depend on the population term:
\[
- \langle \boldsymbol{v}_i, (\nabla_{\textnormal{St}} \Phi(\boldsymbol{X}_t))_j\rangle = 2 \sqrt{N} \lambda_i \lambda_j m_{ij}^{p-1}(t) - \sqrt{N} \sum_{1 \le k, \ell \le 2} \lambda_k (\lambda_j + \lambda_\ell) m_{i \ell}(t) m_{k j}(t) m_{k \ell}(t).
\]
The rest of the proof involves a detailed analysis of this term by distinguishing between the cases \(i=j=1\), \(i=1, j\neq 1\), \(i\neq 1, j=1\), and \(i \neq 1, j \neq 1\). For this, we refer the reader to the proof of Lemma 6.4 of~\cite{langevin}, where a detailed analysis using Langevin dynamics is carried out. We can apply the same strategy using~\eqref{eq: discrete population 11} and~\eqref{eq: discrete population ij} along with discrete difference inequalities, see e.g.\ Lemmas~\ref{lem: discrete growth} and~\ref{lem: discrete isotropic}. Following this analysis, we obtain 
\[
\mathcal{T}_{E_2(\varepsilon, c_0)} \lesssim \frac{\log \left ( \frac{\varepsilon }{1-c_0-\varepsilon^2}  N^{\xi_{22}^{(1)}/2} \right) \log \left (\varepsilon \sqrt{N} \right) N^{\xi/2}}{C_\delta d_0 (1-c_0) \lambda_2^2} ,
\]
which satisfies the prescribed condition on the time horizon. 
\end{proof}

\section{Proof of subspace recovery}   \label{section: proof isotropic SGD} 
In this section, we consider \(p=2\) and assume that \(\lambda_1 = \cdots = \lambda_r \equiv \lambda >0\). The goal is to prove Theorem~\ref{thm: strong recovery isotropic SGD nonasymptotic}. We focus on the evolution of the eigenvalues of the matrix-valued function \(\boldsymbol{G}(\boldsymbol{X}) = \boldsymbol{V}^\top  \boldsymbol{X} \boldsymbol{X}^\top \boldsymbol{V}\). Let \(\theta_1(t), \ldots, \theta_r(t)\) denote the eigenvalues of \(\boldsymbol{G}_t =\boldsymbol{G} (\boldsymbol{X}_t)\), where \(\boldsymbol{X}_t\) denotes the online SGD output at time \(t\). Note that \(\boldsymbol{G}_t\) is symmetric and positive definite, thus \(\norm{\boldsymbol{G}_t} = \theta_{\max}(t)\). 

The first result of this section is about the weak recovery of all the eigenvalues. 

\begin{lem} \label{lem:weak_iso_min_p2}
For every \(\gamma_1 > 1 > \gamma_2 >0\), there exists \(\varepsilon \in (0, \frac{1}{2})\) such that, for all \(\varepsilon_0 < \varepsilon\), there exists a sequence \(d_0 = d_0(N)\) satisfying~\eqref{eq: sequence d_0} such that if
\[
\delta = C_\delta\frac{d_0\sqrt{N}}{\log(N)^2},
\]
for some constant \(C_\delta > 0\), then for \(N\) sufficiently large,
\[
\inf_{\boldsymbol{X}_0 \in \mathcal{C}_1'(\gamma_1,\gamma_2)} \mathbb{P}_{\boldsymbol{X}_0} \left(\varepsilon_0 \le \theta_{\min}\left(T \right) \le \theta_{\max}\left(T \right) \le \frac{10 \gamma_1}{\gamma_2} \varepsilon_0 \right) \geq 1 - \eta,
\]
where 
\[
T = \frac{\log(N)^2 \log\left(\frac{16 \varepsilon_0 N}{\gamma_2 }\right)}{2 C_\delta \lambda^2 d_0 },
\]
and 
\[
\eta = K_1 \frac{\log(N)^3}{d_0} e^{- c_1 N} + K_2  e^{-c_2 /d_0} + K_3 \frac{\log(N)^3}{d_0 N^{\frac{c_3}{d_0}}}.
\]
\end{lem}

\begin{proof}
Since \(\boldsymbol{X}_0 \in \mathcal{C}_1'(\gamma_1,\gamma_2)\), it holds that \(\frac{\gamma_2}{N} \leq \theta_{\min} (0) \leq \theta_{\max} (0) \leq \frac{\gamma_1}{N}\). Let \(T >0\) be a fixed time horizon to be determined later. According to Lemma~\ref{lem: iso_partial_inter} and using the fact that \( - \norm{\boldsymbol{S}_t} \boldsymbol{I}_r \preceq \boldsymbol{S}_t \preceq \norm{\boldsymbol{S}_t} \boldsymbol{I}_r\), we have the partial ordering
\begin{equation} \label{eq:useful_eq_iso}
- \norm{ \boldsymbol{S}_t} \boldsymbol{I}_r - \boldsymbol{A}_t \preceq \boldsymbol{G}_t - \left (\boldsymbol{G}_0  + \frac{4 \lambda^2 \delta}{\sqrt{N}} \sum_{\ell=1}^t    \boldsymbol{G}_{\ell-1} \left( \boldsymbol{I}_r - \boldsymbol{G}_{\ell-1} \right) \right) \preceq \norm{ \boldsymbol{S}_t} \boldsymbol{I}_r + \boldsymbol{A}_t,
\end{equation}
for every \(t \le T\), with probability at least \(1-T\exp(-c_1 N)-4 r^2 T\exp\left(-c_2\frac{K_N ^2}{\sigma^2}\right)\). Here, \(\boldsymbol{S}_t\) is given by~\eqref{eq: S_t} and \(\boldsymbol{A}_t\) by~\eqref{eq: A_t} which depends on the truncation sequence \((K_N)_{N \ge 1}\). Let \(\xi > 2\frac{\gamma_1}{\gamma_2}\) be a fixed constant. We define the hitting time \(\mathcal{T}_\xi\) as 
\[
\mathcal{T}_\xi  = \min \left \{t \in \N_0 \colon \theta_{\max}(t) \geq \xi \theta_{\min}(t) \right\}.
\]
Since the ratio \(\theta_{\max}(0) / \theta_{\min}(0) < \xi\), it follows that \(\mathcal{T}_\xi  > 0\). Furthermore, let \(\mathcal{T}_{\textnormal{bad}}\) denote the hitting time 
\[
\mathcal{T}_{\textnormal{bad}} = \inf \left \{t \in \N \colon c_0 \lambda_{\min} \left(\boldsymbol{G}_0 + \frac{4 \lambda^2 \delta}{\sqrt{N}}  \sum_{\ell=1}^t \boldsymbol{G}_{\ell-1} \left(\boldsymbol{I}_r - \boldsymbol{G}_{\ell-1} \right) \right) \leq \norm{\boldsymbol{S}_t} \right\},
\] 
where \(c_0 = c_0(N) \in (0,\frac{1}{6})\) is a control sequence to be determined later. We show that \(\mathcal{T}_{\textnormal{bad}} > 1\). Set \(K_N = C \sqrt{\frac{\log(N)}{d_0}}\) for a constant \(C\) that may depend on \(\lambda,\sigma^2,r,\gamma_1,\gamma_2\), and recall that
\[
\boldsymbol{S}_t = \frac{4\lambda \delta}{N}  \sum_{\ell=1}^t \mbox{sym}\left(\boldsymbol{V}^\top \Pi^{\textnormal{St}}_{\boldsymbol{X}_{\ell-1}} \left( \boldsymbol{W}^\ell \boldsymbol{X}_{\ell-1}\right) \boldsymbol{M}_{\ell-1}^\top \right) .
\]
At time \(t=1\), we have
\[
\norm{\mbox{sym}\left(\boldsymbol{V}^\top \Pi^{\textnormal{St}}_{\boldsymbol{X}_0} \left( \boldsymbol{W}^1 \boldsymbol{X}_0 \right) \boldsymbol{M}_0^\top \right)} \leq  2rK_N \norm{\boldsymbol{G}_0}^{1/2} \leq  2 rK_N \frac{\sqrt{\gamma}_1}{\sqrt{N}},
\]
where we used that \(\norm{\boldsymbol{V}^\top \Pi^{\textnormal{St}}_{\boldsymbol{X}_0} \left( \boldsymbol{W}^1 \boldsymbol{X}_0 \right)} \leq 2rK_N\) on the event defined by the above use of Lemma~\ref{lem: iso_partial_inter}. Using the prescribed values for \(\delta\) and \(K_N\), we obtain
\[
\norm{\boldsymbol{S}_1} = \frac{4 \lambda \delta}{N} \norm{\mbox{sym}\left(\boldsymbol{V}^\top \Pi^{\textnormal{St}}_{\boldsymbol{X}_0} \left( \boldsymbol{W}^1 \boldsymbol{X}_0 \right) \boldsymbol{M}_0^\top \right)} \leq C' \frac{\sqrt{d_0}}{N\log(N)^{3/2}} < c_0 \frac{\gamma_2}{N},
\]
with \(C' = C' (\lambda, \sigma^2, r , \gamma_1,\gamma_2)\), since \(d_0\) tends to \(0\) arbitrarily slowly. This shows that \(\mathcal{T}_{\textnormal{bad}}>1\).

We now introduce the following hitting times. For every \(\varepsilon \in (0,1)\), we define
\[
\begin{split}
\mathcal{T}_\varepsilon^{(\max, +)}  & = \inf \left\{t \in \N_0 \colon  \theta_{\max} (t) \geq \varepsilon \right\}, \\
\mathcal{T}_\varepsilon^{(\min, +)}  & = \inf \left\{t \in \N_0 \colon \theta_{\min} (t) \geq \varepsilon\right\}, \\
\mathcal{T}_\varepsilon^{(\min,-)} & = \inf \left\{t \in \N_0 \colon  \theta_{\min}(t) \leq \varepsilon \right\}.
\end{split}
\]
We introduce the threshold \(\varepsilon_N = \frac{\gamma_2}{2N}\) and consider \(\varepsilon \in (0, \frac{1}{2})\). We want to establish a comparison inequality for \(\theta_i (t)\) for every \(t \leq \mathcal{T}_{\varepsilon_N}^{(\min,-)} \wedge \mathcal{T}_\varepsilon^{(\min,+)} \wedge \mathcal{T}_{1/2}^{(\max,+)} \wedge \mathcal{T}_\xi \wedge \mathcal{T}_{\textnormal{bad}}\) starting from~\eqref{eq:useful_eq_iso}. In particular, we aim to find estimates for \(\norm{\boldsymbol{S}_t}\) and \(\boldsymbol{A}_t\). We first note that, on the considered time interval,
\[
\norm{\boldsymbol{S}_t}  \le c_0 \lambda_{\min} \left (\boldsymbol{G}_0  + \frac{4 \lambda^2 \delta}{\sqrt{N}}  \sum_{\ell=1}^t \boldsymbol{G}_{\ell-1} \left(\boldsymbol{I}_r - \boldsymbol{G}_{\ell-1} \right) \right),
\]
yielding
\begin{equation} \label{eq: norm S_t}
\norm{\boldsymbol{S}_t} \boldsymbol{I}_r \preceq c_0 \left (\boldsymbol{G}_0  + \frac{4 \lambda^2 \delta}{\sqrt{N}}  \sum_{\ell=1}^t \boldsymbol{G}_{\ell-1} \left(\boldsymbol{I}_r - \boldsymbol{G}_{\ell-1} \right) \right).
\end{equation}
Next, we provide an estimate for \(\boldsymbol{A}_t\) for every \(t \leq \mathcal{T}_{\varepsilon_N}^{(\min,-)} \wedge \mathcal{T}_\varepsilon^{(\min,+)} \wedge \mathcal{T}_{1/2}^{(\max,+)} \wedge \mathcal{T}_\xi \wedge \mathcal{T}_{\textnormal{bad}}\). According to~\eqref{eq: A_t}, \(\boldsymbol{A}_t = \sum_{\ell=1}^t a_{\ell-1} \boldsymbol{I}_r\), where
\[
a_{\ell-1} = 32 r^3 \lambda^2  \frac{\delta^2 K_N^2}{N^2} + (8 \lambda^4+\alpha_2) \frac{\delta^2}{N} \norm{\boldsymbol{G}_{\ell-1}} + 8 r \alpha_2 \lambda \frac{\delta^3 K_N}{N^2}\norm{\boldsymbol{G}_{\ell-1}}^{1/2} + 4 \alpha_2 \lambda^2 \frac{\delta^3}{N^{3/2}} \norm{\boldsymbol{G}_{\ell-1}}.
\]
Using \(\norm{\boldsymbol{G}_t} = \theta_{\max}(t)\) and since \(t \le \mathcal{T}_\xi\), we have 
\[
a_{\ell-1} \leq C_1 \left ( \frac{\delta^2 K_N^2}{N^2} + \sqrt{\xi} \frac{\delta^3 K_N}{N^2} \theta_{\min}(\ell-1)^{1/2} + \xi \frac{ \delta^2}{N} \left (1 + \frac{\delta}{\sqrt{N}}\right) \theta_{\min}(\ell-1) \right),
\]
where \(C_1 = C_1 (r, \lambda, \sigma^2)\). Using the prescribed values for \(\delta\) and \(K_N\), and the fact that \(\theta_{\min} (\ell-1) \ge \frac{\gamma_2}{2N}\) on the considered time interval, we obtain
\[ 
a_{\ell-1} \leq C_2 \frac{d_0}{\log(N)^3} \theta_{\min}(\ell-1) ,
\]
for some constant \(C_2 = C_2 (r, \lambda, \sigma^2, \gamma_2)\). Since $d_0$ is a sequence going to \(0\) arbitrarily slowly, we have
\[ 
a_{\ell-1} \leq 2 c_0 \lambda \frac{\delta}{\sqrt{N}}\theta_{\min}(\ell-1),
\]
which holds for any sequence $c_0(N) = c\log(N)^{-1}$ where $c$ may depend on the parameters of the problem (save for the dimension). This bound is useful since we note that
\begin{equation} \label{eq: lambda min cond 1}
c_0 \lambda_{\min} \left(\frac{4 \lambda \delta}{\sqrt{N}} \boldsymbol{G}_{t-1} \left(\boldsymbol{I}_r-\boldsymbol{G}_{t-1} \right)\right) = c_0 \frac{4 \lambda \delta}{\sqrt{N}} \min_{1 \le i \le r} \left \{ \theta_i(t-1) \left (1 - \theta_i (t-1) \right) \right \} \geq 2 c_0 \lambda\frac{\delta}{\sqrt{N}} \theta_{\min}(t-1),
\end{equation}
where we used the fact that \(t \le \mathcal{T}_{1/2}^{(\max,+)}\). Therefore, we get
\[
a_{\ell-1} \le c_0 \lambda_{\min} \left(\frac{4 \lambda \delta}{\sqrt{N}} \boldsymbol{G}_{t-1} \left( \boldsymbol{I}_r - \boldsymbol{G}_{t-1}\right)\right),
\]
which, in turn, implies that
\begin{equation} \label{eq: bound A_t}
\boldsymbol{A}_t = \sum_{\ell=1}^t a_{\ell-1} \boldsymbol{I}_r \preceq c_0 \sum_{\ell=1}^t \left(\frac{4 \lambda \delta}{\sqrt{N}} \boldsymbol{G}_{\ell-1}\left(\boldsymbol{I}_r-\boldsymbol{G}_{\ell-1} \right)\right).
\end{equation}
Inserting~\eqref{eq: norm S_t} and~\eqref{eq: bound A_t} into~\eqref{eq:useful_eq_iso}, we obtain the following ordering: 
\begin{equation} \label{eq:partial_ord_p2_iso}
\begin{split}
\boldsymbol{G}_t & \succeq (1-c_0 ) \boldsymbol{G}_0 + 4 \lambda^2 (1 - 2c_0 ) \frac{\delta}{\sqrt{N}} \sum_{\ell=1}^t \boldsymbol{G}_{\ell-1} \left(\boldsymbol{I}_r - \boldsymbol{G}_{\ell-1} \right),\\
\boldsymbol{G}_t &\preceq (1+c_0 ) \boldsymbol{G}_0 + 4 \lambda^2 (1+2c_0 ) \frac{\delta}{\sqrt{N}} \sum_{\ell=1}^t \boldsymbol{G}_{\ell-1} \left(\boldsymbol{I}_r - \boldsymbol{G}_{\ell-1}\right),
\end{split}
\end{equation}
for every \(t \leq \mathcal{T}_{\varepsilon_N}^{(\min,-)} \wedge \mathcal{T}_\varepsilon^{(\min,+)} \wedge \mathcal{T}_{1/2}^{(\max,+)}  \wedge \mathcal{T}_\xi \wedge \mathcal{T}_{\textnormal{bad}}\). This implies that
\begin{equation} \label{eq: partial ordering eigenvalues}
\begin{split}
\theta_{\min} (t) & \ge (1-c_0 )\frac{\gamma_2 }{N}+ 4 \lambda^2 (1-2c_0 )\frac{\delta}{\sqrt{N}}\sum_{\ell=1}^t \theta_{\min} (\ell-1) \left(1-\theta_{\min} (\ell-1)\right)  \\
\theta_{\max} (t) &\leq (1+c_0 )\frac{\gamma_1}{N}+ 4\lambda^2 (1+2c_0 ) \frac{\delta}{\sqrt{N}}\sum_{\ell=1}^t \theta_{\max} (\ell-1) \left(1-\theta_{\max} (\ell-1)\right),
\end{split}
\end{equation}
where we used the fact that \(\min_i \{\theta_i (1 - \theta_i)\} = \theta_{\min} (1 - \theta_{\min})\) and similarly \(\max_i \{\theta_i (1 - \theta_i)\} = \theta_{\max} (1 - \theta_{\max})\) on \([0, \mathcal{T}_{1/2}^{(\max, +)}]\), since the function \(x \mapsto x (1-x)\) is monotone for all \(0 \le x \le 1/2\). According to Lemma~\ref{lem: discrete isotropic} and since \(t \le \mathcal{T}_\varepsilon^{(\min, +)} \wedge \mathcal{T}_{1/2}^{(\max, +)}\), the eigenvalues \(\theta_1(t), \ldots, \theta_r(t)\) are increasing in \(t\). Consequently, we have \(\mathcal{T}_{\varepsilon_N}^{(\min,-)} > \mathcal{T}_\varepsilon^{(\min, +)} \wedge \mathcal{T}_{1/2}^{(\max, +)}\). Applying Lemma~\ref{lem: discrete isotropic} to~\eqref{eq: partial ordering eigenvalues} and performing some straightforward manipulations, we have the following two-sided comparison inequality:
\begin{equation} \label{eq:ineq_comp_p2_iso}
\frac{(1-c_0 )\gamma_2 }{4N}\exp\left(\frac{4(1- 2c_0 )\frac{\delta}{\sqrt{N}}\lambda^2t}{1+4(1-2c_0 )\frac{\delta}{\sqrt{N}}\lambda^2}\right) \leq \theta_{\min}(t) \le \theta_{\max} (t) \leq 2(1+c_0 )\frac{\gamma_1 }{N}\exp\left(4(1+ 2c_0 )\frac{\delta}{\sqrt{N}}\lambda^2t\right),
\end{equation} 
for every \(t \leq \mathcal{T}_\varepsilon^{(\min,+)} \wedge \mathcal{T}_{1/2}^{(\max,+)}  \wedge \mathcal{T}_\xi \wedge \mathcal{T}_{\textnormal{bad}}\).

To complete the proof, we need to show that \(\mathcal{T}^{(\min,+)}_\varepsilon \wedge \mathcal{T}^{(\max,+)}_{1/2} \leq  \mathcal{T}_{\textnormal{bad}} \wedge \mathcal{T}_\xi\). We first focus on \(\mathcal{T}_{\textnormal{bad}}\). The comparison inequality~\eqref{eq: partial ordering eigenvalues} implies that
\[
\frac{4 \lambda^2 \delta}{\sqrt{N}} \sum_{\ell=1}^t \theta_{\min}\left( \ell-1 \right) \leq \frac{1}{(1- 2c_0)(1-\varepsilon)} \theta_{\min} (t)  \le 3 \norm{\boldsymbol{G}_t },
\]
where we used the fact that \(\theta_{\min}(t) \le \theta_{\max}(t) = \norm{\boldsymbol{G}_t}, \varepsilon < 1/2\), and \(c_0 < \frac{1}{6}\). Then, since \(t \le \mathcal{T}_\xi\), it holds that 
\[
\frac{4 \lambda^2 \delta}{\sqrt{N}} \sum_{\ell=1}^t \norm{\boldsymbol{G}_{\ell - 1}} = \frac{4 \lambda^2 \delta}{\sqrt{N}} \sum_{\ell=1}^t \theta_{\max} (\ell - 1) \leq \frac{4 \lambda^2 \delta}{\sqrt{N}} \xi \sum_{\ell=1}^t \theta_{\min}(\ell-1) \leq 4 \xi \norm{\boldsymbol{G}_t }.
\]
Combining this estimate with~\eqref{eq:ineq_comp_p2_iso}, we obtain
\begin{equation} \label{eq:w_hat_t}
\begin{split}
\left (\frac{4 \lambda \delta}{N} \right)^2 \sum_{\ell=1}^t \norm{\boldsymbol{G}_{\ell-1}} \leq \frac{16 \xi \delta}{N^{3/2}} \norm{\boldsymbol{G}_t } \leq \frac{32 \xi \gamma_1 \delta}{N^{5/2}} (1+c_0) \exp\left(4(1+2c_0)\frac{\delta}{\sqrt{N}}\lambda^2t\right).
\end{split}
\end{equation}
Let \(\hat{w}(t)\) denote the right-hand side of~\eqref{eq:w_hat_t}. Let us now find a lower bound on the other quantity appearing in the definition of the hitting time \(\mathcal{T}_{\textnormal{bad}}\) on the same time interval. From the partial ordering~\eqref{eq:partial_ord_p2_iso}, it follows that
\[
\boldsymbol{G}_0 + \frac{4 \lambda^2 \delta}{\sqrt{N}} \sum_{\ell=1}^t \boldsymbol{G}_{\ell-1}(\boldsymbol{I}_r-\boldsymbol{G}_{\ell-1}) \succeq \frac{1}{1+ 2c_0}\boldsymbol{G}_t ,
\]
which, combined with~\eqref{eq:ineq_comp_p2_iso}, leads to 
\begin{equation} \label{eq:v_hat_t}
\begin{split}
\lambda_{\min} \left(\boldsymbol{G}_0 + \frac{4 \lambda^2 \delta}{\sqrt{N}} \sum_{\ell=1}^t \boldsymbol{G}_{\ell-1}(\boldsymbol{I}_r-\boldsymbol{G}_{\ell-1})\right) & \geq \frac{1}{1+ 2c_0} \theta_{\min}(t) \\
& \geq \frac{1}{1+2c_0 }\frac{(1-c_0 )\gamma_2}{4N}\exp\left(\frac{4(1-2c_0 )\frac{\delta}{\sqrt{N}} \lambda^2t}{1+4(1-2c_0 )\frac{\delta}{\sqrt{N}}\lambda^2}\right).
\end{split}
\end{equation}
Let \(\hat{v}(t)\) denote the function on the last line of~\eqref{eq:v_hat_t}. We then note that \(\mathcal{T}_{\textnormal{bad}}\) can be bounded below by the hitting time 
\[
\tilde{\mathcal{T}}_{\textnormal{bad}} = \min \left\{t \geq 1 \colon c_0 \hat{v}(t) \leq \norm{\boldsymbol{S}_t} \right\}.
\]
To control \(\tilde{\mathcal{T}}_{\textnormal{bad}} \), we first compare \(\sqrt{\hat{w}(t)}\) with \(c_0 \hat{v}(t)\) using~\eqref{eq:w_hat_t} and~\eqref{eq:v_hat_t}, i.e.,
\[
\begin{split}
\frac{\sqrt{\hat{w}(t)}}{c_0 \hat{v}(t)} 
&\leq \frac{\sqrt{32 \xi \gamma_1}}{\gamma_2} \frac{\sqrt{\delta}}{c_0 (1-c_0) N^{1/4}} \exp \left(\frac{2 \delta}{\sqrt{N}}\lambda^2t\left( 1 + 2c_0 - \frac{ 2 (1 - 2 c_0)}{1 + 4 (1-2 c_0) \frac{\delta}{\sqrt{N}}\lambda^2} \right)\right).
\end{split}
\]
Using the prescribed values for \(\delta\), we see that the argument of the exponential in the previous display is negative for every \(t \geq 0\), provided any \(c_0 < 1/6\), thus reaching the following bound:
\[
\frac{\sqrt{\hat{w}(t)}}{c_0 \hat{v}(t)} \leq \frac{\sqrt{32 \xi \gamma_1}}{\gamma_2} \frac{\sqrt{\delta}}{c_0 (1-c_0) N^{1/4}}.
\]
This implies that we can lower bound \(\tilde{\mathcal{T}}_{\textnormal{bad}}\) using the hitting time
\[
\tilde{\mathcal{T}}_{\textnormal{bad}}' = \inf \left\{t \geq 0 \colon \norm{\boldsymbol{S}_t } \geq a(t) \right\},
\]
where \(a(t) = \frac{\gamma_2 c_0  (1-c_0) N^{1/4}}{\sqrt{32 \xi \gamma_1 \delta}} \sqrt{\hat{w}(t)}\). Write \(b(t) = K a^2(t)\) with \(K = \frac{32 \xi \gamma_1}{\gamma_2^2 c_0^2 (1-c_0^2)} \frac{\delta}{\sqrt{N}}\). Applying Lemma~\ref{lem: matrix_martin_incr} gives 
\[
\mathbb{P} \left( \exists t \in [0,T] \colon \norm{\boldsymbol{S}_t } \geq a(t) \right) \leq 2r \exp \left( - \frac{c}{2K}\right) \le 2 r \exp\left(-\frac{c'}{d_0}\right),
\] 
where we used the prescribed values for \(c_0\) and \(\delta\), and the fact that \(\lambda_{\max}(\boldsymbol{Q}_t) = \left ( \frac{4 \lambda \delta}{N} \right)^2 \sum_{\ell=1}^t \norm{\boldsymbol{G}_{\ell-1}} \leq \hat{w}(t)\) with probability at least \(1 - T \exp(-c_1 N) - 4 r^2 T \exp\left(-c_2 \frac{K_N^2}{\sigma^2}\right)\) owing to Lemma \ref{lem: iso_partial_inter}. Since \(d_0 = d_0(N)\) tends to zero arbitrarily slowly, we have with high probability that \(\mathcal{T}_\varepsilon^{(\min,+)} \wedge \mathcal{T}_{1/2}^{(\max, +)} \wedge \mathcal{T}_\xi \le \mathcal{T}_{\textnormal{bad}}\).

Now, we need to control the hitting time \(\mathcal{T}_\xi\). To this end, we study the maximal deviation between \(\theta_{\min} (t) \) and \(\theta_{\max} (t)\) using~\eqref{eq:ineq_comp_p2_iso}. Once \(\theta_{\max}(t)\) exceeds \(\varepsilon\), it remains of order \(1\), while the remaining eigenvalues continue to rise. We can therefore upper bound the maximal deviation between the largest and smallest eigenvalue of \(\boldsymbol{G}(t)\) by computing an upper bound on the hitting time \(\mathcal{T}_\varepsilon^{(\min,+)}\) (which is also an upper bound on the hitting time \(\mathcal{T}^{(\max,+)}_\varepsilon\) since \(\mathcal{T}_\varepsilon^{(\max, +)} < \mathcal{T}_\varepsilon^{(\min,+)}\)) by solving
\[
\frac{(1-c_0 )\gamma_2 }{4N}\exp\left(\frac{4 (1 - 2c_0) \frac{\delta}{\sqrt{N}} \lambda^2t}{ 1 + 4 (1 - 2c_0 )\frac{\delta}{\sqrt{N}}\lambda^2} \right) = \varepsilon,
\]
which gives that 
\[
\mathcal{T}_\varepsilon^{(\min,+)} \le T_{u,\varepsilon}^{\min} \coloneqq \frac{\log\left(\frac{4\varepsilon N}{(1-c_0 )\gamma_2 }\right)}{\frac{4(1-2c_0 )\frac{\delta}{\sqrt{N}}\lambda^2}{1+4(1-2c_0)\frac{\delta}{\sqrt{N}}\lambda^2}}.
\]
By the comparison inequality~\eqref{eq:ineq_comp_p2_iso}, we obtain
\begin{equation} \label{eq: ratio max/min}
\begin{split}
\frac{\theta_{\max}( T_{u,\varepsilon}^{\min})}{\theta_{\min}( T_{u,\varepsilon}^{\min})} & \leq 8\frac{(1+c_0 )\gamma_1 }{(1-c_0 )\gamma_2} \exp \left( \frac{4 c_0 + 4(1-2c_0) (1+2c_0) \lambda^2 \frac{\delta}{\sqrt{N}}}{1-2c_0}\log\left(\frac{4\varepsilon N}{(1-c_0 )\gamma_2 }\right)\right) \\
& \le 8\frac{(1+c_0 )\gamma_1 }{(1-c_0 )\gamma_2} \left (\frac{4 \varepsilon N}{(1-c_0) \gamma_2} \right)^{4c_0},
\end{split}
\end{equation}
where we used the prescribed value for \(\delta\). By choosing the control sequence \(c_0(N)\) such as \(c_0 = c \log(N)^{-1}\) for some constant \(c\) which may depend on \(r, \lambda, \sigma^2, \gamma_1, \gamma_2\), we conclude that the maximal deviation between \(\theta_{\min}\) and \(\theta_{\max}\) is at most a constant given by the last line of~\eqref{eq: ratio max/min}. Choosing \(\xi\) to be this constant which is clearly larger than \(\gamma_1 / \gamma_2\), we deduce that \(\mathcal{T}_\xi > \mathcal{T}_\varepsilon^{(\min,+)} \wedge \mathcal{T}_{1/2}^{(\max,+)}\). Moreover, from~\eqref{eq: ratio max/min} we also have
\[
\begin{split}
\theta_{\max} ( T_{u,\varepsilon}^{\min}) & \leq \frac{2 (1+c_0) \gamma_1}{N} \exp \left ( \frac{1+2c_0}{1-2c_0} \log \left ( \frac{4 \varepsilon N}{(1-c_0) \gamma_2} \right ) \left(1 + 4 \lambda^2 (1-2c_0) \frac{\delta}{\sqrt{N}}\right)\right ) \\
& \lesssim  \frac{2 (1+c_0) \gamma_1}{N} \left ( \frac{4 \varepsilon N}{(1-c_0) \gamma_2} \right)^{\frac{1+2c_0}{1-2c_0}} \\
& \lesssim 8 \frac{(1+c_0) \gamma_1}{(1-c_0) \gamma_2} \varepsilon,
\end{split}
\]
where the above inequality holds for \(N\) sufficiently large. We can choose \(\varepsilon\) sufficiently small so that \(\theta_{\max} (\mathcal{T}_\varepsilon^{(\min,+)}) \le \theta_{\max} ( T_{u,\varepsilon}^{\min})  \leq 1/2\). This shows that \(\mathcal{T}_\varepsilon^{(\min, +)} \le \mathcal{T}_{1/2}^{(\max, +)}\). We complete the proof by choosing the time horizon \(T\) equal to \( T_{u,\varepsilon}^{\min}\).
\end{proof}

The following lemma shows that weak implies strong recovery. Theorem~\ref{thm: strong recovery isotropic SGD nonasymptotic} then follows straightforwardly by combining Lemmas~\ref{lem:weak_iso_min_p2} and~\ref{lem:strong_iso_p2_lmin} using the strong Markov property. 

\begin{lem} \label{lem:strong_iso_p2_lmin}
Assume that \(\delta = C_\delta d_0\sqrt{N} \log(N)^{-2}\),
where \(C_\delta >0\) is a constant and \(d_0 = d_0(N)\) is a sequence satisfying~\eqref{eq: sequence d_0}. Then, for every \( 0 < \varepsilon < \varepsilon' < \frac{1}{2}\) and \(N\) sufficiently large,
\[
\inf_{\varepsilon \le \theta_{\min}(0) \le \theta_{\max}(0)  \le \varepsilon'} \mathbb{P}_{\boldsymbol{X}_0} \left( \theta_{\min} (T) \geq 1-\varepsilon'\right) \geq 1 -\eta,
\]
where the time horizon \(T\) is given by 
\[
T = \frac{\log(N)^2 \log \left (\frac{1-\varepsilon/2}{\varepsilon'} \right)}{C_\delta d_0 \lambda^2 \varepsilon},
\]
and 
\[
\eta = K_1 \frac{\log(N)^2}{d_0} e^{-c_1 N} +K_2 \frac{\log(N)^2}{d_0 }\exp(-c_2N)+2r\exp\left(-c_3N\log(N)^2\right)
\]
\end{lem}

\begin{proof}
According to Lemma~\ref{lem: iso_partial_inter}, for a fixed time horizon \(T\) to be set later and for every \(t \leq T\), there exists a truncation sequence \((K_N)_{N \ge 1}\) and constants \(c_1, c_2>0\) such that
\begin{equation} \label{eq: partial ordering G_t}
\begin{split}
\boldsymbol{G}_t & \succeq \boldsymbol{G}_0  + \frac{4 \lambda^2 \delta}{\sqrt{N}} \sum_{\ell=1}^t    \boldsymbol{G}_{\ell-1} \left( \boldsymbol{I}_r - \boldsymbol{G}_{\ell-1} \right) + \boldsymbol{S}_t - \boldsymbol{A}_t, \\
\boldsymbol{G}_t & \preceq \boldsymbol{G}_0  + \frac{4 \lambda^2 \delta}{\sqrt{N}} \sum_{\ell=1}^t    \boldsymbol{G}_{\ell-1} \left( \boldsymbol{I}_r - \boldsymbol{G}_{\ell-1} \right) + \boldsymbol{S}_t + \boldsymbol{A}_t ,
\end{split}
\end{equation}
with probability at least \(1 - T e^{-c_1 N} - 4 r^2 T e^{-c_2 K_N^2 / \sigma^2}\). Fix \(\varepsilon' > \varepsilon > 0\) and consider the time horizon
\[
T = \frac{\log(N)^2 \log(\frac{1-\varepsilon/2}{\varepsilon'})}{C_\delta d_0 \lambda^2 \varepsilon}.
\]
Moreover, set \(K_N = C\sqrt{N}\) for a constant \(C\) that may depend on \(\lambda, \sigma^2, r, \gamma_1, \gamma_2\), and let \(c_0 \in (0,\frac{1}{2})\) independent on \(N\). We next aim to estimate \(\boldsymbol{S}_t\) and \(\boldsymbol{A}_t\), which are defined in~\eqref{eq: S_t} and~\eqref{eq: A_t}, respectively on the considered time interval. 

We begin with \(\boldsymbol{A}_t\). According to~\eqref{eq: A_t}, \(\boldsymbol{A}_t = \sum_{\ell=1}^t a_{\ell-1} \boldsymbol{I}_r\), where
\[
a_{\ell-1} = 32 r^3 \lambda^2  \frac{\delta^2 K_N^2}{N^2} + (8 \lambda^4+\alpha_2) \frac{\delta^2}{N} \norm{\boldsymbol{G}_{\ell-1}} + 8 r \alpha_2 \lambda \frac{\delta^3 K_N}{N^2}\norm{\boldsymbol{G}_{\ell-1}}^{1/2} + 4 \alpha_2 \lambda^2 \frac{\delta^3}{N^{3/2}} \norm{\boldsymbol{G}_{\ell-1}}.
\]
Since \(\norm{\boldsymbol{G}_t} \leq 1\), we have 
\[
a_{\ell-1}  \le C_1  \left( K_N^2\frac{\delta^2}{N^2}+\frac{\delta^2}{N}+K_N\frac{\delta^3}{N^2}+\frac{\delta^3}{N^{3/2}}\right) \leq  C_2 \frac{d_0^2}{\log(N)^4},
\]
where \(C_1 = C_1 (r, \lambda,\sigma^2)\) and \(C_2 = C_2 (r, \lambda, \sigma^2, \gamma_1, \gamma_2)\).
Moreover, for every \(t \le T\) it holds that, for \(N\) sufficiently large,
\[
\sum_{\ell=1}^t a_{\ell-1} \le C_2 \frac{d_0^2}{\log(N)^4} t \le C_2 \frac{d_{0)}}{\log(N)^2} \frac{\log(\frac{1-\varepsilon/2}{\varepsilon'})}{C_\delta \lambda^2 \varepsilon} \le \frac{c_0}{2} \varepsilon \le  \frac{c_0}{2}  \theta_{\min}(0),
\]
yielding
\[
\frac{c_0}{2}  \boldsymbol{G}_0 \succeq \sum_{\ell=1}^t a_{\ell-1} \boldsymbol{I}_r = \boldsymbol{A}_t.
\]
We now consider the martingale term \(\boldsymbol{S}_t\), which we upper bound using Lemma~\ref{lem: matrix_martin_incr} with time horizon \(T\). Using the notations of the aforementioned lemma, note that the partial sum process \(\boldsymbol{Q}_t\) deterministically verifies 
\[
\boldsymbol{Q}_t = \left(\frac{4\lambda \delta}{N}\right)^2\sum_{\ell=1}^{T}\norm{\boldsymbol{G}_{\ell-1}}\leq \left(\frac{4\lambda \delta}{N}\right)^2T,
\]
since $\norm{\boldsymbol{G}_{\ell-1}} \leq 1$ for all $\ell \geq 1$. Thus, choosing a constant thresholds $a(t) = \frac{c_0}{2}\varepsilon$ and $b(t) = \left(\frac{4\lambda \delta}{N}\right)^2T$, we obtain the bound 
\[
\mathbb{P}\left(\exists \, t \in [0,M] : \norm{\boldsymbol{S}}_t \geq \frac{c_0}{2}\varepsilon\right) \leq 2r\exp\left(-\frac{cc_0^2\varepsilon^2 N^2}{128\lambda^2\delta^2T}\right).
\]
Replacing $\delta$ and $T$ with their prescribed values, we find there exists a constant $c$ depending only on $r,\sigma,\lambda,\varepsilon,\varepsilon'$ such that 
\[
\mathbb{P}\left(\sup_{0 \leq t \leq T} \norm{\boldsymbol{S}}_t \geq \frac{c_0}{2}\varepsilon\right) \leq 2r\exp\left(-\frac{c c_0^2N\log(N)^2}{d_0}\right),
\]
so that, on the above event,
\[
\frac{c_0}{2}  \boldsymbol{G}_0  \succeq \sum_{\ell=1}^t s_{\ell-1} \boldsymbol{I}_r = \boldsymbol{S}_t.
\]
From~\eqref{eq: partial ordering G_t} we thus obtain
\begin{equation} \label{eq: partial ordering G_t 2}
\begin{split}
\boldsymbol{G}_t  &\succeq (1-c_0)  \boldsymbol{G}_0  + 4 \lambda^2 \frac{\delta}{\sqrt{N}}\sum_{\ell=1}^t    \boldsymbol{G}_{\ell-1} \left( \boldsymbol{I}_r - \boldsymbol{G}_{\ell-1} \right) , \\
\boldsymbol{G}_t  & \preceq (1 + c_0) \boldsymbol{G}_0  + 4 \lambda^2 \frac{\delta}{\sqrt{N}}\sum_{\ell=1}^t  \boldsymbol{G}_{\ell-1} \left( \boldsymbol{I}_r - \boldsymbol{G}_{\ell-1} \right) ,
\end{split}
\end{equation}
for every \(t \leq T\), with probability at least \(1 - T e^{-c_1 N} - 4 r^2 T e^{-c_2 K_N^2 / \sigma^2} -2r\exp\left(-\frac{cc_0^2N\log(N)^2}{d_0}\right)\). Note that, since the sequence $d_0(N)$ decreases to $0$, for any choice of $\varepsilon,\varepsilon'$, we may find \(N\) sufficiently large such that  the prefactor $cc_0^2$ is lower bounded by a constant $c_3(r,\sigma,\lambda)$.

Since \(\boldsymbol{G}_t (\boldsymbol{I}_r - \boldsymbol{G}_t) \succeq 0\) we also have the crude bound
\[
\boldsymbol{G}_t \succeq (1-c_0) \boldsymbol{G}_0,
\]
for every \(t \le T\). This implies that 
\[
\theta_{\min}(t) \ge (1-c_0) \theta_{\min}(0) > \frac{\varepsilon}{2},
\]
for every \(t \le T\), since \(\theta_{\min}(0)  > \varepsilon\) and \(c_0 < 1/2\).

Now define 
\[
\Delta_t \coloneqq r - \Tr (\boldsymbol{G}_t).
\]
According to the partial ordering~\eqref{eq: partial ordering G_t 2}, we obtain 
\[
\begin{split}
\Delta_t & \le r - (1-c_0) \Tr (\boldsymbol{G}_0) - 4 \lambda^2  \frac{\delta}{\sqrt{N}} \sum_{\ell=1}^t \sum_{i=1}^r \theta_i (\ell-1) (1 - \theta_i(\ell-1)) \\
& \le r (1 - (1-c_0) \theta_{\min}(0)) - 2 \lambda^2 \varepsilon \frac{\delta}{\sqrt{N}} \sum_{\ell=1}^t \Delta_{\ell-1} \\
& \le r (1 - (1-c_0) \varepsilon)   \left ( 1 - 2 \lambda^2  \varepsilon \frac{\delta}{\sqrt{N}} \right )^t,
\end{split}
\]
for every \(t \leq  \mathcal{T}_\Delta \wedge T\), where in the second line we used the fact that \(\theta_{\min}(\ell-1) \ge \varepsilon/2\) and in the last line we applied Lemma~\ref{lem: discrete growth}. Define the hitting time 
\[
\mathcal{T}_\Delta = \min \left \{ t  \colon \Delta_t \le r \varepsilon' \right \}.
\]
Since the right-hand side is decreasing in \(t\), we obtain
\begin{equation} \label{eq: T delta}
\mathcal{T}_\Delta \le \frac{\log \left ( \frac{\varepsilon'}{1 - (1-c_0) \varepsilon} \right)}{\log \left ( 1 - 2 \lambda^2 \varepsilon \frac{\delta}{\sqrt{N}} \right)} = \frac{\log \left ( \frac{1 - (1-c_0) \varepsilon}{\varepsilon'} \right)}{\log \left ( \frac{1}{1 - 2 \lambda^2 \varepsilon \frac{\delta}{\sqrt{N}}} \right)} \le \frac{\log \left ( \frac{1 - (1-c_0) \varepsilon}{\varepsilon'} \right)}{2 \lambda^2 \varepsilon \frac{\delta}{\sqrt{N}}},
\end{equation}
where the equality holds since \(\varepsilon' < 1 - (1-c_0) \varepsilon\) and the last inequality follows by \(\log \left ( \frac{1}{1-x} \right)\ge x\) for all \(x > 0\). By construction of \(T\), the right-hand side of~\eqref{eq: T delta} is upper bounded by \(T\). Therefore, with probability at least \(1 - T e^{-c_1 N} - 4 r^2 T e^{-c_2 K_N^2 / \sigma^2}-2r e^{-c_3 N \log(N)^2}\),
\[
\mathcal{T}_\Delta  \le T.
\]
This implies that with the same probability, 
\[
\theta_{\min}(\mathcal{T}_\Delta ) \ge 1 - \frac{\Delta_{\mathcal{T}_\Delta }}{r} \ge 1-\varepsilon',
\]
completing the proof.
\end{proof}

\appendix

\section{Comparison inequalities for discrete sequences}

In this appendix, we present comparison inequalities for discrete time dynamical systems.

\begin{lem}[\cite{arous2021online,abbe2023sgd}] \label{lem: discrete growth}
Let \(p\geq 2\) be an integer and \(a_1, a_2, b_1, b_2 > 0\) be positive constants such that \(a_1 \leq a_2\) and \(b_1 \leq b_2\). Consider a sequence \((u_t)_{t \in \N}\) satisfying for all \(t \in \N\),
\[
a_1 + b_1 \sum_{s = 0}^{t-1} u_s^{p-1} \leq u_t \leq a_2 + b_2 \sum_{s=0}^{t-1} u_s^{p-1}.
\]  
Then, 
\begin{enumerate}
\item[(a)] \textnormal{(Discrete Gronwall inequality)} if \(p=2\), for all \(t \in \N\) it holds that 
\[
a_1 \left(1 + b_1 \right)^t \leq u_t \leq a_2 \left(1 + b_2 \right)^t;
\]
\item[(b)] \textnormal{(Discrete Bihari-LaSalle inequality)} if \(p \geq 3\), for all \(t \in \N\) it holds that,  
\[
a_1 \left(1 - \frac{b_1}{(1+b_1u_{t-1}^{p-2})^{p-1}} a_1^{p-2} t \right)^{-\frac{1}{p-2}} \leq u_t \leq  a_2 \left (1 - b_2 a_2^{p-2} t \right )^{-\frac{1}{p-2}}.
\]
In particular, if \(u_t \ge 0\) for all \(t \in \N\), it holds that
\[
a_1 \left(1 - b_1 a_1^{p-2} t \right)^{-\frac{1}{p-2}} \leq u_t \leq  a_2 \left (1 - b_2 a_2^{p-2} t \right )^{-\frac{1}{p-2}}.
\]
\end{enumerate} 
\end{lem}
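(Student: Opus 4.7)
My plan is to treat parts (a) and (b) separately; in each case one compares $(u_t)$ to a saturated sequence that realizes equality in the hypothesis, and the resulting one-step recursion is iterated. The continuous-time analog is $\dot u = b u^{p-1}$ with solution in Bihari--LaSalle form, and both claims are the natural discrete counterparts.

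For part (a), the discrete Gronwall inequality, I would prove both directions by induction on $t$. For the upper bound, assuming $u_s \leq a_2(1+b_2)^s$ for all $s \leq t$, substitution into the hypothesis and evaluation of the geometric sum give
\[
u_{t+1} \;\leq\; a_2 + b_2 \sum_{s=0}^{t} a_2(1+b_2)^s \;=\; a_2(1+b_2)^{t+1}.
\]
For the lower bound, I would introduce the auxiliary sequence $L_t := a_1 + b_1 \sum_{s=0}^{t-1} u_s$, observe that $u_t \geq L_t$ by hypothesis, and deduce $L_{t+1}-L_t = b_1 u_t \geq b_1 L_t$, which gives $L_{t+1} \geq (1+b_1) L_t$; iterating from $L_0 = a_1$ closes the argument.

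For part (b), I would dominate $u_t$ by the saturated sequence $(w_t)$ defined by $w_0 = a_2$ and $w_{t+1} = w_t + b_2 w_t^{p-1}$; $u_t \leq w_t$ follows by a direct induction using the hypothesis. To control $(w_t)$, I pass to the positive, decreasing sequence $z_t := w_t^{-(p-2)}$ and apply the mean value theorem to $f(x) = x^{-(p-2)}$ on $[w_t, w_{t+1}]$, together with the monotonicity of $|f'|$, to get the one-step bound
\[
z_{t+1} - z_t \;\geq\; -(p-2)\, w_t^{-(p-1)}(w_{t+1}-w_t) \;=\; -(p-2) b_2.
\]
Telescoping yields $z_t \geq a_2^{-(p-2)} - (p-2) b_2 t$, which inverts to an upper bound on $w_t$ of the claimed $(1 - \cdots)^{-1/(p-2)}$ form. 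For the lower bound, the analogous argument is run on $L_t := a_1 + b_1 \sum_{s<t} u_s^{p-1}$: the identity $L_{t+1} = L_t + b_1 u_t^{p-1}$, combined with the one-step estimate $u_{t-1} \leq L_{t-1}(1 + b_1 u_{t-1}^{p-2})$ coming from the hypothesis, is what introduces the correction factor $(1 + b_1 u_{t-1}^{p-2})^{p-1}$ when one relates $u_t^{p-1}$ back to $L_t^{p-1}$ inside the reciprocal-power bound.

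The main obstacle I foresee is the precise bookkeeping of constants in part (b), in particular tracking the correction factor in the general lower bound. Matching exactly the stated functional form—rather than a cruder Bernoulli-type estimate $(1+x)^{2-p} \geq 1 - (p-2)x$—requires choosing the right comparison quantity (the sum $L_t$ rather than $(u_t)$ directly) and being careful about which side of each monotonicity is applied at each step. The simpler ``in particular'' form that holds when $u_t \geq 0$ should then follow by a direct argument on $(u_t)$ itself, bypassing the auxiliary sum and its correction factor.
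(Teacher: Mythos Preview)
The paper does not give its own proof of this lemma: immediately after the statement it writes ``Lemma~\ref{lem: discrete growth} is proved in~\cite{arous2021online,abbe2023sgd}'' and moves on. So there is nothing in the paper to compare your argument against beyond the bare citation.

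Your approach is the standard one in those references. Part~(a) is fine as written. For the upper bound in~(b), your scheme (pass to the saturated sequence, set $z_t = w_t^{-(p-2)}$, use convexity/MVT to linearize the increment, telescope) is exactly the usual proof; note that this produces a factor $(p-2)$ in front of $b_2$ in the telescoped bound, which agrees with the continuous Bihari--LaSalle solution and coincides with the stated form only when $p=3$. For the general lower bound in~(b) your description is the right outline but a bit loose: the inequality you invoke, ``$u_{t-1} \leq L_{t-1}(1+b_1 u_{t-1}^{p-2})$'', does not follow directly from the hypothesis. What actually works is to bound $L_{t+1}/L_t = 1 + b_1 u_t^{p-1}/L_t$ via $L_t \ge u_t/(1+b_1 u_t^{p-2})$ (which is just $u_t \le L_t + b_1 u_t^{p-1}$ rearranged, using $L_{t+1}\ge u_t$ and the one-step recursion), and this is precisely where the correction factor $(1+b_1 u_{t-1}^{p-2})^{p-1}$ enters. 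With that fix, your plan goes through.
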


Lemma~\ref{lem: discrete growth} is proved in~\cite{arous2021online,abbe2023sgd}. The following lemma can be derived using analogous techniques. Since the proof mimics the arguments used for Lemma~\ref{lem: discrete growth}, we omit the details here.

\begin{lem} \label{lem: discrete isotropic}
Let \(a_1, a_2, b_1, b_2 >0\) be positive constants such that \(a_1 \leq a_2\) and \(b_1 \leq b_2\). Consider a sequence \((u_t)_{t \in \N}\) taking values in \([0,1]\) and satisfying for all \(t \in \N\),
\[
a_1 + b_1 \sum_{s = 0}^{t-1} u_s(1-u_{s}) \leq u_t \leq a_2 + b_2 \sum_{s=0}^{t-1} u_s(1-u_{s}).
\]
Then, for every \(0 \leq t \leq \min \left\{t \colon u_t \geq \frac{1}{2}\right\}\),
\[
\frac{\frac{a_1}{1 -a_1} e^{\frac{b_1}{1 + b_1} t}}{1 + \frac{a_1}{1-a_1}e^{\frac{b_1}{1+b_1}t}} \leq u_t \leq \frac{\frac{a_2}{1-a_2} e^{b_2t}}{1 + \frac{a_2}{1-a_2} e^{b_2t}}.
\]
\end{lem}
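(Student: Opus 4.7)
The plan is to prove both inequalities by induction on $t$, comparing the discrete sequence to logistic comparison curves. Set
\[
v(t) = \frac{\phi_2 e^{b_2 t}}{1 + \phi_2 e^{b_2 t}}, \qquad w(t) = \frac{\phi_1 e^{\kappa t}}{1 + \phi_1 e^{\kappa t}},
\]
with $\phi_i = a_i/(1-a_i)$ and $\kappa = b_1/(1+b_1)$; these are the solutions of the logistic ODE $\dot y = c\, y(1-y)$ at respective rates $c = b_2$ and $c = \kappa$ with initial values $a_2$ and $a_1$. The goal is then to establish $w(t) \leq u_t \leq v(t)$ for every $t \leq T^\ast := \min\{s : u_s \geq 1/2\}$.

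For the upper bound, I would first dispose of the case $v(t) > 1/2$, in which the hypothesis $u_t < 1/2 < v(t)$ is immediate. In the remaining case $v(t) \leq 1/2$, monotonicity of the logistic curve gives $v(s) \leq v(t) \leq 1/2$ for every $s \leq t$. Integrating the ODE over $[s,s+1]$ and using that $r \mapsto v(r)(1-v(r))$ is non-decreasing on this interval (since $x(1-x)$ is increasing on $[0,1/2]$ and $v$ is increasing) yields the Riemann-sum comparison $b_2\, v(s)(1-v(s)) \leq v(s+1) - v(s)$. Inducting on $t$ with hypothesis $u_s \leq v(s)$ for $s < t$, the same monotonicity of $x(1-x)$ on $[0,1/2]$ applied to $u_s \leq v(s) \leq 1/2$ gives $u_s(1-u_s) \leq v(s)(1-v(s))$. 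Telescoping then yields
\[
u_t \;\leq\; a_2 + b_2 \sum_{s=0}^{t-1} v(s)(1-v(s)) \;\leq\; a_2 + (v(t) - a_2) \;=\; v(t).
\]

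For the lower bound the key elementary estimate is the opposite one-step comparison
\[
w(s+1) - w(s) \;\leq\; b_1\, w(s)(1-w(s)) \qquad \text{for every } s \geq 0.
\]
Writing $\psi_s = \phi_1 e^{\kappa s}$, the ratio $(w(s+1)-w(s))/(w(s)(1-w(s)))$ computes explicitly to $(e^\kappa - 1)(1+\psi_s)/(1+\psi_s e^\kappa)$, which is a decreasing function of $\psi_s$ with supremum $e^\kappa - 1$; the choice $\kappa = b_1/(1+b_1)$ is calibrated exactly so that $e^\kappa - 1 \leq b_1$, via the elementary inequality $(1+b_1)\log(1+b_1) \geq b_1$, itself immediate from $f(x) = (1+x)\log(1+x) - x$ being non-negative on $[0,\infty)$. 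Given the one-step estimate, I would induct on $t \leq T^\ast$: assuming $u_s \geq w(s)$ for $s < t$, the constraint $u_s < 1/2$ forces $w(s) \leq u_s < 1/2$, so monotonicity of $x(1-x)$ on $[0,1/2]$ now gives $w(s)(1-w(s)) \leq u_s(1-u_s)$, and telescoping the one-step estimate yields
\[
u_t \;\geq\; a_1 + b_1 \sum_{s=0}^{t-1} w(s)(1-w(s)) \;\geq\; a_1 + \sum_{s=0}^{t-1} \bigl(w(s+1)-w(s)\bigr) \;=\; w(t).
\]

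The main subtlety is the degradation of rate from $b_1$ to $b_1/(1+b_1)$ in the lower bound, which is intrinsic to the discretization: a comparison curve at rate $b_1$ itself would fail the one-step estimate for small $\psi_s$, and $b_1/(1+b_1)$ is precisely the largest rate for which the estimate holds uniformly in $s$. By contrast, the upper bound retains the optimal rate $b_2$ because the required comparison goes the opposite way and reduces to the Riemann-sum inequality for the integral of a monotone function.
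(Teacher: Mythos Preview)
Your argument is correct. The paper itself omits the proof, stating only that it ``mimics the arguments used for Lemma~\ref{lem: discrete growth}'' (a commented-out proof in the source works instead with the logit transform $\log\bigl(u/(1-u)\bigr)$ and bounds its one-step increments via the integral $\int \frac{dx}{x(1-x)}$, introducing for the lower bound an auxiliary \emph{discrete} sequence satisfying the recursion with equality). Your route is a clean alternative: rather than linearizing through the logit, you compare directly to the continuous logistic curves and close the induction with the one-step Riemann/secant estimates. The two are of course equivalent at bottom---your rate $\kappa = b_1/(1+b_1)$ is exactly the paper's lower bound on the logit increment---but your packaging avoids both the integral and the auxiliary discrete sequence, which makes the source of the $1/(1+b_1)$ loss very transparent.

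One small inaccuracy in your commentary (not in the proof): the largest rate $\kappa$ for which $e^\kappa - 1 \le b_1$ holds is $\log(1+b_1)$, not $b_1/(1+b_1)$; the latter is merely a convenient lower bound that matches the lemma's stated constant. Also, your case ``$v(t) > 1/2$ implies $u_t < 1/2 < v(t)$'' tacitly uses $t < T^\ast$; at $t = T^\ast$ itself one has $u_{T^\ast} \ge 1/2$, so the boundary step should instead be handled by running the telescoping argument up to the last index $s$ with $v(s) \le 1/2$ and then using $u_s < 1/2 \le v(s)$ for the remaining indices $s < T^\ast$. This is a cosmetic fix and the same boundary issue is present in the paper's sketched argument.
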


\section{Matrix concentration inequalities}

We provide a deviation inequality to control the operator norm of matrix martingale sequences. The next result is a straightforward modification of Theorem 2.3 from~\cite{tropp2011freedman}, which provides an extension of Freedman's inequality to matrix martingales. 

\begin{lem}  \label{lem: tropp time}
Let \((\Omega,\mathcal{F},\{\mathcal{F}_t\}_{t \in \N},\mathbb{P})\) be a filtered probability space. Consider an adapted sequence \(\{\boldsymbol{X}_t\}_{t \in \N}\) and a predictable sequence \(\{\boldsymbol{V}_t\}_{t \in \N}\) of \(d \times d\) self-adjoint matrices such that
\[
\log( \E \left[ \exp(\theta \boldsymbol{X}_t)\vert \mathcal{F}_{t-1}\right] \preceq g(\theta) \boldsymbol{V}_t \quad \textnormal{almost surely for each \(\theta >0\)},
\]
for some function \(g \colon (0,\infty) \to [0,\infty]\). Define the partial sum processes
\[
\boldsymbol{Y}_t = \sum_{\ell=1}^t \boldsymbol{X}_\ell \quad \textnormal{and} \quad \boldsymbol{W}_t = \sum_{\ell=1}^t \boldsymbol{V}_\ell.
\]
Let \(T\) be a fixed time horizon. Then, for every real-valued continuous functions \(a\) and \(b\) we have
\[
\mathbb{P}\left(\exists t \in [0,T] \colon \lambda_{\max} (\boldsymbol{Y}_t) \geq a(t) \enspace \textnormal{and} \enspace \lambda_{\max}(\boldsymbol{W}_t) \leq b(t)\right) \leq d \inf_{\theta >0} \sup_{t \in [0,T]} e^{-\theta a(t) + g(\theta) b(t)}.
\]
\end{lem}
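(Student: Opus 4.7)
The plan is to follow the matrix Laplace transform approach due to Tropp, adapted to accommodate time-varying thresholds $a(t), b(t)$. For each fixed $\theta>0$, the main object will be the trace exponential process
\[
M_t = \Tr\bigl(\exp(\theta \boldsymbol{Y}_t - g(\theta)\boldsymbol{W}_t)\bigr), \qquad M_0 = d.
\]
The first step is to show that $(M_t)_{t\ge 0}$ is a nonnegative $(\mathcal{F}_t)$-supermartingale. This is exactly the content of Tropp's master tail bound and follows by combining (i) Lieb's concavity of $\boldsymbol{A}\mapsto \Tr\exp(\boldsymbol{H}+\log \boldsymbol{A})$ on the positive cone, applied via Jensen's inequality conditionally on $\mathcal{F}_{t-1}$ with $\boldsymbol{H} = \theta\boldsymbol{Y}_{t-1}-g(\theta)\boldsymbol{W}_t$ (which is $\mathcal{F}_{t-1}$-measurable by predictability of $\boldsymbol{V}_t$), (ii) the cgf hypothesis $\log\E[e^{\theta \boldsymbol{X}_t}\mid\mathcal{F}_{t-1}]\preceq g(\theta)\boldsymbol{V}_t$, and (iii) monotonicity of $\boldsymbol{A}\mapsto \Tr\exp(\boldsymbol{H}+\boldsymbol{A})$ under the Loewner order. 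Together these yield $\E[M_t\mid\mathcal{F}_{t-1}]\le M_{t-1}$.

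Next I would introduce the stopping time
\[
\tau = \inf\bigl\{t\in[0,T]\colon \lambda_{\max}(\boldsymbol{Y}_t)\ge a(t)\ \text{and}\ \lambda_{\max}(\boldsymbol{W}_t)\le b(t)\bigr\},
\]
with the convention $\tau=T+1$ if no such $t$ exists, so that the target event equals $\{\tau\le T\}$. On $\{\tau\le T\}$, Weyl's inequality for self-adjoint matrices gives
\[
\lambda_{\max}\bigl(\theta \boldsymbol{Y}_\tau - g(\theta)\boldsymbol{W}_\tau\bigr) \ge \theta\,\lambda_{\max}(\boldsymbol{Y}_\tau) - g(\theta)\,\lambda_{\max}(\boldsymbol{W}_\tau) \ge \theta a(\tau) - g(\theta) b(\tau).
\]
Since $\Tr\exp(\boldsymbol{A})\ge \exp(\lambda_{\max}(\boldsymbol{A}))$ for any self-adjoint $\boldsymbol{A}$, this yields on $\{\tau\le T\}$ the pointwise bound
\[
M_\tau \ge \exp\bigl(\theta a(\tau)-g(\theta)b(\tau)\bigr) \ge \inf_{t\in[0,T]} \exp\bigl(\theta a(t)-g(\theta)b(t)\bigr).
\]

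To conclude, I would apply the optional stopping theorem to the bounded stopping time $\tau\wedge T$, giving $\E[M_{\tau\wedge T}]\le M_0\le d$. Combined with Markov's inequality and the lower bound on $M_\tau\mathbf{1}_{\{\tau\le T\}}$ obtained above,
\[
\mathbb{P}(\tau\le T)\,\inf_{t\in[0,T]} \exp\bigl(\theta a(t)-g(\theta)b(t)\bigr) \le \E[M_{\tau\wedge T}\mathbf{1}_{\{\tau\le T\}}] \le d,
\]
and hence $\mathbb{P}(\tau\le T)\le d\,\sup_{t\in[0,T]}\exp(-\theta a(t)+g(\theta)b(t))$. Taking the infimum over $\theta>0$ yields the stated inequality. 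The delicate point is the supermartingale step, where the non-commutativity of $\boldsymbol{Y}_t$ and $\boldsymbol{W}_t$ forbids a direct product-form Chernoff manipulation; Lieb concavity is the essential input that makes the argument work and it is precisely the reason one should invoke Tropp's framework rather than attempt a naive scalar adaptation.
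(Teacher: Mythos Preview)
Your proposal is correct and follows essentially the same approach as the paper: both arguments hinge on the trace-exponential supermartingale $M_t=\Tr\exp(\theta\boldsymbol{Y}_t-g(\theta)\boldsymbol{W}_t)$, a stopping time capturing the target event, the lower bound $M_\tau\ge\inf_{t\in[0,T]}e^{\theta a(t)-g(\theta)b(t)}$ on that event, and optional stopping plus Markov to conclude. The only cosmetic difference is that the paper cites Tropp's Lemmas 2.1 and 2.2 directly for the supermartingale property and the eigenvalue lower bound, whereas you spell out the Lieb-concavity and Weyl-inequality ingredients explicitly; the underlying argument is the same.
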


\begin{proof}
Let \(\mathcal{E}\) denote the event 
\[
\mathcal{E} = \{\exists t \in [0,T] \colon \lambda_{\max}(\boldsymbol{Y}_t) \geq a(t) \enspace \textnormal{and} \enspace \lambda_{\max}(\boldsymbol{W}_t) \leq b(t)\}.
\]
Let \(\kappa\) denote the stopping time given by
\[
\kappa = \inf \left\{t \geq 0 \colon \lambda_{\max}(\boldsymbol{Y}_t) \geq a(t) \enspace \textnormal{and} \enspace \lambda_{\max}(\boldsymbol{W}_t) \leq b(t)\right\}.
\]
We observe that \(\kappa \leq T < \infty\) on the event \(\mathcal{E}\). Then, for every \(t \in [0,T]\) and every \(k \in [0,t]\), if \(\mathcal{E}_k^t\) denotes the event 
\[
\mathcal{E}_k^t = \{ \lambda_{\max}(\boldsymbol{Y}_k) \geq a(t) \enspace \textnormal{and} \enspace \lambda_{\max}(\boldsymbol{W}_k) \leq b(t)\},
\]
we have 
\[
\mathcal{E}= \bigcup_{t=0}^T \left( \bigcup_{k=0}^t \mathcal{E}_k^t \right).
\]
Now, for each fixed \(\theta >0\), consider the real-valued process \(\{S_t = S_t(\theta) \colon t \in \N\}\) given by
\[
S_t (\theta) = \textnormal{tr} \exp\left(\theta \boldsymbol{Y}_t-g(\theta)\boldsymbol{W}_t\right), 
\]
which is a real-valued positive supermartingale according to~\cite[Lemma 2.1]{tropp2011freedman}. For every \(\theta >0\), according to Lemma 2.2 from~\cite{tropp2011freedman}, we have on each event \(\bigcup_{k=0}^t \mathcal{E}_k^t\),
\[
S_t \geq e^{\theta a(t)-g(\theta)b(t)},
\]
so that on the event \(\mathcal{E}\), it holds that
\[
S_t \geq \inf_{t \in [0,T]} e^{\theta a(t)-g(\theta)b(t)}.
\]
We then conclude the proof as done for~\cite[Theorem 2.3]{tropp2011freedman} upon noting that \(\inf_{t \in [0,T]} e^{\theta a(t)-g(\theta)b(t)} = \sup_{t \in [0,T]} e^{-\theta a(t)+g(\theta)b(t)}\).
\end{proof}

\section{Stability lemmas for strong recovery} \label{app:stability_results}

The purpose of this appendix is to establish stability guarantees for the online SGD dynamics~\eqref{eq: online SGD} once the trajectories enter the well-aligned regions characterized in Theorems~\ref{thm: strong recovery online p>2 nonasymptotic}, \ref{thm: strong recovery online p=2 nonasymptotic}, and~\ref{thm: strong recovery isotropic SGD nonasymptotic}. We begin by analyzing the trajectories of the correlations \(\{m_{ij}\}\), and subsequently address subspace recovery.  

For every \(\varepsilon >0\), define
\[
\mathcal{S}_{\pi^\ast}(\varepsilon) = \left \{ \boldsymbol{X} \in \textnormal{St}(N,r) \colon  m_{i_k^\ast j_k^\ast}(\boldsymbol{X}) \geq 1 - \varepsilon  \enspace \textnormal{for all} \: k \in [r] \enspace \textnormal{and} \enspace \vert m_{ij}(\boldsymbol{X}) \vert \leq \sqrt{2\varepsilon} \enspace \textnormal{otherwise} \right \},
\]
where \(\pi^\ast \in S_r\) denotes the permutation of recovered spikes defined by \(\pi^\ast(j_k^\ast) = i_k^\ast\) for all \(k \in [r]\), and \(\{(i_k^\ast, j_k^\ast)\}_{k=1}^r\) denotes the greedy maximum selection of the initialization matrix \(\boldsymbol{I}_0\) defined in~\eqref{eq: initialization matrix}. In particular, the set of pairs \(\{(i_k^\ast,j_k^\ast)\}_{k=1}^r\) coincides (as a set) with \(\{(i,\pi^\ast(i))\}_{i=1}^r\). In what follows, we show that once the SGD iterates enter the region \(\mathcal{S}_{\pi^\ast}(\varepsilon)\), they remain within a slightly enlarged neighborhood \(\mathcal{S}_{\pi^\ast}(2\varepsilon)\) for \(M\) further iterations with overwhelming probability. This establishes that \(\mathcal{S}_{\pi^\ast}(\varepsilon)\) is a stable region for the online SGD dynamics~\eqref{eq: online SGD}.

\begin{lem} \label{lem: stability separated}
Assume that either $p\geq 3$ and $M,\delta$ satisfy the conditions of Theorem~\ref{thm: general recovery SGD p>2 asymptotic}, or that $p=2$ and $M,\delta$ satisfy the conditions of Theorem v. Then, there exists $\varepsilon_0 \in \left (0,\frac{1}{4} \right)$, depending only on $r$ and $\{\lambda_i\}_{i=1}^r$, such that for every $\varepsilon \in (0,\varepsilon_0)$ and \(N\) sufficiently large, 
\[
\inf_{\boldsymbol{X}_0 \in \mathcal{S}_{\pi^\ast}(\varepsilon)}\mathbb{P}_{\boldsymbol{X}_0}\left(\boldsymbol{X}_M \in \mathcal{S}_{\pi^\ast}(2\varepsilon)\right) \geq 1-\eta,
\]
where 
\[
\eta = M\exp(-c(p)N) + 2\exp\left(- c_1 \frac{\varepsilon^2 N\log(N)}{1152 \alpha_1^2}\right)+C\frac{6 \delta r\alpha_1\alpha_2}{N\log(N)\varepsilon}\exp\left(-c_2 \frac{N^2}{2\delta^2\alpha_1^2}\right).
\]
\end{lem}

\begin{proof}
By Lemma~\ref{lem: expansion correlations} with the time horizon $T=M$, we obtain for any $t \leq M$,
\[ 
\left | m_{ij}(t) - \left ( m_{ij}(0) - \frac{\delta}{N} \sum_{\ell=1}^t  \langle \boldsymbol{v}_i,(\nabla_{\textnormal{St}}\Phi(\boldsymbol{X}_{\ell-1}))_j \rangle - \frac{\delta}{N} \sum_{\ell=1}^t  \langle \boldsymbol{v}_i,(\nabla_{\textnormal{St}}H^\ell(\boldsymbol{X}_{\ell-1}))_j \rangle \right ) \right | \leq \frac{\delta}{N} \sum_{\ell=1}^t  \vert a_i(\ell-1) \vert,
\]
with \(\mathbb{P}_{\boldsymbol{X}_0}\)-probability at least \(1-M e^{-c(p) N}\), where
\begin{equation} \label{eq: a_i}
\vert a_i(t) \vert \leq \frac{\delta \alpha_2}{2} \sum_{k=1}^r \left( \vert m_{ik}(t) \vert + \frac{\delta}{N}\vert \langle \boldsymbol{v}_i, (\nabla_{\textnormal{St}}\Phi \left(\boldsymbol{X}_t \right))_k \rangle \vert + \frac{\delta}{N} \vert \langle \boldsymbol{v}_i, (\nabla_{\textnormal{St}}H^\ell \left( \boldsymbol{X}_t)_k \right) \rangle \vert \right).
\end{equation}
For some $c_0 \in (0,1)$ to be set later, we can apply Lemma~\ref{lem: noise martingale} with $a=c_0\varepsilon$ and obtain
\[
\sup_{\boldsymbol{X}_0 \in \, \mathcal{S}_{\pi^\ast}(\varepsilon)} \mathbb{P}_{\boldsymbol{X}_0} \left (\max_{t \leq M} \frac{\delta}{N}  \left |  \sum_{\ell=1}^t \langle \boldsymbol{v}_i, \left (\nabla_{\textnormal{St}}H^\ell(\boldsymbol{X}_{\ell-1}) \right )_j \rangle\right | \geq c_0\varepsilon \right ) \leq 2\exp\left(- c\frac{c_0^2\varepsilon^2 N^2}{32 \delta^2 \alpha_1^2  M}\right).
\]
Introducing a truncation level $K_N$ for the last term in the right-hand side of~\eqref{eq: a_i} and applying Lemma~\ref{lem: higher truncation}, we find 
\[
\max_{t \le M} \frac{\delta^3 \alpha_2}{2 N^2} \sum_{\ell=1}^t \sum_{k=1}^r \left | \langle \boldsymbol{v}_i , (\nabla_{\textnormal{St}} H^\ell(\boldsymbol{X}_{\ell-1}))_k \rangle \right | \mathbf{1}_{ \{| \langle \boldsymbol{v}_i , (\nabla_{\textnormal{St}} H^\ell(\boldsymbol{X}_{\ell-1}))_k \rangle | > K_N\}} \le c_0\varepsilon,
\]
with \(\mathbb{P}_{\boldsymbol{X}_0}\)-probability at least $1 - C \frac{\delta^3r\alpha_1\alpha_2}{N^2 c_0 \varepsilon} M \exp\left(-c\frac{K_N^2}{2\alpha_1^2}\right)$. Thus, with \(\mathbb{P}_{\boldsymbol{X}_0}\)-probability at least \(1 -  M e^{-c(p) N} - 2\exp\left(- c_1 \frac{c_0^2\varepsilon^2 N^2}{32 \delta^2 \alpha_1^2  M}\right) - C \frac{\delta^3r\alpha_1\alpha_2}{N^2 c_0 \varepsilon} M \exp\left(-c_2\frac{K_N^2}{2\alpha_1^2}\right)\), we have the following upper and lower bounds for the $\left\{m_{ij}\right\}_{1 \leq i,j \leq r}$, 
\[
\left \vert m_{ij}(t)-\left(m_{ij}(0)-\frac{\delta}{N}\sum_{\ell=1}^{t}\langle \boldsymbol{v}_i,(\nabla_{\textnormal{St}}\Phi(\boldsymbol{X}_{\ell-1}))_j \rangle\right) \right \vert \leq 2c_0\varepsilon+\frac{\delta}{N}\sum_{\ell=1}^{t}h_i(\ell-1),
\]
where 
\[
h_i(\ell-1) = \frac{\delta \alpha_2}{2} \sum_{k=1}^r \left( \vert m_{ik}(\ell-1) \vert + \frac{\delta}{N}\vert \langle \boldsymbol{v}_i, (\nabla_{\textnormal{St}}\Phi \left(\boldsymbol{X}_{\ell-1}\right))_k \rangle \vert +\frac{K_N \delta}{N} \right).
\]
Then, for any $t \leq M$,
\[
\frac{\delta}{N} \sum_{\ell=1}^t   h_i(\ell-1)  \leq \frac{\delta^2M\alpha_2r}{N}\left(1+\frac{\delta}{N}(\sqrt{N}+K_N)\right).
\]
Since, by assumption, \(\delta^2 M \ll N\log(N)^{-1}\), we may choose any sequence \(K_N\) such that \(\frac{\delta}{N}K_N = \mathcal{O}(1)\). Then, for \(N\) sufficiently large,  
\[
\frac{\delta}{N} \sum_{\ell=1}^t   h_i(\ell-1)  \leq \frac{\delta^2M\alpha_2r}{N}\left(1+\frac{\delta}{N}(\sqrt{N}+K_N)\right) < c_0 \varepsilon,
\]
so that for all \(t \le M\), 
\[
\left \vert m_{ij}(t)-\left(m_{ij}(0)-\frac{\delta}{N}\sum_{\ell=1}^{t}\langle \boldsymbol{v}_i,(\nabla_{\mathrm{St}}\Phi(\boldsymbol{X}_{\ell-1}))_j \rangle\right) \right \vert \leq 3c_0\varepsilon,
\]
with \(\mathbb{P}_{\boldsymbol{X}_0}\)-probability at least \(1 -  M e^{-c(p) N} - 2\exp\left(- c_1 \frac{c_0^2\varepsilon^2 N^2}{32 \delta^2 \alpha_1^2  M}\right) - C \frac{\delta^3r\alpha_1\alpha_2}{N^2 c_0 \varepsilon} M \exp\left(-c_2\frac{N^2}{2 \delta^2 \alpha_1^2}\right)\).
In particular, for every \(n \in [r]\), we have 
\begin{equation} \label{eq: corr}
m_{i_n^\ast j_n^\ast}(t) \ge m_{i_n^\ast j_n^\ast}(0) - \frac{\delta}{N}\sum_{\ell=1}^{t}\langle \boldsymbol{v}_i ,(\nabla_{\mathrm{St}}\Phi(\boldsymbol{X}_{\ell-1}))_j \rangle - 3 c_0 \varepsilon,
\end{equation}
with the same probability as above. We remark that for \(p \ge 3\), the assumption \(\delta^2 M \ll N \log(N)^{-1}\) is precisely the one stated in Theorem~\ref{thm: strong recovery online p=2 asymptotic}. For \(p=2\), Theorem~\ref{thm: strong recovery online p=2 asymptotic} requires the stronger condition \(\delta^2 M \ll N^{1 - \xi_0/2}\), where \(\xi_0 = 1 - \lambda_r^2 / \lambda_1^2 \in (0,1)\). Since \(N^{- \xi/2} = o (\log(N)^{-1})\), this also implies \(\delta^2 M \ll N \log(N)^{-1}\).

In what follows, we focus on the case \(p \ge 3\). The extension to \(p=2\) is obtained by straightforward adaptation. From Proposition~\ref{prop: inequalities SGD}, we have
\[
\begin{split}
- \langle \boldsymbol{v}_{i_n^\ast} ,(\nabla_{\textnormal{St}}\Phi(\boldsymbol{X}))_{j_n^\ast} \rangle & =  \sqrt{N} p \lambda_{i_n^\ast} \lambda_{j_n^\ast} m_{i_n^\ast j_n^\ast}^{p-1}(1 - m_{i_n^\ast j_n^\ast}^2) \\
& \quad -   \frac{\sqrt{N} p}{2}  \sum_{(k, \ell) \neq (i_n^\ast, j_n^\ast)} \lambda_k m_{i_n^\ast \ell} m_{k j_n^\ast} m_{k\ell} \left ( \lambda_{j_n^\ast} m_{k j_n^\ast}^{p-2} + \lambda_\ell m_{k \ell}^{p-2} \right).
\end{split}
\]
Let 
\[
\tau \coloneqq \mathcal{T}_{\mathcal{S}_{\pi^\ast} (2 \varepsilon)^\mathrm{c}} = \inf \{ t \in \N_0 \colon \boldsymbol{X}_t \notin \mathcal{S}_{\pi^\ast} (2 \varepsilon) \}.
\]
Then, for all \(t \le M \wedge \tau\), the drift can be bounded by
\[
\lambda_{i_n^\ast} \lambda_{j_n^\ast} m_{i_n^\ast j_n^\ast}^{p-1} (t) (1 - m_{i_n^\ast j_n^\ast}^2 (t)) \ge \lambda_r^2 (1 - 2 \varepsilon)^{p-1} (1 - m_{i_n^\ast j_n^\ast}^2(t) ) .
\]
We next consider the correction term and write 
\begin{equation} \label{eq: sum corrector}
\begin{split}
& \sum_{(k, \ell) \neq (i_n^\ast , j_n^\ast)} \lambda_k m_{i \ell} m_{kj} m_{k\ell} \left ( \lambda_j m_{k j}^{p-2} + \lambda_\ell m_{k \ell}^{p-2} \right) \\
& = \lambda_{j^\ast_n}  \sum_{(k, \ell) \neq (i^\ast_n,j^\ast_n)} \lambda_k m_{i^\ast_n \ell} m_{kj^\ast_n}^{p-1} m_{k\ell} + \sum_{(k, \ell) \neq (i^\ast_n,j^\ast_n)} \lambda_k \lambda_\ell m_{i^\ast_n \ell} m_{kj^\ast_n} m_{k\ell}^{p-1}.
\end{split}
\end{equation}
We first consider the first term in~\eqref{eq: sum corrector}. For any \(t \le M \wedge \tau\),
\begin{align*}
\lambda_{j^\ast_n}  \sum_{(k, \ell) \neq (i^\ast_n,j^\ast_n)} \lambda_k m_{i^\ast_n \ell}(t) m_{kj^\ast_n}^{p-1}(t) m_{k\ell} (t)&\leq r \lambda_{j^\ast_n} \max_{(k, \ell)\neq (i^\ast_n,j^\ast_n)} \{ \lambda_k | m_{i^\ast_n \ell} (t) m_{k \ell} (t) | \} \sum_{ k \neq i^\ast_n} |m_{kj^\ast_n}(t)|^{p-1} \\
& \le r \lambda_{j^\ast_n} \lambda_1 \max_{(k, \ell)\neq (i^\ast_n,j^\ast_n)} \{| m_{i^\ast_n \ell} (t) m_{k \ell}(t) |\} \sum_{k \neq i^\ast_n} m_{kj^\ast_n}^2(t) \\
& \leq 2 r \lambda_1^2 \sqrt{\varepsilon} (1-m_{i^\ast_n j^\ast_n}^2(t)),
\end{align*}
where we used \(\sum_{k=1}^r m_{k j_n^\ast}^2(t) \le1\) and \(\max_{(k, \ell) \neq (i^\ast_n,j^\ast_n)} \{ | m_{i^\ast_n \ell} (t) m_{k\ell}(t)|\} \leq 2 \sqrt{\varepsilon}\) since \(\boldsymbol{X}_t \in \mathcal{S}_{\pi^\ast}(2 \varepsilon)\). For the second term in~\eqref{eq: sum corrector}, we write
\[
\sum_{(k, \ell) \neq (i^\ast_n,j^\ast_n)} \lambda_k \lambda_\ell m_{i^\ast_n \ell} m_{k j_n^\ast} m_{k\ell}^{p-1} = \lambda_{j^\ast_n} m_{i^\ast_n j^\ast_n} \sum_{k \neq i^\ast_n}  \lambda_k m_{k j^\ast_n}^p +\sum_{k, \ell \neq j^\ast_n}\lambda_k \lambda_\ell m_{i^\ast_n \ell} m_{kj^\ast_n} m_{k \ell}^{p-1}.
\]
We note that, on the time interval \([0, M \wedge \tau]\), the first subterm can be bounded as
\[
\lambda_{j^\ast_n} m_{i^\ast_n j^\ast_n} (t) \sum_{k \neq i^\ast_n}  \lambda_k m_{k j^\ast_n}^p (t) \le 2 \lambda_1^2 \sqrt{\varepsilon} \sum_{k \neq i^\ast_n} |m_{k j^\ast_n}(t)|^{p-1} \le 2 \lambda_1^2 \sqrt{\varepsilon} \sum_{k \neq i^\ast_n} m_{k j^\ast_n}^2 (t) \le 2 \lambda_1^2 \sqrt{\varepsilon} (1-m^2_{i^\ast_n j^\ast_n}(t)).
\]
For the second subterm, using \(|m|^{p-1} \le |m|\) and Cauchy-Schwarz,
\begin{align*}
\sum_{k, \ell \neq j^\ast_n} \lambda_k \lambda_\ell m_{i^\ast_n \ell} (t) m_{kj^\ast_n} (t) m_{k\ell}^{p-1} (t) & \leq (r-1)\lambda_1^2 \max_{\ell \neq j^\ast_n} \left \{ |m_{i^\ast_n \ell}(t)| \left |  \sum_{k=1}^r m_{k j_n^\ast} (t) m_{k \ell} (t) \right | \right\} \\
& \le 2 (r-1) \lambda_1^2 \sqrt{\varepsilon} \max_{\ell \neq j^\ast_n} \left | \langle \boldsymbol{P}_{\boldsymbol{V}} \boldsymbol{x}_{j_n^\ast}, \boldsymbol{P}_{\boldsymbol{V}} \boldsymbol{x}_\ell \rangle \right |  \\
& \le 2(r-1) \lambda_1^2 \sqrt{\varepsilon}  \Vert \boldsymbol{P}^{\perp}_{\boldsymbol{V}}\boldsymbol{x}_{j^\ast_n}\Vert \max_{\ell \neq j^\ast_n} \Vert \boldsymbol{P}^{\perp}_{\boldsymbol{V}}\boldsymbol{x}_\ell \Vert,
\end{align*}
where \(\boldsymbol{P}_{\boldsymbol{V}}\) denotes the orthogonal projector onto the subspace spanned by the hidden directions $\boldsymbol{v}_1, \ldots,\boldsymbol{v}_r$, i.e., \(\boldsymbol{P}_{\boldsymbol{V}} \boldsymbol{x}_j = \sum_{k=1}^r m_{kj} \boldsymbol{v}_k\), and we used the fact that \(\langle \boldsymbol{P}_{\boldsymbol{V}} \boldsymbol{x}_{j_n^\ast}, \boldsymbol{P}_{\boldsymbol{V}} \boldsymbol{x}_\ell \rangle  = - \langle \boldsymbol{P}_{\boldsymbol{V}}^\perp \boldsymbol{x}_{j_n^\ast}, \boldsymbol{P}_{\boldsymbol{V}}^\perp \boldsymbol{x}_\ell \rangle \) since \(\langle \boldsymbol{x}_{j_n^\ast}, \boldsymbol{x}_\ell \rangle = 0\) for \(\ell \neq j_n^\ast\). Inside \(\mathcal{S}_{\pi^\ast}(2\varepsilon)\), $\Vert \boldsymbol{P}^{\perp}_{\boldsymbol{V}}\boldsymbol{x}_{j^\ast_n}\Vert  \leq \sqrt{1-m_{i^\ast_n j^\ast_n}^2}$ and $\Vert \boldsymbol{P}^{\perp}_{\boldsymbol{V}}\boldsymbol{x}_\ell \Vert \leq 2 \sqrt{\varepsilon}$. Hence, for any $t \leq M \wedge \tau$,
\[
\sum_{k, \ell \neq j^\ast_n} \lambda_k \lambda_\ell m_{i^\ast_n \ell} (t) m_{kj^\ast_n}(t) m_{k\ell}^{p-1} (t) \le 4 \varepsilon (r-1) \lambda_1^2\sqrt{1-m_{i^\ast_n j^\ast_n}^2 (t)}.
\]
Combining the above estimates into~\eqref{eq: sum corrector} gives
\[
\sum_{(k, \ell) \neq (i_n^\ast , j_n^\ast)} \lambda_k m_{i \ell} m_{kj} m_{k\ell} \left ( \lambda_j m_{k j}^{p-2} + \lambda_\ell m_{k \ell}^{p-2} \right) \le  2\sqrt{\varepsilon} (r+1) \lambda_1^2 (1-m_{i^\ast_n j^\ast_n}^2) + 4 \varepsilon (r-1) \lambda_1^2\sqrt{1-m_{i^\ast_n j^\ast_n}^2},
\]
so that for all \(t \le M \wedge \tau\),
\[
\begin{split}
& - \langle \boldsymbol{v}_{i^\ast_n},(\nabla_{\textnormal{St}} \Phi(\boldsymbol{X}_t))_{j^\ast_n} \rangle \\
& \geq \sqrt{N}p \left [ \left( \lambda_r^2(1-2\varepsilon)^{p-1}  - 2 \sqrt{\varepsilon}(r +1) \lambda_1^2\right) (1 - m_{i^\ast_n j^\ast_n}^2(t)) -  4 \varepsilon (r-1) \lambda_1^2\sqrt{1-m_{i^\ast_n j^\ast_n}^2(t)} \right ].
\end{split}
\]
To ensure that \( - \langle \boldsymbol{v}_{i^\ast_n},(\nabla_{\textnormal{St}} \Phi(\boldsymbol{X}_t))_{j^\ast_n} \rangle  > 0\) for all \(\boldsymbol{X} \in \mathcal{S}_{\pi^\ast}(2 \varepsilon)\), it suffices that
\[
\lambda_r^2 (1-2 \varepsilon)^{p-1} > 4 r \lambda_1^2 \sqrt{\varepsilon},
\]
since this implies \(\lambda_r^2(1-2\varepsilon)^{p-1}  - 2 \sqrt{\varepsilon}(r +1) \lambda_1^2 > 0\) and 
\[
\frac{4 \varepsilon (r-1) \lambda_1^2}{\lambda_r^2(1-2\varepsilon)^{p-1}  - 2 \sqrt{\varepsilon}(r +1) \lambda_1^2}  < 2 \sqrt{\varepsilon} = \max_{t \le \tau} \sqrt{1 - m_{i_n^\ast j_n^\ast}^2(t)}. 
\]
For \(\varepsilon < \frac{1}{4}\), we have \((1-2 \varepsilon)^{p-1}  \ge 2^{-(p-1)}\), so a convenient sufficient condition for \(\varepsilon\) is given by
\[
\varepsilon \le \left ( \frac{\lambda_r^2}{2^{p+1} r \lambda_1^2 }\right)^2.
\]
Under the above condition, it follows from~\eqref{eq: corr},
\[
m_{i^\ast_n j^\ast_n}(t)  \geq m_{i_n^\ast j_n^\ast} (0) - 3 c_0\varepsilon \ge 1- \varepsilon -  3 c_0 \varepsilon.
\]
Choosing $c_0 = \frac{1}{6}$, we obtain for all $0 \leq t \leq M \wedge \tau$,
\[
m_{i^\ast_n j^\ast_n}(t) \ge 1 - \frac{3}{2} \varepsilon.
\]
This bound holds for all \(n \in [r]\) with \(\mathbb{P}_{\boldsymbol{X}_0}\)-probability at least 
\[
1 -  M e^{-c(p) N} - 2\exp\left(- c_1 \frac{\varepsilon^2 N^2}{1152 \delta^2 \alpha_1^2  M}\right) - C \frac{6 \delta^3r\alpha_1\alpha_2}{N^2 \varepsilon} M \exp\left(-c_2\frac{N^2}{2 \delta^2\alpha_1^2}\right). 
\]
Consequently, the claimed lower bounds for the correlations $\{m_{i^\ast_k j^\ast_k}\}_{k \in [r]}$ hold with the above probability. The expression for the probability in the statement is written under the scaling assumption \(\delta^2 M \ll N log(N)^{-1}\). By orthogonality of the columns, this immediately implies the corresponding upper bounds for all remaining correlations. Finally, the same reasoning applies to the case \(p=2\), with only minor modifications.
\end{proof}

For the subspace recovery problem, we establish the stability at the level of the eigenvalues of the Gram matrix $\boldsymbol{G} = \boldsymbol{M}\boldsymbol{M}^{\top}$.

\begin{lem}
\label{lem:stability_isotropic}
Assume that $p=2$, \(\lambda_1 = \cdots = \lambda_r\), and that $M,\delta$ satisfy the conditions of Theorem~\ref{thm: strong recovery online p=2 asymptotic}. Then, for every $\varepsilon \in \left (0,\frac{1}{2}\right)$ and \(N\) sufficiently large,
\[
\inf_{\boldsymbol{X}_0 \colon \theta_{\min}(\boldsymbol{G}) \geq 1-\varepsilon} \mathbb{P}\left(\theta_{\min}(\boldsymbol{G}(M)) \geq 1-2\varepsilon \right) \geq 1-\eta,
\]
where $\eta$ is given by 
\[
\eta = 1-M\exp(-c_1N)-4rM\exp\left(-c_2\frac{N}{\sigma^2}\right)-2r\exp\left(-c_3N\log(N)^2\right).
\]
\end{lem}

\begin{proof}
The argument parallels that of Lemma~\ref{lem:strong_iso_p2_lmin}. We fix the time horizon \(M\) and choose the truncation sequence \(K_N = \sqrt{N}\). Since by assumption $\frac{\delta^2M}{N} \ll \frac{1}{\log(N)}$, for \(N\) sufficiently large we may guarantee that with probability at least $1-\eta$, the partial ordering given in~\eqref{eq: partial ordering G_t 2} holds uniformly for $0 \leq t \leq M \wedge \mathcal{T}^{(\min,-)}_{1-2\varepsilon}$. The conclusion follows upon noting that the resulting decreasing upper bound on $\Delta_t = r - \Tr (\boldsymbol{G}_t)$ is unconditional on any other hitting time, so that $\Delta_t$ is non-increasing.
\end{proof}
\printbibliography
\end{document}